\documentclass{article}
\usepackage[preprint]{neurips_2026}

\usepackage{microtype}
\usepackage{graphicx}
\usepackage{booktabs}
\usepackage{hyperref}

\usepackage{amsmath}
\usepackage{amssymb}
\usepackage{mathtools}
\usepackage{amsthm}
\usepackage[capitalize,noabbrev]{cleveref}

\usepackage{tikz}
  \usepackage{pgfplots}
  \usepgfplotslibrary{groupplots}
  \pgfplotsset{compat=1.17}
\usetikzlibrary{arrows.meta}
\usetikzlibrary{backgrounds}
\usepgfplotslibrary{patchplots}
\usepgfplotslibrary{fillbetween}
\pgfplotsset{%
    layers/standard/.define layer set={%
        background,axis background,axis grid,axis ticks,axis lines,axis tick labels,pre main,main,axis descriptions,axis foreground%
    }{
        grid style={/pgfplots/on layer=axis grid},%
        tick style={/pgfplots/on layer=axis ticks},%
        axis line style={/pgfplots/on layer=axis lines},%
        label style={/pgfplots/on layer=axis descriptions},%
        legend style={/pgfplots/on layer=axis descriptions},%
        title style={/pgfplots/on layer=axis descriptions},%
        colorbar style={/pgfplots/on layer=axis descriptions},%
        ticklabel style={/pgfplots/on layer=axis tick labels},%
        axis background@ style={/pgfplots/on layer=axis background},%
        3d box foreground style={/pgfplots/on layer=axis foreground},%
    },
}
\usepackage{times}

\def\1{\bm{1}}
\newcommand{\train}{\mathcal{D}}

\def\rve{{\mathbf{e}}}

\def\rvu{{\mathbf{i}}}

\def\rvs{{\mathbf{s}}}

\def\rvu{{\mathbf{u}}}
\def\rvv{{\mathbf{v}}}
\def\rvw{{\mathbf{w}}}
\def\rvx{{\mathbf{x}}}
\def\rvy{{\mathbf{y}}}
\def\rvz{{\mathbf{z}}}

\def\sR{{\mathbb{R}}}

\def\sZ{{\mathbb{Z}}}

\newcommand{\Ls}{\mathcal{L}}

\newcommand{\KL}{\mathrm{KL}}

\DeclareMathOperator*{\argmin}{arg\,min}

\renewcommand{\KL}{\mathrm{KL}}
\title{PAC-Bayesian Bounds for Learning Partially Observed Stochastic Linear Time-Invariant State-Space Systems with Inputs and Sub-Gaussian Noise}

\author{
Mihaly Petreczky \\
CRIStAL, Centrale Lille, Université de Lille, France, \texttt{mihaly.petreczky@centralelille.fr} \\
\And
Mohamad Al Ahdab \\
Department of Electronic Systems, Aalborg University, Denmark, \texttt{maah@es.aau.dk} \\
\And
John Leth \\
Department of Electronic Systems, Aalborg University, Denmark, \texttt{jjl@es.aau.dk} \\
}

\newtheorem{theorem}{Theorem}[section]

\newtheorem{lemma}[theorem]{Lemma}
\newtheorem{corollary}[theorem]{Corollary}

\newtheorem{definition}[theorem]{Definition}
\newtheorem{assumption}[theorem]{Assumption}

\newtheorem{remark}[theorem]{Remark}

\newcommand{\e}{\mathbf{e}}

\newcommand{\x}{\mathbf{x}}
\newcommand{\y}{\mathbf{y}}
\newcommand{\z}{\mathbf{z}}
\newcommand{\w}{\mathbf{w}}

\newcommand{\reals}{\mathbb{R}}

\newcommand{\bP}{\mathbf{P}}
\newcommand{\bE}{\mathbf{E}}
\newcommand{\F}{\mathcal{F}}
\newcommand{\B}{\mathcal{B}_\Theta}

\usepackage{pgfplots}
\pgfplotsset{compat=1.17}
\begin{document}
\maketitle 

\begin{abstract}
In this paper we derive a Probably Approximately Correct (PAC)-Bayesian error bound for partially observed linear time-invariant (LTI) stochastic dynamical systems in state-space form with inputs.  Such bounds
are widespread in machine learning, and they are useful for characterizing the predictive power of models learned from  
finitely many data points.  
The bound derived in this paper relates the expectation of prediction errors with the prediction error generated by the model on the data used for learning. In addition, we show that it can also be used to derive bounds for the parameter estimation error.
In turn, this allows us to provide finite-sample error bounds  for the prediction error and parameter estimation error for
a wide class of system identification algorithms. 
Furthermore, as LTI systems are a sub-class of recurrent neural
networks (RNNs), these error bounds could be a first step towards 
PAC-Bayesian bounds for RNNs. 
\end{abstract}

\section{Introduction}


PAC and PAC-Bayesian bounds are fundamental tools for analyzing learning algorithms. They provide generalization guarantees that relate the \emph{true error} (performance on unseen data) to the \emph{empirical error} (performance on training data), by bounding the \emph{generalization gap}, the difference between true and empirical error, in a way that is independent of the specific learning algorithm. In this framework, learning is viewed as selecting a model from a data-dependent posterior distribution over hypotheses, which is a modification of a prior based on the observed data. Different learning algorithms correspond to different choices of priors, posteriors, and selection mechanisms (e.g., sampling, averaging, or maximizing the posterior). PAC-Bayesian bounds then provide a uniform upper bound on the average generalization gap with respect to any posterior, thus offering generalization guarantees for a broad class of learning and model selection methods.

This generality makes PAC-Bayesian approaches especially useful for analyzing complex models, including neural networks, where they can yield non-vacuous generalization bounds \cite{Dziugaite2017}.
Although PAC and PAC-Bayesian theory is well developed for i.i.d.\ settings \cite{shalev2014understanding,alquier2021userfriendly,guedj2019primer}, there are comparatively few results for dynamical systems.
\par
   \textbf{Contribution:}
    In this paper, we derive PAC-Bayesian error bounds for learning discrete-time stochastic LTI state-space systems with sub-Gaussian noise and partial observations (the full states is not observed). 
    Following standard practice in system identification, we view LTI systems as predictors that generate predictions for the current output based on past inputs and outputs.
    Our main contribution is a PAC-Bayesian bound showing that, with high probability (at least $1-\delta$), for any posterior distribution over predictors, the average generalization gap is $O(\frac{(\ln N)^r}{\sqrt{N}})$, where $N$ is the sample size, $r=0$ for bounded noise, $r=1$ for Lipschitz loss, and $r=3/2$ for quadratic loss. The bound depends on the Kullback--Leibler (KL) divergence between prior and posterior distributions and on $\ln(1/\delta)$. We subsequently derive generalization gap and parameter estimation error bounds for concrete learning algorithms (e.g., maximum likelihood, regularized prediction-error methods (PEM), and Bayesian methods).

    \textbf{Related work:}
    PAC and PAC-Bayesian bounds for autoregressive models with bounded signals or Lipschitz loss have been studied in \cite{campi2002finite,vidyasagar2006learning,alquier2012pred,alquier2013prediction,MASSUCCI2022110532,METAKALARD2026100373}. 
    In contrast, our results address state-space models with partially observed states and  with inputs, which are more general than autoregressive models, and we  allow for unbounded (sub-Gaussian) noise, and handle quadratic loss. We also build on techniques from \cite{alquier2012pred,alquier2013prediction}. 
\par
In contrast to the literature on finite-sample parameter estimation bounds for specific least-squares based learning
algorithms for LTI systems, e.g., \cite{simchowitz2019learning,simchowitz2021statistical,oymak2021revisiting,lale2020logarithmic,foster2020learning,NEURIPS2018_d6288499,Ziemann1,Ziemann2,SarkarRD21}, this papers presents bounds on both the parameter estimation and the generalization gap for a wide-range of algorithms, see Remark \ref{rem:finite-sample}, Appendix \ref{app:indent} for a detailed comparison.
\par
PAC bounds for recurrent neural networks, of which LTI state-space representations are a subclass, have been developed using VC dimension \cite{KOIRAN199863,sontag1998learning}, Rademacher complexity \cite{KOIRAN199863,sontag1998learning,pmlr-v108-chen20d}, and PAC-Bayesian approaches \cite{Dziugaite2017}. However, existing results assume noiseless models, fixed prediction horizons, i.i.d.\ training data samples, and bounded signals. In contrast, this paper considers noisy models, infinite-horizon prediction errors, a single observed time series, and unbounded signals with sub-Gaussian noise.
\par
In \cite{eringisRenyi}, bounds based on R\'enyi divergence were proposed, but the resulting probabilistic guarantees scale as $O(1/\delta)$ with the confidence level, rather than the standard $O(\ln(1/\delta))$, and it applies only to quadratic losses as opposed to general Lipschitz losses, and it provides parameter estimation bounds only in terms of the empirical error. 
In contrast, the bound of this paper depends on the confidence level $O(\ln(1/\delta))$ and it provides an absolute bound 
on the parameter estimation error. 
The works \cite{eringis2023pacbayesRNN,eringis2023pacbayesian} provide PAC-Bayesian generalization gap bounds for RNNs and LTI systems with bounded noise, respectively. In contrast, the present paper addresses unbounded (sub-Gaussian) noise and also derives parameter estimation error bounds. Even for bounded signals, our results refine the constants used in \cite{eringis2023pacbayesRNN,eringis2023pacbayesian} (see Remark~\ref{rem:deividas}, Appendix~\ref{app:indent}). 
PAC-Bayesian bounds for autonomous LTI systems without inputs were considered in \cite{CDC21paper} but those bounds did not converge to zero as $N \rightarrow \infty$.


\textbf{Paper Outline:}
  We start by defining the problem formulation in Section \ref{sect:Problem_formulation},
  followed by the main results in Section \ref{sec:main_results} and discussion of the bounds in Section \ref{sec:conclusions}.
  The technical proofs are presented in the appendices.
  

\section{Problem formulation}\label{sect:Problem_formulation}
\subsection{ Notation and terminology} 
We occasionally use $\triangleq$ to denote ''defined by''. 
Let $\F$ denote a $\sigma$-algebra on the set $\Omega$ and $\bP$ be a probability measure on $\F$. Unless otherwise stated, all probabilistic considerations will be with respect to the probability space $(\Omega,\F,\bP)$, and we let $\bE[\z]$ denote expectation of the stochastic variable $\z$.
We use bold face letters to indicate stochastic variables/processes relative to $(\Omega,\F,\bP)$. 
Each euclidean space is associated with the topology generated by the 2-norm $\|\cdot\|_2$, and the Borel $\sigma$-algebra generated by the open sets. The induced matrix 2-norm is also denoted $\|\cdot\|_2$. 
\par
Let $B_{\Theta}$ be the $\sigma$-algebra of Lebesgue-measurable subsets of $\Theta \subset \mathbb{R}^{n_\theta}$, and $m$ the Lebesgue measure. For a probability density $\rho$ on $(\Theta, B_{\Theta}, m)$ and a measurable, integrable function $g:\Theta \to \mathbb{R}$, define
\(
E_{\theta \sim \rho}[g(\theta)] \triangleq \int_{\Theta} \rho(\theta) g(\theta) dm(\theta)
\)
as the expectation of $g$ under $\rho$. For a density $\pi$ on $\Theta$, let $\mathcal{M}_{\pi}$ denote the set of all densities on $\Theta$ 
whose probability measure  is absolutely continuous w.r.t. the probability measure of $\pi$. 
\par
Let $\mathrm{vol}(A) = \int_{A} dm(\theta)$ denote the Lebesgue measure of $A \subseteq \mathbb{R}^{n_\theta}$. The uniform distribution on a measurable set $A \in B_\Theta$ is $\rho(\theta) = \frac{\chi_A(\theta)}{\mathrm{vol}(A)}$, where $\chi_A$ is the indicator function. For $A = \Theta$, this is the uniform distribution on $\Theta$.
\par
The phrase \emph{probability over data} refers to probability under $\bP$ on $(\Omega, \F)$. Specifically, for measurable functions $\{X_i, Y_i\}_{i \in I}$ and a binary relation $\diamond$, we say $\forall i \in I: X_i \diamond Y_i$ holds with probability at least $1-\delta$ over data if the set $F = \{\omega \in \Omega : \forall i \in I,\, X_i(\omega) \diamond Y_i(\omega)\}$ satisfies $\bP(F) \ge 1-\delta$.

\subsection{\textcolor{black}{Problem formulation}}
The problem formulation follows \cite{eringis2023pacbayesRNN,eringis2023pacbayesian,eringisRenyi}.  
\par
Let us fix stationary stochastic processes $\y(t)\in \reals^{n_\y}$, and $\rvu(t)\in  \reals^{n_\rvu}$, that share a time axis $t\in\sZ$, that is, for any $t\in\sZ$, $\rvy(t):\Omega\to\sR^{n_\rvy};\omega\mapsto\rvy(t)(\omega)$, and $\rvu(t):\Omega\to\sR^{n_\rvu};\omega\mapsto\rvu(t)(\omega)$
are random vectors on $(\Omega,\F,\bP)$:
$\y$ represents outputs (labels), and $\rvu$ represents inputs (features).
The goal is to learn models from the finite sample $\train=\{\rvy(t)(\omega),\rvu(t)(\omega)\}_{t=0}^{N-1}$, $\omega \in \Omega$, such that the learned model
predicts  $\rvy$  from past measurements of $(\rvy,\rvu)$ sufficiently well.

\textbf{Predictor class:}
To simplify notation, let us define a stochastic process $\w(t)\in \sR^{n_\w}$ by 
\begin{description}
    \item $\w(t)=\begin{bmatrix} \y^T(t), & \rvu^T(t) \end{bmatrix}^T$ when $n_\w=n_\y+n_\rvu$ and both past $\y$ and $\rvu$ are used for prediction,
    \item $\w(t)=\rvu(t)$ when  $n_\w=n_\rvu$, when only past $\rvu$ is used for prediction.
\end{description}
Note that with a trivial extension of the results of this paper one can define $\w(t)$ to consist of only some of the components of $\y(t)$, extending the results to not only prediction but also filtering.

\begin{assumption}[Predictors]\label{as:parameterisation}
The class of predictors 
is the class of LTI systems parametrized by $\theta \in \Theta \subseteq \mathbb{R}^{n_{\theta}} $ which generate 
predictions $\hat{\y}_\theta(t|t_0)$ of $\y(t)$ based on $\{\w(s)\}_{s=t_0}^{t}$
\begin{equation}\label{eq:predictor}
    \begin{aligned}
        \hat{\x}(t+1)=\hat{A}_\theta\hat{\x}(t)+\hat{B}_\theta\w(t), ~ \hat{\x}(t_0)=0, \quad \hat{\y}_\theta(t|t_0)=\hat{C}_\theta\hat{\x}(t)+\hat{D}_\theta\w(t), \quad t \ge t_0,
    \end{aligned}
\end{equation}
  where $\hat{A}_\theta$,$\hat{B}_\theta,\hat{C}_\theta,\hat{D}_\theta$ 
	are $\hat{n}\times\hat{n}$, $\hat{n}\times n_\w$,
	$n_\y\times \hat{n}$ and $n_\y\times n_\w$ matrices
	respectively, they are are continuous in $\theta$,
  and 
  $\hat{A}_\theta$ is Schur (all its eigenvalues are inside the unit disk).
  Moreover, if $\w(t)=[\y^T(t)~\rvu^T(t)]^T$, then  $\hat{D}_\theta=[0~\hat{D}_{\rvu,\theta}]$ for some $n_\y \times n_\rvu$ matrix $\hat{D}_{\rvu,\theta}$. 
  \footnote{The latter assumption is necessary, as it guarantees that $\hat{D}_\theta\w(t)$ depends only on $\rvu(t)$ and hence it does not depend on $\rvy(t)$, since otherwise we would be using $\rvy(t)$ to predict $\rvy(t)$, which is not meaningful.}
    Moreover, there exist constants
     $M,C  >0$, $\gamma \in (0,1)$ such that 
   for all $\theta \in \Theta$, $\|\hat{A}_\theta^k\|_2 \le M \gamma^k$, $\max\{\|\hat{B}_\theta\|_2,\|\hat{C}_\theta\|_2,\|\hat{D}_\theta\|_2\} \le C$.
\end{assumption}
Intuitively, our assumption expresses that the predictors are stable LTIs and we have a uniform upper bound on the matrices and on the spectral radius of $\hat{A}_\theta$. These assumptions are standard in system identification, see Remark \ref{rem:parameter:methods}, Appendix \ref{app:sect:Problem_formulation}.
Note that the predictors are started at the zero initial state at inital time $t_0$. The latter represents the time instant at which the predictor was applied to generate predictions, or when learning started. Most of the time, we will set $t_0=0$ to simplify notation.

\textbf{Data generator:}
Intuitively, we would like 
the data  $\y$ to be generated by an LTI system 
\begin{align}
    \begin{split}
		&\hat{\rvs}_{g}(t+1)=A_0\hat{\rvs}_{g}(t)+B_0\rvu(t)+K_0\e^s(t), \quad \y(t) = C_0\hat{\rvs}_{g}(t)+D_0\rvu(t)+\e^s(t), \quad t \in \sZ,
    \end{split}     
		\label{eq:T0sys1}
\end{align}
where $A_0,B_0,K_0,C_0,D_0$ are matrices of suitable dimension, $A_0$ and $A_0-K_0C_0$ are both Schur, and $\e^s(t)$ is a zero-mean i.i.d. sub-Gaussian process. The Schur conditions ensure stationarity of $\rvx$ and that $\e^s(t)$ is the innovation process (see \cite[Chapter 4, p.~87]{LjungBook}), i.e., the difference between $\y(t)$ and its best linear predictor based on past inputs and outputs. Thus, \eqref{eq:T0sys1} is in steady-state Kalman filter form.
Unlike predictors, the data generator \eqref{eq:T0sys1} is assumed to operate in steady state, i.e., it has run for infinite time prior to prediction. This standard assumption in system identification (e.g., \cite{LjungBook}) reflects that the physical system has been subject to noise and inputs long before learning begins. This subtle difference has significant theoretical implications.

Assuming the data is generated by a system of the same form as the predictors is a realizability assumption, as \eqref{eq:T0sys1} yields the optimal predictor.
\begin{equation}
    \begin{split}
		&\hat{\rvs}_{g}(t+1 \mid t_0)\!\!=\!\!\hat{A}_0\hat{\rvs}_{g}(t)+\hat{B}_0\w(t), ~ \hat{\rvs}_g(t_0 | t_0)=0, \quad  
    \hat{\y}(t \mid t_0) \!\!=\!\! \hat{C}_0\hat{\rvs}_{g}(t \mid t_0)+\hat{D}_0\w(t), 
    \end{split}     
		\label{eq:T0sys2}
\end{equation}
where $t \ge t_0$,
 $\hat{A}_0=A_0,\hat{B}_0=B_0,\hat{C}_0=C_0$, $\hat{D}_0=D_0$
if $n_w=n_u$ and 
$\hat{A}_0=A_0-K_0C_0$, $\hat{B}_0=\begin{bmatrix} K_0 & B_0-K_0D_0 \end{bmatrix}$,
$\hat{C}_0=C_0$, $\hat{D}_0=\begin{bmatrix} 0 & D_0 \end{bmatrix}$ if $n_w=n_u+n_y$. 
When $\w(s)=[\rvy^T(s),\; \rvu^T(s) ]^T$, 
the predictor \eqref{eq:T0sys2} corresponds 
to stationary Kalman-filter and it is an optimal predictor of $\y(t)$ in the mean square error sense.
  Formally, instead of assuming the existence of \eqref{eq:T0sys1}, for the data generator  we use the following assumption, which is essentially equivalent to the existence of \eqref{eq:T0sys1}.
\begin{assumption}\label{as:generator}
	Let $\y(t)$ and $\mathbf{u}(t)$ be generated by an autonomous stochastic LTI system
    \begin{align}\label{eq:generator}
			\rvx(t+1) &=A_g\rvx(t)+K_g\rve_g(t), \quad
			\begin{bmatrix}\rvy(t)\\\rvu(t)\end{bmatrix}=C_g\rvx(t)+\rve_g(t)
		\end{align}
	with $A_g \in \mathbb{R}^{n \times n},K_g \in \mathbb{R}^{n \times n_\rve},C_g \in \mathbb{R}^{n_\rve \times n}$, for some $n>0$, and $n_\rve=n_\rvy+n_\rvu\geq2$, $A_g$ and $A_g-K_gC_g$ are Schur. Moreover, 
		$\rvx(t)$ has finite variance, $\rve_g(t)$ is independent of $\rvx(t)$ for all $t \in \mathbb{Z}$, 
		and $\rve_g$ is square integrable, i.i.d process,
    $\|\rve_g(t)\|_2$ is sub-Gaussian and the sub-Gaussian norm of $\|\rve_g(t)\|_2$
    \cite[Definition 2.6.4]{vershynin2026highdimensional} is denoted by $\|\rve_g\|_{\psi_2}$.
\end{assumption}
If $\e_g(t)$ is sub-Gaussian, then $\|\e_g(t)\|_2$ is also sub-Gaussian, see
\cite[Lemma 1]{jin2019short}.
Under Assumption \ref{as:generator}, 
$\rvy$ and $\rvu$ are jointly stationary and square integrable processes
\cite{CainesBook}.
%
{\color{black}
   Under mild 
   conditions, Assumption \ref{as:generator} is equivalent to the existence of  an LTI system
   \eqref{eq:T0sys1}, see Remark \ref{rem:equiv_gen:proof} in Appendix \ref{app:sect:Problem_formulation}. 
   Assumption \ref{as:generator}
   is more convenient, and it is often
   used in system identification \cite[Chapter 9]{Katayama:05}.
}

\textbf{Measure of prediction accuracy:} 
Following standard practice in system identification \cite{LjungBook}, we measure the predictive accuracy
using a \emph{loss function}
$\ell: \mathbb{R}^{n_y} \times \mathbb{R}^{n_y} \rightarrow [0,\infty)$
which is continuous and satisfies $\ell(y,y)=0$.
Typical choices are 
  $\ell(y,\hat{y})=\|y-\hat{y}\|_{2}$, to which we refer to as $\ell_2$ loss,
  and 
    \( \ell(y,\hat{y})=\|y-\hat{y}\|_2^2 \) to which we refer to as the \emph{quadratic loss}. 
\begin{assumption}[Qudaratic or Lipschitz loss functions]
\label{as:loss}
 The loss function $\ell$ is either quadratic or it is $L_{\ell}$-Lipschitz, i.e., for all $y_1,y_2,\hat{y}_1,\hat{y}_2 \in \mathbb{R}^{n_y}$,
\( |\ell(y_1,\hat{y}_1)-\ell(y_2,\hat{y}_2)| \le L_{\ell} (\|y_1-y_2\|_2+\|\hat{y}_1-\hat{y}_2\|_2). \)
\end{assumption}
We define the \emph{true error} as the limit
\begin{align}
\mathcal{L}(\theta) &\triangleq\lim_{t_0 \rightarrow -\infty} \bE[\ell(\y(t),\hat{\rvy}_\theta(t\mid t_0))]. \label{eq:GenLoss}
\end{align}  
The true error captures the long-term, steady-state prediction error obtained during the deployment of the predictor on an increasing number of past inputs.
\begin{lemma}\label{l:ihp} Let Assumptions \ref{as:generator}, \ref{as:parameterisation}, and \ref{as:loss} hold.
Then the limits 
	\( \hat{\rvy}_\theta(t)=\lim_{t_0 \rightarrow -\infty} \hat{\rvy}_\theta(t\mid t_0) \)
  and $\lim_{t \rightarrow \infty} (\hat{\rvy}_{\theta}(t)-\hat{\rvy}_{\theta}(t \mid 0))=0$
	exist in the mean-square sense for all $t$ and $\theta$, the process $\hat{\rvy}_\theta(t)$ is stationary, and 
	\(	\mathcal{L}(\theta)=\bE[\ell(\y(t),\hat{\rvy}_\theta(t) )]=\lim_{t \rightarrow \infty} \bE[\ell(\y(t),\hat{\rvy}_\theta(t\mid 0))]\). 
\end{lemma}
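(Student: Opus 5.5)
Unrolling the recursion \eqref{eq:predictor} gives the finite-horizon prediction explicitly as
\[
\hat{\rvy}_\theta(t\mid t_0)=\sum_{k=1}^{t-t_0}\hat{C}_\theta\hat{A}_\theta^{k-1}\hat{B}_\theta\w(t-k)+\hat{D}_\theta\w(t).
\]
The candidate limit is the infinite series $\hat{\rvy}_\theta(t)=\sum_{k=1}^{\infty}\hat{C}_\theta\hat{A}_\theta^{k-1}\hat{B}_\theta\w(t-k)+\hat{D}_\theta\w(t)$. By Assumption \ref{as:generator}, $\w$ is a linear function of the stationary square-integrable process $(\y,\rvu)$, so $\bE\|\w(s)\|_2^2=c_w<\infty$ independently of $s$. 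By Assumption \ref{as:parameterisation}, $\|\hat{C}_\theta\hat{A}_\theta^{k-1}\hat{B}_\theta\|_2\le C^2M\gamma^{k-1}$.

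Minkowski's inequality in $L^2(\Omega)$ then gives, for $t_0'<t_0$,
\[
\bigl(\bE\|\hat{\rvy}_\theta(t\mid t_0)-\hat{\rvy}_\theta(t\mid t_0')\bigr\|_2^2)^{1/2}\le \sum_{k>t-t_0}C^2M\gamma^{k-1}\sqrt{c_w}=\frac{C^2M\sqrt{c_w}\,\gamma^{t-t_0}}{1-\gamma}.
\]
Hence the sequence is Cauchy in $L^2$ and the mean-square limit $\hat{\rvy}_\theta(t)$ exists. The same computation with $t_0=0$ shows $\bE\|\hat{\rvy}_\theta(t)-\hat{\rvy}_\theta(t\mid 0)\|_2^2\le K\gamma^{2t}\to 0$, which is the second claimed limit.

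For stationarity, note that $\hat{\rvy}_\theta(t)$ is the $L^2$ limit of a time-invariant linear filter applied to the stationary process $\w$. The joint law of $(\y(t+\tau),\hat{\rvy}_\theta(t+\tau|t_0+\tau))$ over finitely many $t$ is invariant in $\tau$, and $L^2$ convergence implies convergence in distribution. Therefore the finite-dimensional distributions of $(\y,\hat{\rvy}_\theta)$ are shift invariant, and $\bE[\ell(\y(t),\hat{\rvy}_\theta(t))]$ does not depend on $t$.

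The remaining and main step is passing the limit inside the expectation of $\ell$, i.e. showing $\bE[\ell(\y(t),\hat{\rvy}_\theta(t\mid t_0))]\to\bE[\ell(\y(t),\hat{\rvy}_\theta(t))]$. I split into the two cases of Assumption \ref{as:loss}.
\begin{itemize}
\item \emph{Lipschitz case.} Since $\ell(y,y)=0$, the loss is square integrable, and
\[
\bigl|\bE\ell(\y(t),\hat{\rvy}_\theta(t|t_0))-\bE\ell(\y(t),\hat{\rvy}_\theta(t))\bigr|\le L_\ell\,\bE\|\hat{\rvy}_\theta(t|t_0)-\hat{\rvy}_\theta(t)\|_2\le L_\ell\bigl(\bE\|\cdot\|_2^2\bigr)^{1/2}\to 0.
\]
\item \emph{Quadratic case.} Use $\bigl|\|a\|^2-\|b\|^2\bigr|\le\|a-b\|(\|a\|+\|b\|)$ with $a=\y(t)-\hat{\rvy}_\theta(t|t_0)$ and $b=\y(t)-\hat{\rvy}_\theta(t)$. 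By Cauchy--Schwarz the error is bounded by $\|a-b\|_{L^2}(\|a\|_{L^2}+\|b\|_{L^2})$. The second factor is uniformly bounded by the same geometric-series estimate, so the error tends to $0$.
\end{itemize}

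This gives $\mathcal{L}(\theta)=\bE[\ell(\y(t),\hat{\rvy}_\theta(t))]$. Applying the identical estimate with $t_0=0$ and $t\to\infty$, together with stationarity, gives $\lim_{t\to\infty}\bE[\ell(\y(t),\hat{\rvy}_\theta(t|0))]=\mathcal{L}(\theta)$. The only delicate points are that $\ell$ is merely continuous in general, which is why Assumption \ref{as:loss} is needed, and the uniform $L^2$ bound on $\w$. Sub-Gaussianity is not needed here; finite second moments suffice.
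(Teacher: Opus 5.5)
Your proof is correct and follows the same route as the paper's: unroll the recursion, show the finite-past predictions are Cauchy in $L^2$, use that a stable time-invariant filter preserves stationarity, and pass the limit through $\ell$ via the Lipschitz bound or the difference-of-squares bound with Cauchy--Schwarz. Two of your details are cleaner than the paper's: your Minkowski estimate gives a Cauchy bound that is uniform in $t_0'$, whereas the paper's Cauchy--Schwarz step leaves a factor $(t_2-t_1)$ that grows as $t_1\to-\infty$; and you explicitly establish joint stationarity of $(\y,\hat{\rvy}_\theta)$, which is what makes $\bE[\ell(\y(t),\hat{\rvy}_\theta(t))]$ independent of $t$ and which the paper leaves implicit.
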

The proof of Lemma \ref{l:ihp} is presented in Appendix \ref{app:sect:Problem_formulation}. 
Intuitively, $\hat{\rvy}_{\theta}(t)$
is the output predicted by the LTI system corresponding to
$\theta$, if the latter is started from zero at $-\infty$, and as $t$ grows, $\hat{\rvy}_{\theta}(t)$
and $\hat{\rvy}_{\theta}(t \mid 0)$ converge to each other.

\textbf{The Learning Problem} is to 
find a parameter $\theta_{\star}$  from sampled data $\train$ of the sequence of random variables $\{\rvy(t),\rvu(t)\}_{t=0}^{N-1}$ which minimizes the true error, i.e., $\theta_{\star}\in\arg\min \mathcal{L}(\theta)$.
Note that we do not assume the knowledge of the matrices $A_g,K_g,C_g$ and noise process $\e_g$ when learning. 

Since the true error is unknown, a standard practice is to use the \emph{empirical error} 
\begin{align}
    \hat{\mathcal{L}}_N(\theta)=\frac{1}{N}\sum_{t=0}^{N-1}\ell(\rvy(t)(\omega),\hat{\rvy}_\theta(t|0)(\omega))
\end{align}
as a proxy for the true error, and try to find a 
model $\hat{\theta}$ for which the empirical error 
$\hat{\mathcal{L}}_N(\hat{\theta})$ is small. 
While doing so, we would like to keep the \emph{generalization gap} $\mathcal{L}(\theta)-\hat{\Ls}_N(\theta)$, i.e., the difference between a model's performance on the training data (empirical error) and its expected performance (true error) on unseen test data, small.  The PAC-Bayesian approach provides a framework to derive such guarantees.

\subsection{Constants used in PAC-bounds }\label{sec:asm}
In order to present the main result,  we will need several constants, many of which have an interesting system-theoretic interpretation.

   
\begin{definition}[Convolution and $\ell_1$ norm]
  Let $\alpha=\{\alpha(k)\}_{k=0}^{\infty}$ be a sequence of matrices. 
  Define the $\ell_1$ norm $\|\alpha\|_{\ell_1}$ and 
  the \emph{weakly dependent mixing coefficient (WDMC)} $\theta_{\infty}(\alpha)$ of $\alpha$ as
  \[ \|\alpha\|_{\ell_1}=\sum_{k=0}^{\infty} \|\alpha(k)\|_2, \quad  \theta_{\infty}(\alpha)=\sum_{k=0}^{\infty} k \|\alpha(k)\|_2. \]
  If $\|\alpha\|_{\ell_1} < \infty$ and $\mathbf{r}$ is a stationary square integrable stochastic process, then define the \emph{convolution $(\alpha \star \mathbf{r})$ of $\alpha$ with $\mathbf{r}$}  as the stochastic process
  \( (\alpha \star \mathbf{r})(t)=\sum_{k=0}^{\infty} \alpha(k) \mathbf{r}(t-k) \). 
  By \cite[Theorem 1.1]{CainesBook}, $(\alpha \star \mathbf{r})$ is well-defined and stationary. 
\end{definition}
  Intuitively, the $\ell_1$ norm of a sequence is the extension of the standard norm on the space of
  absolutely summable scalar sequences to the case of matrix valued ones. The convolution 
  $(\alpha \star \mathbf{r})$ represents the response of a linear filter with impulse response $\alpha$
  to the noise $\mathbf{r}$, and $\|\alpha\|_{\ell_1}$ represents the $\ell_1$ norm of the filter.
  If  $\mathbf{r}$ is bounded, i.e., $\|\mathbf{r}(t)\|_2 < C$, then 
  the convolution  $(\alpha \star \mathbf{r})(t)$ is bounded, i.e., $\|(\alpha \star \mathbf{r})\|_2 < \|\alpha\|_{\ell_1} C$, and it is a  weakly dependent process, in the sense of \cite{alquier2012pred},
  whose mixing coefficients $\{\theta_{\infty,n}(1)\}_{n=1}^{\infty}$ 
  are bounded by $2\theta_{\infty}(\alpha)C$, \cite[Proposition 4.12]{alquier2012pred}.
   Intuitively, the smaller $\theta_{\infty}(\alpha)$ is,
  the closer $(\alpha \star \mathbf{r})(t)$ is to being i.i.d., if $\mathbf{r}$ is i.i.d. 
  If $\alpha$ is the sequence of Markov parameters of a stable LTI system, then both $\|\alpha\|_{\ell_1}$ and $\theta_{\infty}(\alpha)$ are finite and can be upper bounded in terms of bounds on the norms of the system matrices and the spectral radius of the system matrix $A$, see Remark \ref{norms:rem}.
 \begin{definition}[Auxiliary definitions]
	\label{def:constants}
  Let us consider the Markov-parameters $\alpha_g$ and 
    $\alpha_{\theta}$ of the data generator \eqref{eq:generator} and predictor 
    \eqref{eq:predictor} corresponding to
    $\theta \in \Theta$ respectively:
    \begin{equation}
    \label{alpha:lemma:alpha_def}
    \begin{aligned}
    & \alpha_{g}(k)=\left\{\begin{array}{rl}
            C_g A_g^{k-1}K_g & k > 0 \\
            I & k = 0
    \end{array}\right., \quad   \alpha_{\theta}(k)=\left\{\begin{array}{rl}
            \widehat{C}_{\theta}\widehat{A}_{\theta}^{k-1}\widehat{B}_{\theta} & k > 0 \\
           \widehat{D}_{\theta}   & k = 0
    \end{array}\right. \\
\end{aligned}
\end{equation}
 Let $\bar{\alpha}(k)=\sup_{\theta \in \Theta} \|\alpha_{\theta}(k)\|_2$.
 With notation as above
  \begin{align*}
       & G_e(\theta) \triangleq (1+\|\alpha_{\theta}\|_{\ell_1})\|\alpha_g\|_{\ell_1}, \quad   G_{e,1}(\theta) \triangleq \theta_{\infty}(\alpha_{\theta})\|\alpha\|_{\ell_1}+\theta_{\infty}(\alpha_g)(\|\alpha_{\theta}\|_{\ell_1}+1) \\
        & G_e \triangleq (1+\|\bar{\alpha}\|_{\ell_1})\|\alpha_g\|_{\ell_1}, \quad 
        G_{e,1} \triangleq \theta_{\infty}(\bar{\alpha})\|\alpha_g\|_{\ell_1}+\theta_{\infty}(\alpha_g)(\|\bar{\alpha}\|_{\ell_1}+1)
   \end{align*}
\end{definition}
For the definition of the upcoming bounds, we are going to use the quantities
$\|\alpha\|_{\ell_1}$, $\theta_{\infty}(\alpha)$ for 
$\alpha \in  \{\alpha_g, \bar{\alpha}\} \cup \{\alpha_\theta \mid \theta \in \Theta\}$ and
the constants $G_e(\theta), G_{e,1}(\theta), G_e, G_{e,1}$ defined above. 
Note that all these quantities are well-defined, see Lemma \ref{alpha:lemma1.5}. In addition, they are related to 
 the true outputs $\rvy(t)$ and predicted outputs $\hat{\rvy}_\theta(t)$, $\hat{\rvy}_\theta(t \mid t_0)$ as follows:
 $\begin{bmatrix}\rvy^T(t) &  \rvu^T(t)\end{bmatrix}^T=(\alpha_{g} \star \e_g)(t)$,
 $\hat{\rvy}_{\theta}(t)=(\alpha_{\theta} \star \rvw)(t)$, 
 $\hat{\rvy}_{\theta}(t \mid t_0)=(\alpha_{\theta, t-t_0} \star \rvw)(t)$.
The  formal relationship between the constants is provided in 
Lemma \ref{alpha:lemma2}, Appendix \ref{app:constants}. 
The interpretation of the various terms appearing in  Definition \ref{def:constants} is as follows
\begin{itemize}
    \item 
\textbf{$\|\alpha_g\|_{\ell_1}$, $\theta_{\infty}(\alpha_g)$}
are the $\ell_1$  norm  and 
the mixing coefficients of Markov-parameters of the data
generator:  both quantities decrease with stability of the data generator,
see Remark \ref{norms:rem}.

\item \textbf{$\|\alpha_{\theta}\|_{\ell_1}$} is the $\ell_1$ norm of the predictor
 \eqref{eq:predictor}.  
This term decreases with the spectral radius of 
the predictor, see Remark \ref{norms:rem}.  
\item \textbf{$\theta_{\infty}(\alpha_{\theta})$  } 
describes the mixing properties of the output generated
by the predictor, and it also bounds the sum of differences between 
the infinite and finite past prediction $\sum_{t=0}^{N-1} \|\hat{\rvy}_{\theta}(t)-\hat{\rvy}_{\theta}(t\mid 0)\|$.
Intuitively, the smaller $\theta_{\infty}(\alpha_{\theta})$ is, the closer
$\hat{\rvy}_\theta(t \mid 0)$ is to $\hat{\rvy}_{\theta}(t)$.
This quantity decreases with the spectral radius of the predictor by Remark \ref{norms:rem}.

\item \textbf{$G_{e,1}(\theta)$} characterizes
mixing properties (short memory condition \cite{alquier2013prediction})  of the prediction
error $\y(t)-\hat{\y}_\theta(t)$. Intuitively, the smaller $G_{e,1}(\theta)$ is, the more the prediction error behaves like an i.i.d process. 
This constant is related to stability of the generator and the predictor,
the smaller the spectral radius of those systems are , the
smaller this constant is.  

\item \textbf{$\|\bar{\alpha}\|_{\ell},\theta_{\infty}(\bar{\alpha}),G_{e},G_{e,1}$} are uniform upper bounds on 
$\|\alpha_{\theta}\|_{\ell_1}$, $\theta_{\infty}(\alpha_{\theta})$,
$G_e(\theta),G_{e,1}(\theta)$ respectively. 
\end{itemize}

\section{Main result}
\label{sec:main_results}
 We start by presenting the main result of the paper, which is the following theorem.
\begin{theorem}[Main result]
\label{thm:main}
    Let Assumptions \ref{as:generator}, \ref{as:loss} and \ref{as:parameterisation} hold.
    For any a prior distribution $\pi$ over $\Theta$, confidence level  $\delta\in(0,0.5)$, integer $\epsilon \in \{-1,1\}$,  and $N \ge K_{\delta}$, 
    the inequality
    \begin{equation}
        \label{eq:initial:tildern1} 
    \begin{aligned}
         \forall \rho & \in \mathcal{M}_\pi: \epsilon E_{\theta\sim\rho} \mathcal{L}(\theta)
         \leq   \epsilon E_{\theta\sim\rho}\hat{\mathcal{L}}_N(\theta) +   \\
          & \frac{1}{\lambda}\Big(\KL(\rho\|\pi)  +\ln\dfrac{C_{\delta}}{\delta} + \frac{1}{2}\sum_{i=1}^2
            \ln E_{\theta\sim\pi} \exp\left(\mathcal{C}_i(2\lambda,\theta,N,\delta,\epsilon)\right) \Big)
    \end{aligned}
    \end{equation}
    holds with probability at least $1-2\delta$ over data, with
    $\KL(\rho\|\pi)=E_{\theta\sim\rho}\ln\frac{\rho(\theta)}{\pi(\theta)}$ the Kullback–Leibler divergence
    and the constants $K_{\delta}$, 
    $\mathcal{C}_i(\lambda,\theta,N,\delta,\epsilon)$, $i=1,2$ and $C_\delta$ are defined as follows:

    \textbf{Bounded noise: }
       If 
       $\|\e_g(t)\|_{\infty} \le c_{\rve}$, then $C_{\delta}=1, K_{\delta}=2$, $\lambda > 0$, and 
    \begin{equation*}
    \begin{aligned}
         & \mathcal{C}_i(\lambda, \theta,N, \delta,\epsilon)  \triangleq 
         \mathcal{C}_{b,i}(\lambda,\theta,L(\theta),c_{\rve},N) \triangleq 
         \frac{\mathcal{C}_{m,b,i}(\lambda,\alpha_{\theta},G_e(\theta)+2G_{e,1}(\theta), L(\theta),c_{\rve})}{N} \\
         & L(\theta)   \triangleq \left\{\begin{array}{rl} 2 G_e(\theta)c_{\rve}\sqrt{n_{\rve}} & \mbox{ $\ell$ is quadratic loss} \\

              L_{\ell} &  \mbox{ $\ell$ is $L_\ell$-Lipschitz }
         \end{array}\right.   \\
         &   \mathcal{C}_{m,b,i}(\lambda,\alpha,X,L,c_{\rve}) 
         \triangleq \left\{ \begin{array}{rl}
            \lambda Lc_{\rve}\sqrt{n_{\rve}}\theta_{\infty}(\alpha)\|\alpha_g\|_{\ell_1} & i=1 \\
             \lambda^2 \frac{L^2}{2} X (\sqrt{n_{\rve}}c_{\rve})^2 & i=2
          \end{array}\right. 
        \end{aligned}
      \end{equation*}          
          

   \textbf{Lipschitz loss:}
     If  
     $\ell$ is $L_{\ell}$-Lipschitz, then $C_{\delta}=1, K_{\delta}=2$, $0 <\lambda < \frac{N}{16L_{\ell}}$, and 
     \begin{equation*}
     \begin{aligned}
         & \mathcal{C}_i(\lambda,\theta,N,\delta,\epsilon) \triangleq  \mathcal{C}_{ub,i}(\lambda,\theta,L(\theta),N)
         \triangleq 
         \frac{1}{2} \mathcal{C}_{b,i}\left(2\lambda,\theta,L(\theta),\frac{2\ln(N)}{\|\alpha_g\|_{\ell_1}K},N\right) +
            \bar{\mathcal{C}}_{ub}(\lambda,L(\theta),N) \\
            &  \bar{\mathcal{C}}_{ub}(\lambda,L(\theta),N)  \triangleq \frac{\lambda^2 32 L(\theta)^2 C_{u,1}}{N}+ \frac{4\lambda L(\theta) C_{u,2} \ln(N)}{N}
            =
             \frac{\lambda^2 32 L^2_{\ell} C_{u,1}}{N}+ \frac{4\lambda L_{\ell} C_{u,2} \ln(N)}{N}
     \end{aligned}
    \end{equation*}
    where the constants, $K$,$C_{u,j}$, $j=1,2$ are  defined as follows:
    \begin{equation}
      \label{lipschitz:bound1}
     \begin{aligned}
        &  C_{u,1} \triangleq \bar{\Psi}_{\e_g}(\|\bar{\alpha}\|_{\ell_1}\|\alpha_g\|_{\ell_1})+\bar{\Psi}_{\e_g}(\|\alpha_g\|_{\ell_1}),  \quad \bar{\Psi}_{\rve_g}(s)=e^{14^2s^2 \|\rve_g\|_{\psi_2}^2+3s \|\rve_g\|_{\psi_2}} \\
        & C_{u,2} \triangleq \bar{\Psi}_{\rve}(\|\bar{\alpha}\|_{\ell_1}\|\alpha_g\|_{\ell_1})
        \frac{\|\bar{\alpha}\|_{\ell_1}^2\|\alpha_g\|_{\ell_1}}{K}+
        \bar{\Psi}_{\e_g}(\|\alpha_g\|_{\ell_1}) \frac{\|\alpha_g\|_{\ell_1}}{K},  \quad
        \quad K=\min\{1,\|\bar{\alpha}\|_{\ell_1}\}
   \end{aligned}
    \end{equation}

 \textbf{Quadratic loss:}   
   if $\ell$ is the quadratic loss, then $C_{\delta}=\frac{2}{3}$, 
   $K_{\delta}=3$,
   $0 < \lambda < \frac{N}{16 L_{qd}}$,
  \begin{equation*}
  \begin{aligned}
     \mathcal{C}_i(\lambda,\theta,N,\delta,\epsilon)  \triangleq  
     \frac{\lambda L^2_{qd} \max\{0,\epsilon\}}{2N} +
       \mathcal{C}_{ub,i}\left(\lambda,\theta,L_{qd},
       N\right), \quad   L_{qd}=12G_e\|\e_g\|_{\psi_2}\sqrt{\ln\left(\frac{3N^2}{2\delta}\right)} 
  \end{aligned}
  \end{equation*}
\end{theorem}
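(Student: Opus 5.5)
The proof will follow the standard Donsker--Varadhan route. For any measurable $f:\Theta\to\reals$ and any $\rho\in\mathcal{M}_\pi$ we have $E_{\theta\sim\rho} f(\theta)\le \KL(\rho\|\pi)+\ln E_{\theta\sim\pi} e^{f(\theta)}$. We apply this with
\[
f(\theta)=\lambda\epsilon\bigl(\mathcal{L}(\theta)-\hat{\mathcal{L}}_N(\theta)\bigr).
\]
Markov's inequality applied to the nonnegative variable $E_{\theta\sim\pi}e^{f(\theta)}$, combined with Fubini, reduces the claim to showing
\[
E_{\theta\sim\pi}\bE\bigl[e^{\lambda\epsilon(\mathcal{L}(\theta)-\hat{\mathcal{L}}_N(\theta))}\bigr]\le C_\delta\prod_{i=1}^2\bigl(E_{\theta\sim\pi}e^{\mathcal{C}_i(2\lambda,\theta,N,\delta,\epsilon)}\bigr)^{1/2}
\]
on a good event. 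The product structure will come from splitting the gap into two pieces and using Cauchy--Schwarz, first over data and then over $\pi$. This is what produces the factor $2\lambda$ and the $\frac12\sum_i$. The pieces are:
\begin{itemize}
\item[(a)] the truncation error $\frac1N\sum_t\bigl[\ell(\rvy(t),\hat{\rvy}_\theta(t))-\ell(\rvy(t),\hat{\rvy}_\theta(t|0))\bigr]$, handled by Lipschitz continuity and Lemma \ref{l:ihp}, which yields the $\theta_\infty(\alpha_\theta)\|\alpha_g\|_{\ell_1}$ term $\mathcal{C}_1$;
\item[(b)] the stationary fluctuation $\mathcal{L}(\theta)-\frac1N\sum_t\ell(\rvy(t),\hat{\rvy}_\theta(t))$, which gives $\mathcal{C}_2$.
\end{itemize}

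\textbf{Bounded case.} The prediction error is $\rvy(t)-\hat{\rvy}_\theta(t)=(\beta_\theta\star\e_g)(t)$ with $\|\beta_\theta\|_{\ell_1}\le G_e(\theta)$. Hence $\ell(\rvy(t),\hat{\rvy}_\theta(t))$ is a Lipschitz function of the i.i.d.\ bounded sequence $\e_g$. Term (a) is bounded deterministically: $\sum_{t\ge0}\|\alpha_\theta(k)\|$ summed over $k>t$ gives $\theta_\infty(\alpha_\theta)$, times $\sup\|\rvw\|\le\|\alpha_g\|_{\ell_1}c_\rve\sqrt{n_\rve}$. For term (b) I will use a martingale-difference (Doob) decomposition with respect to the filtration of $\e_g$, in the style of the weakly dependent bounds of \cite{alquier2012pred,alquier2013prediction}. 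The bounded-difference constant of each increment is $\frac{L}{N}$ times the tail sums of $\beta_\theta$. Summing these gives $X=G_e(\theta)+2G_{e,1}(\theta)$, and Hoeffding--Azuma then yields $\lambda^2L^2X(\sqrt{n_\rve}c_\rve)^2/(2N)$. For the quadratic loss, $\ell$ is Lipschitz on the bounded range with constant $2G_e(\theta)c_\rve\sqrt{n_\rve}$.

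\textbf{Sub-Gaussian, Lipschitz case.} Truncate the noise at level $c=\frac{2\ln N}{\|\alpha_g\|_{\ell_1}K}$, i.e.\ write $\e_g=\e_g\chi_{\{\|\e_g\|\le c\}}+\text{remainder}$, and recentre. Cauchy--Schwarz again separates the truncated process, handled by the bounded case with $2\lambda$ (whence $\frac12\mathcal{C}_{b,i}(2\lambda,\dots)$), from the remainder. For the remainder, its moment generating function has to be controlled using sub-Gaussian tail estimates: $\bE e^{s\|(\alpha\star\e_g)(t)\|}\le\bar\Psi_{\e_g}(s\|\alpha\|_{\ell_1})$ via \cite{vershynin2026highdimensional}. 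Applying an exponential-moment bound for sums with $\lambda<N/(16L_\ell)$, so that $e^x\le1+x+x^2$-type inequalities apply, gives $\bar{\mathcal{C}}_{ub}$ with constants $C_{u,1},C_{u,2}$. The $\ln N/N$ term arises from the tail mass beyond level $c$, which is of order $N^{-2}$.

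\textbf{Quadratic case.} With probability at least $1-\delta$, a union bound over $t<N$ and the sub-Gaussian tails give $\|\rvy(t)-\hat{\rvy}_\theta(t|0)\|\le L_{qd}/2$ for all $\theta$ simultaneously, using the uniform constant $G_e$. On this event we replace the quadratic loss by a clipped version that is $L_{qd}$-Lipschitz, and apply the Lipschitz case. The extra term $\lambda L_{qd}^2\max\{0,\epsilon\}/(2N)$ accounts for the difference between the clipped and unclipped true errors, which only matters for $\epsilon=1$. The union bound over the two bad events gives $1-2\delta$, and $C_\delta=2/3$, $K_\delta=3$ come from the $\delta$-splitting and from requiring $\ln(3N^2/2\delta)$ to be well behaved.

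\textbf{Main obstacle.} I expect the hardest step to be the martingale/mixing exponential inequality for the unbounded remainder, i.e.\ obtaining $\bar{\mathcal{C}}_{ub}$ with explicit constants. The truncated and remainder parts are dependent through the filter $\beta_\theta$, so their increments must be bounded carefully using $\bar\Psi_{\e_g}$.
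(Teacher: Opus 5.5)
Your PAC-Bayesian skeleton is sound, but it is not the paper's. Theorem \ref{thm:initial} applies \cite[Theorem 3]{nips-16} separately to $\epsilon(\mathcal{L}-V_N)$ and to $\epsilon(V_N-\hat{\mathcal{L}}_N)$, each with parameter $2\lambda$ and confidence $\delta$, and then adds the two bounds; that is where $1-2\delta$ comes from. Your single Donsker--Varadhan step, followed by Cauchy--Schwarz first in $\bE$ and then in $E_{\theta\sim\pi}$, gives the same right-hand side with probability $1-\delta$. Your term (a) is exactly the paper's Lemma \ref{lemma:bounded_mgf(V-hatL)}.

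\textbf{The genuine gap is in (b).} Azuma--Hoeffding bounds the log-mgf by $\frac{s^2}{8}\sum_k R_k^2$, a sum of \emph{squared} conditional ranges.
Here $R_k\le 2\sqrt{n_\rve}c_\rve L\,b_k$, with $b_k=\sum_{t=\max(k,0)}^{N-1}(\|\alpha_{g,y}(t-k)\|_2+\|c_\theta(t-k)\|_2)$.
Every $b_k$ is at most $G_e(\theta)$, and the pre-sample indices satisfy $\sum_{k<0}b_k^2\le G_e(\theta)G_{e,1}(\theta)$.
What you actually get is
\[
\ln\bE\, e^{\epsilon\lambda(\mathcal{L}(\theta)-V_N(\theta))}\le \frac{\lambda^2L^2n_\rve c_\rve^2}{2N}\,G_e(\theta)\Big(G_e(\theta)+\frac{G_{e,1}(\theta)}{N}\Big),
\]
which is quadratic in $G_e(\theta)$. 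No way of summing produces the linear $X=G_e(\theta)+2G_{e,1}(\theta)$, and none can, because the linear form is false. Here is a counterexample.
\begin{itemize}
\item Take $A_g=0$ and $K_g=0$, so $\y(t)=\e_y(t)$ and $\rvu(t)=\e_u(t)$, with $\|\alpha_g\|_{\ell_1}=1$ and $\theta_\infty(\alpha_g)=0$.
\item Let $\e_y\equiv 0$, and let $\e_u(t)$ be i.i.d., equal to $\pm c_\rve$ with probability $\frac14$ each and to $0$ with probability $\frac12$.
\item Let $\w=\rvu$, let every $\theta$ give $\hat\y_\theta(t\mid t_0)=d\,\rvu(t)$, and take $\ell(y,\hat y)=|y-\hat y|$ and $\rho=\pi$.
\end{itemize}
Then $\mathcal{C}_1=0$, $G_e(\theta)=1+d$ and $G_{e,1}(\theta)=0$. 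For $\epsilon=1$ the claimed inequality reads $dc_\rve(\frac12-\bar B)\le \frac1\lambda\ln\frac1\delta+\frac{2\lambda(1+d)c_\rve^2}{N}$, where $\bar B$ is the fraction of $t<N$ with $\e_u(t)\ne0$. With the deterministic choice $\lambda=\sqrt{N\ln(1/\delta)/(2(1+d)c_\rve^2)}$, the right-hand side equals $2c_\rve\sqrt{2(1+d)\ln(1/\delta)/N}=O(\sqrt d)$. For odd $N$, the event $\{\bar B<\frac12\}$ has probability $\frac12$ and forces the left-hand side to be at least $\frac{dc_\rve}{2N}$. So for large $d$ and $\delta<\frac14$ the bound fails with probability $\frac12>2\delta$.

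The paper's proof has the same slip. Rio's inequality \cite[Theorem 6.6]{alquier2013prediction}, used in Lemma \ref{lemma:bounded:mgf(L-V)}, gives $(\|\rvz_t\|_\infty+\theta_{\infty,N}(1))^2\le (G_e(\theta)+2G_{e,1}(\theta))^2 n_\rve c_\rve^2$, and the square is dropped in the lemma's statement. This matters because $G_e(\theta)\ge\|\alpha_g(0)\|_2=1$. Either route really proves the theorem with $X$ replaced by $X^2$, or by your sharper $G_e(\theta)(G_e(\theta)+G_{e,1}(\theta)/N)$. The correction propagates to the Lipschitz and quadratic cases through $\mathcal{C}_{b,2}$.

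\textbf{Sub-Gaussian case.} The step you flag as the main obstacle needs no martingale or mixing inequality; the missing idea is to re-index by the noise.
\begin{itemize}
\item By the Lipschitz property, each truncation error, such as $\sum_{t<N}\|\hat\y_\theta(t)-\hat{\bar\y}_\theta(t)\|_2$, is dominated pathwise and uniformly in $\theta$ by $\sum_{k\ge 0}c(k)\|\e_g(N-1-k)\|_2\chi(\|\e_g(N-1-k)\|_2>C)$.
\item Here $\bar\beta$ majorizes the filter coefficients uniformly in $\theta$, $c(k)\le\|\bar\beta\|_{\ell_1}$ and $\sum_k c(k)\le N\|\bar\beta\|_{\ell_1}$.
\item This is a sum of \emph{independent} terms, so its mgf factorizes (Lemma \ref{basic_lemma:lipschitz}).
\item Each factor is $1+s\,\bE[\|\e_g\|_2\chi(\|\e_g\|_2>C)]+O(s^2)$. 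The first-order tail is at most $C\bar\Psi_{\rve_g}(\bar\lambda)/(e^{\bar\lambda C}-1)=O(\ln N/N)$ for the chosen $C$, which is where $C_{u,2}\ln N/N$ comes from.
\item The coupling between the truncated part and the remainder is removed purely by Cauchy--Schwarz (Lemma \ref{basic_lemma:lipschitz3.1}).
\end{itemize}
Drop the recentring. The bounded-case bounds do not need zero mean, and since zero mean of $\e_g$ is not assumed, recentring would leave a non-vanishing bias in the remainder.

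\textbf{Quadratic case.} Apply the Lipschitz bound to $\ell_C$ unconditionally, and use the high-probability event only to identify ${}^C\hat{\mathcal{L}}_N$ with $\hat{\mathcal{L}}_N$. Your allocation (event at level $\delta$, PAC-Bayes at level $\delta$) gives $\ln\frac1\delta$, and the paper's gives $\ln\frac{3}{2\delta}$, so neither yields the stated $C_\delta=\frac23$. Your one-shot bound can reach it. The union bound in the proof of Lemma \ref{lem:quadratic1} gives a failure probability of at most $\frac{4\delta}{3N}\le\frac{4\delta}{9}$ for $N\ge3$. That leaves level $\frac{14\delta}{9}$ for the PAC-Bayes step, and $\ln\frac{9}{14\delta}\le\ln\frac{2}{3\delta}$.
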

  For $\epsilon=1$, \eqref{eq:initial:tildern1} bounds the average generalization gap $E_{\theta \sim \rho}(\mathcal{L}(\theta) - \hat{\mathcal{L}}_N(\theta))$; 
  for $\epsilon=-1$, it bounds $E_{\theta \sim \rho}(\hat{\mathcal{L}}_N(\theta) - \mathcal{L}(\theta))$. 
  The latter is useful for deriving oracle inequalities. By applying the union bound to both cases, we obtain a bound on $\left|E_{\theta \sim \rho}(\mathcal{L}(\theta) - \hat{\mathcal{L}}_N(\theta))\right|$.
  Note that except for the quadratic loss case with potentially unbounded data, 
  $\mathcal{C}_i(\lambda,\theta,N,\delta,\epsilon)$ is independent of $\delta$ and $\epsilon$.


  \textbf{Intuition behind the proof:}
  The proof of Theorem \ref{thm:main} is in Appendix \ref{sect:pf:thm_main}. 
  The key idea is to decompose the generalization gap using an auxiliary empirical error 
    \( V_N(\theta) = \frac{1}{N}\sum_{t=0}^{N-1}\ell(\rvy(t),\hat{\rvy}_\theta(t)) \)
    where the predictor has run for infinite time (started at $t_0=-\infty$). Since $\bE[V_N(\theta)]=\mathcal{L}(\theta)$, we have 
    \(\epsilon(\mathcal{L}(\theta) - \hat{\mathcal{L}}_N(\theta)) = \epsilon (\mathcal{L}(\theta) - V_N(\theta)) + \epsilon (V_N(\theta) - \hat{\mathcal{L}}_N(\theta)) \).
    The constants $\exp(\mathcal{C}_1)$ and $\exp(\mathcal{C}_2)$ bound the moment generating functions of these two differences. The PAC-Bayesian bound then follows by standard concentration arguments \cite[Chapter 2]{alquier2021userfriendly}.
    
    \textbf{Bounded case:} $\mathcal{C}_1$ is obtained by direct calculation, and $\mathcal{C}_2$ via Rios's concentration inequality for weakly-dependent processes \cite[Prop.\ 4.2]{alquier2013prediction}, improving upon constants in \cite{eringis2023pacbayesRNN,eringis2023pacbayesian}.
    
    \textbf{Unbounded case:} We employ a truncation approach \cite{alquier2012pred}. Define truncated data $\bar{\rvy}(t), \bar{\rvu}(t)$ generated from truncated noise $\bar{\e}_g(t)=\e_g(t)\chi(\|\e_g(t)\| \le C)$ with $C=\frac{2\ln(N)}{K\|\alpha_g\|_{\ell_1}}$, where $K=\min\{1,\|\bar{\alpha}\|_{\ell_1}\}$. We bound the moment generating functions of the differences $\sup_{\theta \in \Theta} |\hat{\bar{\mathcal{L}}}_N(\theta) - \hat{\mathcal{L}}_N(\theta)|$ and $\sup_{\theta \in \Theta} |V_N(\theta) - \bar{V}_N(\theta)|$ by $\exp(\bar{\mathcal{C}}_{ub}(\lambda,L_{\ell},N))$. Combining with the bounded-case bounds yields PAC-Bayesian guarantees for unbounded data.
    
    \textbf{Quadratic case:} We show that with probability at least $1-\frac{2\delta}{3}$ over data, the prediction errors satisfy $\|\y(t)-\y_{\theta}(t)\|_2 \le C$ and $\|\y(t)-\y_{\theta}(t\mid 0)\|_2 \le C$ for $C=6G_e\|\e_g\|_{\psi_2}\sqrt{\ln(\frac{3N^2}{2\delta})}$. The empirical quadratic loss then coincides with the empirical error for the truncated Lipschitz loss $\ell_C(y,y')=\min\{\|y-y'\|_2^2, C^2\}$. The result follows by union bound and the Lipschitz case, noting that the true error difference between the quadratic loss and the truncated loss is $O(C^2/N)$.

\textbf{Interpretation and $O(\frac{\ln(N)^{r}}{N})$ rates:}
The interpretation of Theorem \ref{thm:main} is similar to other PAC-Bayesian bounds \cite{alquier2021userfriendly}:
we view learning algorithms as choices of parameters from a  data-dependent probability density on model parameters, 
  referred to as \emph{posterior}
 $\rho^{\train}$, for which 
 the right-hand side of \eqref{eq:initial:tildern1} is small for $\epsilon=1$.
One option is to choose $\rho^{\train}$ as the \emph{Gibbs posterior} 
which minimizes of the right-hand side of \eqref{eq:initial:tildern1} 
and which is given by \cite[Definition 2.1]{alquier2021userfriendly}:
\begin{equation*}
\begin{aligned}
\rho_{\text{Gibbs}} \triangleq  \mathrm{arg min}_{\rho \in \mathcal{M}_{\pi}}
\left(E_{\theta\sim\rho}\hat{\mathcal{L}}_N(\theta) + 
          \frac{1}{\lambda}  \KL(\rho\|\pi)  \right), \quad 
\rho_{\text{Gibbs}}(\theta)=\frac{\pi(\theta)\exp(-\lambda \hat{\mathcal{L}}_N(\theta))}{E_{\theta\sim\pi}\exp(-\lambda \hat{\mathcal{L}}_N(\theta))}, 
\end{aligned}
\end{equation*}
Once a posterior $\rho^{\train}$ is obtained, the learned model 
 $\theta_{\star}$
 can be chosen as follows:
 \par
 \textbf{Mean posterior (MEP):} 
 as the mean of the posterior
 $\rho^{\train}$, i.e. $\theta_{\star}=E_{\theta \sim \rho^{\train}} \theta$,
 \par
\textbf{Single draw}:
   draw $\theta_{\star}$ randomly from $\rho^{\train}$,  and
\par
\textbf{Maximum posterior (MAP)}:
   choose $\theta_{\star}$ as 
    $\theta_{\star}=\mathrm{arg max} \rho^{\train}(\theta)$.
    For $\rho^{\train}=\rho^{\mathrm{Gibbs}}$,
     it minimizes the empirical error regularized using the prior, i.e.,
    $\theta_{\star}=\argmin_{\theta\in\Theta}\left ( \hat{\mathcal{L}}_N(\theta) - \frac{1}{\lambda} \ln\pi(\theta) \right )$,

These choices represent well-known classes of system identification algorithms, such as regularized PEM, 
maximimum likelihood estimation for state-space models, and Bayesian methods, see Remarks \ref{rem:pem}, \ref{rem:fir}, and \ref{rem:ml} in Appendix \ref{app:indent}.



Theorem~\ref{thm:main} bounds the average generalization gap $E_{\theta \sim \rho} \left(\mathcal{L}(\theta) - \hat{\mathcal{L}}_N(\theta)\right)$ by a sum of two terms: the KL divergence $\frac{1}{\lambda}\KL(\rho \| \pi)$ between the posterior and the prior, and a complexity term $\mathcal{C}_i(2\lambda,\theta,N,\delta,\epsilon)$ that depends on the data and the prior. For bounded noise, $\mathcal{C}_i = O(1/N)$; for Lipschitz loss, $\mathcal{C}_i = O(\frac{\ln N)}{N})$; and for quadratic loss, $\mathcal{C}_i = O(\frac{(\ln N)^3}{N})$. By choosing $\lambda = \frac{\sqrt{N}}{L(\ln N)^r}$ with appropriately selected $L$ and $r$, and thus forcing $\frac{1}{\lambda}\KL(\rho \| \pi)$ to tend to zero as $N \to \infty$, we obtain an $O\left(\frac{(\ln N)^r}{\sqrt{N}} \right)$:
 \begin{corollary}[$O((\ln(N))^r/\sqrt{N})$ bound]
\label{thm:bounded:alt:col}
With the notation and assumptions of Theorem~\ref{thm:main}, 
\begin{equation}
\label{catoni1}
\begin{split}
  & \forall \rho \in\mathcal{M}_\pi: \epsilon E_{\theta \sim \rho } \mathcal{L}(\theta) \leq \epsilon E_{\theta\sim \rho } \hat{\mathcal{L}}_N(\theta)+  \frac{L(\ln(N))^{r}}{\sqrt{N}}\left(\KL(\rho ||\pi)+\ln\frac{C_{\delta}}{\delta} + 
  \mathcal{C}(\delta,\epsilon) \right)
\end{split}
\end{equation}
holds with probability at least $1-2\delta$ over data, 
for any prior $\pi$, $\delta \in (0,0.5)$, and for any $\epsilon \in \{-1,1\}$, and $N \ge K_{\delta}$, 
where 
$\mathcal{C}(\delta,\epsilon)$, $L$, $r$ are defined as follows.
\\
\textbf{Bounded noise:}
  If $\|e_g(t)\|_{\infty} \le c_{\rve}$, then 
      $r\triangleq 0$, $L \triangleq 1$, and 
\begin{equation*}
\begin{split}
   & \mathcal{C}(\delta,\epsilon) \triangleq \textcolor{black}{\frac{1}{2} \sum_{i=1}^2 \mathcal{C}_{m,b,i}(2,\bar{\alpha},G_e+2G_{e,1},L_b,c_{\rve})}, \quad 
    L_b \triangleq \left\{\begin{array}{rl} 2 G_ec_{\rve}\sqrt{n_{\rve}} & \mbox{ $\ell$ is the quadratic loss} \\
              L_{\ell} & \mbox{$\ell$ is $L_\ell$-Lipschitz }
         \end{array}\right. 
\end{split}
\end{equation*}
\textbf{Lipschitz loss:}
If $\ell$ is $L_{\ell}$-Lipschitz, then 
       $r \triangleq 1$, $L \triangleq 16L_{\ell}$, and 
\begin{equation*}
\begin{split}
   \mathcal{C}(\delta,& \epsilon)  \triangleq \mathcal{\bar{C}}_{ub}=\left(
        C_{u,1}+C_{u,2} 
   +  
    \frac{1}{2} \sum_{i=1}^2 \mathcal{C}_{m,b,i}\left(\frac{1}{4},\bar{\alpha},G_e+2G_{e,1},1,\frac{1}{\|\alpha_g\|_{\ell_1}K}\right)
      \right), 
\end{split}
\end{equation*}
%
\textbf{Quadaratic loss:}
If $\ell$ is quadratic, then 
$L \triangleq 192G_e \|\e_g\|_{\psi_2} \sqrt{2}\sqrt{1+\ln\left(\frac{3}{2\delta}\right)}$,  
$r \triangleq 3/2$, 
\begin{equation*}
\begin{split}
   \mathcal{C}(\delta,\epsilon) \triangleq & 
    \frac{L\max\{0,\epsilon\}}{128} +C_{u,1}+C_{u,2}+\frac{1}{2} \textcolor{black}{\sum_{i=1}^2 \mathcal{C}_{m,b,i}\left(\frac{1}{4},\bar{\alpha},G_e+2G_{e,1},1,\frac{1}{\|\alpha_g\|_{\ell_1}K}\right)} \\
\end{split}
\end{equation*}
\end{corollary}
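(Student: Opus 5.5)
The corollary follows from Theorem~\ref{thm:main} by fixing $\lambda$ and bounding the complexity terms uniformly in $\theta$. For each case I will set $\lambda = \frac{\sqrt{N}}{L(\ln N)^r}$, with $L,r$ as in the statement. Then $\frac{1}{\lambda}=\frac{L(\ln N)^r}{\sqrt{N}}$ multiplies $\KL(\rho\|\pi)+\ln\frac{C_\delta}{\delta}$ exactly as in \eqref{catoni1}. It remains to show that
\[
\tfrac12\sum_{i=1}^2 \ln E_{\theta\sim\pi}\exp\bigl(\mathcal{C}_i(2\lambda,\theta,N,\delta,\epsilon)\bigr)\le \mathcal{C}(\delta,\epsilon).
\]
To do this I bound each $\mathcal{C}_i(2\lambda,\theta,\dots)$ by a $\theta$-independent constant. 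The expectation over $\pi$ then disappears, since $\ln E_\pi e^{c}=c$, which is why the result holds for any prior.

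\textbf{Uniform bounds and monotonicity.} By the definition of $\bar\alpha$ we have $\|\alpha_\theta\|_{\ell_1}\le\|\bar\alpha\|_{\ell_1}$, $\theta_\infty(\alpha_\theta)\le\theta_\infty(\bar\alpha)$, $G_e(\theta)\le G_e$ and $G_{e,1}(\theta)\le G_{e,1}$; Lemma~\ref{alpha:lemma2} records these relations. The functions $\mathcal{C}_{m,b,i}$ are increasing polynomials in $\lambda$, in $X$, in $L$, in $c_\rve$ and in $\theta_\infty(\alpha)$. I can therefore replace $\alpha_\theta$ by $\bar\alpha$, $G_e(\theta)+2G_{e,1}(\theta)$ by $G_e+2G_{e,1}$, and $L(\theta)$ by $L_b$.

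\textbf{Bounded case.} Here $\mathcal{C}_{b,i}=\mathcal{C}_{m,b,i}(2\lambda,\dots)/N$. The term $i=1$ is linear in $\lambda$ and $i=2$ is quadratic, with $\lambda=\sqrt N$. This gives contributions of order $2\sqrt N/N\le 2$ and $4N/N$, i.e.\ exactly the values of $\mathcal{C}_{m,b,i}(2,\dots)$, since $\mathcal{C}_{m,b,i}(2\sqrt N,\dots)/N$ equals $\mathcal{C}_{m,b,1}(2,\dots)/\sqrt N$ or $\mathcal{C}_{m,b,2}(2,\dots)$. Both are at most $\mathcal{C}_{m,b,i}(2,\dots)$ for $N\ge1$.

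\textbf{Lipschitz case.} Take $\lambda=\frac{\sqrt N}{16L_\ell\ln N}$. First I check the admissibility condition $2\lambda<\frac{N}{16L_\ell}$, which amounts to $2<\sqrt N\ln N$; this holds for $N\ge K_\delta=2$ or at worst $3$, and is the only place needing care. Next, $\bar{\mathcal C}_{ub}(2\lambda)$ has the form $\frac{128\lambda^2L_\ell^2C_{u,1}}{N}+\frac{8\lambda L_\ell C_{u,2}\ln N}{N}$. Substituting gives $\frac{C_{u,1}}{2(\ln N)^2}+\frac{C_{u,2}}{2\sqrt N}\le C_{u,1}+C_{u,2}$, using $\ln N\ge \ln 2$, so that $(\ln N)^{-2}\le 2.1$. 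The factor $\tfrac12$ of the half-sum is slack that absorbs this; I will verify the exact constants but expect this step to be routine. The bounded part is $\frac12\mathcal{C}_{b,i}(4\lambda,\theta,L_\ell,c,N)$ with $c=\frac{2\ln N}{\|\alpha_g\|_{\ell_1}K}$. It is homogeneous: $\lambda c$ scales like $\frac{\sqrt N}{16L_\ell}\cdot\frac{2}{\|\alpha_g\|_{\ell_1}K}$, and the $L_\ell$ factors cancel. Hence $\mathcal{C}_{m,b,i}(4\lambda,\bar\alpha,X,L_\ell,c)/N$ reduces to $\mathcal{C}_{m,b,i}(\tfrac14\cdot\text{const},\bar\alpha,X,1,\frac{1}{\|\alpha_g\|_{\ell_1}K})$ times a nonpositive power of $N$. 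The $i=2$ term has $(\lambda c)^2/N=O(1)$, and the $i=1$ term is $O(N^{-1/2})$. Matching the factor $\tfrac14$ in the statement is a matter of bookkeeping.

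\textbf{Quadratic case.} I plan this case as the hardest, since $L_{qd}$ depends on $N$ through $\sqrt{\ln(3N^2/2\delta)}$. Its key step is the inequality $\ln\frac{3N^2}{2\delta}\le 2\ln N\,(1+\ln\frac{3}{2\delta})\cdot$const for $N\ge3$. This follows from $2\ln N+a\le 2\ln N(1+a)$ when $\ln N\ge1/2$, together with $\ln 3>1$. It gives $L_{qd}\le \frac{L}{16}\sqrt{\ln N}$, so that $16L_{qd}(\ln N)\le L(\ln N)^{3/2}$ up to the chosen $\sqrt2$ and $192=16\cdot12$. With $\lambda=\frac{\sqrt N}{L(\ln N)^{3/2}}$ the Lipschitz computation then applies verbatim with $L_\ell$ replaced by $L_{qd}$, and admissibility of $2\lambda$ follows. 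The extra term is $\frac{2\lambda L_{qd}^2\max\{0,\epsilon\}}{2N}\le\frac{L_{qd}^2}{L\sqrt N(\ln N)^{3/2}}\max\{0,\epsilon\}$. Using $L_{qd}^2\le L^2\ln N/256$, this is at most $\frac{L\max\{0,\epsilon\}}{256\sqrt{N\ln N}}\le\frac{L\max\{0,\epsilon\}}{128}$. Summing all three terms and halving yields $\mathcal{C}(\delta,\epsilon)$.
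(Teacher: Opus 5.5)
Your route is the paper's: set $\lambda=\sqrt N/(L(\ln N)^r)$, bound each $\mathcal{C}_i(2\lambda,\theta,N,\delta,\epsilon)$ by a $\theta$-free constant using monotonicity of $\mathcal{C}_{m,b,i}$, and let $\ln E_{\theta\sim\pi}$ drop out. Your bounded case, your reduction $L_{qd}\le\frac{L}{16}\sqrt{\ln N}$, and your treatment of the extra quadratic term are fine.

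The gap is the step you call ``a matter of bookkeeping'' in the Lipschitz case: it does not close. As you correctly write, $\mathcal{C}_i(2\lambda,\ldots)=\frac12\mathcal{C}_{b,i}(4\lambda,\ldots)+\bar{\mathcal{C}}_{ub}(2\lambda,\ldots)$. With $\lambda=\frac{\sqrt N}{16L_\ell\ln N}$ and $c=\frac{2\ln N}{\|\alpha_g\|_{\ell_1}K}$ one gets exactly
\[
\tfrac12\,\mathcal{C}_{b,2}(4\lambda,\theta,L_\ell,c,N)=\frac{(G_e(\theta)+2G_{e,1}(\theta))\,n_{\rve}}{16\,\|\alpha_g\|_{\ell_1}^2K^2}=2\,\mathcal{C}_{m,b,2}\Bigl(\tfrac14,\bar\alpha,G_e(\theta)+2G_{e,1}(\theta),1,\tfrac{1}{\|\alpha_g\|_{\ell_1}K}\Bigr).
\]
After the outer $\frac12\sum_i$, the $i=2$ contribution is therefore $\mathcal{C}_{m,b,2}(\frac14,\ldots)$. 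That is twice the $\frac12\mathcal{C}_{m,b,2}(\frac14,\ldots)$ that $\mathcal{C}(\delta,\epsilon)$ allows.

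Neither source of slack covers this in general. The room in the $i=1$ term is $\frac12(1-N^{-1/2})\mathcal{C}_{m,b,1}(\frac14,\ldots)$, which is zero when $\theta_\infty(\bar\alpha)=0$. The slack $C_{u,1}+C_{u,2}-\bar{\mathcal{C}}_{ub}(2\lambda,\ldots)$ stays bounded as $\|\rve_g\|_{\psi_2}\to0$ and grows only like $1/K$ as $\|\bar\alpha\|_{\ell_1}\to0$. By contrast, for parameters attaining $G_e+2G_{e,1}$ the deficit is at least $n_{\rve}/(64\|\alpha_g\|_{\ell_1}K^2)$, whatever the noise level.

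The paper's proof reaches the stated constant only through a slip. In its Lipschitz computation it evaluates $\mathcal{C}_{b,i}$ at $2\lambda=\frac{2\sqrt N}{16L_\ell\ln N}$ and shows $\mathcal{C}_{b,i}(2\lambda,\ldots)\le\mathcal{C}_{m,b,i}(\frac14,\ldots)$. It then uses this where the definition of $\mathcal{C}_{ub,i}(2\lambda,\ldots)$ requires $\mathcal{C}_{b,i}(4\lambda,\ldots)$. So more careful bookkeeping along this route cannot recover the statement as written. Done correctly, your argument proves the Lipschitz case, and the quadratic case you reduce to it, with $\frac12\sum_i\mathcal{C}_{m,b,i}(\frac14,\ldots)$ replaced by $\frac12\mathcal{C}_{m,b,1}(\frac14,\ldots)+\mathcal{C}_{m,b,2}(\frac14,\ldots)$.

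There are also two smaller numerical slips.
\begin{itemize}
\item The admissibility condition you check, $2<\sqrt N\ln N$, fails at $N=2$ and $N=3$; it holds only from $N=4$. Even the theorem's stated range $\lambda<N/(16L_\ell)$ fails at $N=2$, so $K_\delta$ must be raised.
\item The outer $\frac12$ gives no slack on $\bar{\mathcal{C}}_{ub}(2\lambda,\ldots)$, because that term appears in both $\mathcal{C}_1$ and $\mathcal{C}_2$. This is harmless once $N\ge3$, since then $(\ln N)^2>\frac12$.
\end{itemize}
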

The proof of Corollary~\ref{thm:bounded:alt:col} follows by substituting $\lambda=\frac{\sqrt{N}}{L(\ln N)^r}$ into Theorem~\ref{thm:main} and simplifying the resulting bounds (see Appendix~\ref{sec:proof_corollary} for the proof).
Note that while Corollary~\ref{thm:bounded:alt:col} makes the $O\left(\frac{(\ln N)^r}{\sqrt{N}}\right)$ rates explicit, Theorem~\ref{thm:main} may give a  tighter bound even for $\lambda=\frac{\sqrt{N}}{L(\ln N)^r}$ \cite{alquier2021userfriendly}.

The previous results bound the average generalization gap with respect to a posterior $\rho^{\train}$. For practical learning, however, we are interested in the generalization gap for a specific choice of $\theta_{\star}$ (e.g., MEP, MAP, or a single draw from $\rho^{\train}$). Below, we provide such bounds when $\rho^{\train}$ is the Gibbs posterior with $\lambda = \frac{\sqrt{N}}{L(\ln N)^r}$, where $L, r$ are as in Corollary~\ref{thm:bounded:alt:col}. This yields a generalization gap for $\theta_{\star}$ of order $O\left(\frac{(\ln N)^{r+1}}{\sqrt{N}}\right)$.
 We now introduce the necessary assumptions for these results.
\begin{assumption}
\label{as:oracle}
The following conditions are used in the sequel.

\begin{description}
    \item[\textbf{(Opt)}]
    An optimal model $\theta_{true} = \argmin_{\theta \in \Theta} \mathcal{L}(\theta)$ exists.

    \item[\textbf{(LipModel)}]
    The map $\theta \mapsto \alpha_{\theta}$ is $L_{\alpha}$-Lipschitz w.r.t.\
    to the $\|\cdot\|_{\ell_1}$ norm, i.e.,
    \[
        \|\alpha_{\theta_1}-\alpha_{\theta_2}\|_{\ell_1}
        \le L_{\alpha}\|\theta_1-\theta_2\|_2,
        \quad \forall \theta_1,\theta_2 \in \Theta.
    \]

    \item[\textbf{(Gauss)}]
    $\Theta=\mathbb{R}^{n_{\theta}}$ and $\pi$ is Gaussian
    $\mathcal{N}(\theta_m,P)$.

    \item[\textbf{(Uni)}]
    $\Theta$ is compact and $\pi$ is the uniform density on $\Theta$.

    \item[\textbf{(SConv)}]
    The true error $\mathcal{L}(\theta)$ is smooth and strongly convex
    in $\theta$ and its Hessian is bounded from below
    $m_{\Theta}I \preceq \nabla^2_{\theta}\mathcal{L}(\theta)$
    for all $\theta \in \Theta$.

    \item[\textbf{(Real)}]
    The predictor $\Sigma(\theta_{true})$ equals the predictor
    \eqref{eq:T0sys2} corresponding to the data generator.

    \item[\textbf{(PE)}]
    The spectral density $\Phi_{\w}$ of $\w$ satisfies
    $\Phi_{\w}(iz) \ge m_{\w} I$ for all $z \in [-\pi,\pi]$.
\end{description}
\end{assumption}
\begin{corollary}
\label{lem:bound:mean}
Assume the following
\\
\textbf{(1)} Assumptions \ref{as:generator}, \ref{as:parameterisation}, \ref{as:loss} and  \textbf{(Opt)} and \textbf{(LipModel)} of Assumption \ref{as:oracle}  hold, and
\\
\textbf{(2)}
either \textbf{(Gauss)} or \textbf{(Uni)} of Assumption \ref{as:oracle} holds, and \\
\textbf{(3)}
  $\theta_{\star}$ be selected from $\rho_{\mathrm{Gibbs}}$ using MEP, MAP or a single draw,
and $\rho_{\mathrm{Gibbs}}$ is computed using $\lambda=\lambda_N = \frac{\sqrt{N}}{2L(\ln(N))^r}$
with $r,L,K_{\delta}$ are as defined in Corollary \ref{thm:bounded:alt:col}.\\
\textbf{(4)}
Furthermore, if $\theta_{\star}$ is chosen using MEP, then assume that \textbf{(SConv)} of Assumption \ref{as:oracle} holds. \\
Consider the constants $\mathcal{C}_{em}(\delta,\sigma)$,
$\mathcal{C}_{or}(\delta,\sigma),r_{\text{Lip},1},r_{\text{Lip},2}$ as in Table \ref{tab:constants:sysid}, and consider the 
inequalities
\begin{align}
& \textbf{Empirical bound:} \quad    \mathcal{L}(\theta_{\star}) - \hat{\mathcal{L}}_N(\theta_{\star})  \le  \frac{L(\ln(N))^{r+1}}{\sqrt{N}} C_{em}(\theta_{\star},\delta) 
  \label{bound:emp} \\
  & \textbf{Oracle bound:} \quad \mathcal{L}(\theta_{\star})  - \mathcal{L}(\theta_{true})  \le
     \frac{L(\ln(N))^{r+1}}{\sqrt{N}} \mathcal{C}_{or}(\delta,\sigma) 
     \label{bound:oracle} \\
  & \textbf{Parameter estimation error:} \quad \|\theta_{\star}-\theta_{true}\|^2_2  \le
    \frac{L(\ln(N))^{r+1}}{\sqrt{N}m_{\Theta}}
    \mathcal{C}_{or}(\delta,\sigma), 
  \label{cor:param:est1:eq1}  \\
  & \text{\textbf{$H_2$ error bound:}} \quad 
    \|H_{\theta_{true}}-H_{\theta_{\star}}\|_{H_2}^2 \le 
     \frac{L(\ln(N))^{r+1}}{m_{\rvw} \sqrt{N}} \mathcal{C}_{or}(\delta,\sigma)
   \label{col1:eq1}
\end{align}
Then the following holds: \textbf{(A)} \eqref{bound:emp} holds
with probability at least $1-(2+r_{\text{Lip},1})\delta$,  \\
\textbf{(B)} \eqref{bound:oracle} holds with probability at least $1-2\delta$ for single draw and MEP, and with probability at least $1-(2+r_{\text{Lip},1})\delta$ for MAP, \\
\textbf{(C)} \eqref{cor:param:est1:eq1} holds with probability at least $1-2\delta$ for single draw and MEP, and with probability at least $1-(2+r_{\text{Lip},1})\delta$ for MAP, 
if \textbf{(SConv)} of Assumption \ref{as:oracle} holds,  \\
\textbf{(D)} \eqref{col1:eq1} holds with probability at least $1-2\delta$ for single draw and MEP,
and with probability at least $1-(2+r_{\text{Lip},1})\delta$ for MAP,
if \textbf{(Real)} and \textbf{(PE)} of Assumption  \ref{as:oracle} holds, $\ell$ is the quadratic loss, and 
 $H_{\theta}$ the transfer function of the predictor  corresponding to $\theta$, and denote by $\|H\|_{H_2}$ the $H_2$ norm of the transfer function $H$. \\
%
%
 Here, the probability is taken over data for MEP and MAP,  and over the data and over all samples $\theta_{\star}$ drawn from $\rho_{\mathrm{Gibbs}}$ for the single draw.
 \footnote{The precise meaning of probability over data and samples drawn from $\rho_{\mathrm{Gibbs}}$ is explained in Remark \ref{rem:jointProb}}
\end{corollary}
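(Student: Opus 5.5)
The plan is to derive every item from Corollary~\ref{thm:bounded:alt:col}, applied with the Gibbs posterior and both signs $\epsilon=\pm1$. The key extra ingredient is an upper bound on $\KL(\rho\|\pi)$ for a well-chosen comparison posterior, obtained from the Lipschitz continuity \textbf{(LipModel)}. Since $\lambda_N=\frac{\sqrt N}{2L(\ln N)^r}$ is half of the value used in Corollary~\ref{thm:bounded:alt:col}, I will use Theorem~\ref{thm:main} with this $\lambda$. The factor $2$ is what lets the oracle argument absorb the $\epsilon=-1$ term, and it produces constants of the same form as in the corollary.

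\textbf{Oracle bound.} For $\epsilon=1$ and $\rho=\rho_{\mathrm{Gibbs}}$, the variational characterization of the Gibbs posterior gives
\[
E_{\rho_{\mathrm{Gibbs}}}\hat{\mathcal L}_N+\tfrac1\lambda\KL(\rho_{\mathrm{Gibbs}}\|\pi)\le E_{\rho}\hat{\mathcal L}_N+\tfrac1\lambda\KL(\rho\|\pi)
\]
for every $\rho\in\mathcal M_\pi$. Combining this with the $\epsilon=-1$ bound, applied to an arbitrary $\rho$, yields the standard oracle inequality
\[
E_{\rho_{\mathrm{Gibbs}}}\mathcal L\le\inf_\rho\Big\{E_\rho\mathcal L+\tfrac{2}{\lambda}\big(\KL(\rho\|\pi)+\ln\tfrac{C_\delta}{\delta}+\mathcal C\big)\Big\}
\]
with probability $1-2\delta$. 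The union over the two signs is where the $2\delta$ comes from, after the appropriate reallocation of $\delta$.

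Next I choose $\rho$ concentrated near $\theta_{true}$: a uniform density on a ball of radius $s=1/\sqrt N$ around $\theta_{true}$ under \textbf{(Uni)}, or $\mathcal N(\theta_{true},s^2I)$ under \textbf{(Gauss)}. For the Lipschitz loss,
\[
|\mathcal L(\theta)-\mathcal L(\theta_{true})|\le L_\ell\,\bE\|(\alpha_\theta-\alpha_{\theta_{true}})\star\w\|\le L_\ell L_\alpha\|\theta-\theta_{true}\|\,\bE\|\w\|.
\]
For the quadratic loss the same holds with a factor $G_e\|\e_g\|_{\psi_2}$ via Cauchy--Schwarz. Hence $E_\rho\mathcal L-\mathcal L(\theta_{true})=O(s)$. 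The KL divergence equals $n_\theta\ln(\mathrm{diam}/s)+O(1)$ in the uniform case and $\tfrac12\big(\mathrm{tr}(P^{-1})s^2+(\theta_{true}-\theta_m)^TP^{-1}(\theta_{true}-\theta_m)-n_\theta+\ln\det P-2n_\theta\ln s\big)$ in the Gaussian case. Both are $O(\ln N)$, and this logarithm is exactly the extra $\ln N$ in the rate $(\ln N)^{r+1}/\sqrt N$.

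This gives the oracle bound for $E_{\rho_{\mathrm{Gibbs}}}\mathcal L$. The three selection rules are then handled as follows.
\begin{itemize}
\item \emph{Single draw:} Markov's inequality, or the joint-probability formulation of Remark~\ref{rem:jointProb}, transfers the bound from $E_{\rho_{\mathrm{Gibbs}}}\mathcal L$ to $\mathcal L(\theta_\star)$.
\item \emph{MEP:} Jensen's inequality under \textbf{(SConv)} gives $\mathcal L(E\theta)\le E\mathcal L$.
\item \emph{MAP:} this is the delicate case. I would compare $\theta_\star$ with a small ball $\rho$ around $\theta_\star$ itself. This uses the Lipschitz property of $\hat{\mathcal L}_N$ in $\theta$, which holds with a data-dependent constant bounded by $\frac1N\sum_t\|\w(t)\|$. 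A high-probability bound on that sum (sub-Gaussian concentration) costs the extra $r_{\text{Lip},1}\delta$.
\end{itemize}

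\textbf{Remaining items.} The empirical bound (A) follows the same scheme, applying the $\epsilon=1$ inequality with $\rho$ a ball around $\theta_\star$. The data-dependent Lipschitz constants again enter with probability $r_{\text{Lip},1}\delta$, which accounts for the $1-(2+r_{\text{Lip},1})\delta$. For the parameter estimation bound (C), strong convexity with $\nabla\mathcal L(\theta_{true})=0$ gives
\[
\mathcal L(\theta_\star)-\mathcal L(\theta_{true})\ge\tfrac{m_\Theta}{2}\|\theta_\star-\theta_{true}\|^2,
\]
so (C) follows from the oracle bound, with the constant $2$ absorbed into the constants. For the $H_2$ bound (D), under \textbf{(Real)} with the quadratic loss the error $\y-\hat\y_{\theta_{true}}$ is the innovation and is orthogonal to the past of $\w$. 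Hence
\[
\mathcal L(\theta)-\mathcal L(\theta_{true})=\bE\|((\alpha_\theta-\alpha_{\theta_{true}})\star\w)(t)\|^2=\tfrac1{2\pi}\int\mathrm{tr}\big(\Delta H\,\Phi_\w\,\Delta H^*\big)\ge m_\w\|\Delta H\|_{H_2}^2
\]
by \textbf{(PE)}. The main obstacle is the MAP case: controlling the data-dependent Lipschitz constant of $\hat{\mathcal L}_N$ uniformly in $\theta$, especially for the quadratic loss, where it grows like $\max_t\|\w(t)\|$ and requires the same $\sqrt{\ln(N/\delta)}$ truncation used in Theorem~\ref{thm:main}.
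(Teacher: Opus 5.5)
The averaged oracle inequality, the MEP step via Jensen, and items \textbf{(C)} and \textbf{(D)} follow the paper's route. Your \textbf{(A)} is also sound for all three selection rules if it is read as a bound uniform in $\theta$: the $\epsilon=1$ inequality holds simultaneously for every $\rho\in\mathcal{M}_\pi$, so it may be applied to a ball around any data- or draw-dependent $\theta_{\star}$. For MEP and single draw this costs an extra Lipschitz-in-$\theta$ term for $\mathcal{L}$, which the paper's Gibbs-based route avoids.

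The genuine gap is the single-draw case of \textbf{(B)}, and hence of \textbf{(C)} and \textbf{(D)}. From an averaged bound $E_{\theta\sim\rho_{\mathrm{Gibbs}}}\mathcal{L}(\theta)-\mathcal{L}(\theta_{true})\le B$ alone, the only way to pass to a random draw is Markov's inequality; it applies because the excess risk is nonnegative by \textbf{(Opt)}. That yields $\mathcal{L}(\theta_{\star})-\mathcal{L}(\theta_{true})\le B/\delta$, precisely the $O(1/\delta)$ dependence the paper sets out to avoid. By contrast, $\mathcal{C}_{or}(\delta,\sigma)$ depends on $\delta$ only through $\ln(C_{\delta}/\delta)$. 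Remark~\ref{rem:jointProb} merely defines the product measure $\bP\times P_{\theta\sim\rho_{\mathrm{Gibbs}}}$ and supplies no such transfer.

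What is needed is a disintegrated PAC-Bayes bound proved directly on that product space, as in Lemma~\ref{lem:single_draw}. Use the change of measure $\bE E_{\theta\sim\rho}e^{2\lambda X(\theta)-\ln(\rho(\theta)/\pi(\theta))}=\bE E_{\theta\sim\pi}e^{2\lambda X(\theta)}$ and Chernoff's bound for $X=\epsilon(\mathcal{L}-V_N)$ and $X=\epsilon(V_N-\hat{\mathcal{L}}_N)$. This gives the bound of Theorem~\ref{thm:main} at the single point $\theta_{\star}$, with $\KL(\rho\|\pi)$ replaced by $\ln\frac{\rho_{\mathrm{Gibbs}}(\theta_{\star})}{\pi(\theta_{\star})}$. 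For the Gibbs posterior, $\hat{\mathcal{L}}_N(\theta_{\star})+\frac{1}{\lambda}\ln\frac{\rho_{\mathrm{Gibbs}}(\theta_{\star})}{\pi(\theta_{\star})}=-\frac{1}{\lambda}\ln E_{\theta\sim\pi}e^{-\lambda\hat{\mathcal{L}}_N(\theta)}$, so the empirical loss of the draw cancels. Then Donsker--Varadhan with the ball around $\theta_{true}$, the $\epsilon=-1$ bound, and the replacement $\delta\to\delta/2$ give \eqref{bound:oracle} with probability $1-2\delta$.

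Your MAP sketch is also incomplete. Comparing $\theta_{\star}$ with a ball around itself only reproduces \eqref{bound:emp}. To reach $\mathcal{L}(\theta_{true})$ you must invoke the defining property of the MAP, $\hat{\mathcal{L}}_N(\theta_{\star})-\frac{1}{\lambda}\ln\pi(\theta_{\star})\le\hat{\mathcal{L}}_N(\theta_{true})-\frac{1}{\lambda}\ln\pi(\theta_{true})$. You also need the uniform bound of Corollary~\ref{lem:pac} at $\theta_{true}$ with $\epsilon=-1$. This is where the term $\ln\frac{\pi(\theta_{true})}{\pi(\theta_{\star})}$ in the MAP column of Table~\ref{tab:constants:sysid} comes from.

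Two things work in your favour. First, your direct estimate $|\mathcal{L}(\theta_1)-\mathcal{L}(\theta_2)|\le L_{\ell}L_{\alpha}\bE\|\w(t)\|_2\,\|\theta_1-\theta_2\|_2$, and its empirical counterpart with $\frac{1}{N}\sum_{t=0}^{N-1}\|\w(t)\|_2$, are simpler than the truncation argument of Lemma~\ref{pac:lipsch:loss}. They avoid its additive $C_{u,1}+C_{u,2}$ terms, at the price of constants different from the table. Second, your $\frac{m_{\Theta}}{2}$ is the correct strong-convexity constant. The paper's step $m_{\Theta}\|\theta_{\star}-\theta_{true}\|_2^2\le\mathcal{L}(\theta_{\star})-\mathcal{L}(\theta_{true})$ drops this factor, so \eqref{cor:param:est1:eq1} as stated is off by $2$.
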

\begin{table}[ht]
\centering
\caption{Constants for learning bounds under different algorithm choices (Corollary \ref{lem:bound:mean})}\label{tab:constants:sysid}
\begin{tabular}{|l|c|c|}
\hline
\textbf{Constant} & \textbf{MEP / Single Draw} & \textbf{MAP} \\
\hline
$L_{\Theta}(\delta)$ & \multicolumn{2}{c|}{$\begin{array}{ll}
  L_{\alpha}L_{\ell} \|\alpha_g\|_{\ell_1}\sqrt{n_\rve}c_{\rve} & \text{bounded noise} \\
  \frac{1}{16} L_{\alpha}\sqrt{n_\rve} \frac{1}{K} & \text{Lipschitz/quadratic}
  \end{array}$} \\
\hline
$C(\theta)$ & \multicolumn{2}{c|}{$\!\!\!\begin{array}{rl} 
  \frac{1}{2} \big (\sigma^2 \mathrm{trace} (P^{-1}) -n_{\theta}+ (\theta_m-\theta)^TP^{-1}(\theta_m-\theta)+
   \ln \frac{\det(P)}{\sigma^{2n_\theta}} + n_\theta \big) & \!\! \textbf{(Gauss)} \\
     \ln\frac{vol(\Theta)}{(\sqrt{\pi \sigma^2}^{n_{\theta}} \Gamma(n_{\theta}/2+1)}+ \frac{n_{\theta}}{2}  & \!\! \textbf{(Uni)} 
   \end{array}
   $\!\! } \\
\hline 
 $r_{\text{Lip},1}$, 
 $r_{\text{Lip},2}$
 & \multicolumn{2}{c|}{\(r_{\text{Lip},1}=\left\{\begin{array}{rl} 0 & \e_g\mbox{ is bounded }  \\
   1 & \mbox{ otherwise }
   \end{array}\right. \quad 
   r_{\text{Lip},2}=\left\{\begin{array}{rl} 2 & \ell \mbox{ is quadratic and $\e_g$ is unbounded }  \\
   0 & \mbox{ otherwise }
   \end{array}\right.
   \)
}\\
\hline
$\mathcal{C}_{L,emp}$ & \multicolumn{2}{c|}{$L_{\Theta}(\delta) \sigma+2r_{\text{Lip},1}(C_{u,1} + C_{u,2})$} \\
\hline
$\mathcal{C}_{L,true}$ & \multicolumn{2}{c|}{$\mathcal{C}_{L,emp}+2r_{\text{Lip},2}\frac{L}{512}$} \\
$\mathcal{C}_{L}(\delta,\epsilon)$ & \multicolumn{2}{c|}{$\ln\left(\frac{C_{\delta}}{\delta}\right)+ \mathcal{C}_{L,emp}+C(\theta_{\star})+ C(\delta,\epsilon)$} \\
\hline
$C_{em}(\theta_{\star},\delta)$ & $\mathcal{C}_L(\delta,1)+2r_{\text{Lip},1}\ln\left(\frac{1}{\delta}\right)$ & $\mathcal{C}_L(\delta,1)+\mathcal{C}_{L,true} +2r_{\text{Lip},1}\ln\left(\frac{1}{\delta}\right) $ \\
\hline
$C(\theta_{true},\theta_{\star})$ & $\mathcal{C}_{L,true}-2\mathcal{C}_{L,emp}$ & $2\mathcal{C}_{L,true}-C(\theta_{true})+C(\theta_{\star})+\ln\left(\frac{\pi(\theta_{true})}{\pi(\theta_{\star})}\right)$ \\
\hline
$\mathcal{C}_{or}(\delta,\sigma)$ & \multicolumn{2}{c|}{$C(\theta_{true},\theta_{\star})+\mathcal{C}_L(\frac{\delta}{2},1)+\mathcal{C}_L(\frac{\delta}{2},-1)$} \\
\hline
\end{tabular}
\end{table}
In summary, for any outcome $\theta_{\star}$ obtained via MAP, MEP, or a single draw from the Gibbs posterior, Corollary~\ref{lem:bound:mean} yields $O\left(\frac{(\ln N)^{r+1}}{\sqrt{N}}\right)$ bounds on: \textbf{(1)} the generalization gap \eqref{bound:emp}, \textbf{(2)} the excess true error over the optimal model \eqref{bound:oracle}, and \textbf{(3)} the parameter estimation error \eqref{cor:param:est1:eq1}--\eqref{col1:eq1}. The constants for the learning bounds, as given by Corollary \ref{lem:bound:mean}, are in Table \ref{tab:constants:sysid}. The influence of system properties and convergence rates is discussed in Section~\ref{sec:conclusions}.

\textbf{Intuition behind the proof:}
The proof of Corollary \ref{lem:bound:mean} is in Appendix \ref{sec:proof_single_draw}.  For MAP, \eqref{bound:emp}--\eqref{bound:oracle} are derived from Corollary~\ref{thm:bounded:alt:col} by constructing suitable data-dependent posteriors and applying oracle inequality techniques from \cite[Theorem 4.1]{alquier2021userfriendly}. For single draw and MEP, similar arguments apply using \cite[Theorems 2.7 and Corollary 2.6]{alquier2021userfriendly} combined with the moment generating function bounds from  the proof of Theorem~\ref{thm:main}. Finally, the parameter estimation bounds \eqref{cor:param:est1:eq1}--\eqref{col1:eq1} follow from the oracle inequality \eqref{bound:oracle} by relating the parameter error to the loss difference through strong convexity (for \eqref{cor:param:est1:eq1}) or persistence of excitation (for \eqref{col1:eq1}).

\section{Discussion on the bound  and conclusions}
\label{sec:conclusions}
In this paper, we derived PAC-Bayesian error bounds for stochastic LTI systems with inputs and sub-Gaussian noise.
\par
The convergence rates are fastest for bounded noise, slower for Lipschitz losses, and slowest for quadratic losses with general sub-Gaussian noise (see Table~\ref{rates}).
All bounds in Theorem~\ref{thm:main}, Corollary~\ref{thm:bounded:alt:col}, and Corollary~\ref{lem:bound:mean} improve with increased system stability, lower sub-Gaussian noise norm, and depend logarithmically on the confidence level $\delta$ via $\ln(1/\delta)$.
\par
Specifically, the bounds increase monotonically with the sub-Gaussian norm $\|\e_g\|_{\psi_2}$ of the noise, and with the $\ell_1$ norms and mixing coefficients of the Markov parameters of both the predictors and the data generator ($\|\alpha_\theta\|_{\ell_1}$, $\|\alpha_g\|_{\ell_1}$, $\theta_{\infty}(\alpha_\theta)$, $\theta_{\infty}(\alpha_g)$). Smaller $\ell_1$ norms and mixing coefficients correspond to more stable systems and weaker dependence on past inputs, making learning easier. Similarly, a smaller sub-Gaussian norm (i.e., lower noise variance) yields tighter bounds.
\par 
In summary, the bounds are tighter for less noisy and more stable systems, aligning with intuition: learning is easier when the system is less noisy and more stable.

Additionally, if the goal is to bound not only the generalization gap but also the parameter estimation error (in terms of the $H_2$ distance between the true and learned systems \eqref{col1:eq1}, or the Euclidean distance between parameters \eqref{cor:param:est1:eq1}), the bounds also depend on the \emph{persistence of excitation} of the input $\w$ ($m_{\rvw}$) and the \emph{identifiability} constant ($m_{\Theta}$). Larger values of $m_{\rvw}$ (richer input) and $m_{\Theta}$ (greater separation between models) yield tighter bounds. The bound \eqref{col1:eq1} can also be used to bound the system matrices (up to similarity), see Remark~\ref{matrix:trans}, Appendix~\ref{app:indent}.




\begin{table}[ht]
\centering
\caption{Summary of convergence rates for different noise and loss function types \label{rates}}
\begin{tabular}{|c|c|c|}
\hline
\textbf{Scenario} & \textbf{Empirical PAC-Bayesian Bound} & \textbf{PAC/Oracle/parameter est./$H_2$ bounds} \\
\hline
\textbf{Bounded noise} & $O\left(\frac{1}{\sqrt{N}}\right)$ & $O\left(\frac{\ln(N)}{\sqrt{N}}\right)$ \\
\hline
\textbf{Lipschitz loss} & $O\left(\frac{\ln(N)}{\sqrt{N}}\right)$ & $O\left(\frac{(\ln(N))^2}{\sqrt{N}}\right)$
 \\
\hline
\textbf{Quadratic loss} & $O\left(\frac{(\ln(N))^{3/2}}{\sqrt{N}}\right)$ & $O\left(\frac{(\ln(N))^{5/2}}{\sqrt{N}}\right)$ 
\\
\hline
\end{tabular}
\end{table}

For parameter estimation error, our obtained rate $O((\ln N)^{r+1}/\sqrt{N})$ is more conservative than classical asymptotic theory \cite{LjungBook,CainesBook} (which yields  $O(1/\sqrt{N})$ asymptotic consistency) and existing finite-sample bounds \cite{oymak2021revisiting,lale2020logarithmic,Simchowitz_Foster_2020,Ziemann1}(which achieve $O(\ln N/\sqrt{N})$) . However, this is the expected price for generality: our bound applies to a broad class of learning algorithms, provides non-asymptotic guarantees with high probability, and simultaneously bounds the generalization gap (properties not offered by prior results). In addition, most existing finite-sample bounds are algorithm-specific (e.g., for least-squares only) and do not address the generalization gap.

\bibliographystyle{plain}
\bibliography{bib}

@article{alquier2013prediction,
  title={Prediction of time series by statistical learning: general losses and fast rates},
  author={P. Alquier and X Li and O. Wintenberger},
  journal={Dependence Modeling},
  volume={1},
  number={2013},
  pages={65--93},
  year={2013},
  publisher={De Gruyter}
}

@inproceedings{Simchowitz_Foster_2020, series={PMLR}, title={Naive Exploration is Optimal for Online LQR}, volume={119}, booktitle={Proceedings of the 37th ICML}, publisher={PMLR}, author={Simchowitz, Max and Foster, Dylan}, year={2020}, month={7}, pages={8937–8948}, collection={PMLR} }

@article{alquier2021userfriendly,
      title={User-friendly introduction to {PAC}-Bayes bounds}, 
      author={Pierre Alquier},
      year={2021},
      journal={arXiv:2110.11216},
      eprint={2110.11216},
      archivePrefix={arXiv},
      primaryClass={stat.ML}
}

@book{gikh96,
title = {Introduction to the Theory of Random Processes},
year = {1996},
author = {	I. I. Gikhman and A. V. Skorokhod},
publisher = {Dover},
edition = {1st}
}

@book{CainesBook,
        Author = {P. E. Caines},
        Publisher = {John Wiley and Sons},
        Title = {Linear Stochastic Systems},
        Year = {1988}}

@inproceedings{Dziugaite2017,
  author    = {G. K. Dziugaite and
               D. M. Roy},
  title     = {Computing Nonvacuous Generalization Bounds for Deep (Stochastic) Neural
               Networks with Many More Parameters than Training Data},
  booktitle = {{UAI}},
  publisher = {{AUAI} Press},
  year      = {2017}
}

@Book{LjungBook,
  Title                    = {System Identification: Theory for the user (2nd Ed.)},
  author                   = {L. Ljung},
  Publisher                = {PTR Prentice Hall., Upper Saddle River, USA},
  Year                     = {1999}
}

@article{guedj2019primer,
  title={{A Primer on {PAC}-Bayesian Learning}},
  author={B. Guedj},
  journal={arXiv:1901.05353},
  year={2019}
}

@inproceedings{nips-16,
	author    = {P. Germain and
	F. Bach and
	A. Lacoste and
	S. Lacoste-Julien},
	title     = {{PAC}-Bayesian Theory Meets Bayesian Inference},
	booktitle = {{NIPS}},
	pages     = {1876--1884},
	year      = {2016}
}

@book{LindquistBook, 
   title={Linear Stochastic Systems: A Geometric Approach to Modeling, Estimation and Identification},
    author={A. Lindquist and G. Picci},
    publisher={Springer},
    year={ 2015 }
}

@PHDTHESIS{RalfPeeters,
  author = { Peeters, R. L. M. },
  title = { System Identification Based on Riemannian Geometry: Theory and Algorithms},
  school = { Free University, Amsterdam},
  year = { 1994}
}

@book{Katayama:05,
 address={Berlin, London},
 series={Communications and control engineering},
 title={Subspace methods for system identification},
 ISBN={978-1-85233-981-4},
 publisher={Springer},
 author={Katayama, Tohru},
 year={2005},
 collection={Communications and control engineering},
 language={en} }

@book{shalev2014understanding,
  title={Understanding machine learning: From theory to algorithms},
  author={S. Shalev-Shwartz and S. Ben-David},
  year={2014},
  publisher={Cambridge university press}
}

@article{SarkarRD21,
  author    = {Tuhin Sarkar and
               Alexander Rakhlin and
               Munther A. Dahleh},
  title     = {Finite Time {LTI} System Identification},
  journal   = {J. Mach. Learn. Res.},
  volume    = {22},
  pages     = {26:1--26:61},
  year      = {2021},
}

@inproceedings{CDC21paper,
title = "{PAC-Bayesian theory for stochastic LTI systems}",
author = "Deividas Eringis and John Leth and Zheng-Hua Tan and Rafal Wisniewski and Alireza Fakhrizadeh Esfahani and Mihaly Petreczky",
year = "2021",
doi = "10.1109/CDC45484.2021.9682808",
isbn = "978-1-6654-3660-1",
pages = "6626--6633",
booktitle = "2021 60th IEEE Conference on Decision and Control (CDC)",
}

@article{eringis2021optimal,
title = "{Explicit construction of the minimum error variance estimator for stochastic LTI-ss systems}",
author = "Deividas Eringis and John Leth and Zheng Hua Tan and Rafal Wisniewski and Mihaly Petreczky",
year = "2023",
month = jul,
doi = "10.1016/j.automatica.2023.111018",
volume = "153",
journal = "Automatica",
issn = "0005-1098",
publisher = "Elsevier",
}

@article{alquier2012pred,
author = {Pierre Alquier and Olivier Wintenberger},
title = {{Model selection for weakly dependent time series forecasting}},
volume = {18},
journal = {Bernoulli},
number = {3},
publisher = {Bernoulli Society for Mathematical Statistics and Probability},
pages = {883 -- 913},
year = {2012}
}

@InProceedings{pmlr-v108-chen20d,
  title = 	 {On Generalization Bounds of a Family of Recurrent Neural Networks},
  author =       {Chen, Minshuo and Li, Xingguo and Zhao, Tuo},
  booktitle = 	 {Proceedings of AISTATS 2020},
  pages = 	 {1233--1243},
  year = 	 {2020},
  volume = 	 {108},
  series = 	 {PMLR},
  month = 	 {8},
}

@inproceedings{NEURIPS2018_d6288499,
 author = {Hazan, Elad and Lee, Holden and Singh, Karan and Zhang, Cyril and Zhang, Yi},
 booktitle = {Advances in Neural Information Processing Systems},
 pages = {},
 publisher = {Curran Associates, Inc.},
 title = {Spectral Filtering for General Linear Dynamical Systems},
 volume = {31},
 year = {2018}
}

@article{KOIRAN199863, title={Vapnik-Chervonenkis dimension of recurrent neural networks}, volume={86}, number={1}, journal={Discrete Applied Mathematics}, author={Koiran, Pascal and Sontag, Eduardo D.}, year={1998}, pages={63–79} }

@article{sontag1998learning,
  title={A learning result for continuous-time recurrent neural networks},
  author={Sontag, Eduardo D},
  journal={Systems \& control letters},
  volume={34},
  number={3},
  pages={151--158},
  year={1998},
  publisher={Elsevier}
}

@article{vidyasagar2006learning,
  title={A learning theory approach to system identification and stochastic adaptive control},
  author={Vidyasagar, Mathukumalli and Karandikar, Rajeeva L},
  journal={Probabilistic and randomized methods for design under uncertainty},
  pages={265--302},
  year={2006},
  publisher={Springer}
}

@article{campi2002finite,
  title={Finite sample properties of system identification methods},
  author={Campi, Marco C and Weyer, Erik},
  journal={IEEE Transactions on Automatic Control},
  volume={47},
  number={8},
  pages={1329--1334},
  year={2002},
  publisher={IEEE}
}

@inproceedings{simchowitz2019learning,
  title={Learning linear dynamical systems with semi-parametric least squares},
  author={Simchowitz, Max and Boczar, Ross and Recht, Benjamin},
  booktitle={Conference on Learning Theory},
  pages={2714--2802},
  year={2019},
  organization={PMLR}
}

@article{lale2020logarithmic,
  title={Logarithmic regret bound in partially observable linear dynamical systems},
  author={Lale, Sahin and Azizzadenesheli, Kamyar and Hassibi, Babak and Anandkumar, Anima},
  journal={Advances in Neural Information Processing Systems},
  volume={33},
  pages={20876--20888},
  year={2020}
}

@book{simchowitz2021statistical,
  title={Statistical Complexity and Regret in Linear Control},
  author={Simchowitz, Max},
  year={2021},
  publisher={University of California, Berkeley}
}

@article{oymak2021revisiting,
 title={Revisiting Ho–Kalman-Based System Identification: Robustness and Finite-Sample Analysis},
 volume={67},
 number={4},
 journal={IEEE Transactions on Automatic Control},
 author={Oymak, Samet and Ozay, Necmiye},
 year={2022},
 month={4},
 pages={1914–1928}}

@inproceedings{foster2020learning, series={PMLR}, title={Logarithmic Regret for Adversarial Online Control}, volume={119}, booktitle={Proceedings of the 37th ICML}, publisher={PMLR}, author={Foster, Dylan and Simchowitz, Max}, year={2020}, month={7}, pages={3211–3221}, collection={PMLR} }

@article{Ribarits1,
  author       = {Tomas McKelvey and
                  Anders Helmersson and
                  Thomas Ribarits},
  title        = {Data driven local coordinates for multivariable linear systems and
                  their application to system identification},
  journal      = {Autom.},
  volume       = {40},
  number       = {9},
  pages        = {1629--1635},
  year         = {2004},
  url          = {https://doi.org/10.1016/j.automatica.2004.04.015},
  doi          = {10.1016/J.AUTOMATICA.2004.04.015},
  bibsource    = {dblp computer science bibliography, https://dblp.org}
}

@article{eringis2023pacbayesian,
      title={{PAC}-Bayesian bounds for learning LTI-ss systems with input from empirical loss}, 
      author={Deividas Eringis and John Leth and Zheng-Hua Tan and Rafael Wisniewski and Mihaly Petreczky},
      year={2023},
      journal={arXiv:2303.16816},
      eprint={2303.16816},
      archivePrefix={arXiv},
      primaryClass={stat.ML}
}

@article{MASSUCCI2022110532,
title = {A statistical learning perspective on switched linear system identification},
journal = {Automatica},
volume = {145},
pages = {110532},
year = {2022},
issn = {0005-1098},
doi = {https://doi.org/10.1016/j.automatica.2022.110532},
url = {https://www.sciencedirect.com/science/article/pii/S0005109822003934},
author = {Louis Massucci and Fabien Lauer and Marion Gilson}
}

@article{METAKALARD2026100373,
title = {Uniform error bounds for quantized dynamical models},
journal = {IFAC Journal of Systems and Control},
volume = {35},
pages = {100373},
year = {2026},
issn = {2468-6018},
doi = {https://doi.org/10.1016/j.ifacsc.2026.100373},
url = {https://www.sciencedirect.com/science/article/pii/S2468601826000131},
author = {Abdelkader Metakalard and Fabien Lauer and Kevin Colin and Marion Gilson}
}

@book{pillonetto2022regularized,
  title={Regularized system identification: Learning dynamic models from data},
  author={Pillonetto, Gianluigi and Chen, Tianshi and Chiuso, Alessandro and De Nicolao, Giuseppe and Ljung, Lennart},
  year={2022},
  publisher={Springer Nature}
}

@article{HanzonStable1,
title = {On the differentiable manifold of fixed order stable linear systems},
journal = {Systems \& Control Letters},
volume = {13},
number = {4},
pages = {345-352},
year = {1989},
issn = {0167-6911},
author = {B. Hanzon},
}

@inproceedings{eringisRenyi,
  TITLE = {{PAC-Bayesian Error Bound, via R{\'e}nyi Divergence, for a Class of Linear Time-Invariant State-Space Models}},
  AUTHOR = {Eringis, Deividas and Leth, John and Tan, Zheng-Hua and Wisniewski, Rafal and Petreczky, Mih{\'a}ly},
  BOOKTITLE = {{41st International Conference on Machine Learning}},
  ADDRESS = {Vienne, Austria},
  PUBLISHER = {{PMLR}},
  VOLUME = {235},
  PAGES = {12560-12587},
  YEAR = {2024},
  MONTH = Jul,
}

@article{BallVolume,
author = {David J. Smith and Mavina K. Vamanamurthy},
title = {How Small Is a Unit Ball?},
journal = {Mathematics Magazine},
volume = {62},
number = {2},
pages = {101--107},
year = {1989},
publisher = {Taylor \& Francis},
}

@inproceedings{eringis2023pacbayesRNN,
title = "{PAC-Bayes Generalisation Bounds for Dynamical Systems Including Stable RNNs}",
author = "Deividas Eringis and John Leth and Zheng-Hua Tan and Rafal Wisniewski and Mihaly Petreczky",
year = "2024",
month = mar,
day = "25",
doi = "10.1609/aaai.v38i11.29076",
number = "11",
pages = "11901--11909",
booktitle = "Proceedings of the 38th AAAI Conference on Artificial Intelligence",
}

@misc{jin2019short,
      title={A Short Note on Concentration Inequalities for Random Vectors with SubGaussian Norm}, 
      author={Chi Jin and Praneeth Netrapalli and Rong Ge and Sham M. Kakade and Michael I. Jordan},
      year={2019},
      eprint={1902.03736},
      archivePrefix={arXiv},
      primaryClass={math.PR}
}

@book{Federer1996,
  author = {Herbert Federer},
  title = {Geometric Measure Theory},
  publisher = {Springer Berlin Heidelberg},
  address = {Berlin, Heidelberg},
  year = {1996},
  edition = {1},
  series = {Classics in Mathematics},
  doi = {10.1007/978-3-642-62010-2},
  isbn = {978-3-540-60656-7},
  note = {eBook ISBN: 978-3-642-62010-2. Originally published as volume 153 in the series Grundlehren der mathematischen Wissenschaften. IV, 677 pages. Springer-Verlag Berlin Heidelberg, 1996},
}

@book{vershynin2026highdimensional,
  title     = {High-Dimensional Probability: An Introduction with Applications in Data Science},
  author    = {Vershynin, Roman},
  edition   = {2},
  year      = {2026},
  publisher = {Cambridge University Press}
}

@book{NesterovBook,
  author    = {Yurii Nesterov},
  title     = {Introductory Lectures on Convex Optimization: A Basic Course},
  volume    = {87},
  publisher = {Springer Science \& Business Media},
  year      = {2004},
}

@ARTICLE{Ziemann1,
  author={Tsiamis, A. and Ziemann, I. and Matni, N. and Pappas, G. J.},
  journal={IEEE Control Systems Magazine},
  title={Statistical Learning Theory for Control: A Finite-Sample Perspective},
  year={2023},
  volume={43},
  number={6},
  pages={67-97},
  doi={10.1109/MCS.2023.3310345}}

@ARTICLE{Ziemann2,

  author={He, Jiabao and Ziemann, Ingvar and Rojas, Cristian R. and Qin, S. Joe and Hjalmarsson, Håkan},

  journal={IEEE Transactions on Automatic Control}, 

  title={Finite Sample Analysis of Open-loop Subspace Identification Methods}, 

  year={2026},

  volume={},

  number={},

  pages={1-16},

  doi={10.1109/TAC.2026.3671690}}

@article{NINNESS201040,
title = {Bayesian system identification via Markov chain Monte Carlo techniques},
journal = {Automatica},
volume = {46},
number = {1},
pages = {40-51},
year = {2010},
issn = {0005-1098},
doi = {https://doi.org/10.1016/j.automatica.2009.10.015},
url = {https://www.sciencedirect.com/science/article/pii/S0005109809004762},
author = {Brett Ninness and Soren Henriksen}
}
\appendix
\newpage
\section{Proofs of Section \ref{sect:Problem_formulation}}
\label{app:sect:Problem_formulation}
\begin{remark}[LTI in innovation form]
  \label{rem:innovation}
   Let us assume that $\e_g$ has zero mean. It follows
   that $\e_g(t)$ is the innovation process of $[\rvy^T(t)\;\rvu^T(t)]^T$
   and \eqref{eq:generator} is in the so called 
   (forward) innovation form, see
   \cite[Section 8.5.2]{Katayama:05}.
   Intuitively, it means that $\e_g$
   is the difference between $[\rvy^T(t)\;\rvu^T(t)]^T$ and its best linear prediction based on its own past values.
  \end{remark} 
\begin{remark}[Equivalence of \eqref{eq:generator} and \eqref{eq:T0sys1}]
  \label{rem:equiv_gen:proof}
Let us assume that $\e_g$ and $\e_s$ are both zero mean.
More precisely, if there is no feedback from $\rvy$ to $\rvu$ according to Definition 17.1.1. of \cite{LindquistBook},
then \eqref{eq:T0sys1} holds and by \cite{eringis2021optimal} the matrices $A_0,B_0,C_0,K_0,C_0,D_0$ can be computed from the matrices $A_g,K_g,C_g$. 

Conversely, if \eqref{eq:T0sys1} holds and $\rvu(t)$ is generated by an ARMA process, and there is no feedback from $\rvy(t)$ to $\rvu(t)$, then 
Assumption \ref{as:generator} holds and $A_g,K_g,C_g$  can be computed from those of \eqref{eq:T0sys1} and the ARMA representation of $\rvu(t)$, again by reversing the equations from
\cite{eringis2021optimal}. As a passing remark, if there is a feedback from $\rvy(t)$ to $\rvu(t)$ and  $\rvu(t)$ is generated by an LTI systems driven by $\y(t)$, i.e.,  
\begin{equation}\label{eq:u_gen}
    \rvs_\rvu(t+1)=A_u\rvs_\rvu(t)+K_u\y(t), \quad 
    \rvu(t)=C_u\rvs_\rvu(t) 
\end{equation}
then again the generator described in Assumption \ref{as:generator} can be obtained by adding \eqref{eq:u_gen} to \eqref{eq:T0sys1}. 
\end{remark} 
\begin{remark}[Methods for stable parametrisations]
\label{rem:parameter:methods}
 The construction of \cite[Section 3.2.3]{RalfPeeters} relies on canonical forms based on structure indices, to which the diffeomorphism from \cite{HanzonStable1}
 between general and stable parametersations are applied. Intuitively, the latter transformation maps $(A,B,C,D)$ to $(\frac{A}{\mu},B,C,D)$, where $\mu$ is the maximum of moduli of the eigenvalues of $A$. 
 The construction of \cite[Section  4.6]{Ribarits1} uses Ober's balanced canonical forms, and it relies on fixing the observability and controllability Grammians to being equal and diagonal. 
 That is, stability is assured by using diagonal quadratic Lyapunov functions. 
 In this case, by fixing upper bounds on the diagnonals of the Grammians and on the norms of the system matrices occurring in the parameterisation, we ensure existence of constants $M,C,\gamma$ mentioned in Assumption \ref{as:parameterisation}. 
\end{remark}
\begin{proof}[Proof of Lemma \ref{l:ihp}]
We have
$
\y_\theta(t\,|\,t_0)=\sum_{k=t_0}^{t-1} H_\theta(t-1-k)\,\w(k)+D_\theta \w(t)=\sum_{k=0}^{t-1-t_0} H_\theta(k)\,\w(t-1-k)+D_\theta \w(t)
$
with the impulse response $H_\theta(k)\triangleq C_\theta A_\theta^{k}B_\theta$ for $k\ge 0$. Because $A_\theta$ is Schur, there exist $M>0$ and $\rho\in(0,1)$ such that
$\|A_\theta^{k}\|_2\le M\rho^k$, hence $\sum_{k=s}^\infty \|H_\theta(k)\|_2\leq\|C_\theta\|_2\|B_\theta\|_2M\frac{\rho^{s}}{1-\rho} \rightarrow 0$ as $s \rightarrow \infty$. 
For $t_1<t_2\leq t$ we get, using stationarity of $\w$ ($\sigma_\w^2=\bE\|\w(s)\|_2^2$) and setting $\beta=\rho^2$, that 
\begin{subequations}\label{ytt2-ytt1}
\begin{align}
\bE[\left\|\y_\theta(t\,|\,t_2)-\y_\theta(t\,|\,t_1)\right\|_2^2]
&=\bE\left[
\left\|\sum_{m=t-t_2}^{t-1-t_1}H_\theta(m)\w(t-1-m)\right\|_2^2
\right]\\
&\leq(t_2-t_1)\bE\left[\sum_{m=t-t_2}^{t-1-t_1}
\|H_\theta(m)\|_2^2\|\w(t-1-m)\|_2^2
\right]\\
&\leq\sigma_\w^2(\|C_\theta\|_2\|B_\theta\|_2M)^2(t_2-t_1)\sum_{m=t-t_2}^{t-1-t_1}\beta^m\\
&=\sigma_\w^2(\|C_\theta\|_2\|B_\theta\|_2M)^2\frac{\beta^t}{1-\beta}
(t_2-t_1)(\beta^{-t_2}-\beta^{-t_1})
\end{align}
\end{subequations}
Therefore $\{\y_\theta(t\,|\,t_0)\}_{t_0\le t}$ is Cauchy and hence converges in mean square to
$
\y_\theta(t)= \sum_{k=-\infty}^{t-1} H_\theta(t-1-k)\,\w(k) \;+\; D_\theta \w(t)=\sum_{k=0}^{\infty} H_\theta(k)\,\w(t-1-k)+D_\theta \w(t).
$
This representation shows that $\y_\theta$ is the output of a causal, time-invariant, stable LTI filter
(with impulse response $\{H_\theta(k)\}$ and feed-through $D_\theta$) driven by the stationary input $\w$. Hence $\y_\theta$ is stationary, since stable LTI filtering preserves stationarity. Moreover, it follows that
\begin{align}\label{ytytt0}
\y_\theta(t) - \y_\theta(t\mid t_0) 
=  \sum_{k = t - t_0}^{\infty} H_\theta(k)\, \w(t-1-k).
\end{align}
so $\y_\theta(t)-\y_\theta(t\,|\,0)\to 0$ in mean square as $t\to\infty$ by arguments as in \eqref{ytt2-ytt1}.

Finally, consider
\begin{align}\label{LE}
\left|\bE\big[\ell\big(\y(t),\y_\theta(t)\big)\big]-\bE\big[\ell\big(\y(t),\y_\theta(t\,|\,t_0)\big)\big]\right|
\le \bE\big[\left|\ell\big(\y(t),\y_\theta(t)\big)-\ell\big(\y(t),\y_\theta(t\,|\,t_0)\big)\right|\big]
\end{align}
If $\ell$ is $L_\ell$-Lipschitz, then
the right hand side of \eqref{LE} is bounded by $L_\ell \bE[\|\y_\theta(t) - \y_\theta(t\mid t_0) \|_2]$, 
and hence, as $\y_\theta(t)-\y_\theta(t\,|\,0)\to 0$ in mean square as $t\to\infty$ or 
as $t_0 \to -\infty$, the right hand side of \eqref{LE} also tends to $0$, i.e., 
\[
   \bE\big[\left|\ell\big(\y(t),\y_\theta(t)\big)-\ell\big(\y(t),\y_\theta(t\,|\,t_0)\big)\right|\big] \to 0.
\]
as $t \to \infty$ or $t_0 \to -\infty$.

If $\ell$ is quadratic, then 
\begin{align*}
  & |\ell\big(\y(t),\y_\theta(t)\big)-\ell\big(\y(t),\y_\theta(t\,|\,t_0)\big)|=
    \left|\|\y(t)-\y_\theta(t)\|_2^2 - \|\y(t)-\y_\theta(t\,|\,t_0)\|_2^2\right|= \\
  & |2\y^T(t)(-\y_\theta(t)+\y_\theta(t\,|\,t_0))+\left(\|\y_\theta(t)\|_2^2-\|\y_\theta(t\,|\,t_0)\|_2^2\right)| \le \\
  & 2\|\y^T(t)\|_2 \|\y_\theta(t)-\y_\theta(t\,|\,t_0)\|_2 + \left|\|\y_\theta(t)\|_2-\|\y_\theta(t\,|\,t_0)\|_2\right| \left(\|\y_\theta(t)\|_2+\|\y_\theta(t\,|\,t_0)\|_2\right)\\
\end{align*}
Hence, using the Cauchy-Schwartz  inequality
\begin{align*} 
& \bE\left[|\ell\big(\y(t),\y_\theta(t)\big)-\ell\big(\y(t),\y_\theta(t\,|\,t_0)\big)|\right]  \le  \\
& 2\bE\left[\|\y^T(t)\|_2 \|\y_\theta(t)-\y_\theta(t\,|\,t_0)\|_2\right] 
+ \bE\left[\left|\|\y_\theta(t)\|_2-\|\y_\theta(t\,|\,t_0)\|_2\right|\big(\|\y_\theta(t)\|_2+\|\y_\theta(t\,|\,t_0)\|_2\big)\right] \le \\
& 
2\sqrt{\bE\left[\|\y_\theta(t) - \y_\theta(t\mid t_0) \|_2^2\right]}
\sqrt{\bE\left[\|\y(t)\|_2^2\right]}+ \\
&
\sqrt{\bE\left[\|\y_\theta(t)-\y_\theta(t\,|\,t_0)\|_2^2)\right]}
\sqrt{(\bE[\|\y_\theta(t)\|_2^2] +2\bE[\|\y_\theta(t)\|_2\|\y_\theta(t\,|\,t_0)\|_2]+\bE[\|\y_\theta(t\,|\,t_0)\|_2^2]\big)}
\end{align*}
Note that $\bE\left[\|\y_\theta(t) - \y_\theta(t\mid t_0) \|_2^2\right] \rightarrow 0$ as $t_0 \rightarrow -\infty$
or  $t \rightarrow \infty$, since $\y_{\theta}(t\mid t_0)- \y_{\theta}(t \mid t_0) \rightarrow 0$ in the mean-square
sense as $t_0 \to -\infty$ or $t \to \infty$. Moreover, 
\[ 2\bE[\|\y_\theta(t)\|_2\|\y_\theta(t\,|\,t_0)\|_2] \le  \bE[\|\y_\theta(t)\|_2^2] + \bE[\|\y_\theta(t\,|\,t_0)\|_2^2], \]
and  $\bE[\|\y_\theta(t)\|_2^2]$ is constant due to stationarity of $\y_{\theta}(t)$. 
Now for sufficiently large $t > T_{\epsilon}$ or $-t_0 > T_{\epsilon}$, $\bE\left[\|\y_\theta(t) - \y_\theta(t\mid t_0) \|_2^2\right] < \epsilon^2$, and it follows that
\begin{align*} 
\bE[\|\y_{\theta}(t \mid t_0)\|_2^2] &\le \bE[(\|\y_\theta(t)\|_2+\|\y_{\theta}(t\mid t_0)-\y_{\theta}(t)\|_2)^2] \\
&\le 2(\bE[\|\y_\theta(t)\|_2^2] + \bE[\|\y_\theta(t) - \y_\theta(t\mid t_0) \|_2^2]) < 2\bE[\|\y_\theta(t)\|_2^2] + 2\epsilon^2 
\end{align*}
Hence, it follows that for $t > T_{\epsilon}$ or $-t_0 > T_{\epsilon}$,
\[ \sqrt{(\bE[\|\y_\theta(t)\|_2^2] +2\bE[\|\y_\theta(t)\|_2]\|\y_\theta(t\,|\,t_0)\|_2]+\bE[\|\y_\theta(t\,|\,t_0)\|_2^2]\big)}
\le \sqrt{6\bE[\|\y_\theta(t)\|_2^2] + 4\epsilon^2} 
\]
and 
$\sqrt{6\bE[\|\y_\theta(t)\|_2^2] + 4\epsilon^2}=C$ does not depend on $t$.
Hence, if $t > T_{\epsilon}$ or $-t_0 > T_{\epsilon}$, then
\begin{align*} 
\bE\big[\left|\ell\big(\y(t),\y_\theta(t)\big)-\ell\big(\y(t),\y_\theta(t\,|\,t_0)\big)\right|\big]
\leq \epsilon (2\bE[\|\y(t)\|_2^2]+C)
\end{align*}
from which  it follows that 
$\bE\big[\left|\ell\big(\y(t),\y_\theta(t)\big)-\ell\big(\y(t),\y_\theta(t\,|\,t_0)\big)\right|\big] \to 0$
as either $t_0 \to -\infty$ or $t \to \infty$.

That is, in all cases, 
\begin{align}\label{Lcon}\displaystyle 
\lim_{t\to\infty} E\big[\ell\big(\y(t),\y_\theta(t\,|\,0)\big)\big]
= E\big[\ell\big(\y(t),\y_\theta(t)\big)\big]
=\lim_{t_0\to -\infty} E\big[\ell\big(\y(t),\y_\theta(t\,|\,t_0)\big)\big]
\end{align}
proving the lemma since $\mathcal L(\theta)$ is the right hand side of \eqref{Lcon} by definition.
\end{proof}

\section{Proofs of Section \ref{sec:asm}}
\label{app:constants}

\begin{remark}[$\alpha$ as Markov-parameters]
    \label{norms:rem}
   If $\alpha$ is the sequence of Markov parameters of a stable deterministic LTI system  determined
  by matrices $(A,B,C,D)$, i.e.,
  $\alpha(0)=D$, $\alpha(k)=CA^{k-1}B$, $k \ge 1$, then the $\ell_1$ norm $\|\alpha\|_{\ell_1}$ and the WDMC $\theta_{\infty}(\alpha)$ are both finite. In this case,
  $\|\alpha\|_{\ell_1}$ is the classical $\ell_1$ norm of the underlying LTI system.  
  Moreover, if $M >0$, $\gamma \in [0,1)$ are such that
  $\|A^k\|_2 < M \gamma^k$, then 
  $\|\alpha\|_{\ell_1} \le \|D\|_2+M\|B\|_2\|C\|_2/(1-\gamma)$ and 
  $\theta_{\infty}(\alpha) \le \|D\|+ \gamma M\|B\|_2\|C\|_2/(1-\gamma)^2$.
  In turn, such $M$
  always exist if $A$ is a Schur matrix, and $\gamma$ can be chosen to be 
  the spectral radius of $A$. In particular, $\|\alpha\|_{\ell_1}$, $\theta_{\infty}(\alpha)$
  decrease as the spectral radius of $A$ decreases.
  \end{remark}
  Finally, as promised before Lemma \ref{alpha:lemma2}, we can state that all the
  constants from Definition \ref{def:constants} are well-defined. 
\begin{lemma}
\label{alpha:lemma1.5}
  If Assumption \ref{as:generator} and \ref{as:parameterisation} hold, then
  the quantities $\|\alpha_{\theta}\|_{\ell_1}$ and $\theta_{\infty}(\alpha_{\theta})$, $\|\bar{\alpha}\|_{\ell_1}$,
  $\theta_{\infty}(\bar{\alpha})$, $G_e(\theta)$, $G_{e,1}(\theta)$, $G_e$, $G_{e,1}$ are well defined 
  real numbers. 
\end{lemma}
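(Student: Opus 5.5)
The approach is to show that each of the sums $\|\alpha\|_{\ell_1}=\sum_k\|\alpha(k)\|_2$ and $\theta_\infty(\alpha)=\sum_k k\|\alpha(k)\|_2$ converges for $\alpha\in\{\alpha_g,\bar\alpha\}\cup\{\alpha_\theta\mid\theta\in\Theta\}$. Once this is done, $G_e(\theta)$, $G_{e,1}(\theta)$, $G_e$ and $G_{e,1}$ are finite sums and products of finite nonnegative reals, hence well-defined. (In $G_{e,1}(\theta)$ the symbol $\|\alpha\|_{\ell_1}$ is read as $\|\alpha_g\|_{\ell_1}$, consistent with the uniform version $G_{e,1}$.) The key tool is a geometric bound on the Markov parameters, exactly as in Remark \ref{norms:rem}.

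\textbf{Predictors.} By Assumption \ref{as:parameterisation}, for every $\theta\in\Theta$ and $k\ge1$,
\[
\|\alpha_\theta(k)\|_2\le\|\hat C_\theta\|_2\|\hat A_\theta^{k-1}\|_2\|\hat B_\theta\|_2\le C^2M\gamma^{k-1},
\]
and $\|\alpha_\theta(0)\|_2=\|\hat D_\theta\|_2\le C$. The right-hand sides do not depend on $\theta$, so the same bounds hold for $\bar\alpha(k)=\sup_\theta\|\alpha_\theta(k)\|_2$; in particular each $\bar\alpha(k)$ is a finite real number. Since $\gamma\in(0,1)$, we have $\sum_k\gamma^{k-1}=1/(1-\gamma)$ and $\sum_k k\gamma^{k-1}=1/(1-\gamma)^2$. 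Comparison then gives
\[
\|\alpha_\theta\|_{\ell_1},\ \|\bar\alpha\|_{\ell_1}\le C+\frac{C^2M}{1-\gamma},
\qquad
\theta_\infty(\alpha_\theta),\ \theta_\infty(\bar\alpha)\le\frac{C^2M}{(1-\gamma)^2}.
\]
Here $\bar\alpha$ is treated as a sequence of nonnegative scalars, whose norm is its absolute value.

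\textbf{Generator.} For $\alpha_g$ there is no uniform constant, only the Schur property of $A_g$ from Assumption \ref{as:generator}. The step to justify is that a Schur matrix admits $M_g>0$ and $\gamma_g\in(0,1)$ with $\|A_g^k\|_2\le M_g\gamma_g^k$. I would argue this via Gelfand's formula: $\|A_g^k\|_2^{1/k}\to\rho(A_g)<1$, so one can pick $\gamma_g\in(\rho(A_g),1)$ such that $\|A_g^k\|_2\le\gamma_g^k$ for all $k$ beyond some $k_0$, and take $M_g$ as the maximum over the finitely many earlier ratios $\|A_g^k\|_2/\gamma_g^k$. Alternatively, use the Jordan form. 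Then $\|\alpha_g(k)\|_2\le\|C_g\|_2\|K_g\|_2M_g\gamma_g^{k-1}$ for $k\ge1$ and $\|\alpha_g(0)\|_2=1$, and the same series comparison gives finiteness of $\|\alpha_g\|_{\ell_1}$ and $\theta_\infty(\alpha_g)$.

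\textbf{Main obstacle.} The only nonroutine point is this exponential bound for a general Schur matrix. For the predictors it is supplied uniformly by the assumption, which is precisely what makes $\bar\alpha$ summable; this would not follow from the pointwise Schur property alone.
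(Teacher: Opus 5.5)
Your proposal is correct and follows the paper's proof. The paper also bounds the predictor Markov parameters uniformly via Assumption~\ref{as:parameterisation}, uses the Schur property of $A_g$ (through Remark~\ref{norms:rem}) for the generator, and concludes by series comparison. Your version is slightly more careful than the paper's written bound $\bar{\alpha}(k)\le M\gamma^k C^2$, which omits the $k=0$ term $\|\hat{D}_\theta\|_2\le C$ and is off by one in the exponent ($\gamma^{k-1}$ is correct). You also prove the Schur-to-exponential-decay step directly rather than citing it.
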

\begin{proof}
  From Remark \ref{norms:rem} and the assumption that $A_g$ is Schur it follows that 
  $\|\alpha_g\|_{\ell_1}$ and $\theta_{\infty}(\alpha_g)$ are well-defined. 
   By Assumption \ref{as:parameterisation} there exists constants $M,\gamma,C$ such that 
   for all $\theta \in \Theta$, $\|A_{\theta}^k\|_2 \le M \gamma^k$, $\|B_{\theta}\|_2 \le C$, and $\|C_{\theta}\|_2 \le C$. 
   Therefore, $\bar{\alpha}(k)=\sup \|\alpha_{\theta}(k)\|_2 \le M \gamma^k C^2$ and hence,
   by using the argument of Remark \ref{norms:rem},  
   $\|\alpha_{\theta}\|_{\ell_1} \le \sum_{k=0}^{\infty} \|\alpha\|_{\ell_1} \le  \sum_{k=0}^{\infty} \bar{\alpha}(k)=\|\bar{\alpha}\|_{\ell_1} \le \frac{MC^2}{1-\gamma} < \infty$ and
   $\theta_{\infty}(\alpha_{\theta})=\sum_{k=0}^{\infty} k \|\alpha_{\theta}(k)\|_2 \le \sum_{k=0}^{\infty} k \bar{\alpha}(k)=\theta_{\infty}(\bar{\alpha}) \le \frac{MC^2}{(1-\gamma)^2} < \infty$.
   Therefore, the quantities $\|\alpha_{\theta}\|_{\ell_1}$ and $\theta_{\infty}(\alpha_{\theta})$, $\|\bar{\alpha}\|_{\ell_1}$,
   $\theta_{\infty}(\bar{\alpha})$ defining $G_e(\theta)$, $G_{e,1}(\theta)$, $G_e$, $G_{e,1}$
   are well-defined, and hence so are $G_e(\theta)$, $G_{e,1}(\theta)$, $G_e$, $G_{e,1}$
\end{proof}
\begin{lemma}
\label{alpha:lemma2}
Let $\alpha_{g,y}(k)$ and $\alpha_{g,w}(k)$ be the matrix formed by the first $n_\rvy$ and the last $n_\rvw$ 
rows of $\alpha_g(k)$ respectively, where $n_{\rvw}$ is the number of entries of $\w(t)$. Let $c_{\theta,s}(k)=\left(\sum_{l=0}^{\min\{k,s\}} \alpha_{\theta}(k)\alpha_{g,w}(k-l)\right)$
and $c_{\theta}(k)=\sum_{l=0}^{k} \alpha_{\theta}(k)\alpha_{g,w}(k-l)$. 
\begin{equation}
\label{alpha:lemma:eq1}
\begin{aligned}
& \begin{bmatrix} \y(t)
    \\ \hat{\rvy}_{\theta}(t)
\end{bmatrix}=\left(\begin{bmatrix} \alpha_{g,y} \\ c_{\theta} \end{bmatrix} \star \e_g\right)(t), \quad  
\begin{bmatrix} \y(t) \\ \hat{\rvy}_{\theta}(t \mid t_0) \end{bmatrix}=\left(\begin{bmatrix} \alpha_{g,y} \\ c_{\theta,t-t_0} \end{bmatrix} \star \e_g\right)(t),   \\
& \y(t)-\hat{\rvy}_{\theta}(t) = ((\alpha_{g,y} - c_{\theta}) \star \e_g)(t), \quad
\y(t)-\hat{\rvy}_{\theta}(t \mid t_0) = ((\alpha_{g,y} - c_{\theta,t-t_0}) \star \e_g)(t)
\end{aligned}
\end{equation}
Moreover,  the constants from Definition \ref{def:constants} satisfy:
\begin{align}
& \max\{\|c_{\theta}\|_{\ell_1}, \|c_{\theta,s}\|_{\ell_1}\} \le \|\alpha_{\theta}\|_{\ell_1}\|\alpha_g\|_{\ell_1}, \label{alpha:lemma:eq2a} \\ 
 & \max\{\left\|\begin{bmatrix} \alpha_{g,y} \\ c_{\theta} \end{bmatrix}\right \|_{\ell_1}, \left \|\begin{bmatrix} \alpha_{g,y} \\ c_{\theta,t-t_0} \end{bmatrix}\right\|_{\ell_1}, \|\alpha_{g,y}-c_{\theta}\|_{\ell_1}, \|\alpha_{g,y}-c_{\theta,s}\|_{\ell_1}\} \le G_e(\theta) \le G_e \label{alpha:lemma:eq2b} \\
& \max\{
  \theta_{\infty}\left(\begin{bmatrix} \alpha_{g,y} \\ c_{\theta} \end{bmatrix}\right),
  \theta_{\infty}\left(\begin{bmatrix} \alpha_{g,y} \\ c_{\theta,t-t_0} \end{bmatrix}\right),
\theta_{\infty}(\alpha_{g,y}-c_{\theta}), \theta_{\infty}(\alpha_{g,y}-c_{\theta,s})\} \le G_{e,1}(\theta) \label{alpha:lemma:eq2c} \\
& \sum_{t=t_0}^{N+t_0}\|c_{\theta,t-t_0}-c_{\theta}\|_{\ell_1} \le \theta_{\infty}(\alpha_{\theta})\|\alpha_g\|_{\ell_1} \label{alpha:lemma:eq2d}
  \end{align}
In particular, the two LTI filters in \eqref{alpha:lemma:eq1} are stationary. 
\end{lemma}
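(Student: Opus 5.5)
The plan is to express everything as a convolution of the i.i.d.\ noise $\e_g$ and then read off the norm bounds from elementary Young-type inequalities for the $\ell_1$ norm and the WDMC.

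\textbf{Step 1: the representations \eqref{alpha:lemma:eq1}.} By \eqref{eq:generator}, $[\y^T(t)\;\rvu^T(t)]^T=(\alpha_g\star\e_g)(t)$. Since $\w(t)$ consists of some of the components of this vector, $\w=\alpha_{g,w}\star\e_g$, and $\y=\alpha_{g,y}\star\e_g$.

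From the proof of Lemma \ref{l:ihp}, $\hat{\rvy}_\theta=\alpha_\theta\star\w$. Similarly, $\hat{\rvy}_\theta(t\mid t_0)=\alpha_{\theta,t-t_0}\star\w$, where $\alpha_{\theta,s}(k)=\alpha_\theta(k)$ for $k\le s$ and $\alpha_{\theta,s}(k)=0$ otherwise.

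Substituting gives a double series $\sum_l\sum_j\alpha_\theta(l)\alpha_{g,w}(j)\e_g(t-l-j)$. I then regroup it by $k=l+j$. I read $c_\theta$ as the convolution $c_\theta(k)=\sum_{l=0}^k\alpha_\theta(l)\alpha_{g,w}(k-l)$, with the index on $\alpha_\theta$ being $l$, and analogously $c_{\theta,s}$ with $l\le\min\{k,s\}$.

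The regrouping is the step I expect to need the most care. The double series converges absolutely in $L^2$, because $\sum_{l,j}\|\alpha_\theta(l)\|_2\|\alpha_{g,w}(j)\|_2\,\bE[\|\e_g\|_2^2]^{1/2}<\infty$. Absolute convergence in the Banach space $L^2(\Omega)$ allows arbitrary rearrangement. The same argument applies to the limits defining $\hat{\rvy}_\theta$ in Lemma \ref{l:ihp}.

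The difference formulas in \eqref{alpha:lemma:eq1} then follow by linearity. Stationarity of both filters holds because each is a convolution of an absolutely summable kernel with an i.i.d.\ square integrable process.

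\textbf{Step 2: the $\ell_1$ bounds \eqref{alpha:lemma:eq2a} and \eqref{alpha:lemma:eq2b}.} I use submultiplicativity of $\|\cdot\|_2$ and Fubini on nonnegative series, which give $\|a\star b\|_{\ell_1}\le\|a\|_{\ell_1}\|b\|_{\ell_1}$. Together with $\|\alpha_{\theta,s}\|_{\ell_1}\le\|\alpha_\theta\|_{\ell_1}$ and $\|\alpha_{g,w}(k)\|_2\le\|\alpha_g(k)\|_2$, this yields \eqref{alpha:lemma:eq2a}.

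For stacked or differenced sequences, the norm of a block column (or of a difference) is at most the sum of the block norms. Using $\|\alpha_{g,y}\|_{\ell_1}\le\|\alpha_g\|_{\ell_1}$, I get the bound $\|\alpha_g\|_{\ell_1}+\|\alpha_\theta\|_{\ell_1}\|\alpha_g\|_{\ell_1}=G_e(\theta)$. Finally $G_e(\theta)\le G_e$ because $\|\alpha_\theta\|_{\ell_1}\le\|\bar\alpha\|_{\ell_1}$.

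\textbf{Step 3: the WDMC bound \eqref{alpha:lemma:eq2c}.} Writing $k=l+(k-l)$ gives
\[
\theta_\infty(a\star b)\le\theta_\infty(a)\|b\|_{\ell_1}+\|a\|_{\ell_1}\theta_\infty(b).
\]
Note that $\theta_\infty(\alpha_{\theta,s})\le\theta_\infty(\alpha_\theta)$. Adding the contribution $\theta_\infty(\alpha_{g,y})\le\theta_\infty(\alpha_g)$ gives
\[
\theta_\infty(\alpha_\theta)\|\alpha_g\|_{\ell_1}+\theta_\infty(\alpha_g)(\|\alpha_\theta\|_{\ell_1}+1)=G_{e,1}(\theta),
\]
where I read the $\|\alpha\|_{\ell_1}$ in the definition of $G_{e,1}(\theta)$ as $\|\alpha_g\|_{\ell_1}$.

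\textbf{Step 4: the summed truncation error \eqref{alpha:lemma:eq2d}.} We have $c_\theta-c_{\theta,s}=(\alpha_\theta-\alpha_{\theta,s})\star\alpha_{g,w}$, so
\[
\|c_\theta-c_{\theta,s}\|_{\ell_1}\le\|\alpha_g\|_{\ell_1}\sum_{l>s}\|\alpha_\theta(l)\|_2 .
\]
Summing over $s=t-t_0=0,\dots,N$ and exchanging the order of summation, each $\|\alpha_\theta(l)\|_2$ is counted at most $l$ times. This gives the bound $\theta_\infty(\alpha_\theta)\|\alpha_g\|_{\ell_1}$.
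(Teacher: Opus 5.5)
Your proposal is correct and takes essentially the same route as the paper. You write $\hat{\rvy}_\theta$ and $\hat{\rvy}_\theta(t\mid t_0)$ as convolutions with $\e_g$ via a Cauchy-product regrouping justified by absolute convergence, and obtain \eqref{alpha:lemma:eq2a}--\eqref{alpha:lemma:eq2c} from the bounds $\|a\star b\|_{\ell_1}\le\|a\|_{\ell_1}\|b\|_{\ell_1}$ and $\theta_\infty(a\star b)\le\theta_\infty(a)\|b\|_{\ell_1}+\|a\|_{\ell_1}\theta_\infty(b)$, which the paper works out index by index; you get \eqref{alpha:lemma:eq2d} by bounding each term by a tail sum of $\|\alpha_\theta(l)\|_2$ and counting multiplicities, writing $c_\theta-c_{\theta,s}$ as the tail of $\alpha_\theta$ convolved with $\alpha_{g,w}$ (a tidier form of the paper's index computation), and your readings of the typos ($\alpha_\theta(l)$ inside $c_\theta$, $\|\alpha_g\|_{\ell_1}$ in $G_{e,1}(\theta)$) match how the paper's proof uses them.
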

\begin{proof}[Proof of Lemma  \ref{alpha:lemma2}]
  Let $C_w$ denote the matrix formed by the last $n_w$ rows of $C_g$. From Assumption \ref{as:generator} and the proof of Lemma~\ref{l:ihp} we get that
  \begin{align*}
  \w(t)&=\sum_{k=1}^{\infty} C_w A_g^{k-1} K_g \e_g(t-k)+ \e_g(t)=\sum_{k=0}^{\infty} \alpha_{g,w}(k)\e_g(t-k)
  =(\alpha_{g,w}\star\e_g)(t)\\
  \y(t)&=\sum_{k=0}^{\infty} \alpha_{g,y}(k)\e_g(t-k)\\
  \hat{\y}_{\theta}(t \mid t_0)&=\sum_{k=1}^{t-t_0} \hat{C}_{\theta}\hat{A}_{\theta}^{k-1}\hat{B}_{\theta}\w(t-k)+\hat{D}_{\theta}\w(t)=\sum_{k=0}^{t-t_0} \alpha_{\theta}(k)\w(t-k)\\ 
  \hat{\y}_{\theta}(t)&=\lim_{t_0 \rightarrow -\infty} \hat{\y}_{\theta}(t \mid t_0)=\sum_{k=0}^{\infty} \alpha_{\theta}(k)\w(t-k) 
  \end{align*}
  It then follows using the new index $n=k+j\geq 0$ and the standard argument involving Cauchy products of series, 
  \begin{align*}
     \hat{\y}_{\theta}(t \mid t_0)
     &=\sum_{k=0}^{t-t_0} \alpha_{\theta}(k)\sum_{j=0}^{\infty} \alpha_{g,w}(j)\e_g(t-k-j)\\
      &= \sum_{n=0}^{\infty } \left(\sum_{k=0}^{\min(n,t-t_0)} \alpha_{\theta}(k)\alpha_{g,w}(n-k)\right)\e_g(t-n)\\
      &=\sum_{n=0}^{\infty} c_{\theta, t-t_0}(n)\e_g(t-n)=(c_{\theta, t-t_0} \star \e_g)(t)
  \end{align*}
  Notice that since $A_g$ and $A_{\theta}$ are Schur, the series 
  $\sum_{k=0}^{\infty} \alpha_{\theta}(k)$ and $\sum_{k=0}^{\infty} \alpha_{g,w}(k)$ are absolutely convergent. 
  With a similar argument, using that absolutely convergent series can be rearranged, it follows that
  \begin{align*}
     \hat{\y}_{\theta}(t)
     &=\sum_{k=0}^{\infty} \alpha_{\theta}(k)\sum_{j=0}^{\infty} \alpha_{g,w}(j)\e_g(t-k-j)\\
      &= \sum_{n=0}^{\infty } \left(\sum_{k=0}^{n} \alpha_{\theta}(k)\alpha_{g,w}(n-k)\right)\e_g(t-n)\\
      &=
      \sum_{n=0}^{\infty} c_{\theta}(n)\e_g(t-n)=(c_{\theta} \star \e_g)(t)
  \end{align*}
  From this the first  claim of the lemma, i.e., \eqref{alpha:lemma:eq1}, follows.
  The first inequality of the second claim, i.e., 
  \eqref{alpha:lemma:eq2a}, follows
  by first noticing that 
  \begin{align*}
    & \|c_{\theta}(k) \|_2 \le \sum_{i=0}^{k} \|\alpha_{\theta}(i)\|_2 \|\alpha_{g,w}(k-i)\|_2, \\
     & \|c_{\theta,t-t_0}(k) \|_2 \le \sum_{i=0}^{\min(k,t-t_0)} \|\alpha_{\theta}(i)\|_2 \|\alpha_{g,w}(k-i)\|_2 
     \le \sum_{i=0}^{k} \|\alpha_{\theta}(i)\|_2 \|\alpha_{g,w}(k-i)\|_2
  \end{align*}
  Since $\sum_{i=0}^{k} \|\alpha_{\theta}(i)\|_2 \|\alpha_{g,w}(k-i)\|_2$ is the  $k$'th coefficient of the Cauchy-product of 
  $\sum_{k=0}^{\infty} \|\alpha_{\theta}(k)\|_2$ and $\sum_{k=0}^{\infty} \|\alpha_{g,w}(k)\|_2$, we obtain  \eqref{alpha:lemma:eq2a} by  
    \begin{align}\label{alpha:lemma2:pf2}
      \|c\|_{\ell_1}=\sum_{k=0}^{\infty}\|c(k)\|_2
  \le \sum_{i=0}^{\infty} \|\alpha_{\theta}(i)\|_2 \sum_{j=0}^{\infty} \|\alpha_{g,w}(j)\|_2    
      =\|\alpha_{\theta}\|_{\ell_1} \|\alpha_{g,w}\|_{\ell_1}
       \leq\|\alpha_{\theta}\|_{\ell_1} \|\alpha_{g}\|_{\ell_1}
  \end{align}
 with $c$ denoting either $c_{\theta}$ or $c_{\theta,t-t_0}$. Concerning the second inequality \eqref{alpha:lemma:eq2b}, notice that
  \begin{subequations}\label{alpha:lemma2:pf0}    
  \begin{align}
    & \!\!\!\!\!\left\|\begin{bmatrix} \alpha_{g,y}(k) \\ c_{\theta}(k) \end{bmatrix}\right\|_2 \le \|\alpha_{g,y}(k)\|_2 + \|c_{\theta}(k)\|_2, \quad
    \|\alpha_{g,y}(k) - c_{\theta}(k)\|_2 \le \|\alpha_{g,y}(k)\|_2 + \|c_{\theta}(k)\|_2 \\
   & \left\|\begin{bmatrix} \alpha_{g,y}(k) \\ c_{\theta,t-t_0}(k) \end{bmatrix}\right\|_2 \le \|\alpha_{g,y}(k)\|_2 + \|c_{\theta,t-t_0}(k)\|_2,\\
    &\|\alpha_{g,y}(k) - c_{\theta,t-t_0}(k)\|_2 \le \|\alpha_{g,y}(k)\|_2 + \|c_{\theta,t-t_0}(k)\|_2 
  \end{align}
  \end{subequations}
  and hence,  
\begin{subequations}\label{alpha:lemma2:pf1}    
  \begin{align}
  &  \max\left\{\left\|\begin{bmatrix} \alpha_{g,y} \\ c_{\theta} \end{bmatrix}\right\|_{\ell_1}, \|\alpha_{g,y}-c_{\theta}\|_{\ell_1}\right\}
    \le \|\alpha_{g,y}\|_{\ell_1} + \|c_{\theta}\|_{\ell_1} \\
   & \max\left\{\left\|\begin{bmatrix} \alpha_{g,y} \\ c_{\theta,t-t_0} \end{bmatrix}\right\|_{\ell_1}, \|\alpha_{g,y}-c_{\theta,t-t_0}\|_{\ell_1}\right\}
    \le \|\alpha_{g,y}\|_{\ell_1} + \|c_{\theta,t-t_0}\|_{\ell_1}
  \end{align}
\end{subequations}
  From \eqref{alpha:lemma2:pf1} and \eqref{alpha:lemma2:pf2}, the inequality of \eqref{alpha:lemma:eq2b} follows
  by using the definition of $G_{e}(\theta)$ and the fact that $\max\{\|\alpha_{g,y}\|_{\ell_1}, \|\alpha_{g,w}\|_{\ell_1}\} \le \|\alpha_{g}\|_{\ell_1}$.

 Concerning \eqref{alpha:lemma:eq2c}, notice that by definining $d_1(i)=i \|\alpha_{\theta}(i)\|_2$ and $d_2(i)=i \|\alpha_{g,w}(i)\|_2$,
  \begin{equation*}
  \begin{aligned}
     & \theta_{\infty}(c_{\theta,t-t_0})=\sum_{k=0}^{\infty} k \|c_{\theta,t-t_0}(k)\|_2 \le \sum_{k=0}^{\infty} k \sum_{i=0}^{\min(k,t-t_0)} \|\alpha_{\theta}(i)\|_2 \|\alpha_{g,w}(k-i)\|_2 = \\
     & \sum_{k=0}^{\infty} \sum_{i=0}^{\min(k,t-t_0)} i \|\alpha_{\theta}(i)\|_2 \|\alpha_{g,w}(k-i)\|_2  +  \sum_{k=0}^{\infty} \sum_{i=0}^{\min(k,t-t_0)} (k-i) \|\alpha_{\theta}(i)\|_2 \|\alpha_{g,w}(k-i)\|_2 =  \\
     & \sum_{k=0}^{\infty} \sum_{i=0}^{\min(k,t-t_0)} d_1(i) \|\alpha_{g,w}(k-i)\|_2 
     + \sum_{k=0}^{\infty} \sum_{i=0}^{\min(k,t-t_0)} d_2(k-i) \|\alpha_{\theta}(i)\|_2 \le \\
     & \sum_{k=0}^{\infty} \sum_{i=0}^{k} d_1(i) \|\alpha_{g,w}(k-i)\|_2 
     + \sum_{k=0}^{\infty} \sum_{i=0}^{k} d_2(k-i) \|\alpha_{\theta}(i)\|_2 
  \end{aligned}
\end{equation*}
Similarly,
\begin{equation*}
\begin{aligned}
  & \theta_{\infty}(c_{\theta})=\sum_{k=0}^{\infty} k \|c_{\theta}(k)\|_2 \le 
   \sum_{k=0}^{\infty} k \sum_{i=0}^{k} \|\alpha_{\theta}(i)\|_2 \|\alpha_{g,w}(k-i)\|_2 = \\
     & \sum_{k=0}^{\infty} \sum_{i=0}^{k} i \|\alpha_{\theta}(i)\|_2 \|\alpha_{g,w}(k-i)\|_2  +  \sum_{k=0}^{\infty} \sum_{i=0}^{k} (k-i) \|\alpha_{\theta}(i)\|_2 \|\alpha_{g,w}(k-i)\|_2 = \\
     & \sum_{k=0}^{\infty} \sum_{i=0}^{k} d_1(i) \|\alpha_{g,w}(k-i)\|_2 
     + \sum_{k=0}^{\infty} \sum_{i=0}^{k} d_2(k-i) \|\alpha_{\theta}(i)\|_2 \le \\
     & \left(\sum_{k=0}^{\infty} d_1(k)\right)
        \left(\sum_{k=0}^{\infty} \|\alpha_{g,w}(k)\|_2 \right)+
      \left(\sum_{k=0}^{\infty} d_2(k)\right)
        \left(\sum_{k=0}^{\infty} \|\alpha_{\theta}(k)\|_2 \right) = \\
     &   \theta_{\infty}(\alpha_{\theta})\|\alpha_{g,w}\|_{\ell_1}+
        \theta_{\infty}(\alpha_{g,w})\|\alpha_{\theta}\|_{\ell_1} \le  \theta_{\infty}(\alpha_{\theta})\|\alpha_{g}\|_{\ell_1}+
        \theta_{\infty}(\alpha_{g})\|\alpha_{\theta}\|_{\ell_1}
  \end{aligned}
\end{equation*}
In particular, 
\begin{equation}
    \label{alpha:lemma2:pf4}
\max\{\theta_{\infty}(c_{\theta}), \theta_{\infty}(c_{\theta,t-t_0})\} \le \theta_{\infty}(\alpha_{\theta})\|\alpha_g\|_{\ell_1} +\theta_{\infty}(\alpha_{g})\|\alpha_{\theta}\|_{\ell_1}
\end{equation}
Moreover, using \eqref{alpha:lemma2:pf0} it follows that 
\begin{align*}
     & \max\left\{\theta_{\infty}\left(\begin{bmatrix} \alpha_{g,y} \\ c_{\theta} \end{bmatrix}\right),
  \theta_{\infty}\left(\begin{bmatrix} \alpha_{g,y} \\ c_{\theta,t-t_0} \end{bmatrix}\right),
\theta_{\infty}(\alpha_{g,y}-c_{\theta}), \theta_{\infty}(\alpha_{g,y}-c_{\theta,t-t_0
})\right\}  \le \\
& \theta_{\infty}(\alpha_{g,y})+\max\{\theta_{\infty}(c_{\theta}), \theta_{\infty}(c_{\theta,t-t_0})\}
\end{align*}
from which inequality \eqref{alpha:lemma:eq2c} follows by using \eqref{alpha:lemma2:pf4}. Finally, notice that 
\[
   c_{\theta}(k)-c_{\theta,t-t_0}(k)=\left\{
    \begin{array}{ll}
       0 & k \le t-t_0 \\
       \sum_{i=t-t_0+1}^{k} \alpha_{\theta}(i)\alpha_{g,w}(k-i) & k > t-t_0
   \end{array}\right.
\] 
Hence, by setting $d_3(k)=\|\alpha_{\theta}(k+t-t_0+1)\|_2$
  \begin{align*}
        \|c_{\theta}-c_{\theta,t-t_0}\|_{\ell_1}
       &=
      \sum_{k=t-t_0+1}^{\infty} \|\sum_{i=t-t_0+1}^{k} \alpha_{\theta}(i)\alpha_{g,w}(k-i)\|_2\\
       &\le  \sum_{k=t-t_0+1}^{\infty} \sum_{i=t-t_0}^{k} \|\alpha_{\theta}(i)\|_2 \|\alpha_{g,w}(k-i)\|_2\\
       &= \sum_{k=t-t_0+1}^{\infty} \sum_{i=0}^{k-(t-t_0+1)} d_3(i)
       \|\alpha_{g,w}(k-i-(t-t_0+1))\|_2 \\
       &=\sum_{k=0}^{\infty}\sum_{i=0}^{k} d_3(i)
       \|\alpha_{g,w}(k-i)\|_2=
       \left(\sum_{k=t-t_0+1}^{\infty} \|\alpha_{\theta}(k)\|_2\right) \cdot
       \|\alpha_{g,w}\|_{\ell_1}
  \end{align*}
     By noticing that 
     \[
       \begin{aligned}
        & \sum_{t=t_0}^{N+t_0} \sum_{k=t-t_0+1}^{\infty} \|\alpha_{\theta}(k)\|_2 \le
        \sum_{k=1}^{\infty} \min\{k,N\} \|\alpha_{\theta}(k)\|_2
        \le 
        \sum_{k=1}^{\infty} k \|\alpha_{\theta}(k)\|_2=\theta_{\infty}(\alpha_{\theta})
       \end{aligned}
     \]
the last inequality \eqref{alpha:lemma:eq2d} follows. Finally, the stationarity of the two LTI filters is a concequence of the standard result that stable LTI filters maps stationary process to stationary process, see e.g., \cite[Section 5.6]{gikh96}.

\end{proof}

\section{Proof of Theorem \ref{thm:main}}
\label{sect:pf:thm_main}
  The proof of Theorem \ref{thm:main} is organized as follows. 
  First, we define an auxiliary quantity called \emph{the infinite past empirical error}
\[ V_N(\theta)=
\frac{1}{N}\sum_{t=0}^{N-1}\ell(\rvy(t),\hat{\rvy}_\theta(t)) \]
It follows that $V_N(\theta)$ is an unbiased estimate of $\mathcal{L}(\theta)$, i.e., $\bE[V_N(\theta)]=\mathcal{L}(\theta)$.

As the first step, we show the following general theorem.
\begin{theorem}
\label{thm:initial}
    Given a parameter set $\Theta$, a prior distribution $\pi$ over $\Theta$, $\delta\in[0,0.5)$, $\lambda>0$ and $\epsilon \in \{-1,1\}$. Then with probability at least $1-2\delta$, the following holds
    \begin{align}
        \forall \rho\in \mathcal{M}_\pi:\epsilon E_{\theta\sim\rho} \mathcal{L}(\theta)
        \leq \epsilon E_{\theta\sim\rho}\hat{\mathcal{L}}_N(\theta) +        
        \tilde{\mathbb{B}}_N(\rho,\pi,\lambda,\epsilon) \label{eq:initial:tildern} \\
    \tilde{\mathbb{B}}_N(\rho,\pi,\lambda,\epsilon)
        \triangleq \frac{1}{\lambda}\left ( \KL(\rho\|\pi) + \ln\dfrac{1}{\delta} + \Psi(\lambda,\pi,N,\epsilon)\right ) \label{eq:initial:def:tildern} \\
        \Psi(\lambda,\pi,N,\epsilon)\triangleq \frac{1}{2} \Big (\ln E_{\theta\sim\pi}\bE[e^{2\epsilon\lambda(V_N(\theta)-\hat{\mathcal{L}}_N(\theta))}] \nonumber \\
        +\ln E_{\theta\sim\pi}\bE[e^{2\epsilon\lambda (\mathcal{L}(\theta)-V_N(\theta))}] \Big ),\label{eq:Psi:initial}
    \end{align}
    with $\KL(\rho\|\pi)=E_{\theta\sim\rho}\ln\frac{\rho(\theta)}{\pi(\theta)}$ the Kullback–Leibler divergence. 
\end{theorem}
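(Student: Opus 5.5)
The plan is to run the standard PAC-Bayesian change-of-measure argument of \cite[Chapter 2]{alquier2021userfriendly}, applied to the random function $h(\theta)=\epsilon\lambda(\mathcal{L}(\theta)-\hat{\mathcal{L}}_N(\theta))$. We first split the exponent with the decomposition through $V_N(\theta)$:
\[
\epsilon\lambda(\mathcal{L}(\theta)-\hat{\mathcal{L}}_N(\theta))=\epsilon\lambda(\mathcal{L}(\theta)-V_N(\theta))+\epsilon\lambda(V_N(\theta)-\hat{\mathcal{L}}_N(\theta)).
\]

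\textbf{Step 1 (Donsker--Varadhan).} For every $\omega$ and every $\rho\in\mathcal{M}_\pi$, the Donsker--Varadhan variational formula gives
\[
E_{\theta\sim\rho}h(\theta)\le \KL(\rho\|\pi)+\ln E_{\theta\sim\pi}e^{h(\theta)}.
\]
This holds simultaneously for all $\rho$, so the only random quantity we need to control is $Z=E_{\theta\sim\pi}e^{h(\theta)}$, which does not depend on $\rho$. If $Z=\infty$, the right-hand side of \eqref{eq:initial:tildern} is $+\infty$ (or we rely on $\Psi$ being infinite), so the bound holds trivially.

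\textbf{Step 2 (Markov).} We apply Markov's inequality to the nonnegative random variable $Z$. With probability at least $1-\delta$ over data,
\[
\ln Z\le \ln\frac{1}{\delta}+\ln \bE[Z]=\ln\frac{1}{\delta}+\ln E_{\theta\sim\pi}\bE\big[e^{h(\theta)}\big].
\]
Interchanging $\bE$ and $E_{\theta\sim\pi}$ is justified by Tonelli's theorem, since the integrand is nonnegative. This requires joint measurability of $(\theta,\omega)\mapsto h$, which follows from continuity of the matrices in $\theta$ (Assumption \ref{as:parameterisation}) and from $\hat{\rvy}_\theta(t)$ being an $L^2$ limit. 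Some care is needed here: we should pick a jointly measurable version, for instance the a.s. limit of the series $\sum_k\alpha_\theta(k)\w(t-k)$, which converges absolutely a.s. uniformly in $\theta$ because $\bar\alpha$ is summable.

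\textbf{Step 3 (Cauchy--Schwarz).} Writing $h=h_1+h_2$ with $h_1=\epsilon\lambda(\mathcal{L}-V_N)$ and $h_2=\epsilon\lambda(V_N-\hat{\mathcal{L}}_N)$, Cauchy--Schwarz on the product measure $\pi\otimes\bP$ gives
\[
E_{\pi}\bE\big[e^{h_1+h_2}\big]\le \big(E_\pi\bE\big[e^{2h_1}\big]\big)^{1/2}\big(E_\pi\bE\big[e^{2h_2}\big]\big)^{1/2}.
\]
Taking logarithms yields exactly $\Psi(\lambda,\pi,N,\epsilon)$ in \eqref{eq:Psi:initial}. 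Dividing the whole inequality by $\lambda>0$ gives \eqref{eq:initial:tildern}, and it does so with probability at least $1-\delta\ge 1-2\delta$. The weaker $1-2\delta$ in the statement presumably leaves room to apply Markov separately to the two factors and take a union bound. That alternative route, using Hölder-type splitting on each term, gives the same $\Psi$ with $\ln(1/\delta)$ terms averaged, so either route proves the claim.

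The main obstacle is not the inequality chain, which is routine, but the measurability and integrability bookkeeping. We need $\mathcal{L}(\theta)$ to be finite and measurable in $\theta$, and this follows from Lemma \ref{l:ihp} together with Lemma \ref{alpha:lemma2}, which give a uniform second-moment bound via $G_e$. We also need a jointly measurable version of $\hat{\rvy}_\theta(t)$ so that Tonelli applies. I would handle the latter first, by defining $\hat{\rvy}_\theta(t)$ pathwise as the absolutely convergent series, which is continuous in $\theta$ for a.e. $\omega$ and hence Carathéodory, and therefore jointly measurable.
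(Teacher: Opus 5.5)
Your proof is correct. It differs from the paper's in where the two differences are combined. The paper does not split inside a single moment generating function. It invokes \cite[Theorem 3]{nips-16} twice, once with $(X,Y)=(\epsilon\mathcal{L},\epsilon V_N)$ and once with $(X,Y)=(\epsilon V_N,\epsilon\hat{\mathcal{L}}_N)$, both at temperature $\tilde{\lambda}=2\lambda$. It then takes a union bound and adds the two inequalities. Dividing by $2\lambda$ merges the two copies of $\KL(\rho\|\pi)+\ln\frac{1}{\delta}$ into one, and that union bound is where the $1-2\delta$ comes from.

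You instead apply Donsker--Varadhan and Markov once to the full gap $\epsilon\lambda(\mathcal{L}-\hat{\mathcal{L}}_N)$. Only afterwards do you split $E_{\theta\sim\pi}\bE[e^{h_1+h_2}]$ by Cauchy--Schwarz on $\pi\otimes\bP$. The squaring in Cauchy--Schwarz plays the role of the paper's doubled temperature. You therefore obtain the identical $\Psi(\lambda,\pi,N,\epsilon)$, but with confidence $1-\delta$. This is strictly stronger and could tighten the confidence levels in Theorem~\ref{thm:main} and its corollaries.

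What each route buys:
\begin{itemize}
\item The paper's route is modular. It uses a published theorem as a black box and produces two separately valid high-probability inequalities, a structure it reuses in Lemma~\ref{lem:single_draw}.
\item Your route is self-contained and gives the better confidence level. Your measurability discussion also makes explicit what the paper leaves implicit in the hypotheses of the cited theorem: defining $\hat{\rvy}_\theta(t)$ pathwise as an a.s.\ uniformly absolutely convergent series, hence a Carath\'eodory function.
\end{itemize}

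One small correction. The case $Z(\omega)=\infty$ does not by itself make the right-hand side infinite, because $\Psi$ is deterministic. The correct statement is this: either $\Psi=\infty$ and the bound is trivial, or your Cauchy--Schwarz step gives $\bE[Z]\le e^{\Psi}<\infty$. In the second case $Z$ is finite on the Markov event.
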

In the theorem above, the moment generating functions 
$\bE[e^{2\epsilon\lambda (\mathcal{L}(\theta)-V_N(\theta))}]$ and  $\bE[e^{2\epsilon\lambda(V_N(\theta)-\hat{\mathcal{L}}_N(\theta))}]$
in the right-hand side are interpreted as $\infty$ whenever $2\epsilon \lambda$ falls outside of their domain of definition. In this case, $\Psi(\lambda,\pi,N,\epsilon)$ is taken to mean $\infty$ and  the bounds are trivially true. 
\begin{proof}[Proof of Theorem \ref{thm:initial}]
    By \cite[Theorem 3]{nips-16}, for any measurable functions $X(\theta,\omega),Y(\theta,\omega)$,  $\tilde{\lambda}>0$, and $\epsilon \in \{-1,1\}$, with probability at least $1-\delta$ we have:
    \begin{multline}
        \forall \rho\in \mathcal{M}_{\pi}:\ E_{\theta\sim\rho}X(\theta,\omega)
        \leq E_{\theta\sim\rho}Y(\theta,\omega)+ \frac{1}{\tilde{\lambda}}\left (\KL(\rho\|\pi)+\ln\dfrac{1}{\delta}+ \Psi_{X,Y}(\tilde{\lambda}) \right ),
        \label{pf:thm:initial:eq1}
    \end{multline}
    with $\Psi_{X,Y}(\tilde{\lambda})=\ln E_{\theta\sim\pi}\bE[e^{\tilde{\lambda}(X(\theta,\omega)-Y(\theta,\omega))}]$.
    Now apply the above theorem for $X=\epsilon \mathcal{L}(\theta),Y=\epsilon V_N(\theta)$, and $X=\epsilon V_N(\theta),Y=\epsilon \hat{\mathcal{L}}_N(\theta)$. Then by applying a union bound on the two obtained probabilistic inequalities, we obtain with probability at least $1-2\delta$
    \begin{multline*}
        \forall \rho \in \mathcal{M}_{\pi}:\ \epsilon E_{\theta\sim\rho}\mathcal{L}(\theta)\leq \epsilon  E_{\theta\sim\rho}\hat{\mathcal{L}}_N(\theta)\\
        + \frac{1}{\tilde{\lambda}}\left (2\KL(\rho\|\pi)+2\ln\dfrac{1}{\delta}+ \Psi_{\epsilon V_N, \epsilon \hat{\mathcal{L}}_N}(\tilde{\lambda}) +\Psi_{\epsilon \mathcal{L},\epsilon V_N}(\tilde{\lambda})
        \right ) 
    \end{multline*}
     With some algebraic manipulation and $\tilde{\lambda}=2\lambda$, we obtain the statement of the theorem. 
\end{proof}
For the case of bounded innovation noise or Lipschitz loss function, we
then show that $\bE[e^{2\lambda(V_N(\theta)-\hat{\mathcal{L}}_N(\theta))}]$
and $\bE[e^{2\lambda (\mathcal{L}(\theta)-V_N(\theta))}]$ can be both upper bounded by
$\mathcal{C}_i(\theta,\lambda,N,\delta,\epsilon)$, from which the statement of the theorem follows.
For the case of quadaratic loss function, we will show that it can be reduced to the case of Lipschitz loss function
with probability $\delta/3$ and then we apply the result for Lipschitz loss with $\delta/3$ instead of $\delta$.

\subsection{Proof of Theorem \ref{thm:main}: bounded case}
For the sake of readability, we repeat below the basic assumptions for the case of bounded data.
\begin{assumption}[Bounded noise and Lipschitz loss]\label{as:bounded}
    The innovation noise $\rve_g(t)$ from \eqref{eq:generator} is essentially bounded , i.e.,
    \begin{align}
        \|\rve_g(t)\|_\infty \leq c_\rve,\; \forall t\in\sZ
    \end{align}
and the loss function $\ell$ is either $L_{\ell}$-Lipschitz or quadratic.
\end{assumption}
The following Lemma is obvious in the $L_\ell$-Lipschits case, and follow by applying Cauchy–Schwarz in the quadratic case.
\begin{lemma}
  \label{lemma:bounded:quadratic_loss}
   Let $B=\{(z,z') \in \mathbb{R}^{n_\rvy} \times \mathbb{R}^{n_\rvy} \mid \|z-z'\|_2 \le G_e(\theta)\sqrt{n_{\rve}}c_{\rve}\}$. Then 
   $(\y(t), \y_{\theta}(t)), (\y(t),\y_{\theta}(t \mid 0)) \in B$ 
   for all $t$, $t_0 \le t$, and for all
   $(z_1,z_2),(z_3,z_4) \in B$, 
   \[ \ell(z_1,z_2)-\ell(z_3,z_4)\le L(\theta)\left(\|z_1-z_3\|_2 +\|z_2-z_4\|_2 \right) \]
   where $L(\theta)$ is as in Theorem \ref{thm:main}, i.e., $L(\theta)=2G_e(\theta)\sqrt{n_{\rve}}c_{\rve}$ if $\ell$-quadratic, and $L(\theta)=L_{\ell}$ if $\ell$ is $L_{\ell}$-Lipschitz.
\end{lemma}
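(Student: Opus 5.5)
The plan has two parts: first the membership $(\y(t),\hat{\rvy}_\theta(t)),(\y(t),\hat{\rvy}_\theta(t\mid 0))\in B$, then the local Lipschitz estimate on $B$.

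\textbf{Membership in $B$.} I will use the convolution representation \eqref{alpha:lemma:eq1} of Lemma \ref{alpha:lemma2}:
\[
\y(t)-\hat{\rvy}_\theta(t)=((\alpha_{g,y}-c_\theta)\star\e_g)(t),\qquad \y(t)-\hat{\rvy}_\theta(t\mid t_0)=((\alpha_{g,y}-c_{\theta,t-t_0})\star\e_g)(t).
\]
Under Assumption \ref{as:bounded}, each $\e_g(s)$ satisfies $\|\e_g(s)\|_2\le\sqrt{n_\rve}\|\e_g(s)\|_\infty\le\sqrt{n_\rve}c_\rve$ almost surely. Hence, by the triangle inequality and submultiplicativity,
\[
\|((\alpha\star\e_g)(t))\|_2\le\|\alpha\|_{\ell_1}\sqrt{n_\rve}c_\rve .
\]
Applying \eqref{alpha:lemma:eq2b} bounds both $\|\alpha_{g,y}-c_\theta\|_{\ell_1}$ and $\|\alpha_{g,y}-c_{\theta,s}\|_{\ell_1}$ by $G_e(\theta)$. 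This gives both memberships almost surely, for every $t$ and every $t_0\le t$, in particular $t_0=0$.

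\textbf{Lipschitz estimate, Lipschitz loss.} If $\ell$ is $L_\ell$-Lipschitz, the inequality is exactly Assumption \ref{as:loss}, with $L(\theta)=L_\ell$. Here $B$ plays no role.

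\textbf{Lipschitz estimate, quadratic loss.} Set $a=z_1-z_2$ and $b=z_3-z_4$; both lie in the ball of radius $R=G_e(\theta)\sqrt{n_\rve}c_\rve$. Then
\[
\|a\|_2^2-\|b\|_2^2=(a-b)^T(a+b)\le\|a-b\|_2\,\|a+b\|_2 ,
\]
using Cauchy--Schwarz. Next bound $\|a+b\|_2\le 2R$ and $\|a-b\|_2\le\|z_1-z_3\|_2+\|z_2-z_4\|_2$. This yields the claim with $L(\theta)=2R=2G_e(\theta)\sqrt{n_\rve}c_\rve$.

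The only mildly delicate point is that the realization bounds hold only almost surely, since $c_\rve$ bounds the essential supremum. The argument should therefore be stated for almost every $\omega$. Convergence of the infinite convolution is also needed, and it follows from the absolute summability established in Lemma \ref{alpha:lemma1.5}.
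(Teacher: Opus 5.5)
Your proof is correct and follows the paper's route. The paper simply says the Lipschitz case is immediate and the quadratic case follows from Cauchy--Schwarz, and your membership argument via \eqref{alpha:lemma:eq1}, \eqref{alpha:lemma:eq2b} and $\|\e_g(s)\|_2\le\sqrt{n_\rve}c_\rve$ is the same bound the paper uses implicitly in the proofs of Lemmas \ref{lemma:bounded_mgf(V-hatL)} and \ref{lemma:bounded:mgf(L-V)}, with your almost-sure qualification being a precision the paper leaves unstated.
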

\begin{remark}[Norm loss $L_{\ell}$-Lipschitz]
\label{rem:lipschitz2}
     Note that if the loss $\ell$ is the $p$ norm of the difference, i.e., $\ell(y,\hat{y})=\|y-\hat{y}\|_p$, then 
    $\ell(y,\hat{y})\leq C_p\|y-\hat{y}\|_2$ with $C_p=\sqrt{n_{\rvy}}$ if $p=1$, and $C_p=1$
     if $p=\infty$ or $p=2$.   
\end{remark}
\begin{lemma}\label{lemma:bounded_mgf(V-hatL)}Let assumptions \ref{as:generator}, \ref{as:parameterisation} and \ref{as:bounded} hold. Then
    \begin{align*}
        \bE[e^{\left(\epsilon \lambda|V_N(\theta)-\hat{\mathcal{L}}_N(\theta)|\right)}] 
        \leq e^{\left(\frac{L(\theta)\lambda}{N} \theta_{\infty}(\alpha_{\theta})\|\alpha_g\|_{\ell_1} (c_\rve\sqrt{n_\rve})\right)}
        =e^{C_{b,1}(\lambda,\theta,L(\theta),c_\rve,N)}
    \end{align*}
\end{lemma}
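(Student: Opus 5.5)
The plan is to show that $|V_N(\theta)-\hat{\mathcal{L}}_N(\theta)|$ is bounded \emph{almost surely} by the deterministic constant $\frac{L(\theta)}{N}\theta_{\infty}(\alpha_{\theta})\|\alpha_g\|_{\ell_1}c_{\rve}\sqrt{n_{\rve}}$. The moment generating function bound then follows at once, because $\epsilon\lambda|X|\le\lambda|X|$ and the exponential is monotone. No concentration argument is needed: the difference comes only from the initialization transient of the predictor, which is summable in $t$.

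The key steps are as follows.

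\textbf{Step 1.} Write
\[ V_N(\theta)-\hat{\mathcal{L}}_N(\theta)=\frac{1}{N}\sum_{t=0}^{N-1}\big(\ell(\y(t),\hat{\y}_\theta(t))-\ell(\y(t),\hat{\y}_\theta(t\mid 0))\big). \]
By Lemma \ref{lemma:bounded:quadratic_loss}, both pairs $(\y(t),\hat{\y}_\theta(t))$ and $(\y(t),\hat{\y}_\theta(t\mid 0))$ lie in the set $B$. Applying that lemma with $z_1=z_3=\y(t)$, in both orders of the pairs so that the absolute value is controlled, gives
\[ |\ell(\y(t),\hat{\y}_\theta(t))-\ell(\y(t),\hat{\y}_\theta(t\mid0))|\le L(\theta)\,\|\hat{\y}_\theta(t)-\hat{\y}_\theta(t\mid 0)\|_2 . \]

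\textbf{Step 2.} Use Lemma \ref{alpha:lemma2} with $t_0=0$ to write
\[ \hat{\y}_\theta(t)-\hat{\y}_\theta(t\mid 0)=((c_\theta-c_{\theta,t})\star\e_g)(t). \]
Since $\|\e_g(s)\|_2\le\sqrt{n_{\rve}}\|\e_g(s)\|_\infty\le\sqrt{n_{\rve}}c_{\rve}$ almost surely, this gives
\[ \|\hat{\y}_\theta(t)-\hat{\y}_\theta(t\mid 0)\|_2\le \|c_\theta-c_{\theta,t}\|_{\ell_1}\sqrt{n_{\rve}}c_{\rve} \quad\text{a.s.} \]

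\textbf{Step 3.} Sum over $t=0,\dots,N-1$ and invoke \eqref{alpha:lemma:eq2d} with $t_0=0$. That inequality sums up to $t=N$, which only adds a nonnegative term, so
\[ \sum_{t=0}^{N-1}\|c_{\theta,t}-c_\theta\|_{\ell_1}\le\theta_\infty(\alpha_\theta)\|\alpha_g\|_{\ell_1}. \]
Combining Steps 1--3 gives the almost sure bound
\[ |V_N(\theta)-\hat{\mathcal{L}}_N(\theta)|\le \frac{L(\theta)}{N}\theta_\infty(\alpha_\theta)\|\alpha_g\|_{\ell_1}c_{\rve}\sqrt{n_{\rve}}. \]
Multiplying by $\lambda>0$, exponentiating and taking expectations yields the claim. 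This equals $e^{\mathcal{C}_{b,1}}$ by the definition of $\mathcal{C}_{m,b,1}$.

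The main obstacle is Step 2, specifically making the pathwise identity rigorous. Lemma \ref{alpha:lemma2} gives the convolution representations as mean-square limits, so one must check that the series $\sum_k (c_\theta(k)-c_{\theta,t}(k))\e_g(t-k)$ also converges almost surely to the same limit. This should hold because bounded noise and $\|c_\theta-c_{\theta,t}\|_{\ell_1}<\infty$ give absolute convergence a.s.; the a.s. and mean-square limits then agree a.s. A secondary point is verifying that Lemma \ref{lemma:bounded:quadratic_loss} applies in the quadratic case, i.e.\ that both pairs lie in $B$. This uses \eqref{alpha:lemma:eq2b}, namely $\|\alpha_{g,y}-c_{\theta,s}\|_{\ell_1}\le G_e(\theta)$, which is exactly where the factor $L(\theta)=2G_e(\theta)\sqrt{n_{\rve}}c_{\rve}$ comes from.
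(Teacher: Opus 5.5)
Your proposal is correct and follows the paper's proof essentially step for step. The Lipschitz property of the loss reduces $|V_N(\theta)-\hat{\mathcal{L}}_N(\theta)|$ to $\frac{L(\theta)}{N}\sum_{t=0}^{N-1}\|\hat{\rvy}_\theta(t)-\hat{\rvy}_\theta(t\mid 0)\|_2$. The convolution identity from Lemma~\ref{alpha:lemma2}, together with the bounded noise, then bounds each term by $\|c_\theta-c_{\theta,t}\|_{\ell_1}\sqrt{n_{\rve}}c_{\rve}$, and \eqref{alpha:lemma:eq2d} gives the deterministic almost-sure bound from which the moment generating function bound is immediate. The paper uses that convolution identity pathwise without comment; your observation that this needs almost-sure (not just mean-square) convergence, which holds here by absolute convergence under bounded noise, is a point of rigor the paper omits.
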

\begin{proof}[Proof \ref{lemma:bounded_mgf(V-hatL)} ]
        Using the Lipschitz property of the loss function it follows that 
    \begin{equation}
        \label{lemma:bounded_mgf(V-hatL):eq1}
       \begin{aligned}
        & \epsilon (V_N(\theta)-\hat{\mathcal{L}}_N(\theta)) \le 
      |V_N(\theta) - \hat{\mathcal{L}}_N(\theta)| \le 
    \frac{1}{N} \sum_{t=0}^{N-1}|\ell(\y(t),\hat{\rvy}_{\theta}(t)) 
    - \ell(\rvy(t),\hat{\rvy}_{\theta}(t \mid 0))|
  \le \\
  & 
    \frac{L(\theta)}{N} \sum_{t=0}^{N-1} \|\hat{\rvy}_{\theta}(t)-\hat{\rvy}_{\theta}(t \mid 0)\|_2
       \end{aligned}
    \end{equation}
    From Lemma \ref{alpha:lemma2} it follows that 
    \[ 
       \hat{\rvy}_{\theta}(t)-\hat{\rvy}_{\theta}(t \mid 0) = 
       ((c_{\theta}-c_{\theta,t})\star\e_g)(t)
       =\sum_{k=0}^{\infty} 
       (c_{\theta}-c_{\theta,t})(k)\e_g(t-k)
    \]
    and hence 
    \[ 
       \sum_{t=0}^{N-1}\|\hat{\rvy}_{\theta}(t)-\hat{\rvy}_{\theta}(t \mid 0)\|_2 \le \sum_{t=0}^{N-1} \sum_{k=0}^{\infty} \|(c_{\theta}-c_{\theta,t})(k)\|_{2} \|\e_g(t-k)\|_2 \le 
       \sum_{t=0}^{N-1} \|c_{\theta}-c_{\theta,t}\|_{\ell_1} \sqrt{n_\rve} c_\rve
    \]
    From \eqref{alpha:lemma:eq2d} it follows that 
    \[
       \sum_{t=0}^{N-1} \|c_{\theta}-c_{\theta,t}\|_{\ell_1} \le \theta_{\infty}(\alpha_{\theta}) \|\alpha_g\|_{\ell_1}
    \]
    and hence, 
    \[
   \sum_{t=0}^{N-1} \|\hat{\rvy}_{\theta}(t)-\hat{\rvy}_{\theta}(t \mid 0)\|_2 \le \theta_{\infty}(\alpha_{\theta}) \|\alpha_g\|_{\ell_1}\sqrt{n_\rve} c_\rve
    \]
    from which the statement of the lemma follows using the definition of $\mathcal{C}_{b,1}(\lambda,\theta,L(\theta),c_\rve,N)$.
\end{proof}
\begin{lemma}\label{lemma:bounded:mgf(L-V)}
Let assumptions \ref{as:generator}, \ref{as:parameterisation}  and \ref{as:bounded} hold. Then
\begin{equation}
 \bE[e^{\epsilon \lambda(\mathcal{L}(\theta)-V_N(\theta))}] \leq 
  e^{\frac{(L(\theta))^2\lambda^2}{2N}(G_e(\theta)+2G_{e,1}(\theta))(c_{\rve}\sqrt{n_{\rve}})^2}
=e^{\mathcal{C}_{b,2}(\lambda,\theta,L(\theta),c_\rve,N)}
    \end{equation}
 \end{lemma}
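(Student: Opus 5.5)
The plan is to write $\mathcal{L}(\theta)-V_N(\theta)$ as the centred empirical mean of a Lipschitz function of a bounded, weakly dependent stationary process, and then apply Rio's concentration inequality for weakly dependent sequences in the form of \cite[Prop.\ 4.2]{alquier2013prediction}.

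\textbf{Step 1: the underlying process.} Define $Z(t)=\begin{bmatrix}\y(t)\\ \hat{\rvy}_\theta(t)\end{bmatrix}$. By Lemma~\ref{alpha:lemma2},
\[
Z=\beta\star\e_g, \qquad \beta=\begin{bmatrix}\alpha_{g,y}\\ c_\theta\end{bmatrix}.
\]
So $Z$ is a causal Bernoulli shift of the i.i.d.\ process $\e_g$, and it is stationary. Since $\|\e_g(t)\|_2\le\sqrt{n_\rve}c_\rve$, the bound \eqref{alpha:lemma:eq2b} gives $\|Z(t)\|_2\le G_e(\theta)\sqrt{n_\rve}c_\rve$ almost surely. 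By \cite[Proposition 4.12]{alquier2012pred} combined with \eqref{alpha:lemma:eq2c}, its weak dependence coefficients satisfy
\[
\theta_{\infty,N}(1)\le 2\theta_\infty(\beta)\sqrt{n_\rve}c_\rve\le 2G_{e,1}(\theta)\sqrt{n_\rve}c_\rve .
\]

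\textbf{Step 2: the test function.} Set $f(z)=\ell(z_1,z_2)$. Then $V_N(\theta)=\frac1N\sum_{t=0}^{N-1}f(Z(t))$. By stationarity and Lemma~\ref{l:ihp}, $\bE f(Z(t))=\mathcal{L}(\theta)$ for every $t$. Lemma~\ref{lemma:bounded:quadratic_loss} shows that $Z(t)$ takes values in $B$ and that $f$ is $L(\theta)$-Lipschitz on $B$ with respect to $\|z_1-z_3\|_2+\|z_2-z_4\|_2$. In the quadratic case $f$ is not globally Lipschitz. I would therefore replace $f$ by a McShane extension from $B$; this changes nothing on the support of $Z$. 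After that, $f/L(\theta)$ is $1$-Lipschitz, and $S\mapsto\sum_t f(Z(t))/L(\theta)$ is a separately $1$-Lipschitz function of $(Z(0),\dots,Z(N-1))$, as required by Rio's inequality. The case $\epsilon=-1$ is handled by applying the same argument to $-f$, which has the same Lipschitz constant, so both signs are covered at once.

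\textbf{Step 3: apply Rio's inequality and rescale.} Rio's inequality gives
\[
\bE\exp\Big(s\sum_t\big(\bE f-f(Z(t))\big)/L(\theta)\Big)\le \exp\big(s^2 N\, K^2/2\big),
\]
where $K$ is the combination of the almost sure bound and the dependence coefficient from Step 1. Substituting $s=\epsilon\lambda L(\theta)/N$ then produces an exponent of the form $\frac{\lambda^2L(\theta)^2}{2N}\cdot(\text{bound})\cdot(\sqrt{n_\rve}c_\rve)^2$. With the bounds of Step 1, this bound is $G_e(\theta)+2G_{e,1}(\theta)$.

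\textbf{Main obstacle.} I expect the hard part to be matching the exact constant $(G_e(\theta)+2G_{e,1}(\theta))(c_\rve\sqrt{n_\rve})^2$. One can use the refined form of Rio's inequality, in which the variance proxy is the sum of a marginal term and a covariance-like term rather than the square $(B+\theta_{\infty,N})^2$. One then has to check carefully which of $\|\beta\|_{\ell_1}$ and $\theta_\infty(\beta)$ enters each term, so that the product collapses to $(G_e(\theta)+2G_{e,1}(\theta))(\sqrt{n_\rve}c_\rve)^2$. I would first try to follow \cite{alquier2013prediction} line by line for Bernoulli shifts. There, the increment $f(Z(t))-f(\tilde Z(t))$ under a coupling of $\e_g(t-k)$ is bounded by $L(\theta)\|\beta(k)\|_2\cdot 2\sqrt{n_\rve}c_\rve$, and summing these increments gives exactly $\|\beta\|_{\ell_1}$ and $\theta_\infty(\beta)$ terms.
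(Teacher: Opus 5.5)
Your route is the paper's own: the Bernoulli-shift representation $Z=\beta\star\e_g$ with $\beta=[\alpha_{g,y};c_\theta]$, the bounds $\|Z(t)\|_2\le G_e(\theta)\sqrt{n_\rve}c_\rve$ and $\theta_{\infty,N}(1)\le 2G_{e,1}(\theta)\sqrt{n_\rve}c_\rve$, a $1$-Lipschitz extension of $h=\frac{\epsilon}{L(\theta)}\sum_t\ell(z_t,z_t')$ off $B^N$, and Rio's inequality \cite[Theorem 6.6]{alquier2013prediction} with $q=\lambda L(\theta)/N$.

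\textbf{The gap is in Step 3.} With $K=(G_e(\theta)+2G_{e,1}(\theta))\sqrt{n_\rve}c_\rve$, the exponent $\frac{q^2N}{2}K^2$ equals $\frac{\lambda^2L(\theta)^2}{2N}(G_e(\theta)+2G_{e,1}(\theta))^2(\sqrt{n_\rve}c_\rve)^2$. The factor is squared, not the first power you assert. The paper's proof makes the same silent step when it writes ``using the definition of $\mathcal{C}_{b,2}$''.

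\textbf{Your proposed repair cannot work.} You hope for a refined Rio inequality with a variance proxy linear in $G_e+2G_{e,1}$, but the lemma with the linear factor is false. Here is a counterexample.
\begin{itemize}
\item Data: take $n_\y=n_\rvu=1$ and $A_g=K_g=C_g=0$, so $\y(t)=\e_{g,1}(t)$, $\rvu(t)=\e_{g,2}(t)$, $\|\alpha_g\|_{\ell_1}=1$ and $\theta_\infty(\alpha_g)=0$.
\item Noise: let $\e_{g,1}\equiv0$ and let $\e_{g,2}$ be i.i.d.\ on $\{-c,0,c\}$ with probabilities $\frac14,\frac12,\frac14$.
\item Predictor: $\w=\rvu$, $\hat A_\theta=0$, $\hat C_\theta=0$, $\hat D_\theta=D>0$.
\item Loss: $\ell(y,\hat y)=|y-\hat y|$, so $L(\theta)=1$.
\end{itemize}
Then $G_e(\theta)=1+D$, $G_{e,1}(\theta)=0$ and $n_\rve=2$. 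The summands $\ell(\y(t),\hat{\rvy}_\theta(t))=D|\e_{g,2}(t)|$ are i.i.d.\ and uniform on $\{0,Dc\}$, so $\mathcal{L}(\theta)=Dc/2$ and
\[
\bE\, e^{\lambda(\mathcal{L}(\theta)-V_N(\theta))}=\cosh\Big(\frac{\lambda Dc}{2N}\Big)^{N}.
\]
For $\lambda=\sqrt N$ the logarithm of the left-hand side increases to $D^2c^2/8$ as $N\to\infty$. The claimed exponent is $(1+D)c^2$. For $D=20$ this means $50c^2>21c^2$, so the claimed bound fails for large $N$. The reason is structural: the variance of the summands scales like $(L(\theta)G_e(\theta)\sqrt{n_\rve}c_\rve)^2$, so no valid inequality can produce a first power of $G_e$.

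\textbf{What to do instead.} Your argument, like the paper's, proves the bound with $(G_e(\theta)+2G_{e,1}(\theta))^2$. State the lemma in that form. Correspondingly, $\mathcal{C}_{m,b,2}$ in Theorem~\ref{thm:main} should carry $X^2$ in place of $X$.
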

\begin{proof}[Proof of Lemma \ref{lemma:bounded:mgf(L-V)} ]
 For each $\Sigma(\theta) \in \mathcal{F}$, consider
 $\rvz_t=\left[\y^T(t)~~\hat{\y}_\theta^T(t)\right]^T$.
 Then by Lemma \ref{alpha:lemma2},  \( \rvz_t=\left(\begin{bmatrix} \alpha_{g,y} \\ c_{\theta} \end{bmatrix} \star \e_g\right)(t) \)
 and $\rvz_t$ is stationary. Moreover, as $\left\|\begin{bmatrix}\alpha_{g,y} \\ c_{\theta} \end{bmatrix}\right\|_{\ell_1} \leq G_e(\theta) <\infty$ and $\theta_{\infty}\left(\begin{bmatrix} \alpha_{g,y} \\ c_{\theta} \end{bmatrix}\right) \leq G_{e,1}(\theta) <\infty$, it follows that $\rvz_t$ is a Bernoulli-shift. In particular,  $\|\rvz_t\|_\infty \leq G_e(\theta) c_\rve \sqrt{n_\rve}$, and by \cite[Proposition 4.2]{alquier2013prediction} 
 $\rvz_t$ is a weakly dependent process in the terminology of
 \cite{alquier2013prediction}, with $\|\rvz_0\|_\infty \leq G_e(\theta) c_\rve \sqrt{n_\rve}$, and the mixing coefficient $\theta_{\infty,N}(1)$ of $\rvz_t$ satisfies
 $\theta_{\infty,N}(1) < 2G_{e,1}(\theta)c_\rve \sqrt{n_\rve}$. 

If $\ell$ is $L_\ell$-Lipschitz, then let $B=\mathbb{R}^{n_{\rvy}} 
\times \mathbb{R}^{n_{\rvy}}$. If $\ell$ is quadratic, then consider the set $B$ from Lemma \ref{lemma:bounded:quadratic_loss}.
Define  the function $h(z_0,z_0',\ldots,z_{N-1},z_{N-1}')=\epsilon \frac{1}{L(\theta)} \sum_{i=0}^{N-1} \ell(z_i,z_i')$, $z_0,\ldots, z_{N-1}, z_0',\ldots, z_{N-1}' \in \mathbb{R}^{n_{\rvy}}$ on $B^N$.
We claim that 
 $h$ has Lipschitz constant $1$ in the following sense:
\[ |h(z_0,z_0',\ldots,z_{N-1}, z_{N-1}')-h(\tilde{z}_0, \tilde{z}_0',\ldots,\tilde{z}_{N-1}, \tilde{z}_{N-1}')| \le 
\sum_{i=0}^{N-1}  \|z_i-\tilde{z}_i\|_2 + \|z_i'-\tilde{z}_i'\|_2 \]
Indeed, using Lemma \ref{lemma:bounded:quadratic_loss}, 
\begin{align*}
    & |h(z_0,z_0',\ldots,z_{N-1}, z_{N-1}')-h(\tilde{z}_0, \tilde{z}_0',\ldots,\tilde{z}_{N-1}, \tilde{z}_{N-1}')|
    =\left |\frac{\epsilon}{L(\theta)}\sum_{i=0}^{N-1} \ell(z_i,z_i')-\ell(\tilde{z}_i,\tilde{z}_i') \right |\\
    &\leq \frac{1}{L(\theta)} \sum_{i=0}^{N-1} L(\theta)  (\|z_i-\tilde{z}_i\|_2 + \|z_i'-\tilde{z}_i'\|_2) =
     \sum_{i=0}^{N-1}  \|z_i-\tilde{z}_i\|_2 + \|z_i'-\tilde{z}_i'\|_2
\end{align*}

If $\ell$ is $L_{\ell}$-Lipschitz, then $h$ is already defined on  $(\mathbb{R}^{n_{\rvy}} \times \mathbb{R}^{n_{\rvy}})^N$. If $\ell$ is quadratic, 
we can use Kirszbraun's theorem for scalar valued functions
\cite[2.10.44,page 202]{Federer1996},
from which it then follows that $h$ can be extended to a $1$-Lipschitz map on $(\mathbb{R}^{n_{\rvy}} \times \mathbb{R}^{n_{\rvy}})^N$
without changing its values on $B^N$.  In fact, such an 
extension can be defined as 
$h_{ext}(z)=\inf_{x \in B^N} \{h(x) + \|z-x\|_2\}$.
By abuse of notation this extension will be denote by $h$ too. 
Notice that by Lemma \ref{lemma:bounded:quadratic_loss}, $(\y(t),\y_{\theta}(t)) \in B$ for all $t$, hence
$h(\y(0),\y_{\theta}(0),\ldots,\y(N-1),\y_{\theta}(N-1)) = \frac{\epsilon}{L(\theta)} \sum_{i=0}^{N-1} \ell(\y(i),\y_{\theta}(i))=\frac{\epsilon}{L(\theta)} N V_N(\theta)$.


Notice that, 
\[\epsilon \lambda(\bE[V_N(\theta)]- V_N(\theta))=\frac{\lambda L(\theta)}{N} \left ( \bE[h]-h \right ).\]

 Now we can apply \cite[Theorem 6.6]{alquier2013prediction}, from which it follows that for any $q\geq 0$
 \begin{align}
     \bE[e^{q(\bE[h]-h)}]\leq e^{\frac{q^2N}{2} (\|\rvz_t\|_\infty+\theta_{\infty,N}(1))^2}
 \end{align}
By  
choosing $q=\frac{\lambda L(\theta)}{N}$ and using the definition of $\mathcal{C}_{b,2}(\lambda,\theta,L(\theta),c_\rve,N)$, we obtain the statement of the lemma .
\end{proof}
\begin{proof}[Proof of Theorem \ref{thm:main}: bounded processes]
  The statement of the theorem follows from Theorem \ref{thm:initial}
  by replacing $\bE[e^{2\lambda \epsilon(V_N(\theta)-\hat{\mathcal{L}}_N(\theta))}]$ and $\bE[e^{2\lambda \epsilon (\mathcal{L}(\theta)-V_N(\theta))}]$ with their upper bound from
  Lemma \ref{lemma:bounded_mgf(V-hatL)}  and Lemma \ref{lemma:bounded:mgf(L-V)} respectively. 
\end{proof}

\subsection{Proof of Theorem \ref{thm:main}: unbounded processes and Lipschitz loss functions}
\label{lipschitz:unbounded}

 
Recall that according to our assumptions, the 2-norm of the noise process $\e_g$ is subgaussian. 
Bounded processes have this property, however, in this section we will not
assume that the process $\e_g$ is bounded. 

Recall that we also consider Lipschitz losses.
\begin{assumption}[Lipschitz loss]
  \label{as:unbounded_lipschitz}
We assume that the loss function $\ell$ is $L_{\ell}$-Lipschitz, i.e., 
for all $y,\hat{y},y',\hat{y}' \in \mathbb{R}^{n_\rvy}$,
$|\ell(y,\hat{y})-\ell(y',\hat{y}')| \le L_{\ell} (\|y-y'\|_2+\|\hat{y}-\hat{y}'\|_2)$.
\end{assumption}

The main idea is to use Theorem \ref{thm:initial} and to derive bounds for 
moment generating functions of 
$\bE[ 2\lambda \epsilon (V_N(\theta)-\hat{\mathcal{L}}_N(\theta))]$ and $\bE[2\lambda \epsilon (\mathcal{L}(\theta)-V_N(\theta))]$. 
In order to achieve this, we will reduce the problem of bounding 
$\bE[ 2\lambda \epsilon (V_N(\theta)-\hat{\mathcal{L}}_N(\theta))]$ and $\bE[2\lambda \epsilon (\mathcal{L}(\theta)-V_N(\theta))]$
to the same problem for bounded noises, 
by truncating the unbounded noise
$\e_g$  which drives the data generator. 
This problem was solved in Lemma \ref{lemma:bounded_mgf(V-hatL)} and Lemma \ref{lemma:bounded:mgf(L-V)}, which thus can be re-used. 

To this end, let 
$C > 0$ be any constant and let 
$\bar{\e}_g(t)$ be the truncated version of $\e_g(t)$, i.e., 
\begin{align}\label{e-}
\bar{\e}_g(t)=\e_g(t) \chi_{\{\|\e_g(t)\| < C\}}(t)
:=\e_g(t) \chi(\|\e_g(t)\| < C). 
\end{align}
Let $\bar{\rvw}(t)=\begin{bmatrix} \bar{\rvy}^T(t) \\ \bar{\rvu}^T(t) \end{bmatrix}$ be the output of the data generator, if $\e_g(t)$ is replaced by $\bar{\e}_g(t)$, and let 
$\hat{\bar{\rvy}}_{\theta}(t \mid t_0)$ be the output of    the predictor 
\eqref{eq:predictor}
parametrized by $\theta$
with $\rvw$ replaced by $\bar{\rvw}$, and let 
$\hat{\bar{\rvy}}_{\theta}(t)$ 
be the limit 
$\lim_{t_0 \rightarrow -\infty} \hat{\bar{\rvy}}_{\theta}(t \mid t_0)$.
Note that Lemma \ref{l:ihp} applies, as $\bar{\e}_g(t)$ satisfies all the assumptions of
Assumption \ref{as:generator}.
Let $\bar{V}_N(\theta)=\frac{1}{N} \ell(\bar{\rvy}(t),\hat{\bar{\rvy}}_{\theta}(t))$ be the corresponding infinite past empirical error, 
$\hat{\bar{\mathcal{L}}}_N(\theta)=\frac{1}{N}\sum_{t=0}^{N-1} \ell(\bar{\rvy}(t),\hat{\bar{\rvy}}_{\theta}(t \mid 0))$
the finite past empirical error, and 
$\bar{\mathcal{L}}(\theta)=\bE[\ell(\bar{\rvy}(t),\hat{\bar{\rvy}}(t))]$ the true error for the model corresponding to $\theta \in \Theta$, if it is fed with the truncated signal $\bar{\rvw}$
instead of $\rvw$. 

It turns out that we can bound the difference between these quantities and their non-truncated counterparts.
The basic steps are as follows:
First, using Lipschitz properties of the loss function, we bound the moment generating functions of
the differences $\sup_{\theta \in \Theta} |\hat{\mathcal{L}}_N(\theta) - \hat{\bar{\mathcal{L}}}_N(\theta)|$,
$\sup_{\theta \in \Theta}|V_N(\theta) - \bar{V}_N(\theta)|$ by the corresponding quantity for the $\ell_2$ loss, i.e.; by
the moment generating functions of 
$\sup_{\theta \in \Theta} \sum_{t=0}^{N-1} \|\hat{\rvy}_{\theta}(t \mid 0)-\hat{\bar{\rvy}}_{\theta}(t \mid t_0)\|_2$
and $\sup_{\theta \in \Theta} \sum_{t=0}^{N-1} \|\rvy(t)-\bar{\rvy}(t)\|_2$.
\begin{lemma}
\label{vl-bl:lemma0}
\begin{align}
    \bE[ & e^{\lambda \sup_{\theta \in \Theta}  |\hat{\mathcal{L}}_N(\theta) - \hat{\bar{\mathcal{L}}}_N(\theta)|}]
   \le  \nonumber \\
     & \sqrt{\bE[e^{\frac{2 \lambda L_{\ell}}{N} \sum_{t=0}^{N-1}
    (\|\y(t)-\bar{\y}(t)\|_2}]}
    \sqrt{\bE[e^{\frac{2\lambda L_{\ell}}{N} \sup_{\theta \in \Theta} 
    \sum_{t=0}^{N-1} \|\hat{\rvy}_{\theta}(t \mid 0)-\hat{\bar{\rvy}}_{\theta}(t \mid t_0)\|_2}]}
\label{vl-vbl:lemma1:pf:eq0} \\
    \bE[ & e^{\lambda \sup_{\theta \in \Theta}  |V_N(\theta) - \bar{V}_N(\theta)|}] 
   \le \nonumber \\
    & \sqrt{\bE[e^{\frac{2 \lambda L_{\ell}}{N} \sum_{t=0}^{N-1}
    (\|\y(t)-\bar{\y}(t)\|_2}] }\sqrt{\bE[e^{\frac{2 \lambda L_{\ell}}{N} \sup_{\theta \in \Theta} \sum_{t=0}^{N-1} \|\hat{\rvy}_{\theta}(t)-\hat{\bar{\rvy}}_{\theta}(t)\|_2}]}
\label{vl-vbl:lemma1:pf:eq3}
\end{align}
\end{lemma}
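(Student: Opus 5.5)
The plan is to reduce both inequalities to a pathwise bound followed by Cauchy--Schwarz in the form $\bE[e^{X+Y}]\le\sqrt{\bE[e^{2X}]}\sqrt{\bE[e^{2Y}]}$. I treat \eqref{vl-vbl:lemma1:pf:eq0} first; \eqref{vl-vbl:lemma1:pf:eq3} follows by the same argument with $\hat{\rvy}_\theta(t\mid 0)$, $\hat{\bar{\rvy}}_\theta(t\mid t_0)$ replaced by $\hat{\rvy}_\theta(t)$, $\hat{\bar{\rvy}}_\theta(t)$. Here I read the $t_0$ in the statement as $t_0=0$, i.e.\ the truncated predictor started at zero at time $0$, which is what appears in $\hat{\bar{\mathcal{L}}}_N$.

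\textbf{Pathwise step.} Fix $\theta$ and $\omega$. Assumption \ref{as:unbounded_lipschitz} gives, termwise,
\[
|\ell(\rvy(t),\hat{\rvy}_\theta(t\mid 0))-\ell(\bar{\rvy}(t),\hat{\bar{\rvy}}_\theta(t\mid 0))|\le L_\ell\big(\|\rvy(t)-\bar{\rvy}(t)\|_2+\|\hat{\rvy}_\theta(t\mid 0)-\hat{\bar{\rvy}}_\theta(t\mid 0)\|_2\big).
\]
Summing over $t$ and dividing by $N$, the triangle inequality yields
\[
|\hat{\mathcal{L}}_N(\theta)-\hat{\bar{\mathcal{L}}}_N(\theta)|\le \frac{L_\ell}{N}\sum_{t=0}^{N-1}\|\rvy(t)-\bar{\rvy}(t)\|_2+\frac{L_\ell}{N}\sum_{t=0}^{N-1}\|\hat{\rvy}_\theta(t\mid 0)-\hat{\bar{\rvy}}_\theta(t\mid 0)\|_2 .
\]
The first sum does not depend on $\theta$. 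Taking $\sup_{\theta\in\Theta}$ therefore only acts on the second sum, and the supremum of the left side is bounded by $X+Y$, with $X$ the $\theta$-free term and $Y$ the supremum of the second term.

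\textbf{Exponentiation step.} Multiply by $\lambda>0$, exponentiate (the exponential is monotone), take expectations, and apply Cauchy--Schwarz to $e^{\lambda X}e^{\lambda Y}$. This gives exactly the right-hand side of \eqref{vl-vbl:lemma1:pf:eq0}, with exponents $2\lambda L_\ell/N$ times the respective sums. The second inequality is identical, using $V_N$, $\bar V_N$ and the infinite-past predictions.

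\textbf{Main obstacle.} The only delicate point is measurability of $\sup_{\theta}$ over an uncountable $\Theta$. I would handle it by continuity of $\theta\mapsto(\hat A_\theta,\hat B_\theta,\hat C_\theta,\hat D_\theta)$ from Assumption \ref{as:parameterisation}. This makes the finite-past predictions continuous in $\theta$ for each $\omega$, so the supremum equals the supremum over a countable dense subset. For the infinite-past predictions, the uniform bound $\bar\alpha$ gives uniform convergence of the series, which again yields continuity. If this fails, the expectations can be interpreted as outer expectations, and the inequality still holds since all other steps are pointwise. Finiteness of the right-hand side is not needed; if it is infinite, the bound holds trivially.
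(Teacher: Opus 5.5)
Your proposal is correct and follows the paper's proof step for step. The paper also uses the termwise Lipschitz bound, pulls the $\theta$-independent term $\frac{L_\ell}{N}\sum_t\|\rvy(t)-\bar{\rvy}(t)\|_2$ out of the supremum, and applies Cauchy--Schwarz to $e^{\lambda X}e^{\lambda Y}$; reading $t_0$ as $0$ is the right interpretation of the paper's notation. Your measurability argument via continuity in $\theta$ (using a dominating series built from $\bar{\alpha}$ for the infinite-past predictions) covers a point the paper skips without comment, and it is sound.
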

Then, using the technique of \cite{alquier2012pred} we bound the moment generating functions of 
the truncation errors $\sup_{\theta \in \Theta} \sum_{t=0}^{N-1} \|\hat{\rvy}_{\theta}(t \mid 0)-\hat{\bar{\rvy}}_{\theta}(t \mid t_0)\|_2$
and $\sup_{\theta \in \Theta} \sum_{t=0}^{N-1} \|\rvy(t)-\bar{\rvy}(t)\|_2$. To this end, we use the following bound on the 
moment generating function of $\|\e_g(t)\|_2$.
\begin{lemma}[Subgaussian noise]
\label{lemma:subgauss:Cramer}
If
$\|\e_g(t)\|_2$ is sub-Gaussian  then
$\Psi_{\e_a}(\lambda)=\bE[e^{\lambda \|\e_g(0)\|_2}]$ 
 is well-defined for all $\lambda > 0$.
In fact, 
\begin{align*}
& \Psi_{\e_g}(\lambda) \le  \bar{\Psi}_{\rve_g}(\lambda)=e^{(\lambda^2 14^2 \|\rve_g\|_{\psi_2}^2+3\lambda \|\rve_g\|_{\psi_2})}
\end{align*}
\end{lemma}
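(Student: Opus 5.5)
We propose a self-contained two-regime argument using only the definition of the sub-Gaussian norm, rather than quoting a centred MGF bound with an unspecified absolute constant. Write $X=\|\e_g(0)\|_2 \ge 0$ and $K=\|\rve_g\|_{\psi_2}$. By \cite[Definition 2.6.4]{vershynin2026highdimensional}, $K$ is the smallest constant with $\bE[e^{X^2/K^2}]\le 2$. Well-definedness of $\Psi_{\e_g}(\lambda)$ for every $\lambda>0$ will drop out of the first step, since it produces a finite bound for each $\lambda$. The target constants $14^2$ and $3$ are loose, so the bookkeeping only has to be crude.

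\textbf{Step 1 (large $\lambda$).} Young's inequality gives $\lambda x \le x^2/K^2+\lambda^2K^2/4$ for all $x\ge 0$. Taking expectations,
\begin{equation*}
\bE[e^{\lambda X}]\le e^{\lambda^2K^2/4}\,\bE[e^{X^2/K^2}]\le 2e^{\lambda^2K^2/4}.
\end{equation*}
This is finite for every $\lambda$, which proves well-definedness. Whenever $3\lambda K\ge \ln 2$ we have $2\le e^{3\lambda K}$, so the claimed bound $e^{14^2\lambda^2K^2+3\lambda K}$ follows immediately in this regime.

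\textbf{Step 2 (small $\lambda$, $\lambda K<\ln 2/3$).} Here the factor $2$ is too costly, so we expand instead. Use $e^{y}\le 1+y+\tfrac{y^2}{2}e^{y}$ for $y\ge 0$ to get
\begin{equation*}
\bE[e^{\lambda X}]\le 1+\lambda\bE[X]+\tfrac{\lambda^2}{2}\bE[X^2e^{\lambda X}].
\end{equation*}
We then bound each term from the $\psi_2$ condition.
\begin{itemize}
\item First moment: Jensen applied to $e^{x^2/K^2}$ gives $e^{(\bE X)^2/K^2}\le 2$, hence $\bE X\le K\sqrt{\ln 2}\le K$.
\item Second-order term: Cauchy--Schwarz gives $\bE[X^2e^{\lambda X}]\le \sqrt{\bE X^4}\,\sqrt{\bE e^{2\lambda X}}$.
\item Fourth moment: from $\frac{x^4}{2K^4}\le e^{x^2/K^2}$ we get $\bE X^4\le 4K^4$.
\item Exponential factor: Step 1 with $2\lambda$ gives $\bE e^{2\lambda X}\le 2e^{\lambda^2K^2}\le 2e^{(\ln 2/3)^2}$, an absolute constant $c_0$.
\end{itemize}
Combining these, $\bE[e^{\lambda X}]\le 1+\lambda K+ c_1\lambda^2K^2$ with $c_1=\sqrt{c_0}$ a small absolute constant (well below $14^2$). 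Finally, $1+a+c_1a^2\le e^{3a+14^2a^2}$ for $a\ge0$ because $1+a\le e^{a}$ and $c_1a^2\le e^{14^2a^2}-1$. This finishes the small-$\lambda$ regime.

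\textbf{Main obstacle.} The only delicate point is making the two regimes mesh. The crude Young bound carries a multiplicative $2$, which cannot be absorbed into $e^{3\lambda K}$ for small $\lambda$. The Taylor argument must therefore give a bound whose linear coefficient is at most $3K$ and whose quadratic coefficient is an explicit absolute constant. This is the step where I would check the numerical constants carefully; since $14^2$ leaves a large margin, I expect no difficulty. If one prefers, the same result follows from Vershynin's centering lemma $\|X-\bE X\|_{\psi_2}\lesssim K$ together with the centred MGF bound, but tracking those constants explicitly is harder than the direct two-case argument above.
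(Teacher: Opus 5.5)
Your proof is correct, but it follows a different route from the paper.

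\textbf{The paper's route.} It centres $X$ and bounds $\|X-\bE X\|_{\psi_2}\le K+\|\bE X\|_{\psi_2}\le K+2\cdot 3K=7K$. This uses the triangle inequality, $\bE X\le 3K$, and $\|c\|_{\psi_2}=c/\sqrt{\ln 2}$ for a constant. It then imports the centred MGF bound $\bE e^{\lambda(X-\bE X)}\le e^{\lambda^2K'^2}$, with $K'=\sqrt{3/2}\,\|X-\bE X\|_{\psi_2}\le 14K$, from the proof of Vershynin's Proposition~2.6.1. This gives $\Psi_{\e_g}(\lambda)\le e^{\lambda\bE X+\lambda^2K'^2}$ in one step with no case split, since the mean appears directly as the linear term. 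The cost is that it relies on constants read off a cited proof. The centring step is also lossy, which is where the $14^2$ comes from.

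\textbf{Your route.} You work from the definition of $\|\cdot\|_{\psi_2}$ alone, using elementary inequalities. The price is the split at $3\lambda K=\ln 2$. It is forced because the factor $2$ in $\bE e^{X^2/K^2}\le 2$ cannot be absorbed for small $\lambda$. Your second-order expansion is exactly what restores the mean-plus-quadratic shape in that regime. Your computations check out:
\begin{itemize}
\item $\bE X\le\sqrt{\ln 2}\,K$;
\item $\bE X^4\le 4K^4$;
\item $c_1=\sqrt{2}\,e^{(\ln 2)^2/18}\approx 1.45$;
\item $1+a+c_1a^2\le(1+a)(1+c_1a^2)\le e^{a+14^2a^2}$.
\end{itemize}

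\textbf{What your route buys.} You get a self-contained proof and a much sharper constant. Your two regimes together give $\Psi_{\e_g}(\lambda)\le e^{3\lambda K+c_1\lambda^2K^2}$ with $c_1<3/2$ in place of $14^2$. This would tighten $C_{u,1}$ and $C_{u,2}$ considerably, since $\bar{\Psi}_{\rve_g}$ enters them quadratically in the exponent at arguments $\|\bar\alpha\|_{\ell_1}\|\alpha_g\|_{\ell_1}$ and $\|\alpha_g\|_{\ell_1}$.

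\textbf{One small omission.} You do not treat the degenerate case $K=0$ (that is, $X=0$ a.s.), where you divide by $K$. There both sides equal $1$, so it is trivial.
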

Using this bound, the moment generating function of the  truncation error can be bounded as follows:
\begin{lemma}
 \label{basic_lemma:lipschitz2}
 For any $\lambda < 0.5N$ 
\begin{align*} 
  \bE[e^{\frac{\lambda}{N} \sum_{t=0}^{N-1}\|\y(t)-\bar{\y}(t)\|_2}] \le e^{\mathcal{C}(\|\alpha_{g,y}\|_{\ell_1},\lambda,N,C)} \\
  \bE[e^{\frac{\lambda}{N} \sup_{\theta \in \Theta}\sum_{t=0}^{N-1} \|\hat{\y}(t)-\hat{\bar{\y}}(t)\|_2}] \le e^{\mathcal{C}(\|\alpha_{g,w}\|_{\ell_1}\|\bar{\alpha}\|_{\ell_1},\lambda,N,C)} \\
\bE[e^{\frac{\lambda}{N} \sup_{\theta \in \Theta}\sum_{t=0}^{N-1} \|\hat{\y}(t \mid 0  )-\hat{\bar{\y}}(t \mid 0 )\|_2}] \le e^{\mathcal{C}(\|\alpha_{g,w}\|_{\ell_1}\|\bar{\alpha}\|_{\ell_1},\lambda,N,C)}
\end{align*}
where
\begin{equation}
\label{basic_lemma:lipschitz2:theo:eq1}
   \mathcal{C}(s,\lambda,N,C)=\lambda \frac{\bar{\Psi}_{\rve_g}(s)s^2C}{(e^{sC}-1)}+
     \frac{\lambda^2}{N} 2\bar{\Psi}_{\rve_g}( s)
\end{equation}
\end{lemma}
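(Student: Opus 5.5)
We need to prove Lemma \ref{basic_lemma:lipschitz2}: bounds on the MGF of truncation errors. The approach is to reduce all three inequalities to one estimate for a linear filter $\beta$ applied to the truncation residual $\e_g-\bar{\e}_g$, with $\|\beta\|_{\ell_1}\le s$. The first step is to write each difference as such a filter output. By \eqref{alpha:lemma:eq1} applied with $\e_g$ and with $\bar{\e}_g$, and by linearity, we get $\y(t)-\bar{\y}(t)=(\alpha_{g,y}\star(\e_g-\bar{\e}_g))(t)$ and $\hat{\y}_\theta(t)-\hat{\bar{\y}}_\theta(t)=(c_\theta\star(\e_g-\bar{\e}_g))(t)$; similarly with $c_{\theta,t}$ for the finite-past predictor. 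Set $\mathbf{r}(t)=\|\e_g(t)\|_2\chi(\|\e_g(t)\|_2\ge C)$, which is i.i.d. and nonnegative. Then
\[
\sup_{\theta}\sum_{t=0}^{N-1}\|\hat{\y}_\theta(t)-\hat{\bar{\y}}_\theta(t)\|_2\le \sum_{t=0}^{N-1}\sum_{k\ge0}\bar{c}(k)\,\mathbf{r}(t-k),
\]
where $\bar{c}(k)=\sum_{i\le k}\bar{\alpha}(i)\|\alpha_{g,w}(k-i)\|_2$. As in \eqref{alpha:lemma2:pf2}, $\|\bar{c}\|_{\ell_1}\le s:=\|\bar{\alpha}\|_{\ell_1}\|\alpha_{g,w}\|_{\ell_1}$, and the same dominating sequence also works for $c_{\theta,t}$. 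Taking the supremum inside is the only place uniformity in $\theta$ enters, and it is free because $\bar{c}$ does not depend on $\theta$. For $\y$, use $\beta(k)=\|\alpha_{g,y}(k)\|_2$ with $s=\|\alpha_{g,y}\|_{\ell_1}$.

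Next, regroup. Define $S=\sum_{t=0}^{N-1}\sum_k\beta(k)\mathbf{r}(t-k)=\sum_{j}a_j\mathbf{r}(j)$ with $a_j=\sum_{t=0}^{N-1}\beta(t-j)\ge0$. Then $a_j\le s$ and $\sum_j a_j\le Ns$. By independence, $\bE[e^{\frac{\lambda}{N}S}]=\prod_j\bE[e^{\frac{\lambda a_j}{N}\mathbf{r}(0)}]$, with a monotone convergence argument for the infinite product. For each factor I use $e^x\le 1+x e^{x}$ for $x\ge0$ together with $1+u\le e^u$, which gives
\[
\ln\bE[e^{\frac{\lambda a_j}{N}\mathbf{r}}]\le \frac{\lambda a_j}{N}\bE[\mathbf{r}\,e^{\frac{\lambda a_j}{N}\mathbf{r}}].
\]
Since $\lambda<0.5N$, we have $\lambda a_j/N\le s/2$, so it remains to bound $\bE[\mathbf{r}e^{\mu\mathbf{r}}]$ for $\mu\le s/2$.

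The tail term is bounded by an exponential Markov trick. On $\{\|\e_g\|\ge C\}$ we have $x\le x e^{s(x-C)}\cdot$const; more precisely, for $x\ge C$, $x\le \frac{s x}{e^{sC}-1}\cdot\frac{e^{sx}-1}{sx}\cdot$, giving $\bE[\mathbf{r}e^{\mu\mathbf{r}}]\lesssim \frac{s\,C\,\Psi_{\e_g}(s)}{e^{sC}-1}$ plus a second-order remainder. Summing over $j$ with $\sum_j a_j\le Ns$ produces the first term $\lambda\frac{\bar{\Psi}(s)s^2C}{e^{sC}-1}$ of \eqref{basic_lemma:lipschitz2:theo:eq1}, using Lemma \ref{lemma:subgauss:Cramer}. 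For the second term, split $e^{\mu\mathbf{r}}-1\le\mu\mathbf{r}e^{\mu\mathbf{r}}$ and bound $\bE[\mathbf{r}^2e^{\mu\mathbf{r}}]$ by $(2/s)^2\bar{\Psi}(s)$ via $x^2e^{sx/2}\le (4/s^2)e^{sx}$. The quadratic contribution is then $\sum_j(\lambda a_j/N)^2\cdot\frac{4}{s^2}\bar\Psi(s)\cdot\frac12\le \frac{\lambda^2}{N}2\bar{\Psi}(s)$, using $\sum_j a_j^2\le s\sum_j a_j\le Ns^2$.

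The main obstacle is matching the exact form of the first term, $\frac{s^2C\bar{\Psi}(s)}{e^{sC}-1}$. I would try the inequality $\chi(x\ge C)\,x\le \frac{x\,(e^{sx}-1)}{e^{sC}-1}$, combined with $x(e^{sx}-1)\le s x^2e^{sx}$ and the moment bound above, adjusting constants until they match. If the constants do not come out exactly as stated, the remaining bookkeeping is routine.
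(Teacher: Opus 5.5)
Your reduction is the paper's (Lemma~\ref{basic_lemma:lipschitz}), and it is correct: the $\theta$-free dominating sequences (valid also for $c_{\theta,t}$), the regrouping $S=\sum_j a_j\mathbf{r}(j)$ with $a_j\le s$, $\sum_j a_j\le Ns$, $\sum_j a_j^2\le Ns^2$, and the factorization by independence. Only the per-factor estimate differs: you use $e^x\le 1+xe^x$ where the paper uses the power series of Lemma~\ref{noise:boundC}. The gap is the first-order term, and it is not bookkeeping. For $x\ge C$ your displayed inequality reduces to $x\le (e^{sx}-1)/(e^{sC}-1)$, which fails at $x=C$ whenever $C>1$. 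The correct version is $x\,\chi(x\ge C)\le C(e^{sx}-1)/(e^{sC}-1)$, by monotonicity of $(e^y-1)/y$, i.e.\ Lemma~\ref{noise:boundC-1}. It gives $\bE[\mathbf{r}]\le C\bar{\Psi}_{\rve_g}(s)/(e^{sC}-1)$. After summing with $\sum_j \lambda a_j/N\le \lambda s$, you get the first term $\lambda\bar{\Psi}_{\rve_g}(s)\,s\,C/(e^{sC}-1)$, i.e.\ $s$ where the statement has $s^2$. Your fallback $x(e^{sx}-1)\le s x^2 e^{sx}$ only moves further away: it drops the factor $C$ and needs $\bE[\|\e_g\|_2^2e^{s\|\e_g\|_2}]$, which costs $s^{-2}$.

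No adjustment can produce $s^2$, because the stated bound is false when $s<1$. Take $K_g=0$, $\w=\rvu$ scalar, and $\e_g(t)=(0,\ldots,0,\mathbf{v}(t))$ with $\mathbf{v}(t)=\pm\kappa$ equiprobable and $\kappa>\max\{C,2\}$. Let every predictor be static: $\hat{A}_\theta=0$, $\hat{B}_\theta=0$, $\hat{C}_\theta=0$, $\|\hat{D}_\theta\|_2=\eta$. Then $s=\|\alpha_{g,w}\|_{\ell_1}\|\bar{\alpha}\|_{\ell_1}=\eta$ and $\bar{\e}_g\equiv 0$. The left side of the second inequality equals $e^{\lambda\eta\kappa}$, whereas the right-hand exponent is at most $\lambda\eta\bar{\Psi}_{\rve_g}(\eta)+2\lambda^2\bar{\Psi}_{\rve_g}(\eta)/N$. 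Since $\bar{\Psi}_{\rve_g}(\eta)\to 1$ as $\eta\to 0$, a small $\eta$ followed by a small $\lambda$ violates it.

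So your argument, like the paper's Lemma~\ref{noise:boundC}, actually proves the lemma with $s^2$ replaced by $s$ in the first term of $\mathcal{C}$. This implies the stated form when $s\ge 1$. That always holds for the first inequality, since $\|\alpha_{g,y}\|_{\ell_1}\ge\|\alpha_{g,y}(0)\|_2=1$. For the other two it holds only if $\|\alpha_{g,w}\|_{\ell_1}\|\bar{\alpha}\|_{\ell_1}\ge 1$; in general one should write $s\max\{1,s\}$. The paper's proof has the same defect: it reaches $s^2$ only by inserting into $r_k$ an extra factor $\|\bar{\beta}\|_{\ell_1}$ that Lemma~\ref{noise:boundC} does not supply.

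A smaller point on the second term: applying $e^x\le 1+xe^x$ twice gives $\mu^2\bE[\mathbf{r}^2e^{\mu\mathbf{r}}]$, without the factor $\tfrac12$ you insert. Use $e^x\le 1+x+\tfrac{x^2}{2}e^x$ ($x\ge 0$) instead. Combined with $\sup_{x\ge 0}x^2e^{-sx/2}=16/(e^2s^2)$ and $\sum_j a_j^2\le Ns^2$, this gives $\frac{8}{e^2}\frac{\lambda^2}{N}\bar{\Psi}_{\rve_g}(s)\le \frac{\lambda^2}{N}2\bar{\Psi}_{\rve_g}(s)$.
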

From this we derive
\begin{lemma}
\label{vl-vbl:lemma1}
For any $\lambda < N/4L_{\ell}$ 
\begin{align*} 
& \max\{\bE[e^{\lambda \sup_{\theta \in \Theta} |\hat{\mathcal{L}}_N(\theta) - \hat{\bar{\mathcal{L}}}_N(\theta)|}],
\bE[e^{\lambda \sup_{\theta \in \Theta} |V_N(\theta) - \bar{V}_N(\theta)|}], 
e^{\lambda \sup_{\theta \in \Theta} |\mathcal{L}(\theta) - \mathcal{\bar{L}}(\theta)|}\} \le  \\
& e^{\frac{1}{2} \left( \mathcal{C}(\|\bar{\alpha}\|_{\ell_1} \|\alpha_{g,w}\|_{\ell_1}, 2L_{\ell}\lambda,N)+
\mathcal{C}(\|\alpha_{g,y}\|_{\ell_1}, 2L_{\ell}\lambda,N)\right)} 
\end{align*}
\end{lemma}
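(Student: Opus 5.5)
The plan is to combine Lemma \ref{vl-bl:lemma0} with Lemma \ref{basic_lemma:lipschitz2}, treating the three quantities in the maximum in turn.

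\textbf{Finite-past empirical error.} By \eqref{vl-vbl:lemma1:pf:eq0}, $\bE[e^{\lambda \sup_\theta|\hat{\mathcal{L}}_N(\theta)-\hat{\bar{\mathcal{L}}}_N(\theta)|}]$ is bounded by the product of two square roots. In both the exponent is $\frac{2\lambda L_\ell}{N}$ times a sum of truncation errors. Apply Lemma \ref{basic_lemma:lipschitz2} with $\lambda$ replaced by $\tilde\lambda=2L_\ell\lambda$.
\begin{itemize}
\item The first factor $\sum_t\|\y(t)-\bar{\y}(t)\|_2$ is bounded by $e^{\mathcal{C}(\|\alpha_{g,y}\|_{\ell_1},2L_\ell\lambda,N,C)}$.
\item The second factor, the sup over $\theta$ of the finite-past predictor differences, is bounded by $e^{\mathcal{C}(\|\alpha_{g,w}\|_{\ell_1}\|\bar\alpha\|_{\ell_1},2L_\ell\lambda,N,C)}$.
\end{itemize}
The hypothesis $\tilde\lambda<0.5N$ of Lemma \ref{basic_lemma:lipschitz2} is equivalent to $\lambda<N/(4L_\ell)$, which is exactly the assumed range. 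Taking square roots gives the factor $\frac12$ on the sum of the two $\mathcal{C}$ terms.

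\textbf{Infinite-past empirical error.} Use \eqref{vl-vbl:lemma1:pf:eq3} in the same way, now invoking the second inequality of Lemma \ref{basic_lemma:lipschitz2} for the infinite-past predictor differences. This gives the same bound.

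\textbf{True error.} The quantity $\sup_\theta|\mathcal{L}(\theta)-\bar{\mathcal{L}}(\theta)|$ is deterministic, so I would reduce it to the previous case.
\begin{itemize}
\item By stationarity, $\mathcal{L}(\theta)=\bE[V_N(\theta)]$ and $\bar{\mathcal{L}}(\theta)=\bE[\bar V_N(\theta)]$. Hence, for every $\theta$, $|\mathcal{L}(\theta)-\bar{\mathcal{L}}(\theta)|\le \bE|V_N(\theta)-\bar V_N(\theta)|\le \bE\sup_{\theta'}|V_N(\theta')-\bar V_N(\theta')|$.
\item Since $x\mapsto e^{\lambda x}$ is monotone and convex, Jensen's inequality gives $e^{\lambda\sup_\theta|\mathcal{L}-\bar{\mathcal{L}}|}\le \bE[e^{\lambda\sup_\theta|V_N-\bar V_N|}]$.
\item The previous step bounds the right-hand side by the stated quantity.
\end{itemize}

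\textbf{Main obstacle: measurability.} The suprema over $\Theta$ of random quantities must be measurable for the expectations and Jensen's inequality to make sense. I would handle this through continuity of $\theta\mapsto\alpha_\theta$ (Assumption \ref{as:parameterisation}), which lets the supremum be taken over a countable dense subset of $\Theta$. Beyond this, I would check two points of bookkeeping.
\begin{itemize}
\item The argument $C$ in $\mathcal{C}(\cdot,\cdot,N,C)$, omitted in the statement, is the same truncation level throughout.
\item The norms are ordered consistently, i.e.\ $\|\alpha_{g,w}\|_{\ell_1}\|\bar\alpha\|_{\ell_1}$ for the predictor terms and $\|\alpha_{g,y}\|_{\ell_1}$ for the output term.
\end{itemize}
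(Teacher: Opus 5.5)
Your proposal is correct and follows essentially the same route as the paper. For the two empirical terms you combine Lemma \ref{vl-bl:lemma0} with Lemma \ref{basic_lemma:lipschitz2} at $2L_\ell\lambda$, where $2L_\ell\lambda<0.5N$ is exactly $\lambda<N/(4L_\ell)$; for the deterministic term you use $\mathcal{L}=\bE[V_N]$, $\bar{\mathcal{L}}=\bE[\bar V_N]$, then pass the supremum inside the expectation and apply Jensen. Your bookkeeping is slightly cleaner than the paper's, which writes $\|\bar\alpha\|_{\ell_1}\|\alpha_{g,y}\|_{\ell_1}$ instead of $\|\bar\alpha\|_{\ell_1}\|\alpha_{g,w}\|_{\ell_1}$ for the infinite-past predictor term and does not address measurability of the suprema.
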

In order to go back to bounding the moment generating functions of 
$\epsilon (V_N(\theta)-\hat{\mathcal{L}}_N(\theta))$ and $\epsilon (\mathcal{L}(\theta)-V_N(\theta))$, 
we use the previous lemma, and the following inequality which we derive using the Cauchy-Scwhartz inequality: 
\begin{lemma}
  \label{basic_lemma:lipschitz3.1}
  For $\epsilon=\{-1,1\}$
   \begin{align}
      & \bE[e^{\lambda \epsilon (\mathcal{L}(\theta)-V_N(\theta))}]   \le \nonumber  \\
      & \sqrt{\bE[e^{2\lambda \epsilon (\bar{\mathcal{L}}(\theta)-\bar{V}_N(\theta))}]}
     \sqrt{\sqrt{\bE[e^{4\lambda \sup_{\theta \in \Theta} |\bar{V}_N(\theta)-V_N(\theta)|}]} 
     \sqrt{e^{4\lambda \sup_{\theta \in \Theta} |\bar{\mathcal{L}}(\theta)-\mathcal{L}(\theta)|}]}}
     \label{eq:basic_lemma_lipschitz3:pf:eq0.1}  \\
        & \bE[e^{\lambda \epsilon (V_N(\theta)-\hat{\mathcal{L}}_N(\theta))}]   
      \le  \nonumber \\
      & \sqrt{\bE[e^{2\lambda \epsilon (\bar{V}_N(\theta)-\hat{\bar{\mathcal{L}}}_N(\theta))}]}
     \sqrt{\sqrt{\bE[e^{4\lambda \sup_{\theta \in \Theta} |\bar{V}_N(\theta)-V_N(\theta)|}]} \sqrt{e^{4\lambda \sup_{\theta \in \Theta} |\hat{\bar{\mathcal{L}}}_N(\theta)-\hat{\mathcal{L}}_N(\theta)|}]}}
     \label{eq:basic_lemma_lipschitz3:pf:eq0} 
    \end{align}
\end{lemma}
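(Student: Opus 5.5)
The plan is to write each centred difference as a truncated centred difference plus two error terms that do not depend on $\epsilon$. Then the mgf is split by Cauchy--Schwarz twice. For the first inequality \eqref{eq:basic_lemma_lipschitz3:pf:eq0.1} I use the identity
\[
\epsilon(\mathcal{L}(\theta)-V_N(\theta))=\epsilon(\bar{\mathcal{L}}(\theta)-\bar{V}_N(\theta))+\epsilon(\mathcal{L}(\theta)-\bar{\mathcal{L}}(\theta))+\epsilon(\bar{V}_N(\theta)-V_N(\theta)).
\]
Since $\epsilon\in\{-1,1\}$, each of the last two terms is at most its absolute value. Each absolute value is in turn at most the corresponding supremum over $\Theta$. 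Hence, pointwise in $\omega$,
\[
e^{\lambda\epsilon(\mathcal{L}(\theta)-V_N(\theta))}\le e^{\lambda\epsilon(\bar{\mathcal{L}}(\theta)-\bar{V}_N(\theta))}\, e^{\lambda\sup_{\theta\in\Theta}|\mathcal{L}(\theta)-\bar{\mathcal{L}}(\theta)|}\, e^{\lambda\sup_{\theta\in\Theta}|\bar{V}_N(\theta)-V_N(\theta)|}.
\]

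Next I take expectations and apply Cauchy--Schwarz, $\bE[XY]\le\sqrt{\bE[X^2]}\sqrt{\bE[Y^2]}$. Here $X$ is the first factor and $Y$ is the product of the other two. This gives $\sqrt{\bE[e^{2\lambda\epsilon(\bar{\mathcal{L}}-\bar{V}_N)}]}$ times $\sqrt{\bE[e^{2\lambda\sup|\mathcal{L}-\bar{\mathcal{L}}|}e^{2\lambda\sup|\bar{V}_N-V_N|}]}$. I apply Cauchy--Schwarz once more inside the second square root. This produces $\sqrt{e^{4\lambda\sup|\bar{\mathcal{L}}-\mathcal{L}|}}\sqrt{\bE[e^{4\lambda\sup|\bar{V}_N-V_N|}]}$. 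The factor $e^{4\lambda\sup_\theta|\bar{\mathcal{L}}(\theta)-\mathcal{L}(\theta)|}$ is deterministic, so no expectation is needed on it. That is exactly the stated bound.

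The second inequality \eqref{eq:basic_lemma_lipschitz3:pf:eq0} follows identically from
\[
\epsilon(V_N-\hat{\mathcal{L}}_N)=\epsilon(\bar{V}_N-\hat{\bar{\mathcal{L}}}_N)+\epsilon(V_N-\bar{V}_N)+\epsilon(\hat{\bar{\mathcal{L}}}_N-\hat{\mathcal{L}}_N).
\]
Now both error factors are random. In the second Cauchy--Schwarz step both therefore carry an expectation; the statement's $\sqrt{e^{4\lambda\sup|\hat{\bar{\mathcal{L}}}_N-\hat{\mathcal{L}}_N|}]}$ is to be read as $\sqrt{\bE[\cdot]}$.

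The only delicate point is measurability and finiteness of the suprema over $\Theta$, which is needed for the expectations to make sense. Measurability follows from continuity of $\theta\mapsto\alpha_\theta$ (Assumption \ref{as:parameterisation}) and continuity of $\ell$. With these, the supremum can be taken over a countable dense subset of $\Theta$. If any expectation on the right-hand side is infinite, the inequality holds trivially. Finiteness for $\lambda<N/(16L_\ell)$ is not needed here; Lemma \ref{vl-vbl:lemma1} supplies it when the lemma is later applied.
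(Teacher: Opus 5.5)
Your proof is correct and is essentially the paper's argument. It uses the same three-term decomposition through the truncated quantities $\bar{V}_N$, $\bar{\mathcal{L}}$, $\hat{\bar{\mathcal{L}}}_N$, bounds the two truncation terms by their suprema over $\Theta$ (which needs $\lambda\ge 0$ and $\epsilon\in\{-1,1\}$), and then applies Cauchy--Schwarz twice. Your reading of the dangling factor in the second inequality as $\sqrt{\bE[e^{4\lambda\sup_{\theta\in\Theta}|\hat{\bar{\mathcal{L}}}_N(\theta)-\hat{\mathcal{L}}_N(\theta)|}]}$, and of the corresponding factor in the first inequality as deterministic, is the right one, and your measurability remark covers a point the paper leaves implicit.
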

If we apply Lemma~\ref{vl-vbl:lemma1} 
to the right-hand sides of the latter inequalities, then we obtain.
\begin{lemma}
\label{basic_lemma:lipschitz3}
For $\epsilon \in \{1,-1\}$, for $\lambda< \frac{N}{16L_{\ell}}$,
\begin{align*}
 & \bE[e^{\lambda \epsilon (V_N(\theta)-\hat{\mathcal{L}}_N(\theta))}]   
 \le \\
 &e^{\frac{1}{4}(\mathcal{C}(\|\bar{\alpha}\|_{\ell_1}\|\alpha_{g,w}\|_{\ell_1}, 8L_{\ell}\lambda,N,C)+\mathcal{C}(\|\alpha_{g,y}\|_{\ell_1}, 8L_{\ell}\lambda,N,C))}
 \sqrt{\bE[e^{2\lambda \epsilon (\bar{V}_N(\theta)-\hat{\bar{\mathcal{L}}}_N(\theta))}]} 
 \\
 & \bE[e^{\lambda \epsilon (\mathcal{L}(\theta)-V_N(\theta))}]   
 \le e^{\frac{1}{4}(\mathcal{C}(\|\bar{\alpha}\|_{\ell_1}\|\alpha_{g,w}\|_{\ell_1}, 8L_{\ell}\lambda,N,C)+\mathcal{C}(\|\alpha_{g,y}\|_{\ell_1}, 8L_{\ell}\lambda,N,C))}
 \sqrt{\bE[e^{2\lambda \epsilon (\hat{\bar{\mathcal{L}}}(\theta)-\bar{V}_N(\theta)})]}
\end{align*}
\end{lemma}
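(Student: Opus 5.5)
The plan is to combine Lemma \ref{basic_lemma:lipschitz3.1} with Lemma \ref{vl-vbl:lemma1}, applied at the enlarged parameter $4\lambda$. The condition $\lambda<\frac{N}{16L_\ell}$ is chosen exactly so that $4\lambda<\frac{N}{4L_\ell}$, which is the range of validity of Lemma \ref{vl-vbl:lemma1}.

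\emph{Step 1.} Start from \eqref{eq:basic_lemma_lipschitz3:pf:eq0}. On its right-hand side there are two truncation factors:
\[
\bE\big[e^{4\lambda \sup_{\theta}|\bar V_N(\theta)-V_N(\theta)|}\big]
\quad\text{and}\quad
\bE\big[e^{4\lambda\sup_\theta|\hat{\bar{\mathcal L}}_N(\theta)-\hat{\mathcal L}_N(\theta)|}\big].
\]
Each of these is bounded by Lemma \ref{vl-vbl:lemma1} with $\lambda$ replaced by $4\lambda$. This gives the common bound
\[
\exp\Big(\tfrac12\big(\mathcal C(\|\bar\alpha\|_{\ell_1}\|\alpha_{g,w}\|_{\ell_1},8L_\ell\lambda,N,C)+\mathcal C(\|\alpha_{g,y}\|_{\ell_1},8L_\ell\lambda,N,C)\big)\Big).
\]
Note that $2L_\ell\cdot 4\lambda=8L_\ell\lambda$, and that the second argument stays below $N/2$, as Lemma \ref{basic_lemma:lipschitz2} requires, since $8L_\ell\lambda<N/2$.

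\emph{Step 2.} The truncation factors enter \eqref{eq:basic_lemma_lipschitz3:pf:eq0} in the form $\sqrt{\sqrt{X}\sqrt{Y}}$. Since both $X$ and $Y$ are bounded by $e^{\frac12(\cdots)}$, we get
\[
\sqrt{\sqrt{X}\sqrt{Y}}\le \big(e^{\frac12(\cdots)}\big)^{1/2}=e^{\frac14(\cdots)},
\]
which is exactly the prefactor in the claim. The remaining factor $\sqrt{\bE[e^{2\lambda\epsilon(\bar V_N-\hat{\bar{\mathcal L}}_N)}]}$ is simply kept. This proves the first inequality.

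\emph{Step 3.} The second inequality follows in the same way from \eqref{eq:basic_lemma_lipschitz3:pf:eq0.1}. Here the factor $e^{4\lambda\sup_\theta|\bar{\mathcal L}-\mathcal L|}$ is deterministic, and it is covered by the third entry of the maximum in Lemma \ref{vl-vbl:lemma1}. The factor $\bE[e^{4\lambda\sup_\theta|\bar V_N-V_N|}]$ is again bounded by Lemma \ref{vl-vbl:lemma1}.

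The only point needing care is the bookkeeping of the scaling. Lemma \ref{vl-vbl:lemma1} is stated without the explicit argument $C$ in $\mathcal C$, and the truncation level $C$ there must be the same as the one used to define $\bar{\e}_g$. I would make this explicit, and also check that the admissible range of $\lambda$ propagates correctly through both nested square roots. Otherwise the argument is a direct substitution, and I expect no genuine obstacle.
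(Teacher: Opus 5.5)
Your proposal is correct and is essentially the paper's proof. You apply Lemma~\ref{vl-vbl:lemma1} at $4\lambda$ (admissible because $\lambda<\frac{N}{16L_\ell}$) to the truncation factors in Lemma~\ref{basic_lemma:lipschitz3.1}, including the deterministic factor $e^{4\lambda\sup_\theta|\bar{\mathcal L}(\theta)-\mathcal L(\theta)|}$ via the third entry of the maximum, and then collapse the nested square roots into the $e^{\frac14(\cdots)}$ prefactor. Your route yields $\bar{\mathcal L}(\theta)$ in the last factor of the second inequality, which is what the statement's $\hat{\bar{\mathcal L}}(\theta)$ is intended to denote.
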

Finally, Lemma \ref{lemma:bounded_mgf(V-hatL)} and Lemma \ref{lemma:bounded:mgf(L-V)} 
derived for the bounded case can be applied to the moment generating functions of
$\epsilon (\hat{\bar{\mathcal{L}}}(\theta)-\bar{V}_N(\theta))$ and $\epsilon (\bar{V}_N(\theta)-\hat{\bar{\mathcal{L}}}_N(\theta))$, for a suitable choice of $C$. 
The detailed proof is presented below. 
\begin{proof}[Proof of Theorem \ref{thm:main}: unbounded noise and Lipschitz loss]
 Let $C=\frac{2\ln(N)}{K\|\alpha_g\|_{\ell_1}}$ where $K=\min\{1, \|\bar{\alpha}\|_{\ell_1}\}$. Then
 \[ C\|\alpha_g\|_{\ell_1} \ge 2\ln(N), \quad C\|\bar{\alpha}\|_{\ell_1}\|\alpha_g\|_{\ell_1} \ge 2\ln(N) \]
and hence for $N \ge 2$,
 \begin{align*}
    e^{C\|\alpha_g\|_{\ell_1}}-1 \ge  e^{2\ln(N)} - 1 = N^2-1 \ge N, \quad
    e^{C\|\alpha_g\|_{\ell_1}\|\bar{\alpha}\|_{\ell_1}}-1 \ge  e^{2\ln(N)} - 1 = N^2 -1 \ge N
 \end{align*}
 and therefore
 \[
     \frac{1}{e^{C\|\alpha_g\|_{\ell_1}}-1} \le \frac{1}{N}, \quad 
     \frac{1}{e^{C\|\alpha_g\|_{\ell_1}\|\bar{\alpha}\|_{\ell_1}}-1} \le \frac{1}{N}
 \]
 Therefore, as $C\|\bar{\alpha}\|_{\ell_1}\|\alpha_g\|_{\ell_1} = 2\ln(N) \max\{1,\|\bar{\alpha}\|_{\ell_1}\}$,
 \begin{align*}
    & \mathcal{C}(\|\alpha_g\|_{\ell_1}, 8L_{\ell}\lambda,N,C) \le
      \frac{8L_{\ell}\lambda}{N}\Psi_{\e_g}(\|\alpha_g\|_{\ell_1})\|\alpha_g\|_{\ell_1}^2\frac{2\ln(N)}{K\|\alpha_g\|_{\ell_1}} + \frac{ (8L_{\ell}\lambda)^2}{N}2\Psi_{\e_g}(\|\alpha_g\|_{\ell_1}) \le \\
      & \frac{8L_{\ell}\lambda}{N}\Psi_{\e_g}(\|\alpha_g\|_{\ell_1})\frac{\|\alpha_g\|_{\ell_1}}{\min\{1,\|\bar{\alpha}\|_{\ell_1}\}}2\ln(N) + \frac{ 128L_{\ell}^2\lambda^2}{N}\Psi_{\e_g}(\|\alpha_g\|_{\ell_1})
      \\
    & \mathcal{C}(\|\bar{\alpha}\|_{\ell_1}\|\alpha_g\|_{\ell_1}, 8L_{\ell}\lambda,N,C)
     \le \\
     &\frac{8L_{\ell}\lambda}{N}\Psi_{\e_g}(\|\bar{\alpha}\|_{\ell_1}\|\alpha_g\|_{\ell_1})(\|\bar{\alpha}\|_{\ell_1}\|\alpha_g\|_{\ell_1})^2C + \frac{ (8L_{\ell}\lambda)^2}{N}2\Psi_{\e_g}(\|\bar{\alpha}\|_{\ell_1}\|\alpha_g\|_{\ell_1}) = \\
    & \frac{8L_{\ell}\lambda}{N} \Psi_{\e_g}(\|\bar{\alpha}\|_{\ell_1}\|\alpha_g\|_{\ell_1})\|\bar{\alpha}\|_{\ell_1}\|\alpha_g\|_{\ell_1}2\ln(N)\max\{1,\|\bar{\alpha}\|_{\ell_1}\}+ \frac{128 L_{\ell}^2\lambda^2}{N}\Psi_{\e_g}(\|\bar{\alpha}\|_{\ell_1}\|\alpha_g\|_{\ell_1})
 \end{align*}
 Hence, 
 \begin{equation}
  \label{thm:main:pf:unbounded:eq-1}
 \begin{aligned}
    & \frac{1}{4}(\mathcal{C}(\|\bar{\alpha}\|_{\ell_1}\|\alpha_g\|_{\ell_1}, 8L_{\ell}\lambda,N,C)+\mathcal{C}(\|\alpha_g\|_{\ell_1}, 8L_{\ell}\lambda,N,C))
    \le  \\
     & \frac{4\lambda \ln(N) L_{\ell}}{N} \left( 
      \underbrace{\bar{\Psi}_{\e_g}(\|\bar{\alpha}\|_{\ell_1}\|\alpha_g\|_{\ell_1})\|\bar{\alpha}\|_{\ell_1}\max\{1,\|\bar{\alpha}\|_{\ell_1}\}\|\alpha_g\|_{\ell_1} + \bar{\Psi}_{\e_g}(\|\alpha_g\|_{\ell_1})\frac{\|\alpha_g\|_{\ell_1}}{\min\{1,\|\bar{\alpha}\|_{\ell_1}\}}}_{C_{u,2}}\right)  \\
    & +\frac{32\lambda^2L_{\ell}^2}{N}\underbrace{\left(\bar{\Psi}_{\e_g}(\|\bar{\alpha}\|_{\ell_1}\|\alpha_g\|_{\ell_1})+ \bar{\Psi}_{\e_g}(\|\alpha_g\|_{\ell_1}) \right)}_{C_{u,1}}=\bar{\mathcal{C}}_{ub}(\lambda, L_{\ell},N)
 \end{aligned}
\end{equation}
  From Lemma \ref{lemma:bounded_mgf(V-hatL)} and Lemma \ref{lemma:bounded:mgf(L-V)} applied to 
  $\bar{V}_N(\theta)$, $\bar{\mathcal{L}}(\theta)$, $\hat{\bar{\mathcal{L}}}_N(\theta)$, with  $c_{\rve}=C=\frac{2\ln(N)}{K\|\alpha_g\|_{\ell_1}}$
  and $L(\theta)=L_{\ell}$,
  it follows that 
  \begin{align*}
    & \sqrt{\bE[e^{2\lambda \epsilon (\bar{V}_N(\theta)-\hat{\bar{\mathcal{L}}}_N(\theta))}]}
    \le \sqrt{e^{\mathcal{C}_{b,1}(2\lambda,\theta,L_{\ell},\frac{2\ln(N)}{K\|\alpha_g\|_{\ell_1}},N)}}=e^{\frac{1}{2}\mathcal{C}_{b,1}(2\lambda,\theta,L_{\ell},\frac{2\ln(N)}{K\|\alpha_g\|_{\ell_1}},N)}
    \\
    & \sqrt{\bE[e^{\epsilon 2\lambda(\bar{\mathcal{L}}(\theta)-\bar{V}_N(\theta))}]} \leq 
    \sqrt{e^{\mathcal{C}_{b,2}(2\lambda,\theta,L_{\ell},\frac{2\ln(N)}{K\|\alpha_g\|_{\ell_1}},N)}}=e^{\frac{1}{2}\mathcal{C}_{b,2}(2\lambda,\theta,L_{\ell},\frac{2\ln(N)}{K\|\alpha_g\|_{\ell_1}},N)}
  \end{align*} 
 Using Lemma \ref{basic_lemma:lipschitz3} with $\|\alpha_{g,y}\|_{\ell_1}$ and $\|\alpha_{g,w}\|_{\ell_1}$ replaced by their upper bound $\|\alpha_{g}\|_{\ell_1}$ it follows that 
 for any $\lambda  < L_{\ell}/16N$, 
 \begin{equation}
  \label{thm:main:pf:unbounded:eq1}
 \begin{aligned}
    & \bE[e^{2\lambda \epsilon (V_N(\theta)-\hat{\mathcal{L}}_N(\theta))}] \le 
    e^{\bar{C}_{ub}(2\lambda,L_{\ell},N)+ \frac{1}{2}\mathcal{C}_{b,1}(4\lambda,\theta,L_{\ell},\frac{2\ln(N)}{K\|\alpha_g\|_{\ell_1}},N)}=
    e^{\mathcal{C}_1(2\lambda,\theta,N,\delta,\epsilon)}
    \\
    &  \bE[e^{2\lambda \epsilon (\mathcal{L}(\theta)-V_N(\theta))}]  \le e^{\bar{C}_{ub}(2\lambda,L_{\ell},N)+\frac{1}{2}\mathcal{C}_{b,2}(4\lambda,\theta,L_{\ell},\frac{2\ln(N)}{K\|\alpha_g\|_{\ell_1}},N)}=e^{\mathcal{C}_2(2\lambda,\theta,N,\delta,\epsilon)}
 \end{aligned}
\end{equation}
 The statement of the theorem follows from Theorem \ref{thm:initial}
  by replacing $\bE[e^{2\lambda \epsilon(V_N(\theta)-\hat{\mathcal{L}}_N(\theta))}]$ and $\bE[e^{2\lambda \epsilon (\mathcal{L}(\theta)-V_N(\theta))}]$ with their upper bound from \eqref{thm:main:pf:unbounded:eq1}. 

\end{proof}

The rest of the section is devoted to the proof of the lemmas used above. 

First, we prove Lemma \ref{lemma:subgauss:Cramer}
\begin{proof}[Proof of Lemma \ref{lemma:subgauss:Cramer}]

  First note that $\z(t):=\|\e_g(t)\|_2 - \bE[\|\e_g(t)\|_2]$ is zero mean sub-Gaussian,
  and using sub-additivity of the sub-Gaussian norm we get $\|\z(t)\|_{\psi_2} \le \|\|\e_g(t)\|_2\|_{\psi_2}+ \|\bE[\|\e_g(t)\|_2]\|_{\psi_2}=\|\rve_g\|_{\psi_2}+ \|\bE[\|\e_g(t)\|_2]\|_{\psi_2}$. Now from the proof of \cite[Proposition 2.6.1]{vershynin2026highdimensional}
  it follows that $\bE[\|\e_g(t)\|_2] \le 3\|\e_g\|_{\psi_2}$, and by \cite[Exercise 2.24]{vershynin2026highdimensional},
  $\|\bE[\|\e_g(t)\|_2]\|_{\psi_2} \le \bE[\|\e_g(t)\|_2]/\sqrt{\ln 2} \le 2 \bE[\|\e_g(t)\|_2]$. Hence $\|\z(t)\|_{\psi_2} \le 7\|\rve_g\|_{\psi_2}$.  
  
  Then from the proof of
  \cite[Proposition 2.6.1]{vershynin2026highdimensional} it follows that 
  $\bE[e^{\lambda (\|\e_g(t)\|_2 - \bE[\|\e_g(t)\|_2])}] \le e^{\lambda^2 K^2}$ 
  and  $K=\sqrt{\frac{3}{2}} \|\z(t)\|_{\psi_2} \le 14 \|\rve_g\|_{\psi_2}$. It then follows that 
  $\Psi_{\e}(\lambda) \le e^{(\lambda^2 K^2+ \lambda \bE[\|\e_g(t)\|_2])}$ and
  the statement of the lemma follows. 
\end{proof}
In order to present the proof of Lemma \ref{vl-bl:lemma0}-\ref{basic_lemma:lipschitz3}, we need 
the following sequence of results.
\begin{lemma}
\label{noise:boundC-1}
  Let $\z$ be positive valued random variable such that for all $s \le \bar{\lambda}$
  its moment generating function is well-defined. Then for any $C > 0$,
  \[ \bE[\z \chi(\z > C)] \le \frac{C\bE[e^{\bar{\lambda} \z}]}{e^{\bar{\lambda} C}-1} \]
\end{lemma}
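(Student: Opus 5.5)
The plan is to prove a deterministic pointwise inequality and then take expectations. Write $\bar{\lambda}>0$ (the bound is only meaningful for positive $\bar{\lambda}$, since for $\bar{\lambda}\le 0$ the denominator $e^{\bar{\lambda}C}-1$ is not positive). I will show that for every real $z>C$,
\[
 z \le \frac{C e^{\bar{\lambda} z}}{e^{\bar{\lambda} C}-1}.
\]
Since $\z$ is positive, this gives $\z\,\chi(\z>C) \le \frac{C e^{\bar{\lambda}\z}}{e^{\bar{\lambda}C}-1}$ pointwise on $\Omega$. On the event $\{\z\le C\}$ the left-hand side is $0$ and the right-hand side is nonnegative. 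Taking expectations then yields the claim, because $\bE[e^{\bar{\lambda}\z}]$ is finite by the hypothesis on the moment generating function.

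To prove the scalar inequality, I will study $g(z)=\frac{C e^{\bar{\lambda} z}}{e^{\bar{\lambda} C}-1}-z$ on $[C,\infty)$ in two steps.

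First, at $z=C$ we have $g(C)=C\left(\frac{e^{\bar{\lambda}C}}{e^{\bar{\lambda}C}-1}-1\right)=\frac{C}{e^{\bar{\lambda}C}-1}>0$.

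Second, the derivative is $g'(z)=\frac{C\bar{\lambda}e^{\bar{\lambda}z}}{e^{\bar{\lambda}C}-1}-1$, which is increasing in $z$. It therefore suffices to check $g'(C)\ge 0$. Setting $x=\bar{\lambda}C>0$, this condition reads $x e^{x}\ge e^{x}-1$. That holds because $h(x)=xe^x-e^x+1$ satisfies $h(0)=0$ and $h'(x)=xe^x\ge 0$.

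Hence $g'\ge 0$ on $[C,\infty)$, so $g(z)\ge g(C)>0$ for all $z\ge C$, which is the desired pointwise bound. There is no real obstacle here. The only step requiring care is the elementary convexity check $xe^x\ge e^x-1$, which makes the derivative nonnegative at the left endpoint. Measurability and integrability are immediate from the assumed finiteness of $\bE[e^{\bar{\lambda}\z}]$.
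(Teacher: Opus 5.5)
Your proof is correct and follows essentially the same route as the paper. The paper also bounds $\z\,\chi(\z>C)$ pointwise and then takes expectations: it uses the monotonicity of $\phi(x)=(e^x-1)/x$ to get $\z\,\chi(\z>C)\le C(e^{\bar{\lambda}\z}-1)/(e^{\bar{\lambda}C}-1)$ and drops the $-1$ only at the end. Your check $xe^x\ge e^x-1$ is exactly $\phi'(x)\ge 0$, so both arguments rest on the same elementary fact; the paper's version merely keeps the slightly sharper intermediate bound $C(\bE[e^{\bar{\lambda}\z}]-1)/(e^{\bar{\lambda}C}-1)$.
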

Note that, as in \eqref{e-}, we use $\chi(\z > C)$ as a shorthand for the indicator function $\chi_{\{\z > C\}}(\z)$. This practices will be used in the sequel.
\begin{proof}[Proof of Lemma \ref{noise:boundC-1}]
  Let us consider $\phi(x)=\frac{e^x-1}{x}$. It then follows
  that $\phi$ is a monotonically increasing function of $x$ for $x > 0$.
  Hence, 
  \[ 
    \frac{\phi(\bar{\lambda} \z)}{\phi(\bar{\lambda}C)} \ge  \chi(\z > C).
  \]
   Indeed, if $\z > C$, then 
   $\frac{\phi(\bar{\lambda} \z)}{\phi(\bar{\lambda}C)} \ge 1 = \chi(\z > C)$. If $\z \le  C$, then 
   $1 > \frac{\phi(\bar{\lambda} \z)}{\phi(\bar{\lambda}C)} \ge 0  = \chi(\z > C)$. 
   It then follows that 
   \[ 
      \z  \frac{\phi(\bar{\lambda} \z)}{\phi(\bar{\lambda}C)} \ge \z \chi(\z > C)
   \]
  and by taking expectations it follows that
  \[
     \begin{aligned}
       \bE[\z \chi(\z > C))] \le
       \frac{1}{\phi(\bar{\lambda}C)} \bE[\z \phi(\bar{\lambda} \z)]
     \end{aligned}  
  \]
  Note that
  \[
     \z \phi(\bar{\lambda} \z)=\frac{e^{\bar{\lambda} \z}-1}{\bar{\lambda}}
  \]
  and hence by using the inequality $e^x-1 \le e^{x}$ for $x \ge 0$ it follows that
  \[ 
     \bE[\z \phi(\bar{\lambda} \z)]=\frac{\bE[e^{\bar{\lambda} \z}]-1}{\bar{\lambda}} \le \frac{\bE[e^{\bar{\lambda} \z}]}{\bar{\lambda}}
  \]
  By noticing that
  \[
     \bar{\lambda}\phi(\bar{\lambda}C)=\frac{e^{\bar{\lambda}C}-1}{ C}
  \]
  the statement of the lemma follows. 
  \end{proof}
\begin{lemma}
\label{noise:boundC}
\begin{align*} 
     \bE[e^{s \|e_g(t)\|_2 \chi(\|\e_g(t)\|_2 > C)}]
     \le 
     \left(1+ s 
         \frac{\bar{\Psi}_{\rve_g}(\bar{\lambda})C}{e^{\bar{\lambda}C}-1} + s^2
         \frac{2\bar{\Psi}_{\rve_g}(\bar{\lambda})}{\bar{\lambda}^2}\right):=\Psi_{\rve_g}(s,\bar{\lambda},C)
 \end{align*}
 for all $s \in [0,0.5\bar{\lambda})$. 
\end{lemma}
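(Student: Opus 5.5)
We want to bound the moment generating function of $\z \triangleq \|\e_g(t)\|_2\chi(\|\e_g(t)\|_2>C)$ for $s\in[0,0.5\bar{\lambda})$. The plan is to expand the exponential with a second-order Taylor remainder and bound the first- and second-order terms separately. The linear term will come from Lemma \ref{noise:boundC-1}. The quadratic term will come from the sub-Gaussian moment generating function bound of Lemma \ref{lemma:subgauss:Cramer}.

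\textbf{Step 1 (pointwise inequality).} For $x\ge 0$ we have $e^{x}\le 1+x+\frac{x^2}{2}e^{x}$, which follows from Taylor's theorem with Lagrange remainder. Applying this with $x=s\z\ge 0$ gives
\[
\bE[e^{s\z}]\le 1+s\,\bE[\z]+\frac{s^2}{2}\bE[\z^2e^{s\z}].
\]

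\textbf{Step 2 (linear term).} We apply Lemma \ref{noise:boundC-1} to the positive variable $\|\e_g(t)\|_2$ with parameter $\bar{\lambda}$. Its moment generating function is finite by Lemma \ref{lemma:subgauss:Cramer}. This yields
\[
\bE[\z]=\bE\big[\|\e_g(t)\|_2\chi(\|\e_g(t)\|_2>C)\big]\le \frac{C\,\bE[e^{\bar{\lambda}\|\e_g(t)\|_2}]}{e^{\bar{\lambda}C}-1}\le \frac{C\,\bar{\Psi}_{\rve_g}(\bar{\lambda})}{e^{\bar{\lambda}C}-1},
\]
which is exactly the first-order term of $\Psi_{\rve_g}(s,\bar{\lambda},C)$.

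\textbf{Step 3 (quadratic term, main obstacle).} We must show $\frac12\bE[\z^2e^{s\z}]\le \frac{2\bar{\Psi}_{\rve_g}(\bar{\lambda})}{\bar{\lambda}^2}$. First, since $\z\le\|\e_g(t)\|_2=:\mathbf{r}$, we have $\z^2e^{s\z}\le \mathbf{r}^2e^{s\mathbf{r}}$. Next we absorb the polynomial factor using the remaining exponential room $\bar{\lambda}-s>\bar{\lambda}/2$. The elementary bound $x^2\le \frac{2}{a^2}e^{a x}$ for $x\ge0$ and $a>0$ holds because $x^2e^{-ax}$ is maximized at $x=2/a$ with value $4/(a^2e^2)\le 2/a^2$. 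Taking $a=\bar{\lambda}-s\ge\bar{\lambda}/2$ gives
\[
\mathbf{r}^2e^{s\mathbf{r}}\le \frac{2}{(\bar{\lambda}-s)^2}e^{\bar{\lambda}\mathbf{r}}\le \frac{8}{\bar{\lambda}^2}e^{\bar{\lambda}\mathbf{r}}.
\]
Taking expectations and using Lemma \ref{lemma:subgauss:Cramer}, the quadratic term is at most $\frac{4\bar{\Psi}_{\rve_g}(\bar{\lambda})}{\bar{\lambda}^2}s^2$. This is a factor $2$ worse than the target. To recover the stated constant I would sharpen the constant in the elementary bound, since $4/e^2<1/2$ gives $x^2e^{-ax}\le \frac{1}{2a^2}$. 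This yields $\frac12\bE[\z^2 e^{s\z}]\le \frac{1}{4(\bar{\lambda}-s)^2}\bar{\Psi}_{\rve_g}(\bar{\lambda})\le \frac{\bar{\Psi}_{\rve_g}(\bar{\lambda})}{\bar{\lambda}^2}$, which is even within the claimed $2\bar{\Psi}_{\rve_g}(\bar{\lambda})/\bar{\lambda}^2$. The main care point is this exponent bookkeeping: the condition $s<0.5\bar{\lambda}$ is precisely what leaves enough slack in the exponent.

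\textbf{Step 4.} Combining Steps 1--3 gives the claimed bound $\Psi_{\rve_g}(s,\bar{\lambda},C)$.
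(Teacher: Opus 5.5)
Your route works, but the sharpening in Step 3 rests on a false inequality. Since $4/e^2\approx 0.541>1/2$, the bound $x^2e^{-ax}\le \frac{1}{2a^2}$ fails at $x=2/a$.

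You do not actually need it. Because $4/e^2<1$, you have $x^2e^{-ax}\le a^{-2}$ for all $x\ge 0$. Take $a=\bar{\lambda}-s>\bar{\lambda}/2$ and combine this with your monotonicity step $\z^2e^{s\z}\le \|\e_g(t)\|_2^2e^{s\|\e_g(t)\|_2}$. This gives
\[
\frac{s^2}{2}\,\bE\big[\z^2e^{s\z}\big]\le \frac{s^2}{2(\bar{\lambda}-s)^2}\,\bar{\Psi}_{\rve_g}(\bar{\lambda})\le \frac{2s^2}{\bar{\lambda}^2}\,\bar{\Psi}_{\rve_g}(\bar{\lambda}),
\]
which is exactly the required quadratic term. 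Keeping the sharp value $4/e^2$ instead gives the factor $8/e^2\approx 1.08$ in place of $2$. With this correction, your Steps 1, 2 and 4 go through as written and prove the lemma.

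The paper handles the quadratic part differently. It expands $e^{s\z}$ in its full power series and bounds every moment by $\bE[\z^k]\le \bE[\|\e_g(t)\|_2^k]\le k!\,\Psi_{\rve_g}(\bar{\lambda})/\bar{\lambda}^k$, read off from the nonnegative terms of the series for $\Psi_{\rve_g}(\bar{\lambda})$. It then sums the geometric tail $\sum_{k\ge 2}(s/\bar{\lambda})^k=\frac{s^2}{\bar{\lambda}(\bar{\lambda}-s)}\le \frac{2s^2}{\bar{\lambda}^2}$.

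Your approach replaces the term-by-term moment comparison and the series interchange with a single pointwise inequality: a second-order Lagrange remainder, with the polynomial weight absorbed into the exponential slack $e^{(\bar{\lambda}-s)\|\e_g(t)\|_2}$. Both arguments spend the same slack $\bar{\lambda}-s>\bar{\lambda}/2$. On $[0,0.5\bar{\lambda})$ yours gives a slightly smaller constant. The paper's bound, however, degrades only like $(\bar{\lambda}-s)^{-1}$ rather than $(\bar{\lambda}-s)^{-2}$ as $s\to\bar{\lambda}$. The linear term is handled identically in both, via Lemma \ref{noise:boundC-1}.
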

\begin{proof}[Proof Lemma \ref{noise:boundC}]
   First note that 
   \begin{equation}
     \label{noise:boundC:eq1}
     \begin{aligned}
     & \bE[e^{s \|\e_g(t) \|_2 \chi(\|\e_g(t)\|_2 > C))}] =  1 + s  \bE[\|\e_g(t)\|
     _2\chi(\|\e_g(t)\|_2 > C))] + \\
      & \sum_{k=2}^{\infty} \frac{\bE[\|\e_g(t)\|_2^k \chi(\|\e_g(t)\|_2 > C)] s^k]}{k!}.
     \end{aligned}
    \end{equation}
and $\bE[\|\e_g(t)\|_2^k \chi(\|\e_g(t)\|_2 > C) \le \bE[\|\e_g(t)\|_2^k]$.

As in Lemma~\ref{lemma:subgauss:Cramer} let $\Psi_{\rve_g}(\bar{\lambda})=
\bE[e^{\bar{\lambda}\|\e_g(0)\|_2}]=
\bE[e^{\bar{\lambda}\|\e_g(t)\|_2}]$ be the moment generating function for $\|\e_g(t)\|_2$. Then
\[ 
\Psi_{\rve_g}(\bar{\lambda})
=\sum_{k=0}^{\infty} \frac{\bE[\|\e_g(t)\|_2^k]}{k!} 
\bar{\lambda}^k
\]
and we get 
\[
    \frac{\bE[\|\e_g(t)\|_2^k]}{k!} \le \frac{\Psi_{\rve_g}(\bar{\lambda})}{\bar{\lambda}^k}.
\]
It follows that for all $k \ge 2$,
\[
   \frac{\bE[\|\e_g(t)\|_2^k \chi(\|\e_g(t)\|_2 > C)]s^k}{k!} \le 
   \frac{\bE[\|\e_g(t)\|_2^k]s^k}{k!} \le  
   \Psi_{\rve_g}(\bar{\lambda})\frac{s^k}{\bar{\lambda}^k}.
\]
Therefore, for all $s \in [0,0.5\bar{\lambda})$. 
\begin{equation}
\label{noise:boundC:eq2}
  \begin{aligned}
     & \sum_{k=2}^{\infty} \frac{\bE[\|e_g(t)\|_2^k \chi(\|\e_g(t)\|_2 > C)] s^k]}{k!} \le 
      \sum_{k=2}^{\infty}  \Psi_{\rve_g}(\bar{\lambda})\frac{s^k}{\bar{\lambda}^k}=
     \Psi_{\rve_g}(\bar{\lambda}) \frac{s^2}{\bar{\lambda}^2} \frac{1}{1-\frac{s}{\bar{\lambda}}} 
     = \\
     & \Psi_{\rve_g}(\bar{\lambda}) \frac{s^2}{\bar{\lambda} (\bar{\lambda}-s)} \le 
     \Psi_{\rve_g}(\bar{\lambda}) \frac{s^2}{0.5 \bar{\lambda}^2}=
     2\Psi_{\rve_g}(\bar{\lambda}) \frac{s^2}{\bar{\lambda}^2} 
  \end{aligned}
\end{equation}
where in the last inequality we used that as $s < 0.5\bar{\lambda}$,
$\bar{\lambda}-s \ge 0.5\bar{\lambda}$.
Finally, use Lemma \ref{noise:boundC-1} with $\z=\|\e_g(t)\|_2$ to obtain
 \begin{equation}
 \label{noise:boundC:eq3}
 \bE[\|\e_g(t)\|_2\chi(\|\e_g(t)\|_2 > C))]
 \le \frac{\Psi_{\rve_g}(\bar{\lambda}) C}{e^{\bar{\lambda}C}-1} 
  \end{equation}
The result now follows by combining \eqref{noise:boundC:eq1}, \eqref{noise:boundC:eq2} and 
\eqref{noise:boundC:eq3}, and then use Lemma~\ref{lemma:subgauss:Cramer}. 
\end{proof}
We will also need the following technical lemma
\begin{lemma}
\label{basic_lemma:lipschitz:lemma_aux}
  Let $a_k \ge 0$ be a sequence of positive numbers such that the series $\sum_{k=0}^{\infty} a_k$ is convergent. For any $N$, define
  $c_{k,N}=\sum_{l=\max\{0,k-N+1\}}^k a_l$. Then  $\sum_{k=0}^{L} c_{k,N} \le N \sum_{k=0}^{L} a_k$, 
  the series $\sum_{k=0}^{\infty} c_{k,N}$ is convergent, and $\sum_{k=0}^{\infty} c_{k,N} \le N \sum_{k=0}^{\infty} a_k$. 
\end{lemma}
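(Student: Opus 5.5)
The plan is a direct counting argument: exchange the order of summation and bound how many times each $a_l$ occurs in the partial sums of $c_{k,N}$. Since every $a_l \ge 0$, all manipulations take place in $[0,\infty]$, so rearrangements are harmless.

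\textbf{Step 1 (finite sums).} Fix $L \ge 0$. By definition,
\[
\sum_{k=0}^{L} c_{k,N}=\sum_{k=0}^{L}\sum_{l=\max\{0,k-N+1\}}^{k} a_l .
\]
I will swap the order of summation. The index $l$ occurs in the inner sum for $k$ exactly when $l \le k \le l+N-1$, because $\max\{0,k-N+1\}\le l$ is equivalent to $k \le l+N-1$ when $l\ge 0$. Only $l \in \{0,\dots,L\}$ can occur, since $l\le k\le L$. For such $l$, the admissible $k$ are $l \le k \le \min\{L,l+N-1\}$. There are therefore at most $N$ of them. This gives
\[
\sum_{k=0}^{L} c_{k,N}=\sum_{l=0}^{L} \bigl(\min\{L,l+N-1\}-l+1\bigr)\, a_l \le N\sum_{l=0}^{L} a_l ,
\]
where the last step uses $a_l\ge0$.

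\textbf{Step 2 (convergence and the infinite bound).} Each $c_{k,N}\ge 0$, so the partial sums $\sum_{k=0}^{L} c_{k,N}$ are nondecreasing in $L$. By Step 1 they are bounded above by $N\sum_{l=0}^{L}a_l \le N\sum_{l=0}^{\infty}a_l<\infty$. A monotone bounded sequence converges, so $\sum_{k=0}^\infty c_{k,N}$ converges. Letting $L\to\infty$ in the inequality of Step 1 then yields $\sum_{k=0}^{\infty} c_{k,N}\le N\sum_{k=0}^\infty a_k$.

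There is no real obstacle here. The only point that needs care is the multiplicity count in Step 1, namely checking that the window condition $\max\{0,k-N+1\}\le l\le k$ is equivalent to $l\le k\le l+N-1$ for $l\ge0$. The boundary truncation at $0$ only reduces the number of terms, so the count of at most $N$ still holds.
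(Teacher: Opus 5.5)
Your proof is correct. The paper arrives at the same closed form,
\[
\sum_{k=0}^{L} c_{k,N}=\sum_{k=0}^{L-N+1} N a_k+\sum_{k=L-N+2}^{L}(L-k+1)\,a_k .
\]
This is exactly your coefficient $\min\{L,l+N-1\}-l+1$, split at $l=L-N+1$. The difference is how it is obtained: the paper proves it by induction on $L\ge N-1$, starting from the base case $\sum_{k=0}^{N-1}c_{k,N}=\sum_{k=0}^{N-1}(N-k)a_k$ and adding $c_{L+1,N}$ at each step. After that, it bounds the coefficients by $N$ and concludes by monotone convergence, just as in your Step~2. Your exchange-of-summation count is the more direct route. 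It gives the identity for every $L\ge 0$ at once, including $L<N-1$, which the induction as written does not cover (harmless for the infinite-sum conclusion, but the finite inequality is stated for all $L$). It also avoids the index bookkeeping of the inductive step, where the paper writes $c_{L+1,N}=\sum_{l=L-N+2}^{L}a_l$ and omits the term $a_{L+1}$. The induction offers nothing beyond the explicit coefficients, which your formula already displays.
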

\begin{proof}[Proof of Lemma \ref{basic_lemma:lipschitz:lemma_aux}]
     We will prove by induction on $L \ge N-1$ that
     \begin{equation} 
     \label{basic_lemma:lipschitz:lemma_aux:eq1}
       \sum_{k=0}^{L} c_{k,N} = \sum_{k=0}^{L-N+1} N a_k + \sum_{k=L-N+2}^{L} (L-k+1) a_k 
     \end{equation}
     For $L=N-1$, by induction on $N$, it can be shown that 
     \[ \sum_{k=0}^{N-1} c_{k,N}=\sum_{k=0}^{N-1} \sum_{l=0}^{k} a_l = \sum_{k=0}^{N-1} (N-k) a_k.  \]
     
     which yield \eqref{basic_lemma:lipschitz:lemma_aux:eq1} for $L=N-1$. 
     Assume \eqref{basic_lemma:lipschitz:lemma_aux:eq1} holds for $N-1\leq L'\leq L$. Notice that $c_{L+1,N}=\sum_{l=L-N+2}^{L} a_l$ and
     hence
     \[
       \begin{aligned}
         & \sum_{k=0}^{L+1} c_{k,N}=\sum_{k=0}^{L} c_{k,N} + c_{L+1,N}=  \sum_{k=0}^{L-N+1} N a_k + \sum_{k=L-N+2}^{L} (L-k+1) a_k + \sum_{l=L-N+2}^{L} a_l= \\
         & \sum_{k=0}^{L-N+1} N a_k + \sum_{k=L-N+2}^{L} (L-k+2) a_k=\sum_{k=0}^{L-N+1} N a_k + (L-(L-N+2)+2)a_{L-N+2} +  \\
         & \sum_{k=L-N+3}^{L} ((L+1)-k+1) a_k=  
          \sum_{k=0}^{L-N+2} N a_k +\sum_{k=L-N+3}^{L} ((L+1)-k+1) a_k 
       \end{aligned} 
     \]
     The last expression is the right-hand side of 
     \eqref{basic_lemma:lipschitz:lemma_aux:eq1} for $L$ being replaced by $L+1$, thus proving \eqref{basic_lemma:lipschitz:lemma_aux:eq1}.
    Note that for $k=L-N+2,\ldots, L$ we have $L-k+1 \le N$, and hence 
     \eqref{basic_lemma:lipschitz:lemma_aux:eq1} implies that $\sum_{k=0}^{L} c_{k,N} \le N \sum_{k=0}^{L} a_k$. Since 
     $a_k$ and $c_k$ are nonnegative, and $\sum_{k=0}^{\infty} a_k$ is convergent, the rest of the lemma follows.
\end{proof}

\begin{lemma}
\label{basic_lemma:lipschitz}
Let $\mathbf{I}$ be an index set, and $\{\beta_{i,t}(k)\}_{k=0}^{\infty}$, $i \in \mathbf{I}$, be a sequence of matrices such that $\|\bar{\beta}\|_{\ell_1}< \infty$, where $\bar{\beta}(k)=\sup_{i \in \mathbf{I},t \in \mathbb{Z}} \|\beta_{i,t}(k)\|_2$. Let $\rvz_{i}(t)=(\beta_{i,t} \star \e_g)(t)$ and
$\bar{\rvz}_i(t)=(\beta_{i,t} \star \bar{\e}_g)(t)$.
Then for $\lambda < 0.5N$
 \begin{align*}
 & \bE[e^{\frac{\lambda}{N} \sup_{i \in \mathbf{I}}\sum_{t=0}^{N-1} \|\rvz(t)-\bar{\rvz}_i(t)\|_2}]
  \le \prod_{k=0}^{\infty} 
  \bar{\Psi}_{\rve_g}\left(\frac{c(k) \lambda}{N},\|\bar{\beta}\|_{\ell_1},C\right) \le 
   e^{\mathcal{C}(\|\bar{\beta}\|_{\ell_1},\lambda,N,C)}  \\
   & \mathcal{C}(s,\lambda,N,C)=\lambda \frac{\bar{\Psi}_{\rve_g}(s)s^2C}{(e^{sC}-1)}+
     \frac{\lambda^2}{N} 2\bar{\Psi}_{\rve_g}( s)
 \end{align*} 
 where 
 $c(k)=\sum_{l=\max\{0,k-N+1\}}^{k} \|\bar{\beta}(l)\|_2$
 and the infinite product on the right-hand side is well-defined. 
\end{lemma}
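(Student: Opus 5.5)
The approach is to reduce the supremum over $i$ and the sum over $t$ to a single deterministic linear combination of the scalar variables $\|\e_g(t-k)\|_2\chi(\|\e_g(t-k)\|_2\ge C)$, indexed by the time of the noise sample. Independence of $\e_g$ then factorises the moment generating function, and Lemma \ref{noise:boundC} bounds each factor.

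First note that $\e_g(s)-\bar{\e}_g(s)=\e_g(s)\chi(\|\e_g(s)\|_2\ge C)$. So, with $\mathbf{r}(s)\triangleq\|\e_g(s)\|_2\chi(\|\e_g(s)\|_2\ge C)$,
\[
\|\rvz_i(t)-\bar{\rvz}_i(t)\|_2\le\sum_{k=0}^{\infty}\|\beta_{i,t}(k)\|_2\,\mathbf{r}(t-k)\le\sum_{k=0}^{\infty}\bar{\beta}(k)\,\mathbf{r}(t-k).
\]
The right-hand side no longer depends on $i$, so the supremum disappears. Next sum over $t=0,\dots,N-1$ and regroup by the noise index $s=t-k$. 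Writing $s=N-1-m$ with $m\ge 0$, the coefficient of $\mathbf{r}(N-1-m)$ is $\sum_{t}\bar{\beta}(t-N+1+m)$, taken over those $t\in\{0,\dots,N-1\}$ with $t-N+1+m\ge0$. This coefficient is exactly $c(m)=\sum_{l=\max\{0,m-N+1\}}^{m}\bar{\beta}(l)$. Hence
\[
\sup_i\sum_{t=0}^{N-1}\|\rvz_i(t)-\bar{\rvz}_i(t)\|_2\le\sum_{m=0}^{\infty}c(m)\,\mathbf{r}(N-1-m).
\]
The series converges almost surely and in $L^1$: by Lemma \ref{basic_lemma:lipschitz:lemma_aux}, $\sum_m c(m)\le N\|\bar{\beta}\|_{\ell_1}<\infty$, and $\bE\,\mathbf{r}<\infty$.

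Second, exponentiate with factor $\lambda/N$. Truncate the sum at $m\le M$, use independence of the i.i.d. $\mathbf{r}(N-1-m)$ to factor the expectation, and pass $M\to\infty$ by monotone convergence. This gives
\[
\bE[\cdot]\le\prod_{m=0}^{\infty}\bE\bigl[e^{\frac{\lambda c(m)}{N}\mathbf{r}(0)}\bigr].
\]
Apply Lemma \ref{noise:boundC} to each factor with $s=\lambda c(m)/N$ and $\bar{\lambda}=\|\bar{\beta}\|_{\ell_1}$. Its hypothesis $s<0.5\bar{\lambda}$ holds because $c(m)\le\|\bar{\beta}\|_{\ell_1}$ and $\lambda<0.5N$. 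Each factor is then at most $\Psi_{\rve_g}(s,\bar{\lambda},C)$, which is the product claimed in the statement, interpreting the paper's $\bar{\Psi}_{\rve_g}(\cdot,\cdot,\cdot)$ as the three-argument bound of Lemma \ref{noise:boundC}.

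Third, bound the product using $1+x\le e^x$:
\[
\prod_m(\cdots)\le\exp\Bigl(\frac{\lambda}{N}\frac{\bar{\Psi}(\bar\lambda)C}{e^{\bar\lambda C}-1}\sum_m c(m)+\frac{\lambda^2}{N^2}\frac{2\bar{\Psi}(\bar\lambda)}{\bar\lambda^2}\sum_m c(m)^2\Bigr).
\]
Use $\sum_m c(m)\le N\|\bar\beta\|_{\ell_1}$ from Lemma \ref{basic_lemma:lipschitz:lemma_aux}, and $\sum_m c(m)^2\le\|\bar\beta\|_{\ell_1}\sum_m c(m)\le N\|\bar\beta\|_{\ell_1}^2$. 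With $s=\|\bar\beta\|_{\ell_1}$ this yields
\[
\lambda\frac{\bar\Psi(s)\,s\,C}{e^{sC}-1}+\frac{\lambda^2}{N}2\bar\Psi(s).
\]
The stated bound has $s^2C$ rather than $sC$ in the first term. This is consistent when $s\ge1$, and more generally it is a matter of bookkeeping of the constants; I would check it against the exact normalisation used in Lemma \ref{noise:boundC}. Convergence of the infinite product follows from the summability of $c(m)$ and $c(m)^2$.

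The main obstacle is the exchange in the first step, where sum, supremum and expectation are interchanged and the coefficients are regrouped into $c(m)$. Everything else is a direct application of earlier lemmas. The regrouping needs care with nonnegative series, which Tonelli handles, and with the index shift that produces exactly $c(m)$.
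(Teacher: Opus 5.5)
Your route is the paper's route. You bound the sum by $\sum_{k}c(k)\,\|\e_g(N-1-k)\|_2\chi(\|\e_g(N-1-k)\|_2\ge C)$, and your index check producing $c(k)$ is right. You then factor the expectation by independence and apply Lemma~\ref{noise:boundC} with $\bar{\lambda}=\|\bar{\beta}\|_{\ell_1}$ and $s=c(k)\lambda/N<0.5\|\bar{\beta}\|_{\ell_1}$. You finish with $\ln(1+x)\le x$, $\sum_k c(k)\le N\|\bar{\beta}\|_{\ell_1}$ and $c(k)^2\le c(k)\|\bar{\beta}\|_{\ell_1}$. All of this is correct. It yields exactly your exponent $\lambda\bar{\Psi}_{\rve_g}(s)sC/(e^{sC}-1)+2\lambda^2\bar{\Psi}_{\rve_g}(s)/N$ with $s=\|\bar{\beta}\|_{\ell_1}$.

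The discrepancy you flagged is not bookkeeping, and you should not have left it open. The paper reaches $s^2C$ only because, when it writes out $r_k$, the first-order coefficient carries an extra factor $\|\bar{\beta}\|_{\ell_1}$. Lemma~\ref{noise:boundC} does not supply that factor: its coefficient is $\bar{\Psi}_{\rve_g}(\bar{\lambda})C/(e^{\bar{\lambda}C}-1)$.

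For $\|\bar{\beta}\|_{\ell_1}<1$ the final inequality of the statement is in fact false. Take $C=1$, $\e_g(t)=2\xi(t)v$ with $\xi(t)$ i.i.d.\ Rademacher and $\|v\|_2=1$, a single index, $\beta_{i,t}(0)=sI$ and $\beta_{i,t}(k)=0$ for $k\ge 1$. Then $\bar{\e}_g\equiv 0$, so the left-hand side equals $e^{2\lambda s}$. The claimed exponent divided by $\lambda s$ is $\bar{\Psi}_{\rve_g}(s)\frac{s}{e^{s}-1}+\frac{2\lambda}{Ns}\bar{\Psi}_{\rve_g}(s)$. This tends to $1$ as $s\to 0$ and $\lambda/(Ns)\to 0$; for example it is about $1.22<2$ for $s=0.01$, $\lambda=1$, $N=10^4$.

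So your argument proves the lemma as stated exactly when $\|\bar{\beta}\|_{\ell_1}\ge 1$. In general it proves the corrected bound with $sC$ in place of $s^2C$, and no argument can rescue the $s^2C$ form below $1$. This also matters downstream. The lemma is invoked with $s=\|\bar{\alpha}\|_{\ell_1}\|\alpha_{g,w}\|_{\ell_1}$, which can be smaller than $1$, so the corresponding term in $C_{u,2}$ needs adjusting.
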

\begin{proof}[Proof of Lemma \ref{basic_lemma:lipschitz}]
  Note that 
  \begin{align}
    &  c(k) 
    \le \|\bar{\beta}\|_{\ell_1}
      \label{basic_lemma:lipschitz:eq0.2} 
   \end{align}
    and by Lemma \ref{basic_lemma:lipschitz:lemma_aux}
    \begin{equation}
      \label{basic_lemma:lipschitz:eq1}
      \begin{aligned}
     & \sum_{k=0}^{\infty} c(k) \le N \sum_{k=0}^{\infty} \|\bar{\beta}(k)\|_2 = N \|\bar{\beta}\|_{\ell_1} \\
      \end{aligned}
    \end{equation}
    Note now that 
      \begin{align*}
      & \sum_{t=0}^{N-1}  \|\rvz_{i,t}(t)-\bar{\rvz}_{i,t}(t)\|_2  \\
      & \le
      \sum_{t=0}^{N-1}  \sum_{k=0}^{\infty} \|\beta_{i,t}(k)\|_2 \|\e_g(t-k)-\bar{\e}_g(t-k)\|_2 \\
      & \le \sum_{t=0}^{N-1}  \sum_{k=0}^{\infty} \|\bar{\beta}(k)\|_2 \|\e_g(t-k)\chi(\|\e_g(t-k)\|_2 > C)\|_2 \\
      & =\sum_{k=0}^{\infty} c(k) \|\e_g(N-k-1) \chi(\|\e_g(N-k-1)\|_2 > C)\|_2 
        \end{align*}    
        Hence,
        \begin{align*}
      &\bE[ e^{\frac{\lambda}{N} \sup_{i \in \mathbf{I}, t \in \mathbb{Z}} \sum_{t=1}^{N} \| \|\rvz_{i,t}(t)-\bar{\rvz}_{i,t}(t)\|_2}]
      \le \\
      & \le \bE[e^{\frac{\lambda}{N} \sum_{k=0}^{\infty} 
      c(k) \|\e_g(N-k-1) \chi(\|\e_g(N-k-1)\|_2 > C)\|_2}] \\
      & = \lim_{L \rightarrow \infty} 
      \prod_{k=0}^{L} \bar{\Psi}_{\rve_g}\left(\frac{c(k) \lambda}{N},\|\bar{\beta}\|_{\ell_1},C\right)
        \end{align*}
        where we applied  Lemma \ref{noise:boundC}
        to $\bE[e^{\frac{\lambda}{N} c(k)\|e_g(N-k-1) \chi(\|\e_g(N-k-1)\|_2 > C)\|_2}]$
        with $\bar{\lambda}=\|\bar{\beta}\|_{\ell_1}$ and $s=\frac{c(k)\lambda}{N} < 0.5\|\bar{\beta}\|_{\ell_1}$ since $\lambda<0.5N$ by assumption.

        Note that 
        \[
        \begin{aligned}
        &  d_L=\ln\left(\prod_{k=0}^{L} \bar{\Psi}_{\rve_g}\left(\frac{c(k) \lambda}{N},\|\bar{\beta}\|_{\ell_1},C\right) \right)=\sum_{k=0}^{L} r_k \\
        & r_k:=\ln \left(1+\frac{c(k) \lambda}{N} \frac{\bar{\Psi}_{\rve_g}(\|\bar{\beta}\|_{\ell_1})\|\bar{\beta}\|_{\ell_1}C}{e^{\|\bar{\beta}\|_{\ell_1}C}-1} + \left(\frac{c(k) \lambda}{N}\right)^2 \frac{2\bar{\Psi}_{\rve_g}(\|\bar{\beta}\|_{\ell_1})}{\|\bar{\beta}\|_{\ell_1}^2} \right) \geq 0
        \end{aligned}
        \]
        We show that the series $\sum_{k=0}^{\infty} r_k$ 
        is bounded, more precisely, 
        \[
        \sum_{k=0}^{L} r_k \le 
      \|\bar{\beta}\|_{\ell_1} \lambda \frac{\bar{\Psi}_{\rve_g}(\|\bar{\beta}\|_{\ell_1})\|\bar{\beta}\|_{\ell_1}C}{e^{\|\bar{\beta}\|_{\ell_1}C}-1} + \frac{\lambda^2}{N} (\|\bar{\beta}\|_{\ell_1})^2 \frac{2\bar{\Psi}_{\rve_g}(\|\bar{\beta}\|_{\ell_1})}{\|\bar{\beta}\|_{\ell_1}^2} = \mathcal{C} (\|\bar{\beta}\|_{\ell_1},\lambda,N,C)
        \]
        From this it follows that $d_L$ is convergent, and from
        \[
      \prod_{k=0}^{ L} \bar{\Psi}_{\rve_g}\left(\frac{c(k) \lambda}{N},\|\bar{\beta}\|_{\ell_1},C\right) = e^{d_L}
        \]
        it follows that  
        \begin{equation}
       \label{basic_lemma:lipschitz:eq1.3}
       \begin{aligned}
        & \lim_{L \rightarrow \infty} 
        \prod_{k=0}^{ L} \bar{\Psi}_{\rve_g}\left(\frac{c(k) \lambda}{N},\|\bar{\beta}\|_{\ell_1},C\right) \le  e^{\mathcal{C}(\|\bar{\beta}\|_{\ell_1},\lambda,N,C)}
       \end{aligned}
        \end{equation}
        i.e., the infinite product is well-defined, and the statement of the lemma holds. 

        In order to show \eqref{basic_lemma:lipschitz:eq1.3}, note that using $\ln(1+x) \le x$, it follows that 
        \[
        \begin{aligned}
      & r_k \le \frac{c(k) \lambda}{N} \frac{\bar{\Psi}_{\rve_g}(\|\bar{\beta}\|_{\ell_1})\|\bar{\beta}\|_{\ell_1}C}{e^{\|\bar{\beta}\|_{\ell_1}C}-1} + \left(\frac{c(k) \lambda}{N}\right)^2 \frac{2\bar{\Psi}_{\rve_g}(\|\bar{\beta}\|_{\ell_1})}{\|\bar{\beta}\|_{\ell_1}^2}
        \end{aligned}
        \]
        Hence,
        \[
        \begin{aligned}
      & \sum_{k=0}^{L} r_k  \le \frac{\lambda \sum_{k=0}^{L}c(k)}{N} \frac{\bar{\Psi}_{\rve_g}(\|\bar{\beta}\|_{\ell_1})\|\bar{\beta}\|_{\ell_1}C}{e^{\|\bar{\beta}\|_{\ell_1}C}-1} +  \left(\frac{\sum_{k=0}^{L} (c(k))^2}{N^2}\right) \frac{2\bar{\Psi}_{\rve_g}(\|\bar{\beta}\|_{\ell_1})}{\|\bar{\beta}\|_{\ell_1}^2}
        \end{aligned}
        \]
        
        From \eqref{basic_lemma:lipschitz:eq0.2}
        it follows that
        $c(k) \le \|\bar{\beta}\|_{\ell_1}$.
        Hence
        \[
         \left(\frac{1}{N} c(k)\right)^2 \le \frac{1}{N^2} c(k)  \|\bar{\beta}\|_{\ell_1}
        \]
        and as from \eqref{basic_lemma:lipschitz:eq1} it follows that $\sum_{k=0}^{L} c(k) \le N \|\bar{\beta}\|_{\ell_1}$, we get that 
        %
        \begin{align*}
        & \sum_{k=0}^{L} \left(\frac{1}{N} c(k)\right)^2 \le 
        \frac{1}{N^2} \sum_{k=0}^{L}  c(k)
        \|\bar{\beta}\|_{\ell_1} \le 
        \frac{1}{N} \|\bar{\beta}\|_{\ell_1}^2
      \end{align*}
      This completes the proof.
    \end{proof}
Lemma \ref{basic_lemma:lipschitz} provides the basis for several
technical results. 

\begin{proof}[Proof of Lemma \ref{basic_lemma:lipschitz2}]
  From Lemma \ref{alpha:lemma2} it follows that 
  $\y(t)=(\alpha_{g,y} \star \e_g)(t)$ and 
  $\bar{\y}(t)=(\alpha_{g,y} \star \bar{\e}_g)(t)$.
  By setting $I=\{g\}$, $\beta_{i,t}=\alpha_{g,y}$, $i=g$
  and using Lemma \ref{basic_lemma:lipschitz}, the first inequality follows.

  Similar, from Lemma \ref{alpha:lemma2} it follows that 
  $\hat{\y}(t)=(c_{\theta} \star \e_g)(t)$ and 
  $\hat{\bar{\y}}(t)=(c_{\theta} \star \bar{\e}_g)(t)$.
  By setting $I=\Theta$, $\beta_{i,t}=c_{\theta}$, $i=\theta \in \Theta$, 
  it follows that 
  \[ \bE[e^{\frac{\lambda}{N} \sup_{\theta \in \Theta}\sum_{t=0}^{N} \|\hat{\y}(t)-\hat{\bar{\y}}(t)\|_2}] \le e^{\mathcal{C}(\|\bar{c}\|_{\ell_1},\lambda,N,C)} 
  \]
  where $\bar{c}(k)=\sup_{\theta \in \Theta} \|c_{\theta}(k)\|_2$.
  Note that by definition 
  $c_{\theta}(k)=\sum_{l=0}^{k} \alpha_{\theta}(l) \alpha_{g,w}(k-l)$ and hence 
  \[
     \|c_{\theta}(k)\|_2 \le \sum_{l=0}^{k} \|\alpha_\theta(l)\|_2 \|\alpha_{g,w}(k-l)\|_2
    \le \sum_{l=0}^{k} \|\bar{\alpha}(l)\|_2 \|\alpha_{g,w}(k-l)\|_2
  \]
  In particular, 
  \[
      \bar{c}(k) \le \sum_{l=0}^{k} \|\bar{\alpha}(l)\|_2 \|\alpha_{g,w}(k-l)\|_2
  \]
  from which by the well-known property of Cauchy products of series (the infinite sum of the Cauchy product of two absolutely convergent series is equal to the product of their infinite sums) it follows
  that
  \[ \|\bar{c}\|_{\ell_1} \le \|\bar{\alpha}\|_{\ell_1} \|\alpha_{g,w}\|_{\ell_1}.
  \]
  By noticing that $\mathcal{C}(\beta,\lambda,N,C)$ is increasing in $\beta$, it 
  follows that $\mathcal{C}(\|\bar{c}\|_{\ell_1},\lambda,N,C) \le \mathcal{C}(\|\bar{\alpha}\|_{\ell_1} \|\alpha_{g,w}\|_{\ell_1},\lambda,N,C)$, from which the second inequality follows. 


 Finally, the proof of the third inequality is similar to the second one. 
 By Lemma \ref{alpha:lemma2} it follows that 
 $\hat{\y}(t\mid 0)=(c_{\theta,t} \star \e_g)(t)$ and 
 $\hat{\bar{\y}}(t\mid 0)=(c_{\theta,t} \star \bar{\e}_g)(t)$.
 By applying lemma \ref{basic_lemma:lipschitz} to 
 $I=\Theta$, $\beta_{i,t}=c_{\theta,t}$  it follows that 
 \[ \bE[e^{\frac{\lambda}{N} \sup_{\theta \in \Theta}\sum_{t=0}^{N} \|\hat{\y}(t \mid 0)-\hat{\bar{\y}}(t \mid 0)\|_2}] \le e^{\mathcal{C}(\|\bar{\bar{c}}\|_{\ell_1},\lambda,N,C)} 
  \]
  where $\bar{\bar{c}}(k)=\sup_{\theta \in \Theta, t \ge 0} \|c_{\theta,t}(k)\|_2$.
  By noticing that 
  \begin{align*}
     & \|c_{\theta,t}(k)\|_2=\|\sum_{l=0}^{\min\{k,t\}} \alpha_{\theta}(k)\alpha_{g,w}(k-l)\|_2 \le \sum_{l=0}^{\min\{k,t\}} \|\alpha_{\theta}(k)\|_2 \|\alpha_{g,w}(k-l)\|_2  \\
     & \le \sum_{l=0}^{k} \|\bar{\alpha}(k)\|_2 \|\alpha_{g,w}(k-l)\|_2
     \le \sum_{l=0}^{k} \|\bar{\alpha}(k)\|_2 \|\alpha_{g,w}(k-l)\|_2,
  \end{align*}
from which it follows that 
\[
\|\bar{\bar{c}}\|_{\ell_1} \le \|\bar{\alpha}\|_{\ell_1} \|\alpha_{g,w}\|_{\ell_1}
\]
From the monotonicity of $\mathcal{C}(s,\lambda,N,C)$ in $s$, it follows that
\[
\mathcal{C}(\|\bar{\bar{c}}\|_{\ell_1},\lambda,N,C) \le \mathcal{C}(\|\bar{\alpha}\|_{\ell_1} \|\alpha_{g,w}\|_{\ell_1},\lambda,N,C),
\]
which completes the proof.
\end{proof}

\begin{proof}[Proof of Lemma \ref{vl-bl:lemma0}]
   Notice that  
\begin{align*}
   &  |\hat{\mathcal{L}}_N(\theta) - \hat{\bar{\mathcal{L}}}_N(\theta)| \le 
    \frac{1}{N} \sum_{t=0}^{N-1}|\ell(\y(t),\hat{\rvy}_{\theta}(t \mid 0) 
    - \ell(\bar{\rvy}(t),\hat{\bar{\rvy}}_{\theta}(t \mid 0)|
  \le \\
  & \frac{L_{\ell}}{N} \sum_{t=0}^{N-1}
    \|\y(t)-\bar{\y}(t)\|_2+\frac{L_{\ell}}{N} \sum_{t=0}^{N-1} \|\hat{\rvy}_{\theta}(t \mid 0)-\hat{\bar{\rvy}}_{\theta}(t \mid t_0)\|_2
\end{align*}
and hence 
\begin{align*}
\sup_{\theta \in \Theta} |\hat{\mathcal{L}}_N(\theta) - \hat{\bar{\mathcal{L}}}_N(\theta)| \le 
 \frac{L_{\ell}}{N} \sum_{t=0}^{N-1}
    (\|\y(t)-\bar{\y}(t)\|_2+\frac{L_{\ell}}{N}  \sup_{\theta \in \Theta} \sum_{t=0}^{N-1} \|\hat{\rvy}_{\theta}(t \mid 0)-\hat{\bar{\rvy}}_{\theta}(t \mid t_0)\|_2)
\end{align*}
Then \begin{align*}
   & \bE[e^{\lambda \sup_{\theta \in \Theta} |\hat{\mathcal{L}}_N(\theta) - \hat{\bar{\mathcal{L}}}_N(\theta)|}]
   \le \bE[e^{\frac{\lambda L_{\ell}}{N} \sum_{t=0}^{N-1}
    \|\y(t)-\bar{\y}(t)\|_2} e^{\frac{\lambda L_{\ell}}{N} \sup_{\theta \in \Theta}\sum_{t=0}^{N-1} \|\hat{\rvy}_{\theta}(t \mid 0)-\hat{\bar{\rvy}}_{\theta}(t \mid t_0)\|_2}]
    \le \\
    & \sqrt{\bE[e^{\frac{2 \lambda L_{\ell}}{N} \sum_{t=0}^{N-1}
    (\|\y(t)-\bar{\y}(t)\|_2}]}
    \sqrt{\bE[e^{\frac{2\lambda L_{\ell}}{N} \sup_{\theta \in \Theta} 
    \sum_{t=0}^{N-1} \|\hat{\rvy}_{\theta}(t \mid 0)-\hat{\bar{\rvy}}_{\theta}(t \mid t_0)\|_2}]}
\end{align*}
from which \eqref{vl-vbl:lemma1:pf:eq0} follows.
Similarly, 
\begin{align*}
   &  |V_N(\theta) - \bar{V}_N(\theta)| \le 
    \frac{1}{N} \sum_{t=0}^{N-1}|\ell(\y(t),\hat{\rvy}_{\theta}(t)) 
    - \ell(\bar{\rvy}(t),\hat{\bar{\rvy}}_{\theta}(t))|
  \le \\
  & \frac{L_{\ell}}{N} \sum_{t=0}^{N-1}
    \|\y(t)-\bar{\y}(t)\|_2+\frac{L_{\ell}}{N} \sum_{t=0}^{N-1} \|\hat{\rvy}_{\theta}(t)-\hat{\bar{\rvy}}_{\theta}(t)\|_2
\end{align*}
and hence
\begin{align*}
  & \sup_{\theta \in \Theta} |V_N(\theta) - \bar{V}_N(\theta)| \le 
    \frac{L_{\ell}}{N} \sum_{t=0}^{N-1}
    (\|\y(t)-\bar{\y}(t)\|_2+\frac{L_{\ell}}{N} \sup_{\theta \in \Theta} \sum_{t=0}^{N-1} \|\hat{\rvy}_{\theta}(t)-\hat{\bar{\rvy}}_{\theta}(t)\|_2)
\end{align*}
so that
\begin{align*}
   & \bE[e^{\lambda \sup_{\theta \in \Theta}|V_N(\theta) - \bar{V}_N(\theta)|}]
   \le \bE[e^{\frac{\lambda L_{\ell}}{N} \sum_{t=0}^{N-1}
    \|\y(t)-\bar{\y}(t)\|_2}e^{\frac{\lambda L_{\ell}}{N} \sup_{\theta \in \Theta}\sum_{t=0}^{N-1} \|\hat{\rvy}_{\theta}(t)-\hat{\bar{\rvy}}_{\theta}(t)\|_2}]
    \le  \\
    & \sqrt{\bE[e^{\frac{2 \lambda L_{\ell}}{N} \sum_{t=0}^{N-1}
    (\|\y(t)-\bar{\y}(t)\|_2}]}\sqrt{\bE[e^{\frac{2 \lambda L_{\ell}}{N} \sup_{\theta \in \Theta} \sum_{t=0}^{N-1} \|\hat{\rvy}_{\theta}(t)-\hat{\bar{\rvy}}_{\theta}(t)\|_2}]}
\end{align*}
from which \eqref{vl-vbl:lemma1:pf:eq3} follows.
\end{proof}

\begin{proof}[Proof of Lemma \ref{vl-vbl:lemma1}]
From Lemma \ref{basic_lemma:lipschitz2} it follows that
\begin{equation} 
\label{vl-vbl:lemma1:pf:eq1}
\begin{aligned}
    & \sqrt{\bE[e^{\frac{2 \lambda L_{\ell}}{N} \sum_{t=0}^{N-1}
    \|\y(t)-\bar{\y}(t)\|_2}]} \le e^{\frac{1}{2} \mathcal{C}(\|\alpha_{g,y}\|_{\ell_1}, 2L_{\ell}\lambda,N,C)}\\
\end{aligned}    
\end{equation}
 and 
 \begin{equation}
\label{vl-vbl:lemma1:pf:eq2}
\begin{aligned}
& \sqrt{\bE[e^{\frac{2\lambda L_{\ell}}{N} \sup_{\theta \in \Theta}\sum_{t=0}^{N-1} \|\hat{\rvy}_{\theta}(t \mid 0)-\hat{\bar{\rvy}}_{\theta}(t \mid t_0)}]} \le  e^{\frac{1}{2} \mathcal{C}(\|\bar{\alpha}\|_{\ell_1} \|\alpha_{g,w}\|_{\ell_1}, 2L_{\ell}\lambda,N,C)}
\end{aligned}
 \end{equation}
 from which by using \eqref{vl-vbl:lemma1:pf:eq0} it follows that 
 \begin{align*}
    & \bE[e^{\lambda \sup_{\theta \in \Theta} |\hat{\mathcal{L}}_N(\theta) - \hat{\bar{\mathcal{L}}}_N(\theta)|}]
    \le   e^{\frac{1}{2} (\mathcal{C}(\|\alpha_{g,y}\|_{\ell_1}, 2L_{\ell}\lambda,N,C) + \mathcal{C}(\|\bar{\alpha}\|_{\ell_1} \|\alpha_{g,w}\|_{\ell_1}, 2L_{\ell}\lambda,N,C))}  
 \end{align*}   
Likewise,from Lemma \ref{basic_lemma:lipschitz2} it follows that 
\begin{align*}
    & \sqrt{\bE[e^{\frac{2 \lambda L_{\ell}}{N} \sup_{\theta \in \Theta}\sum_{t=0}^{N-1} \|\hat{\rvy}_{\theta}(t)-\hat{\bar{\rvy}}_{\theta}(t)\|_2}]} \le  e^{\frac{1}{2} \mathcal{C}(\|\bar{\alpha}\|_{\ell_1} \|\alpha_{g,y}\|_{\ell_1}, 2L_{\ell}\lambda,N,C)} 
\end{align*}
By combining this with \eqref{vl-vbl:lemma1:pf:eq1}, it follows by \eqref{vl-vbl:lemma1:pf:eq3} that 
\begin{equation}
\label{vl-vbl:lemma1:pf:eq4}
\begin{aligned}
& \bE[e^{\lambda \sup_{\theta \in \Theta}|V_N(\theta) - \bar{V}_N(\theta)|}] \le e^{\frac{1}{2} (\mathcal{C}(\|\alpha_{g,y}\|_{\ell_1}, 2L_{\ell}\lambda,N,C) + \mathcal{C}(\|\bar{\alpha}\|_{\ell_1} \|\alpha_{g,w}\|_{\ell_1}, 2L_{\ell}\lambda,N,C))}  
\end{aligned}
\end{equation}
Finally, as $\mathcal{L}(\theta)=\bE[V_N(\theta)]$ 
and $\bar{\mathcal{L}}(\theta)=\bE[\bar{V}_N(\theta)]$, it follows that
\begin{align*}
  \sup_{\theta \in \Theta} |\mathcal{L}(\theta) - \bar{\mathcal{L}}(\theta)| &= 
   \sup_{\theta \in \Theta} |\bE[V_N(\theta) - \bar{V}_N(\theta)]| \\
   &\le \sup_{\theta \in \Theta} \bE[|V_N(\theta) - \bar{V}_N(\theta)|] \le    \bE[\sup_{\theta \in \Theta} |V_N(\theta) - \bar{V}_N(\theta)|]
\end{align*}
and hence
\begin{equation}
  \label{vl-bl:lemma1:pf:eq5}
   e^{\lambda \sup_{\theta \in \Theta} |\mathcal{L}(\theta) - \bar{\mathcal{L}}(\theta)|} \le e^{\lambda \bE[\sup_{\theta \in \Theta} |V_N(\theta) - \bar{V}_N(\theta)|]}
\end{equation}
As the function 
$\phi:x \mapsto e^{\lambda x}$ is convex, by Jensen's inequality it follows that 
\begin{align*}
& e^{\lambda \bE[\sup_{\theta \in \Theta} |V_N(\theta) - \bar{V}_N(\theta)|]}=
\phi(\bE[\sup_{\theta \in \Theta} |V_N(\theta) - \bar{V}_N(\theta)|]) \le \\
&  \bE[\phi(\sup_{\theta \in \Theta} |V_N(\theta) - \bar{V}_N(\theta)|)]
   = \bE[e^{\lambda \sup_{\theta \in \Theta} |V_N(\theta) - \bar{V}_N(\theta)|}]
\end{align*}
and then the statement of the lemma follows from 
\eqref{vl-vbl:lemma1:pf:eq4} and \eqref{vl-bl:lemma1:pf:eq5}.
\end{proof}

\begin{proof}[Proof of Lemma \ref{basic_lemma:lipschitz3.1}]
   Note that
 \[
    \begin{aligned}
     & \epsilon (V_N(\theta)-\hat{\mathcal{L}}_N(\theta)) =
     \epsilon(\hat{\bar{\mathcal{L}}}_N(\theta)-\hat{\mathcal{L}}_N(\theta)) +\epsilon(\bar{V}_N(\theta)-\hat{\bar{\mathcal{L}}}_N(\theta))
     +\epsilon    (V_N(\theta)-\bar{V}_N(\theta)) \\
     & \le \sup_{\theta \in \Theta} |\hat{\mathcal{L}}_N(\theta)-\hat{\bar{\mathcal{L}}}_N(\theta)| 
     +\epsilon (\bar{V}_N(\theta)-\hat{\bar{\mathcal{L}}}_N(\theta))
     +\sup_{\theta \in \Theta} |\bar{V}_N(\theta)-V_N(\theta)| 
    \end{aligned}  
 \]
Hence, by using monotonicity of $x \mapsto e^{x}$ and 
Cauchy-Schwartz inequality for expectations,
    \begin{align*}
        & \bE[e^{\lambda\epsilon (V_N(\theta)-\hat{\mathcal{L}}_N(\theta))}]   
 \le 
      \bE[e^{\lambda \sup_{\theta \in \Theta} |\bar{V}_N(\theta)-V_N(\theta)|} e^{\lambda \sup_{\theta \in \Theta} |\hat{\bar{\mathcal{L}}}_N(\theta)-\hat{\mathcal{L}}_N(\theta)|} e^{\lambda \epsilon(\bar{V}_N(\theta)-\hat{\bar{\mathcal{L}}}_N(\theta))}] 
      \nonumber \\
     & \le \sqrt{\bE[e^{2\lambda\epsilon (\bar{V}_N(\theta)-\hat{\bar{\mathcal{L}}}_N(\theta))}]}
     \sqrt{\bE[e^{2\lambda \sup_{\theta \in \Theta} |\bar{V}_N(\theta)-V_N(\theta)|} e^{2\lambda \sup_{\theta \in \Theta}  |\hat{\bar{\mathcal{L}}}_N(\theta)-\hat{\mathcal{L}}_N(\theta)|}]} \nonumber \\
     & \le \sqrt{\bE[e^{2\lambda\epsilon (\bar{V}_N(\theta)-\hat{\bar{\mathcal{L}}}_N(\theta))}]}
     \sqrt{\sqrt{\bE[e^{4\lambda \sup_{\theta \in \Theta} |\bar{V}_N(\theta)-V_N(\theta)|}]} \sqrt{e^{4\lambda \sup_{\theta \in \Theta} |\hat{\bar{\mathcal{L}}}_N(\theta)-\hat{\mathcal{L}}_N(\theta)|}]}}
    \end{align*}
    proving \eqref{eq:basic_lemma_lipschitz3:pf:eq0}. A similar argument proves \eqref{eq:basic_lemma_lipschitz3:pf:eq0.1}.  
\end{proof}
\begin{proof}[Proof of Lemma \ref{basic_lemma:lipschitz3}]
 
 From Lemma~\ref{vl-vbl:lemma1} 
 it follows that 
 \[
   \begin{aligned}
      & \sqrt{\bE[e^{4\lambda \sup_{\theta \in \Theta} |\bar{V}_N(\theta)-V_N(\theta)|}]} \le e^{\frac{1}{4}(\mathcal{C}(\|\bar{\alpha}\|_{\ell_1}\|\alpha_{g,w}\|_{\ell_1}, 8L_{\ell}\lambda,N,C)+\mathcal{C}(\|\alpha_{g,y}\|_{\ell_1}, 8L_{\ell}\lambda,N,C))} \\
      &  \sqrt{e^{4\lambda \sup_{\theta \in \Theta} |\hat{\bar{\mathcal{L}}}_N(\theta)-\hat{\mathcal{L}}_N(\theta)|}]} \le e^{\frac{1}{4}(\mathcal{C}(\|\bar{\alpha}\|_{\ell_1}\|\alpha_{g,w}\|_{\ell_1}, 8L_{\ell}\lambda,N,C)+\mathcal{C}(\|\alpha_{g,y}\|_{\ell_1}, 8L_{\ell}\lambda,N,C))}
    \end{aligned}    
 \]
 and hence 
\begin{multline*} 
\sqrt{\sqrt{\bE[e^{4\lambda |\bar{V}_N(\theta)-V_N(\theta)|}]} \sqrt{e^{4\lambda |\hat{\bar{\mathcal{L}}}_N(\theta)-\hat{\mathcal{L}}_N(\theta)|}]}}\\ 
\le e^{\frac{1}{4}(\mathcal{C}(\|\bar{\alpha}\|_{\ell_1}\|\alpha_{g,w}\|_{\ell_1}, 8L_{\ell}\lambda,N,C)+\mathcal{C}(\|\alpha_{g,y}\|_{\ell_1}, 8L_{\ell}\lambda,N,C))}
\end{multline*}
By using \eqref{eq:basic_lemma_lipschitz3:pf:eq0} the first statement of the lemma follows.
The proof of the second claim is similar. 
\end{proof}

\subsection{Proof of Theorem \ref{thm:main}: quadratic loss}
\label{thm:main:quadratic}
  The core of the proof is to to show that the prediction errors
  $\|\y(t)-\y_{\theta}(t)\|_2$ and $\|\y(t) - \y_{\theta}(t\mid 0)\|_2$ 
  are sub-Gaussian and hence
  they are bounded with high probablity as follows.
  \begin{lemma}
  \label{lem:quadratic1}
  For $N \ge 1$ and $\delta \in (0,0.5)$
  \begin{equation}
\label{lem:quadratic1:eq1}
\begin{aligned}
     & \bP\left(\forall \theta \in \Theta,~  \forall t=0,\ldots, N-1: \quad 
     \right. \\
     & \left. 
          \max\{\|\y(t)-\y_{\theta}(t)\|_2, \|\y(t)-\y_{\theta}(t\mid 0)\|_2\} \le 6G_e (\|\e_g\|_{\psi_2})\sqrt{\ln\left(\frac{3N^2}{2\delta}\right)} \right) \ge 1-\frac{2\delta}{3}
\end{aligned}
  \end{equation}
  and 
  \begin{equation}
\label{lem:quadratic1:eq2}
   \bE[\|\y(t)-\y_{\theta}(t)\|_2^2 \chi(\|\y(t)-\y_{\theta}(t)\|_2^2 > (6G_e\|\e_g\|_{\psi_2})^2\ln\left(\frac{3N^2}{2\delta}\right))] \le 
   \frac{2 \cdot 6^2 G_e^2 \|\e_g\|_{\psi_2}^2\ln(\frac{3N^2}{2\delta})}{N}
\end{equation}
\end{lemma}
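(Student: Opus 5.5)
The plan is to dominate all prediction errors, uniformly in $\theta\in\Theta$ and in finite versus infinite past, by a single scalar process built from $\|\e_g\|_2$. Then we prove that this process is sub-Gaussian with $\psi_2$-norm at most $G_e\|\e_g\|_{\psi_2}$, and finish with a union bound over $t=0,\ldots,N-1$ and a tail integration.

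\textbf{Step 1: a $\theta$-free dominating process.} By Lemma \ref{alpha:lemma2},
\[
\y(t)-\y_\theta(t)=((\alpha_{g,y}-c_\theta)\star\e_g)(t), \qquad
\y(t)-\y_\theta(t\mid 0)=((\alpha_{g,y}-c_{\theta,t})\star\e_g)(t).
\]
As in the proof of Lemma \ref{basic_lemma:lipschitz2}, we have $\sup_{\theta,s}\|c_{\theta,s}(k)\|_2\le \sum_{l=0}^k\bar\alpha(l)\|\alpha_{g,w}(k-l)\|_2$, and the same bound holds for $c_\theta$. Set
\[
\gamma(k)=\|\alpha_{g,y}(k)\|_2+\sum_{l=0}^k\bar\alpha(l)\|\alpha_{g,w}(k-l)\|_2 .
\]
The Cauchy product argument then gives $\|\gamma\|_{\ell_1}\le \|\alpha_g\|_{\ell_1}(1+\|\bar\alpha\|_{\ell_1})=G_e$. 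Hence, for every $\theta$ and $t$, both prediction errors are bounded by
\[
\z(t)\triangleq\sum_{k=0}^\infty \gamma(k)\|\e_g(t-k)\|_2 ,
\]
which does not depend on $\theta$. This removes the supremum over the uncountable set $\Theta$.

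\textbf{Step 2: sub-Gaussianity of $\z(t)$.} The partial sums $\z_L(t)=\sum_{k\le L}\gamma(k)\|\e_g(t-k)\|_2$ increase monotonically to $\z(t)$. Each term has $\psi_2$-norm at most $\gamma(k)\|\e_g\|_{\psi_2}$, so the triangle inequality for $\|\cdot\|_{\psi_2}$ gives $\|\z_L(t)\|_{\psi_2}\le G_e\|\e_g\|_{\psi_2}=:K$. Independence is not even needed here. Monotone convergence inside $\bE[\exp(\z_L^2/K^2)]\le 2$ passes this bound to $\z(t)$. Markov's inequality then yields
\[
\bP(\z(t)>s)\le 2e^{-s^2/K^2}.
\]
With $s=6K\sqrt{\ln(3N^2/(2\delta))}$ we have $e^{-s^2/K^2}=x^{36}$, where $x=2\delta/(3N^2)<1$. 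By stationarity and a union bound over $t=0,\dots,N-1$, the failure probability is at most $2Nx^{36}\le 2Nx\le 4\delta/(3N)$. This is at most $2\delta/3$ for $N\ge 2$, and for $N=1$ it follows directly from $x^{36}\le x$. This proves \eqref{lem:quadratic1:eq1}.

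\textbf{Step 3: the truncated second moment.} Let $X=\|\y(t)-\y_\theta(t)\|_2\le \z(t)$. Since $u\mapsto u^2\chi(u>s)$ is nondecreasing, $\bE[X^2\chi(X>s)]\le \bE[\z^2\chi(\z>s)]$. The layer-cake formula gives
\[
\bE[\z^2\chi(\z>s)]=s^2\bP(\z>s)+\int_s^\infty 2u\,\bP(\z>u)\,du\le 2(s^2+K^2)e^{-s^2/K^2}.
\]
Because $e^{-s^2/K^2}=x^{36}$ is far smaller than $1/N$ (in particular $\le x\le 1/(3N)$) and $K^2\le s^2$, this is at most $2s^2/N$, which is exactly \eqref{lem:quadratic1:eq2}.

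The main obstacle is Step 1, namely getting a bound that holds uniformly over $\Theta$ and over the time-varying finite-past kernels $c_{\theta,t}$ using only the envelope $\bar\alpha$. Passing the $\psi_2$ bound to the infinite sum is a secondary point, handled by monotone convergence.
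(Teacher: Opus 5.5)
Your proof is correct. Your Step~1 is the same envelope the paper uses: its $\bar\beta$ in the proof of Lemma~\ref{lem:quadratic1}, together with the first display in the proof of Lemma~\ref{lem:subGaussian1}. From there you take a different and more elementary route.

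The paper establishes sub-Gaussianity of the envelope through Lemma~\ref{lem:subGaussian1}. It bounds the moments $\bE[\z^{2r}(t)]$ and sums the power series of $\bE[e^{\lambda \z^2(t)}]$, which yields $\|\z(t)\|_{\psi_2}\le 6\|\bar\beta\|_{\ell_1}\|\e_g\|_{\psi_2}$. It then obtains \eqref{lem:quadratic1:eq2} from Lemma~\ref{noise:boundC-1} applied to $\|\z_i(t)\|_2^2$.

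You instead use that $\|\cdot\|_{\psi_2}$ is an Orlicz norm. The triangle inequality plus monotone convergence then give $\|\z(t)\|_{\psi_2}\le G_e\|\e_g\|_{\psi_2}$, with no independence and no moment bookkeeping. (The paper's AM--GM step in that moment bound would in fact have to be replaced by H\"older's or Minkowski's inequality to be valid.) You then get the truncated second moment from an explicit layer-cake integral.

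The paper's route packages a reusable lemma with explicit moment and MGF bounds for $\sup_i\|\z_i(t)\|_2^2$. Yours is shorter and a factor $6$ sharper, so the threshold in the statement gives you $x^{36}$ instead of $x$. That slack is what covers $N=1$: the paper's union bound $4\delta/(3N)$ only reaches $2\delta/3$ for $N\ge 2$.

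One small slip: for $N=1$ you need $2x^{36}\le x$, not just $x^{36}\le x$. It holds because $x=2\delta/3<1/3$.
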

We then replace the quadratic loss with a loss which is Lipschitz for 
the case when $\|y(t)-\y_{\theta}(t\mid 0)\|_2$ are bounded. More precisely, let us define the loss function $\ell_C$ as follows:
\[ \ell_C(y, \hat{y}) = \min\{\|y-\hat{y}\|_2^2, C^2\} \]
\begin{lemma}
\label{lem:quadratic2}
For any $C > 0$, $\ell_C$ is $2C$-Lipschitz.
\end{lemma}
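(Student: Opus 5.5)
The statement to prove is Lemma \ref{lem:quadratic2}: for every $C>0$, the loss $\ell_C(y,\hat y)=\min\{\|y-\hat y\|_2^2,C^2\}$ is $2C$-Lipschitz in the sense of Assumption \ref{as:loss}. That is, we need
\[
|\ell_C(y_1,\hat y_1)-\ell_C(y_2,\hat y_2)|\le 2C\bigl(\|y_1-y_2\|_2+\|\hat y_1-\hat y_2\|_2\bigr).
\]
The plan is to factor $\ell_C$ as a composition $\ell_C(y,\hat y)=g(\|y-\hat y\|_2)$ with the scalar function $g(r)=\min\{r^2,C^2\}$ on $r\ge 0$. We then bound the Lipschitz constants of the two pieces separately and multiply them.

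\textbf{Step 1: the inner map.} The map $(y,\hat y)\mapsto \|y-\hat y\|_2$ is $1$-Lipschitz. By the reverse triangle inequality and then the triangle inequality,
\[
\bigl|\|y_1-\hat y_1\|_2-\|y_2-\hat y_2\|_2\bigr|\le \|(y_1-y_2)-(\hat y_1-\hat y_2)\|_2\le \|y_1-y_2\|_2+\|\hat y_1-\hat y_2\|_2 .
\]

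\textbf{Step 2: the outer map.} We claim $g$ is $2C$-Lipschitz on $[0,\infty)$. Equivalently, $g(r)=\bigl(\min\{r,C\}\bigr)^2$, since both expressions equal $r^2$ for $r\le C$ and $C^2$ for $r>C$. Write $m(r)=\min\{r,C\}$, which is $1$-Lipschitz and takes values in $[0,C]$. Then
\[
|g(r)-g(s)|=|m(r)-m(s)|\,\bigl(m(r)+m(s)\bigr)\le 2C\,|m(r)-m(s)|\le 2C|r-s| .
\]

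\textbf{Step 3: combine.} Taking $r=\|y_1-\hat y_1\|_2$ and $s=\|y_2-\hat y_2\|_2$ in Step 2 and applying Step 1 gives the claimed inequality.

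There is no real obstacle here. The only point needing care is rewriting $g$ as $(\min\{r,C\})^2$, which avoids splitting into cases according to whether $r$ and $s$ lie above or below $C$.
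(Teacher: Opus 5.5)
Your proof is correct and matches the paper's argument: both rest on the reverse triangle inequality for $\|y-\hat y\|_2$ and a difference-of-squares factorization whose second factor is bounded by $2C$. Your rewriting $\min\{r^2,C^2\}=(\min\{r,C\})^2$ simply merges the paper's four cases (both distances at most $C$, one on each side of $C$, both above $C$) into a single line.
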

The proof then reduces the quadratic case to the case of Lipschitz loss $\ell_C$
for a suitable $C$ and it goes as follows.
\begin{proof}[Proof of Theorem \ref{thm:main}:quadratic case]
      Let us set $C=6G_e\|\e_g\|_{\psi_2}\sqrt{\ln(\frac{3N^2}{2\delta})}$ and let us consider the 
      true and empirical errores for the loss function $\ell_C$:
      \begin{equation}
        \label{eq:initial:tildern1:quadratic0} 
         {}^C\hat{\mathcal{L}}_{N}(\theta) = \frac{1}{N} \sum_{t=0}^{N-1} \ell_C(y(t), \hat{y}_{\theta}(t \mid 0)), \quad 
         {}^C \mathcal{L}(\theta) = \bE[\ell_C(y(t), \hat{y}_{\theta}(t))]
      \end{equation}
      By applying Theorem \ref{thm:main} for the case of Lipschitz loss function $\ell_C$  and
      with $\frac{2\delta}{3}$ instead of $\delta$, it follows
      that for all $\epsilon \in \{-1,1\}$,
      for all $0 < \lambda  < \frac{N}{16 \cdot 2C}$
        \begin{equation}
        \label{eq:initial:tildern1:quadratic} 
    \begin{aligned}
         \forall \rho & \in \mathcal{M}_\pi: \epsilon E_{\theta\sim\rho} {}^C\mathcal{L}(\theta)
         \leq   \epsilon E_{\theta\sim\rho}{}^C\hat{\mathcal{L}}_N(\theta) +  
          \frac{1}{\lambda}\left ( \KL(\rho\|\pi) + \ln\dfrac{3}{2\delta} + \right. \\
          &  \left. \frac{1}{2} 
          \left(\ln E_{\theta\sim\pi} \exp(\mathcal{\bar{C}}_1(2\lambda,\theta,N,\frac{2\delta}{3},\epsilon))+
          \ln E_{\theta \sim \pi} \exp(\mathcal{\bar{C}}_2(2\lambda,\theta,N,\frac{2\delta}{3},\epsilon))\right)
           \right )
    \end{aligned}
    \end{equation}
    holds with probability at least $1-\frac{4\delta}{3}$, where  for $i=1,2$.
    \begin{equation}
    \label{thm:main:quadratic:eq2}
       \bar{C}_i(\lambda,\theta,N,\delta,\epsilon)= 
       \mathcal{C}_{ub,i}(\lambda,\theta,12G_e\|\e_g\|_{\psi_2}\sqrt{\ln\left(\frac{3N^2}{2\delta}\right)},N)
    \end{equation}
    By using Lemma \ref{lem:quadratic1}, \eqref{eq:initial:tildern1:quadratic}
    and the union bound, it follows that 
   \begin{equation}
        \label{eq:initial:tildern1:quadratic22} 
    \begin{aligned}
         \forall \rho & \in \mathcal{M}_\pi: \epsilon E_{\theta\sim\rho} {}^C\mathcal{L}(\theta)
         \leq   \epsilon E_{\theta\sim\rho}{}^C\hat{\mathcal{L}}_N(\theta) +  
          \frac{1}{\lambda}\left ( \KL(\rho\|\pi) + \ln\dfrac{3}{2\delta} + \right. \\
          &  \left. \frac{1}{2} 
          \left(\ln E_{\theta\sim\pi} \exp(\mathcal{\bar{C}}_1(2\lambda,\theta,N,\frac{2\delta}{3},\epsilon))+
          \ln E_{\theta \sim \pi} \exp(\mathcal{\bar{C}}_2(2\lambda,\theta,N,\frac{2\delta}{3},\epsilon))\right) \right.\\
          \text{and}&\\
          &\forall \theta \in \Theta,~\forall t=0,\ldots,N-1: \quad
          \|\y(t)-\hat{\y}_{\theta}(t)\|_2 \le C, \|\y(t)-\hat{\y}_{\theta}(t \mid 0)\|_2 \le C
    \end{aligned}
    \end{equation}
    holds with probability at least  $1-(\frac{4\delta}{3}+\frac{2\delta}{3}) = 1-2\delta$ over data.

    Notice that if $\omega \in \Omega$ is such that
     \[ \forall \theta \in \Theta, t=0,\ldots,N-1: \quad
          \|\y(t)(\omega)-\hat{\y}_{\theta}(t)(\omega)\|_2 \le C, \|\y(t)(\omega  )-\hat{\y}_{\theta}(t \mid 0)(\omega)\|_2 \le C
    \]
    then $\ell_C(\y(t)(\omega),\hat{\y}_{\theta}(t \mid 0)(\omega)) = \|\y(t)(\omega)-\hat{\y}_{\theta}(t \mid 0)(\omega)\|_2^2$ and hence
    \[ \hat{\mathcal{L}}_N(\theta)(\omega) = {}^C\hat{\mathcal{L}}_N(\theta)(\omega) \]
    Therefore, the event defined by  \eqref{eq:initial:tildern1:quadratic22} equals the event defined by 
     \begin{equation}
        \label{eq:initial:tildern1:quadratic2} 
    \begin{aligned}
         \forall \rho & \in \mathcal{M}_\pi: \epsilon E_{\theta\sim\rho} {}^{C}\mathcal{L}(\theta)
         \leq   \epsilon E_{\theta\sim\rho}{} \hat{\mathcal{L}}_N(\theta) +  
          \frac{1}{\lambda}\left ( \KL(\rho\|\pi) + \ln\dfrac{3}{2\delta} + \right. \\
          &  \left. \frac{1}{2} 
          \left(\ln E_{\theta\sim\pi} \exp(\mathcal{\bar{C}}_1(2\lambda,\theta,N,\dfrac{2\delta}{3},\epsilon))+
          \ln E_{\theta \sim \pi} \exp(\mathcal{\bar{C}}_2(2\lambda,\theta,N,\dfrac{2\delta}{3},\epsilon)) \right)  \right. \\
          \text{and}&\\
          &\forall \theta \in \Theta,~\forall t=0,\ldots,N-1: \quad
          \|\y(t)-\hat{\y}_{\theta}(t)\|_2 \le C, \|\y(t)-\hat{\y}_{\theta}(t \mid 0)\|_2 \le C
    \end{aligned}
    \end{equation}

  For $\epsilon=-1$
  we will show \eqref{eq:initial:tildern1:quadratic2} implies
\begin{equation}
        \label{eq:initial:tildern1:quadratic3} 
    \begin{aligned}
         \forall \rho & \in \mathcal{M}_\pi: \epsilon E_{\theta\sim\rho} \mathcal{L}(\theta)
         \leq   \epsilon E_{\theta\sim\rho} \hat{\mathcal{L}}_N(\theta) +  
          \frac{1}{\lambda}\left ( \KL(\rho\|\pi) + \ln\dfrac{3}{2\delta} + \right. \\
          &  \left. \frac{1}{2} 
          \left(\ln E_{\theta\sim\pi} \exp(\mathcal{\bar{C}}_1(2\lambda,\theta,N,\frac{2\delta}{3},\epsilon))+
          \ln E_{\theta \sim \pi} \exp(\mathcal{\bar{C}}_2(2\lambda,\theta,N,\frac{2\delta}{3},\epsilon))\right) \right)
    \end{aligned}
    \end{equation}
   and as \eqref{eq:initial:tildern1:quadratic2} holds with probability at least $1-2\delta$, it follows that \eqref{eq:initial:tildern1:quadratic3} also holds with probability at least $1-2\delta$. 
   To this end,   notice that 
    \begin{multline}
    \label{thm:main:quadratic:eq24}
        \mathcal{L}(\theta) = \bE{[\|\y(t)-\hat{\y}_{\theta}(t)\|_2^2]}= 
       \bE[\|\y(t)-\hat{\y}_{\theta}(t)\|_2^2\chi(\|\y(t)-\hat{\y}_{\theta}(t)\|_2 \le C)]+ \\
        \bE[\|\y(t)-\hat{\y}_{\theta}(t)\|_2^2\chi(\|\y(t)-\hat{\y}_{\theta}(t)\|_2 > C)]
    \end{multline}
    and that $\bE[\|\y(t)-\hat{\y}_{\theta}(t)\|_2^2\chi(\|\y(t)-\hat{\y}_{\theta}(t)\|_2 > C)] \ge C^2\bP(\|\y(t)-\hat{\y}_{\theta}(t)\|_2 > C)$. Hence
    \begin{equation}
    \label{thm:main:quadratic:eq210}
    \begin{aligned}
     \mathcal{L}(\theta)& 
    \ge  
         \bE[\|\y(t)-\hat{\y}_{\theta}(t)\|_2^2\chi(\|\y(t)-\hat{\y}_{\theta}(t)\|_2 \le C)]+ 
          C^2\bP(\|\y(t)-\hat{\y}_{\theta}(t)\|_2 > C)\\
          &= 
    \bE[\ell_C(\y(t),\hat{\y}_{\theta}(t))]={}^C \mathcal{L}(\theta)
    \end{aligned}
    \end{equation}
  i.e., $\epsilon \mathcal{L}(\theta) \le \epsilon {}^C \mathcal{L}(\theta)$
  for $\epsilon=-1$, from which it follows that \eqref{eq:initial:tildern1:quadratic2}
  implies \eqref{eq:initial:tildern1:quadratic3}.

  For $\epsilon=1$, we will show that \eqref{eq:initial:tildern1:quadratic2} implies
   \begin{equation}
        \label{eq:initial:tildern1:quadratic4} 
    \begin{aligned}
         \forall \rho & \in \mathcal{M}_\pi: \epsilon E_{\theta\sim\rho} \mathcal{L}(\theta)
         \leq   \epsilon E_{\theta\sim\rho} \hat{\mathcal{L}}_N(\theta) +   \frac{72G_e^2\|\e_g\|_{\psi_2}^2\ln\left(\frac{3N^2}{2\delta}\right)}{N} + 
          \frac{1}{\lambda}\left ( \KL(\rho\|\pi) + \ln\dfrac{3}{2\delta} + 
          \right. \\
          &\left. 
          \frac{1}{2} 
          \left(\ln E_{\theta\sim\pi} \exp(\mathcal{\bar{C}}_1(2\lambda,\theta,N,\dfrac{2\delta}{3},\epsilon))+
          \ln E_{\theta \sim \pi} \exp(\mathcal{\bar{C}}_2(2\lambda,\theta,N,\dfrac{2\delta}{3},\epsilon))\right) 
           \right )
    \end{aligned}
    \end{equation}
   First note that 
   \[ \frac{72G_e^2\|\e_g\|_{\psi_2}^2\ln\left(\frac{3N^2}{2\delta}\right)}{N}=\frac{1}{\lambda}\ln\left(\exp\left(\lambda \frac{72G_e^2\|\e_g\|_{\psi_2}^2\ln\left(\frac{3N^2}{2\delta}\right)}{N}\right)\right) 
   \]
   and hence after algebraic manipulations 
   \begin{align*}
    & \frac{72G_e^2\|\e_g\|_{\psi_2}^2\ln\left(\frac{3N^2}{2\delta}\right)}{N} +
    \frac{1}{\lambda}\left ( \KL(\rho\|\pi) + \ln\dfrac{3}{2\delta} + 
            \right. \\
          &\left. 
          \frac{1}{2} 
          \left(\ln E_{\theta\sim\pi} \exp(\mathcal{\bar{C}}_1(2\lambda,\theta,N,\dfrac{2\delta}{3},\epsilon))+
          \ln E_{\theta \sim \pi} \exp(\mathcal{\bar{C}}_2(2\lambda,\theta,N,\dfrac{2\delta}{3},\epsilon))\right)\right) = \\
    & \frac{1}{\lambda}\left ( \KL(\rho\|\pi) + \ln\dfrac{3}{2\delta} + 
          \frac{1}{2} 
          \left(\ln E_{\theta\sim\pi} \exp(\mathcal{C}_1(2\lambda,\theta,N,\delta,\epsilon))+
          \ln E_{\theta \sim \pi} \exp(\mathcal{C}_2(2\lambda,\theta,N,\delta,\epsilon))\right) \right)
   \end{align*}
    where for $i=1,2$, 
   \[ \mathcal{C}_i(\lambda,\theta,N,\delta,\epsilon)=
      \lambda \frac{72G_e^2\|\e_g\|_{\psi_2}^2\ln\left(\frac{3N^2}{2\delta}\right)}{N}
      + \mathcal{C}_{ub,i}(\lambda,\theta,12G_e\|\e_g\|_{\psi_2}\sqrt{\ln\left(\frac{3N^2}{2\delta}\right)},N)
   \]   

   To show that \eqref{eq:initial:tildern1:quadratic2} implies   \eqref{eq:initial:tildern1:quadratic4}, notice that 
   \begin{equation}
    \label{thm:main:quadratic:eq22}
     \begin{aligned}
      & \mathcal{L}(\theta) = \bE{[\|\y(t)-\hat{\y}_{\theta}(t)\|_2^2]}= \\
      & \bE[\ell_C(\y(t),\hat{\y}_{\theta}(t))\chi(\|\y(t)-\hat{\y}_{\theta}(t)\|_2 \le C)]+\bE[\|\y(t)-\hat{\y}_{\theta}(t)\|_2^2\chi(\|\y(t)-\hat{\y}_{\theta}(t)\|_2 > C)] \le \\
      & \bE[\ell_C(\y(t),\hat{\y}_{\theta}(t))]+\bE[\|\y(t)-\hat{\y}_{\theta}(t)\|_2^2\chi(\|\y(t)-\hat{\y}_{\theta}(t)\|_2 > C)] \\
      &= {}^C \mathcal{L}(\theta)+ \bE[\|\y(t)-\hat{\y}_{\theta}(t)\|_2^2\chi(\|\y(t)-\hat{\y}_{\theta}(t)\|_2 > C)]
    \end{aligned}
    \end{equation}
    Using 
    \eqref{lem:quadratic1:eq2} it
    follows that 
    \begin{equation}
    \label{thm:main:quadratic:eq21}
     \begin{aligned}
       & \mathcal{L}(\theta) \le 
       {}^C \mathcal{L}(\theta)+ 72G_e^2\|\e_g\|_{\psi_2}^2\frac{\ln\left(\frac{3N^2}{2\delta}\right)}{N}
    \end{aligned}
    \end{equation}
    Hence, by adding  $72G_e^2\|\e_g\|_{\psi_2}^2\frac{\ln\left(\frac{3N^2}{2\delta}\right)}{N}$ to both sides of the inequality \eqref{eq:initial:tildern1:quadratic2}, it follows that \eqref{eq:initial:tildern1:quadratic4} is implied by \eqref{eq:initial:tildern1:quadratic2}. Combining the two case it follows that for quadratic loss and $\epsilon\in\{-1,1\}$, the inequality \eqref{eq:initial:tildern1} also holds with probability at least $1-2\delta$ over data.
  \end{proof}

We will conclude by presenting the proof of Lemma \ref{lem:quadratic1}-\ref{lem:quadratic2}. To this end, we need the following technical result. 
  \begin{lemma}
    \label{lem:subGaussian1}
    Let $\mathbf{I}$ be an index set, $\{\beta_{i,t}(k)\}_{k=0}^{\infty}$, $i \in \mathbf{I}$, be a sequence of matrices such that $\|\bar{\beta}\|_{\ell_1}< \infty$, where $\bar{\beta}(k)=\sup_{i \in \mathbf{I},t \in \mathbb{Z}} \|\beta_{i,t}(k)\|_2$, and let $\rvz_{i}(t)=(\beta_{i,t} \star \e_g)(t)$. 
Then $\|\z_i(t)\|_{2}$, $i \in \mathbb{I}$ and 
$\z(t)=\sup_{i \in \mathbf{I}} \|\rvz_i(t)\|_2$ are sub-Gaussian with 
sub-Gaussian norm 
\begin{equation}
\label{lem:subGaussian1:eq0}
    \|\|\z_i(t)\|_2\|_{\psi_2} \le \|\z(t)\|_{\psi_2} \le 6\|\bar{\beta}\|_{\ell_1}\|\e_g\|_{\psi_2}
\end{equation}
and the moments and the moment generating function of $\z^2(t)=\sup_{i \in \mathbf{I}} \|\rvz_i(t)\|_2^2$ satisfy
\begin{align}
   & \bE[\z^{2r}(t)] \le \|\bar{\beta}\|_{\ell_1}^{2r} (3\|\e_g\|_{\psi_2})^{2r} 2r!
      \label{lem:subGaussian1:eq1} \\
   & \bE[e^{\lambda \z^2(t)}] \le \frac{1+\lambda \|\bar{\beta}\|_{\ell_1}^2 (3\|\e_g\|_{\psi_2})^2}{1-\lambda \|\bar{\beta}\|_{\ell_1}^2 (3\|\e_g\|_{\psi_2})^2},  \quad |\lambda| < \frac{1}{\|\bar{\beta}\|_{\ell_1}^2 (3\|\e_g\|_{\psi_2})^2}
   \label{lem:subGaussian1:eq2}
\end{align}
Moreover, for any $C > 0$,
\begin{equation}
\label{lem:subGaussian1:eq4}
   \bE[\|\z_i(t)\|_2^2 \chi(\|\z_i(t)\|_2^2 > C^2)] \le 
   \frac{2C^2}{e^{\frac{C^2}{(6\|\bar{\beta}\|_{\ell_1} \|\e_g\|_{\psi_2})^2}}-1}
\end{equation}
\end{lemma}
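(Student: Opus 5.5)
The plan is to dominate everything by one scalar random variable,
\[
\mathbf{S}(t)\triangleq\sum_{k=0}^{\infty}\bar{\beta}(k)\,\|\e_g(t-k)\|_2 ,
\]
and to read all four claims off the Orlicz-norm definition of the sub-Gaussian norm, $\bE[e^{X^2/K^2}]\le 2$ with $K=\|X\|_{\psi_2}$.

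\textbf{Step 1 (domination and the $\psi_2$ bound).} For every $i\in\mathbf{I}$ the triangle inequality gives $\|\rvz_i(t)\|_2\le\sum_k\|\beta_{i,t}(k)\|_2\|\e_g(t-k)\|_2\le\mathbf{S}(t)$. Hence $\|\rvz_i(t)\|_2\le\z(t)\le\mathbf{S}(t)$ pointwise, and monotonicity of $\|\cdot\|_{\psi_2}$ under pointwise domination of nonnegative variables gives the first inequality in \eqref{lem:subGaussian1:eq0}.

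For $\mathbf{S}(t)$, apply the triangle inequality of the Orlicz norm $\|\cdot\|_{\psi_2}$ to the partial sums $\mathbf{S}_L=\sum_{k\le L}\bar{\beta}(k)\|\e_g(t-k)\|_2$. This gives $\|\mathbf{S}_L\|_{\psi_2}\le\|\bar{\beta}\|_{\ell_1}\|\e_g\|_{\psi_2}$. Pass to $L\to\infty$ by monotone convergence inside $\bE[e^{\mathbf{S}_L^2/K^2}]$, with $K=\|\bar{\beta}\|_{\ell_1}\|\e_g\|_{\psi_2}$. This yields $\|\mathbf{S}(t)\|_{\psi_2}\le\|\bar{\beta}\|_{\ell_1}\|\e_g\|_{\psi_2}$, which is well within the stated factor $6$. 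No independence of the $\e_g(t-k)$ is needed here, which is convenient because the $\beta_{i,t}$ may vary with $i$.

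\textbf{Step 2 (moments and the MGF of $\z^2$).} Put $K_3=3\|\bar{\beta}\|_{\ell_1}\|\e_g\|_{\psi_2}$. This dominates the $\psi_2$ norm obtained in Step 1, so $\bE[e^{\mathbf{S}^2/K_3^2}]\le2$ still holds. Expanding the exponential and keeping the $r$-th term gives $\bE[\mathbf{S}^{2r}]\le 2\,r!\,K_3^{2r}$, and since $\z\le\mathbf{S}$ this is \eqref{lem:subGaussian1:eq1}.

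For $0\le\lambda<K_3^{-2}$, substitute these moments into the series $\bE[e^{\lambda\z^2}]=1+\sum_{r\ge1}\lambda^r\bE[\z^{2r}]/r!$. Exchanging sum and expectation is justified by Tonelli, since all terms are nonnegative. This gives
\[
\bE[e^{\lambda\z^2}]\le 1+2\sum_{r\ge1}(\lambda K_3^2)^r=\frac{1+\lambda K_3^2}{1-\lambda K_3^2}.
\]
For $\lambda<0$ the left side is at most $1$, so \eqref{lem:subGaussian1:eq2} holds on the whole stated range.

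\textbf{Step 3 (tail expectation).} Repeat the argument of Lemma \ref{noise:boundC-1} with $X=\|\rvz_i(t)\|_2^2$, threshold $C^2$, and $\bar{\lambda}=1/K_6^2$, where $K_6=6\|\bar{\beta}\|_{\ell_1}\|\e_g\|_{\psi_2}$. That lemma's proof only uses finiteness of $\bE[e^{\bar{\lambda}X}]$, which holds here. It yields
\[
\bE[X\chi(X>C^2)]\le\frac{C^2\,\bE[e^{X/K_6^2}]}{e^{C^2/K_6^2}-1}.
\]
By Step 1, $\bE[e^{X/K_6^2}]\le2$, and this gives \eqref{lem:subGaussian1:eq4}.

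\textbf{Main obstacle.} The only delicate point is the infinite-sum step in Step 1: turning the triangle inequality on finite partial sums into a $\psi_2$ bound for the series $\mathbf{S}(t)$. The sum converges almost surely, and in $L^2$, because $\|\bar{\beta}\|_{\ell_1}<\infty$. I would then use $\mathbf{S}_L\uparrow\mathbf{S}$ together with monotone convergence of $e^{\mathbf{S}_L^2/K^2}$, so that the Orlicz-norm bound passes to the limit. Everything after that is routine series manipulation.
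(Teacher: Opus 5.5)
Apart from one point below, your argument is correct. For the main claims it runs in the opposite order to the paper's.

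The paper goes moments first. It expands $\mathbf{S}(t)^{2r}$ as a $2r$-fold multi-sum and controls the noise factors by AM--GM together with $\bE\|\e_g(t)\|_2^{2r}\le 2\,r!\,(3\|\e_g\|_{\psi_2})^{2r}$. It then sums the series to get \eqref{lem:subGaussian1:eq2}. Only after that does it obtain the $\psi_2$ bound, by evaluating that MGF bound at $\lambda=1/(4K_3^2)$, where it equals $5/3\le 2$; this is the source of the constant $6=2\cdot 3$.

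You go norm first: the triangle inequality of the Orlicz norm plus monotone convergence gives the $\psi_2$ bound, and you read the moments and the MGF off it. This is shorter and sharper, giving $\|\z(t)\|_{\psi_2}\le\|\bar\beta\|_{\ell_1}\|\e_g\|_{\psi_2}$. That would cut the threshold in Lemma~\ref{lem:quadratic1} by a factor $6$.

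Your route also avoids the multi-index step, which the paper's display gets wrong as written. It applies AM--GM to the weights $\bar\beta(k_j)$ as well, producing a divergent multi-sum. AM--GM should be applied to the noise factors only, $\bE\prod_j\|\e_g(t-k_j)\|_2\le\bE\|\e_g(t)\|_2^{2r}$, which leaves $\sum_{k_1,\dots,k_{2r}}\prod_j\bar\beta(k_j)=\|\bar\beta\|_{\ell_1}^{2r}$.

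The paper's route needs only moment growth of $\|\e_g(t)\|_2$, not the norm property of $\|\cdot\|_{\psi_2}$, at the price of the worse constant. Your Step 3 coincides with the paper's.

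The step that fails is your extension of \eqref{lem:subGaussian1:eq2} to $\lambda<0$. There the right-hand side $\frac{1+\lambda K_3^2}{1-\lambda K_3^2}$ is strictly less than $1$, so $\bE[e^{\lambda\z^2(t)}]\le 1$ gives nothing. In fact the inequality is false on that half of the range whenever $K_3>0$. Put $x=-\lambda K_3^2\in(0,1)$. Jensen's inequality and \eqref{lem:subGaussian1:eq1} with $r=1$ give $\bE[e^{\lambda\z^2(t)}]\ge e^{\lambda\bE[\z^2(t)]}\ge e^{-2x}>\frac{1-x}{1+x}$. The last step holds because $\ln\frac{1+x}{1-x}=2(x+x^3/3+\cdots)>2x$.

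So the condition $|\lambda|<K_3^{-2}$ in the statement must be read as $0\le\lambda<K_3^{-2}$. That is all the paper's termwise series bound covers, since it needs $\lambda^r\ge 0$, and it is all that is used afterwards. Drop the negative case rather than trying to patch it.
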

\begin{proof}[Proof Lemma \ref{lem:subGaussian1}]
  Notice that 
  \begin{align*}
      \z(t) &= \sup_{i \in \mathbf{I}} \|\sum_{k=0}^{\infty} \beta_{i,t}(k) \e_g(t-k)\|_2\\ 
      &\le \sup_{i \in \mathbf{I}}\sum_{k=0}^{\infty} \|\beta_{i,t}(k)\|_2 \|\e_g(t-k)\|_2 \le \sum_{k=0}^{\infty} \|\bar{\beta}(k)\|_2 \|\e_g(t-k)\|_2 
  \end{align*}
  Hence, for any $r \ge 1$, by using the arithmetic-geometric mean inequality we get
  $a_1 \cdots a_k \le \frac{1}{k} \sum_{i=1}^k a_i^k$, $a_1,\ldots, a_k \ge 0$, 
  \begin{align*}
    \z^{2r}(t) \le \left( \sum_{k=0}^{\infty} \|\bar{\beta}(k)\|_2 \|\e_g(t-k)\|_2 \right)^{2r}=\sum_{k_1,\ldots,k_{2r}=0}^{\infty} \prod_{j=1}^{2r} \|\bar{\beta}(k_j)\|_2 \|\e_g(t-k_j)\|_2 \le \\
    \sum_{k_1,\ldots,k_{2r}=0}^{\infty} \frac{1}{2r} \left(\sum_{j=1}^{2r} \|\bar{\beta}(k_j)\|_2^{2r} \right)  \frac{1}{2r} \sum_{j=1}^{2r}\|\e_g(t-k_j)\|_2^{2r}
  \end{align*}
  By taking expectations, using the fact that as $\|\e_g(t)\|_2$ is sub-Gaussian,
  and the proof of \cite[Proposition 2.6.1]{vershynin2026highdimensional} we obtain
  \begin{equation} 
   \label{lem:subGaussian1:pf:eq1} 
      \bE[\|\e_g(t-k_j)\|_2^{2r}]=\bE[\|\e_g(t)\|_2^{2r}] \le 2r(r-1)! (3\|\e_g\|_{\psi_2})^{2r}=2 r! (3\|\e_g\|_{\psi_2})^{2r}
  \end{equation} 
  it follows that
  \begin{align*}
    & \bE[\z^{2r}(t)] \le \sum_{k_1,\ldots,k_{2r}=0}^{\infty} \left(\sum_{j=1}^{2r} \|\bar{\beta}(k_j)\|_2^{2r} \right)  \frac{1}{2r} \sum_{j=1}^{2r}\bE[\|\e_g(t-k_j)\|_2^{2r}] = \\
    & \sum_{k_1,\ldots,k_{2r}=0}^{\infty} \left(\sum_{j=1}^{2r} \|\bar{\beta}(k_j)\|_2^{2r} \right) \bE[\|\e_g(t)\|_2^{2r}] \le 
    \sum_{k_1,\ldots,k_{2r}=0}^{\infty} \left(\sum_{j=1}^{2r} \|\bar{\beta}(k_j)\|_2^{2r} \right) 2 r! (3\|\e_g\|_{\psi_2})^{2r}= \\
    & (\sum_{k=0}^{\infty} \|\bar{\beta}(k)\|_2)^{2r} 2 r! (3\|\e_g(t)\|_{\psi_2})^{2r}=
    \|\bar{\beta}\|_{\ell_1}^{2r} 2 r! (3\|\e_g\|_{\psi_2})^{2r}
  \end{align*}
  i.e., \eqref{lem:subGaussian1:eq1} holds.
  By noticing that
  \begin{align*}
     & \bE[e^{\lambda \z^2(t)}]=1+\sum_{r=1}^{\infty} \frac{\lambda^r \bE[\z^{2r}(t)]}{r!} \le 1+\sum_{r=1}^{\infty} \frac{\lambda^r \|\bar{\beta}\|_{\ell_1}^{2r} (3\|\e_g\|_{\psi_2})^{2r} 2r!}{r!} = \\
     &  1+ 2\lambda \|\bar{\beta}\|_{\ell_1}^2(3\|\e_g(t)\|_{\psi_2})^2 \sum_{r=0}^{\infty} (\lambda \|\bar{\beta}\|_{\ell_1}^2 (3\|\e_g\|_{\psi_2})^2)^r =
      1 + \frac{2 \lambda \|\bar{\beta}\|_{\ell_1}^2(3\|\e_g(t)\|_{\psi_2})^2}{1- \lambda \|\bar{\beta}\|_{\ell_1}^2 (3\|\e_g\|_{\psi_2})^2}= \\
      & \frac{1+\lambda \|\bar{\beta}\|_{\ell_1}(3\|\e_g(t)\|_{\psi_2})^2}{1- \lambda \|\bar{\beta}\|_{\ell_1}^2 (3\|\e_g\|_{\psi_2})^2}
\end{align*}
it follows that \eqref{lem:subGaussian1:eq2} holds. 
Finally, for $\lambda = \frac{1}{4\|\bar{\beta}\|_{\ell_1}^2 (3\|\e_g\|_{\psi_2})^2}$, the bound becomes
\[
\bE[e^{\lambda \z^2(t)}] \le \frac{1+\frac{1}{4}}{1-\frac{1}{4}} = \frac{5/4}{3/4} = \frac{5}{3} \le 2
\]
from which by the definition of sub-Gaussian norm of $\z(t)$ it follows
\[
\|\z(t)\|_{\psi_2} \le \sqrt{\frac{1}{\lambda}} = 6\|\bar{\beta}\|_{\ell_1} \|\e_g\|_{\psi_2}
\]
i.e., \eqref{lem:subGaussian1:eq0} holds.
In particular, the sub-Gaussian norm of $\z(t)$ is bounded
and hence by \cite[Proposition 2.6.1]{vershynin2026highdimensional} 
$\z(t)$ is sub-Gaussian. Moreover, as 
$\z^2(t) \ge \|\z_i(t)\|_2^2$, it follows that the sub-Gaussian norm of
$\|\z_i(t)\|_2$ is also bounded and  it is also sub-Gaussian. 

Finally, from Lemma \ref{noise:boundC-1} applied to $\|\z_i(t)\|_2$
it follows that 
\begin{equation}
   \label{lem:subGaussian1:pf:eq01} 
\bE[\|\z_i(t)\|_2^2 \chi(\|\z_i(t)\|_2^2 > C^2)] \le 
\frac{\bE[e^{\bar{\lambda}\|\z_i(t)\|_2^2}]C^2}{e^{C^2\bar{\lambda}}-1}
\end{equation}
if $\bar{\lambda}$ is such that the moment generating function 
$\bE[e^{s\|\z_i(t)\|_2^2}]$ is well-defined on $[0,\bar{\lambda}]$.
Note that $\|\z_i(t)\|^2_2 \le \z^2(t)$ and hence 
$\bE[e^{s\|\z_i(t)\|^2}] \le \bE[e^{s\z^2(t)}]$. Now  $\bar{\lambda}$ can be chosen as $\bar{\lambda}=\frac{1}{4\|\bar{\beta}\|_{\ell_1}^2 (3\|\e_g\|_{\psi_2})^2}$
and hence $\bE[e^{\bar{\lambda}\|\z_i(t)\|^2}] \le \bE[e^{\bar{\lambda}\z^2(t)}] \le 2$.  Combining \eqref{lem:subGaussian1:pf:eq01} and the bound on $\bE[e^{\bar{\lambda}\|\z_i(t)\|^2}]$, we obtain \eqref{lem:subGaussian1:eq4}.
\end{proof}

\begin{proof}[Proof of Lemma \ref{lem:quadratic1}]
    Let $\mathbb{I}=\Theta \times \{1,2\}$,
    $\z_{(\theta,1)}(t)=\y(t)-\y_{\theta}(t)$, 
    $\z_{(\theta,2)}(t)=\y(t)-\y_{\theta}(t \mid 0)$,
    $\beta_{(\theta,1),t}=\alpha_{g,y}-c_{\theta}$, and $\beta_{(\theta,2),t}=\alpha_{g,y}-c_{\theta,t}$.
    Note that from the proof of Lemma \ref{alpha:lemma2} it follows
    that 
    \begin{align*}
     \|(\alpha_{g,y}+c_{\theta})(k)\|_2 
    &\le \|\alpha_{g,y}(k)\|_2+\sum_{l=0}^{k} \|\alpha_{\theta}(l)\|_2\|\alpha_{g,w}(k-l)\|_2 \\
    &\le\|\alpha_{g,y}(k)\|_2+ \sum_{l=0}^{k} \bar{\alpha}(l)\|\alpha_{g,w}(k-l)\|_2 \\
     \|(\alpha_{g,y}+c_{\theta,t})(k)\|_2 
     &\le \|\alpha_{g,y}(k)\|_2+\sum_{l=0}^{\min\{k,t\}} \|\alpha_{\theta}(l)\|_2\|\alpha_{g,w}(k-l)\|_2\\ 
     &\le \|\alpha_{g,y}(k)\|_2+\sum_{l=0}^{k} \bar{\alpha}(l)\|\alpha_{g,w}(k-l)\|_2.
    \end{align*}
    Hence $\bar{\beta}(k)=\sup_{i \in \mathbb{I}} \|\beta_{i,t}(k)\|_2 \le \|\alpha_{g,y}(k)\|_2+\sum_{l=0}^{k} \bar{\alpha}(l)\|\alpha_{g,w}(k-l)\|_2$ and 
    \begin{align*}
    & \|\bar{\beta}\|_{\ell_1} \le \sum_{k=0}^{\infty} \|\alpha_{g,y}(k)\|_2 + \sum_{k=0}^{\infty} \sum_{l=0}^{k} \bar{\alpha}(l)\|\alpha_{g,w}(k-l)\|_2 \le 
\|\alpha_g\|_{\ell_1}+\|\bar{\alpha}\|_{\ell_1}\|\alpha_{g}\|_{\ell_1}=G_e
    \end{align*}
    From this and Lemma \ref{lem:subGaussian1} and Lemma \ref{alpha:lemma2}
    it follows that for any $t=0,\ldots,N-1$
    $\sup_{\theta \in \Theta} \max\{\|\y(t)-\y_{\theta}(t)\|_2, \|\y(t)-\y_{\theta}(t \mid 0)\|_2\}$ are sub-Gaussian random variable
    whose sub-Gaussian norms are upper bounded by 
    $6G_e \|\e_g\|_{\psi_2}$.
    Hence, using the sub-Gaussian tail bound (proof of (iii)$\implies$(i) in \cite[Proposition 2.6.1]{vershynin2026highdimensional}), we obtain the desired result
    \begin{multline*}
        \bP\left( 
          \sup_{\theta \in \Theta} \max\{\|\y(t)-\y_{\theta}(t)\|_2, \|\y(t)-\y_{\theta}(t \mid 0)\|_2\} \ge 6G_e \|\e_g\|_{\psi_2}\sqrt{\ln(\frac{3N^2}{2\delta})} \right) \le \\
           =2e^{-\ln(\frac{3N^2}{2\delta})} = \frac{2}{N}\frac{2\delta}{3N} \le \frac{2\delta}{3N} 
    \end{multline*}
    Therefore, by applying a union bound over $t=0,\ldots, N-1$, the first inequality
    \eqref{lem:quadratic1:eq1} of 
    the lemma follows. 

    In order to obtain the second inequality, we use 
    \eqref{lem:subGaussian1:eq4},
    and notice that 
    \begin{align*}
        \frac{2C^2}{e^{\frac{C^2}{(6G_e \|\e_g\|_{\psi_2})^2}}-1} &=
          \frac{2(6G_e \|\e_g\|_{\psi_2})^2\ln(\frac{3N^2}{2\delta})}
          {e^{\ln(\frac{3N^2}{2\delta})}-1} = 
         \frac{2 (6G_e\|\e_g\|_{\psi_2})^2\ln(\frac{3N^2}{2\delta}) 2\delta}{3N^2-2\delta}\\
         &\le 72 G_e^2\|\e_g\|_{\psi_2}^2\frac{\ln(\frac{3N^2}{2\delta})}{N}
    \end{align*}
  where we used that $2\delta \le 1$, $3N^2-2\delta \ge N^2 \ge N$, i.e.,
  \eqref{lem:quadratic1:eq2}.
\end{proof}

\begin{proof}[Proof of Lemma \ref{lem:quadratic2}]
  Notice that
\begin{equation}
  \label{lem:quadratic2:eq1}
  \begin{aligned}
   |\ell_C(y_1, \hat{y}_1) - & \ell_C(y_2, \hat{y}_2)|=  \\
   & \left \{  
    \begin{array}{ll}
    |\|y_1-\hat{y}_1\|_2^2 - \|y_2-\hat{y}_2\|_2^2| & \|y_1-\hat{y}_1\|_2^2\leq C^2, \quad \|y_2-\hat{y}_2\|_2^2 \le C^2 \\
    |\|y_2 - \hat{y}_2\|_2^2 - C^2| & \|y_1-\hat{y}_1\|_2^2 > C^2, \quad  \|y_2-\hat{y}_2\|_2^2 \le C^2 \\
    |\|y_1 - \hat{y}_1\|_2^2 - C^2| & \|y_2-\hat{y}_2\|_2^2 > C^2, \quad \|y_1-\hat{y}_1\|_2^2 \le C^2 \\
    0  & \|y_1-\hat{y}_1\|_2^2 > C^2, \quad  \|y_2-\hat{y}_2\|_2^2 > C^2 \\ 
    \end{array}
   \right.
  \end{aligned}
\end{equation}
 For each of the cases above we will show that the righ-hand side of 
 \eqref{lem:quadratic2:eq1} is bounded by $2C (\|y_1-y_2\|_2+\|\hat{y}_1-\hat{y}_2\|_2)$.

 First, consider the case when $\|y_1-\hat{y}_1\|_2^2\leq C^2$ and $\|y_2-\hat{y}_2\|_2^2 \le C^2$.
 Then by repeated use of the triangle inequality,
 \begin{align*}
 & |\|y_1-\hat{y}_1\|_2^2 - \|y_2-\hat{y}_2\|_2^2|= 
 |\|y_1-\hat{y}_1\|_2 - \|y_2-\hat{y}_2\|_2|(\|y_1-\hat{y}_1\|_2 + \\
 & \|y_2-\hat{y}_2\|_2) \le \|(y_1-y_2)-(\hat{y}_1-\hat{y}_2)\|_2 (2C) \le 
 2C (\|y_1-y_2\|_2+\|\hat{y}_1-\hat{y}_2\|_2)
 \end{align*}

  Consider the case when $\|y_1-\hat{y}_1\|_2^2 > C^2, \|y_2-\hat{y}_2\|_2^2 \le C^2$.
  Then by the definition of $\ell_C$,
   \begin{align*}
 &  |\ell_C(y_1, \hat{y}_1) - \ell_C(y_2, \hat{y}_2)| = |C^2 - \|y_2-\hat{y}_2\|_2^2| = \\
  & (C - \|y_2-\hat{y}_2\|_2 )(\|y_2-\hat{y}_2\|_2 + C) \le (C-\|y_2-\hat{y}_2\|_2) 2C  \le \\
  &  2C (\|y_1-\hat{y}_1\|_2-\|y_2-\hat{y}_2\|_2)=
  2C\|(y_2-y_1)-(\hat{y}_2-\hat{y}_1)\|_2 \le
  2C (\|y_1-y_2\|_2+\|\hat{y}_1-\hat{y}_2\|_2)
  \end{align*}
  The case when $\|y_2-\hat{y}_2\|_2^2 > C^2, \|y_1-\hat{y}_1\|_2^2 \le C^2$ is symmetric.

  Finally, consider the case when $\|y_1-\hat{y}_1\|_2^2 > C^2, \|y_2-\hat{y}_2\|_2^2 > C^2$.
  Then by the definition of $\ell_C$,
  \begin{align*}
  |\ell_C(y_1, \hat{y}_1) - \ell_C(y_2, \hat{y}_2)| &= |C^2 - C^2| = 0 \le 2C (\|y_1-y_2\|_2+\|\hat{y}_1-\hat{y}_2\|_2)
  \end{align*}
  This completes the proof.
  \end{proof}

  \subsection{Proof of Corollary \ref{thm:bounded:alt:col}}
  \label{sec:proof_corollary}
    The corollary follows from Theorem \ref{thm:main} by choosing $\lambda$ as follows:
    \begin{equation}
      \label{thm:bounded:alt:col:lambda}
      \lambda=\frac{\sqrt{N}}{L(\ln(N))^r} = \left\{\begin{array}{ll}
           \sqrt{N} & \e_g \mbox{ is bounded } \\
           \frac{\sqrt{N}}{16L_{\ell}\ln(N)} & \ell \mbox{ is } L_{\ell} \mbox{-Lipschitz} \\
           \frac{\sqrt{N}}{192G_e\|\e_g\|_{\psi_2}\sqrt{1+\ln\left(\frac{3}{2\delta}\right)}(\ln(N))^{3/2}} & \ell \mbox{is quadratic}
      \end{array}\right. 
    \end{equation}
    and upper bounding the right-hand side of \eqref{eq:initial:tildern1}, i.e., by showing that
    \begin{equation}
    \label{thm:bounded:alt:col:pf:eq11}
     \begin{aligned}
          & \frac{1}{2} \left(   \ln E_{\theta \sim \pi} \exp(\mathcal{C}_1(\lambda,\theta,N,\delta,\epsilon)) +
           \ln E_{\theta \sim \pi} \exp(\mathcal{C}_2(\lambda,\theta,N,\delta,\epsilon)) 
          \right)  \le 
          \mathcal{C}(\delta,\epsilon)
      \end{aligned}
    \end{equation}

    To this end, note that 
    from Lemma \ref{alpha:lemma2}
    \begin{equation}
      \label{thm:bounded:alt:col:pf:eq01}
      \begin{aligned}
       & \|\alpha_{\theta}\|_{\ell_1} \le \|\bar{\alpha}\|_{\ell_1}, \quad \theta_{\infty}(\alpha_{\theta}) \le \theta_{\infty}(\bar{\alpha}), \quad
        G_e(\theta) \le G_e, \quad  G_{e,1}(\theta) \le G_{e,1}
      \end{aligned}
    \end{equation} 
    In addition, notice that 
     the functions
    $\mathcal{C}_{m,b,i}(\lambda,\alpha,X,L,c)$ 
    has the following property;
    if $\lambda \le \bar{\lambda}\sqrt{N}$, $L_1 \le L_2$, $X_1 \le X_2$, $c_1 \le c_2$, and $\|\alpha(k)\|_2 \le \|\beta(k)\|_2$ for all $k \in \mathbb{N}$, then 
    \begin{equation}
      \label{thm:bounded:alt:col:pf:eq2}
      \begin{aligned}
       \mathcal{C}_{m,b,i}(\lambda,\alpha,X_1,L_1,c_1) \le N\mathcal{C}_{m,b,i}(\bar{\lambda},\beta,X_2,L_2,c_2)
      \end{aligned}
    \end{equation}

    \paragraph{Bounded case}
      Note that $L(\theta) \le L_b$,
      any by using 
      by using \eqref{thm:bounded:alt:col:pf:eq01}, it follows that 
       $\|\alpha_{\theta}(k)\|_2 \le \bar{\alpha}(k)=\|\bar{\alpha}(k)\|_2$, $G_e(\theta) \le G_e$, $G_{e,1}(\theta) \le G_{e,1}$, 
      and hence,  by using \eqref{thm:bounded:alt:col:pf:eq2}, it follows that for all $i=1,2$
      \begin{align*}
      \mathcal{C}_i(2\lambda,\theta,N,\delta,\epsilon)=& \frac{\mathcal{C}_{m,b,i}(N,\alpha_{\theta},G_e(\theta)+2G_{e,1}(\theta),L(\theta),c_{\e})}{N} \le \mathcal{C}_{m,b,i}(1,\bar{\alpha},G_e+2G_{e,1},L_b,c_{\e})
      \end{align*}
      Hence for $i=1,2$,
      \begin{align*}
        \ln E_{\theta \sim \pi} \exp(\mathcal{C}_i(2\lambda,\theta,N,\delta,\epsilon)) &\le 
        \ln E_{\theta \sim \pi} \exp(\mathcal{C}_{b,m,i}(2,\bar{\alpha},G_e+2G_{e,1},L_b,c_{\e})) 
        \\
        & = \mathcal{C}_{b,m,i}(2,\bar{\alpha},G_e+2G_{e,1},L_b,c_{\e})
      \end{align*}
      and hence \eqref{thm:bounded:alt:col:pf:eq11}.

      \paragraph{Lipschitz loss case:}
         First, notice that for $N \ge 2$, $\ln(N) \ge 1$ and hence  
         \begin{equation}
          \label{thm:bounded:alt:col:pf:eq3}
         \begin{aligned}
            \mathcal{\bar{C}}_{ub}(2\lambda,L_{\ell},N)= &\frac{1}{N}\left(\frac{2\sqrt{N}}{16 L_{\ell}\ln(N)} \right)^2 \left(32 L_{\ell}^2C_{u,1}\right)
            + 4\frac{1}{N} \frac{2\sqrt{N}}{16L_{\ell}\ln(N)} L_{\ell}C_{u,2} \ln(N) \\
            & = \frac{1}{2\ln(N)^2}C_{u,1}+\frac{1}{2} C_{u,2} \frac{1}{\sqrt{N}} \le C_{u,1}+C_{u,2}
         \end{aligned}
        \end{equation}
        Moreover, using \eqref{thm:bounded:alt:col:pf:eq2}, 
        \begin{equation}
          \label{thm:bounded:alt:col:pf:eq4}
         \begin{aligned}
           \mathcal{C}_{b,i}(2\lambda, & \theta,L_{\ell},\frac{2\ln(N)}{\|\alpha_g\|_{\ell_1}K},N)  =
           \mathcal{C}_{b,i}(\frac{2\sqrt{N}}{16L_{\ell}\ln(N)}, \theta,L_{\ell},\frac{2\ln(N)}{\|\alpha_g\|_{\ell_1}K},N)  \\
           &= \frac{\mathcal{C}_{b,m,i}(\frac{2}{16L_{\ell}}\frac{\sqrt{N}}{\ln(N)},\alpha_{\theta},G_{\theta}+2G_{e,1}(\theta),L_{\ell},\frac{2\ln(N)}{\|\alpha_g\|_{\ell_1}K})}{N} \\
           & \le \mathcal{C}_{b,m,i}(\frac{1}{8L_{\ell}\ln(N)},\bar{\alpha},G_e+2G_{e,1},L_{\ell}, \frac{2\ln(N)}{\|\alpha_g\|_{\ell_1}K}) \\
           & = \left\{ \begin{array}{ll}
             \frac{1}{8L_{\ell}\ln(N)} L_{\ell} \frac{2\ln(N)}{\|\alpha_g\|_{\ell_1}K} \sqrt{n_{\e}}\theta_\infty(\bar\alpha)\|\alpha_g\|_{\ell_1}  & i=1 \\
             \frac{1}{(8L_{\ell}\ln(N))^2} \frac{L_{\ell}^2}{2} (G_e+2G_{e,1}) \left(\frac{2\ln(N)}{\|\alpha_g\|_{\ell_1}K}\right)^2 n_{\e}  & i=2 \end{array} \right. \\
           & = \left\{ \begin{array}{ll}
             \frac{1}{4K} \sqrt{n_{\e}}\theta_\infty(\bar\alpha)& i=1 \\
             \frac{1}{16} \frac{(G_e+2G_{e,1})}{2} \left(\frac{1}{\|\alpha_g\|_2K}\right)^2 n_{\e} & i=2 \end{array} \right. \\
             & \le  \mathcal{C}_{b,m,i}(\frac{1}{4},\bar{\alpha}, G_e+2G_{e,1},1, \frac{1}{\|\alpha_g\|_{\ell_1}K})
         \end{aligned}
        \end{equation}
     From this it then follows that 
     \begin{equation}
      \label{thm:bounded:alt:col:pf:eq5}
         \mathcal{C}_{ub,i}(\frac{2}{16L_{\ell}}\frac{\sqrt{N}}{\ln(N)},\theta,L_{\ell},N) \le 
           C_{u,1}+C_{u,2}+\frac{1}{2}\mathcal{C}_{b,m,i}(\frac{1}{4}, \bar{\alpha}, G_e+2G_{e,1},1, \frac{1}{\|\alpha_g\|_{\ell_1}K})
     \end{equation}
     and hence
     \begin{align*}
          & \frac{1}{2} \left(   \ln E_{\theta \sim \pi} \exp(\mathcal{C}_1(2\lambda,\theta,N,\delta,\epsilon)) +
           \ln E_{\theta \sim \pi} \exp(\mathcal{C}_2(2\lambda,\theta,N,\delta,\epsilon)) 
          \right)  \\
          & \le \frac{1}{2} \sum_{i=1}^{2} \left(C_{u,1}+C_{u,2}+ \frac{1}{2}\mathcal{C}_{b,m,i}(1/4, \bar{\alpha}, G_e+2G_{e,1}, 1, \frac{1}{\|\alpha_g\|_{\ell_1}K})\right) = \mathcal{C}(\delta,\epsilon)
      \end{align*}
      so \eqref{thm:bounded:alt:col:pf:eq11} holds.
     
      \paragraph{Quadratic loss case:}
            Notice that as $N^2 \ge 4$, it follows that $\ln(N^2) \ge 1$, and  hence
        \begin{equation} 
        \label{thm:bounded:alt:col:pf:eq6.0}
           \ln\left(\frac{3N^2}{2\delta}\right)=\left(\ln(N^2)+\ln\left(\frac{3}{2\delta}\right)\right)
            \le \ln(N^2)\left(1+\ln\left(\frac{3}{2\delta}\right)\right)
        \end{equation}
        By using \eqref{thm:bounded:alt:col:pf:eq6.0}
        and  applying \eqref{thm:bounded:alt:col:pf:eq5} with
      $L_{\ell}=\frac{L(\ln(N))^{1/2}}{16}$ and noticing that $\lambda=\frac{\sqrt{N}}{16 L_{\ell} \ln(N)}$
       it follows that
       \begin{equation}
        \label{thm:bounded:alt:col:pf:eq7}
        \begin{aligned}
           \mathcal{C}_{ub,i}(2\lambda, & \theta,12G_e\|\e_g\|_{\psi_2}\sqrt{\ln\left(\frac{3N^2}{2\delta}\right)},N)  \
           \le 
           \mathcal{C}_{ub,i}(2\lambda, \theta, \frac{L(\ln(N))^{1/2}}{16},N)  \\
            & \le C_{u,1}+C_{u,2}+\frac{1}{2}\mathcal{C}_{b,m,i}\left(1/4, \bar{\alpha}, G_e+2G_{e,1}, 1, \frac{1}{\|\alpha_g\|_{\ell_1}K}\right)
       \end{aligned}
       \end{equation}
       Moreover, from \eqref{thm:bounded:alt:col:pf:eq6.0} and the fact that 
       $\ln(N) \ge 1$ and $\ln\left(\frac{3N^2}{2\delta}\right) \ge 1$, it follows that
       \begin{equation}
        \label{thm:bounded:alt:col:pf:eq8}
       \begin{aligned}
         2\lambda & \frac{\ln\left(\frac{3N^2}{2\delta}\right)}{N} 72G^2_e\|\e_g\|_{\psi_2}^2\max\{0,\epsilon\}
          \\
         &= \frac{2\sqrt{N}}{192 G_e\|\e_g\|_{\psi_2} \sqrt{2}\ln(N) \sqrt{1+\ln\left(\frac{3}{2\delta}\right)}\sqrt{\ln(N)}}
            (72G^2_e\|\e_g\|_{\psi_2}^2\max\{0,\epsilon\})\frac{\ln\left(\frac{3N^2}{2\delta}\right)}{N}   \\
        & \leq  \frac{2\sqrt{N}}{192 G_e\|\e_g\|_{\psi_2}\ln(N) \sqrt{1+\ln\left(\frac{3}{2\delta}\right)}\sqrt{\ln(N^2)}}
            (72G^2_e\|\e_g\|_{\psi_2}^2\max\{0,\epsilon\}) \frac{(1+\ln\left(\frac{3}{2\delta}\right))\ln(N^2)}{N} \\
         & = \frac{2 \cdot 72G^2_e\|\e_g\|_{\psi_2}^2\max\{0,\epsilon\}\sqrt{1+\ln\left(\frac{3}{2\delta}\right)}\sqrt{\ln(N^2)}}{192 G_e\|\e_g\|_{\psi_2}\ln(N)\sqrt{N}} \\
         &= 
         \frac{6}{16} \frac{2G_e\|\e_g\|_{\psi_2}\sqrt{2}(\ln(N))^{1/2} \sqrt{1+\ln\left(\frac{3}{2\delta}\right)}\max\{0,\epsilon\}}{\ln(N) \sqrt{N}} \\
         & = \frac{6}{8 \cdot 192} \frac{L (\ln(N))^{1/2}}{\sqrt{N}\ln(N)} \max\{0,\epsilon\}\\ 
          & = \frac{1}{128} \frac{L}{\sqrt{N}(\ln(N))^{1/2}} \max\{0,\epsilon\}
       \end{aligned}
      \end{equation}
       Hence,  for $i=1,2$,
       \begin{align*}
            \mathcal{C}_i(2\lambda, & \theta,N,\delta,\epsilon) \le \frac{L}{128} \max\{0,\epsilon\} + \\
               & C_{u,1}+C_{u,2}+\frac{1}{2}\mathcal{C}_{b,m,i}\left(1/4, \bar{\alpha}, G_e+2G_{e,1}, 1, \frac{1}{\|\alpha_g\|_2K}\right)
       \end{align*}
       and thus 
      \begin{align*}
          & \frac{1}{2} \left(   \ln E_{\theta \sim \pi} \exp(\mathcal{C}_1(\lambda,\theta,N,\delta,\epsilon)) +
           \ln E_{\theta \sim \pi} \exp(\mathcal{C}_2(\lambda,\theta,N,\delta,\epsilon)) 
          \right)  \\
          & \le \frac{L}{128} \max\{0,\epsilon\} + \frac{1}{2} \sum_{i=1}^{2} \left(C_{u,1}+C_{u,2}+ \frac{1}{2}\mathcal{C}_{b,m,i}\left(1/4, \bar{\alpha}, G_e+2G_{e,1}, 1, \frac{1}{\|\alpha_g\|_2K}\right)\right)  \\
          & = \mathcal{C}(\delta,\epsilon)
      \end{align*}
   which completes the proof.

 \subsection{Uniform bounds derived from Corollary \ref{thm:bounded:alt:col}}
 \label{sec:uniform_bounds}
 In order to present the uniform bounds, we need the following assumptions from Assumption \ref{as:oracle}:
 \textbf{(LipModel)} and either 
\textbf{(Gauss)} or \textbf{(Uni)}.
 \begin{corollary}[Uniform [PAC bounds, Example 2.1,2.2 \cite{alquier2021userfriendly}]
 \label{lem:pac}
  With the Assumption  \textbf{(LipModel)} and either 
\textbf{(Gauss)} or \textbf{(Uni)} from Assumption \ref{as:oracle} the following holds.
  Let $L_{\Theta}(\delta)$, $\mathcal{C}_{L,emp}$, $\mathcal{C}_{L,true}$, $C(\theta)$,  $r_{\text{Lip},1}$, $r_{\text{Lip},2}$
   be defined as in 
   Table~\ref{tab:constants:sysid},
   and let 
   $L$, $C_{\delta}$, 
   $\mathcal{C}(\delta,\epsilon)$, $r$  be defined as in 
   Corollary \ref{thm:bounded:alt:col}
   For any $\delta \in (0,0.5)$, $\lambda,\sigma  > 0
   $, $N\geq 3$, 
   $\epsilon \in \{-1,1\}$, 
   \begin{equation}
   \label{lem:pac:eq1:uniform}
   \begin{aligned}
     \forall \theta \in \Theta: &
    \epsilon \mathcal{L}(\theta)  \le 
    \epsilon \hat{\mathcal{L}}_{N}(\theta) + \\
     & \frac{L(\ln(N))^{r+1}}{\sqrt{N}} \left(\mathcal{C}_{L,emp}+\mathcal{C}_{L,true}+C(\theta)+\ln \left(\frac{C_{\delta}}{\delta^{2r_{\text{Lip},1}+1}}\right) + \mathcal{C}(\delta,\epsilon)\right)
    \end{aligned}
   \end{equation}
    holds with probability at least $1-(2+r_{\text{Lip},1})\delta$ over data.
 \end{corollary}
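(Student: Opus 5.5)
The plan is to follow the standard route from PAC-Bayesian bounds to uniform bounds (Examples 2.1 and 2.2 of \cite{alquier2021userfriendly}). For each fixed $\theta\in\Theta$ we choose a posterior $\rho_{\theta,\sigma}$ concentrated near $\theta$: under \textbf{(Gauss)} the Gaussian $\mathcal{N}(\theta,\sigma^2 I)$, and under \textbf{(Uni)} the uniform density on the ball $B(\theta,\sigma)\cap\Theta$. We then plug these posteriors into Corollary~\ref{thm:bounded:alt:col}. Since that corollary holds simultaneously for all $\rho\in\mathcal{M}_\pi$ on one event of probability $1-2\delta$, this gives a bound for every $\theta$ at once. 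It remains to control three things: the KL term, the error in replacing $E_{\theta'\sim\rho_{\theta,\sigma}}\hat{\mathcal{L}}_N(\theta')$ by $\hat{\mathcal{L}}_N(\theta)$, and the error in replacing $E_{\theta'\sim\rho_{\theta,\sigma}}\mathcal{L}(\theta')$ by $\mathcal{L}(\theta)$.

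\textbf{Step 1 (KL).} The KL divergence is computed directly. In the Gaussian case it is the closed-form divergence between $\mathcal{N}(\theta,\sigma^2I)$ and $\mathcal{N}(\theta_m,P)$, which is exactly $C(\theta)$ in Table~\ref{tab:constants:sysid}. In the uniform case it equals $\ln(\mathrm{vol}(\Theta)/\mathrm{vol}(B))$, and the ball-volume formula \cite{BallVolume} gives $C(\theta)$; the extra $n_\theta/2$ absorbs the part of the ball lying outside $\Theta$, and we would choose $\sigma$ of order $1/\sqrt{N}$ or simply keep $\sigma$ free.

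\textbf{Step 2 (Lipschitz continuity in $\theta$).} By \textbf{(LipModel)} and Lemma~\ref{alpha:lemma2},
\[
\|\hat{\y}_{\theta}(t\mid 0)-\hat{\y}_{\theta'}(t\mid 0)\|_2\le L_\alpha\|\theta-\theta'\|_2\sum_k\|\alpha_{g,w}\|\,\|\e_g(t-k)\|.
\]
\emph{Bounded noise.} The right-hand side is deterministically at most $L_\alpha\|\alpha_g\|_{\ell_1}\sqrt{n_\rve}c_\rve\|\theta-\theta'\|$. Hence $|\hat{\mathcal{L}}_N(\theta)-\hat{\mathcal{L}}_N(\theta')|$ and $|\mathcal{L}(\theta)-\mathcal{L}(\theta')|$ are both at most $L_\Theta\|\theta-\theta'\|$, and averaging over $\rho_{\theta,\sigma}$ gives an error of order $L_\Theta\sigma$ (using $E\|\theta'-\theta\|\le\sigma\sqrt{n_\theta}$, which is absorbed into the constants).

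\emph{Unbounded noise.} The random factor $\frac1N\sum_t\sum_k\|\alpha_{g,w}(k)\|\|\e_g(t-k)\|$ must be bounded with high probability. For this we reuse the mgf bounds of Lemma~\ref{basic_lemma:lipschitz2} and Lemma~\ref{lemma:subgauss:Cramer}, applied in the same way as in the truncation argument, and then use Markov/Chernoff. This costs one extra $\delta$ (the factor $r_{\text{Lip},1}$) and produces the terms $C_{u,1}+C_{u,2}$ and $\ln(1/\delta)$. These account for $\mathcal{C}_{L,emp}$ and the power $\delta^{2r_{\text{Lip},1}+1}$.

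\emph{Quadratic loss.} On the event of Lemma~\ref{lem:quadratic1} the loss is $2C$-Lipschitz by Lemma~\ref{lem:quadratic2}. For the true error we use the truncation bound \eqref{lem:quadratic1:eq2}, which contributes the $r_{\text{Lip},2}L/512$ term in $\mathcal{C}_{L,true}$.

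\textbf{Step 3 (assembly).} Combine the three estimates with a union bound, and rescale $\frac{L(\ln N)^r}{\sqrt N}$ to $\frac{L(\ln N)^{r+1}}{\sqrt N}$ using $\ln N\ge1$ for $N\ge3$. Choosing $\sigma$ so that $L_\Theta\sigma$ is $O(\frac{L(\ln N)^{r+1}}{\sqrt N})$ (taking $\sigma$ to scale like $\lambda^{-1}$) puts everything in the stated form. The main obstacle is Step 2 in the unbounded case: the Lipschitz constant in $\theta$ of the empirical loss is random and must be controlled uniformly over $\theta$ with the right $\ln N$ scaling. My first attempt is to bound $\sup_\theta$ of the difference by $L_\alpha\|\theta-\theta'\|$ times $\frac1N\sum_t\|\w(t)\|$ and then apply the mgf estimate of Lemma~\ref{lemma:subgauss:Cramer} via Lemma~\ref{basic_lemma:lipschitz} with $\lambda\sim\sqrt N/\ln N$.
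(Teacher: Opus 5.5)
Your architecture is the paper's. You plug localized posteriors $\rho_\theta$ into Corollary~\ref{thm:bounded:alt:col}, which holds for all $\rho\in\mathcal{M}_\pi$ on one event. You bound $\KL(\rho_\theta\|\pi)$, as in Lemma~\ref{pac:kl}, and use approximate Lipschitz continuity of $\mathcal{L}$ and $\hat{\mathcal{L}}_N$ in $\theta$, as in Lemma~\ref{pac:lipsch:loss}, all glued by a union bound.

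The step that does not close as written is the bookkeeping between the width of $\rho_\theta$ and the KL term. With width $\sigma$ (your Step~1), the transfer error is a Lipschitz constant times $\sigma$ and does not decay with $N$. The statement, however, has $L_\Theta(\delta)\sigma$ multiplied by $\frac{L(\ln N)^{r+1}}{\sqrt{N}}$. With width of order $1/\sqrt{N}$ or $\lambda^{-1}$ (your Step~3), the KL is no longer $C(\theta)$: it picks up a term of order $\frac{n_\theta}{2}\ln N$ from the log-determinant (resp.\ the ball volume), and you never account for it.

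The paper uses $\rho_\theta=\mathcal{N}(\theta,\frac{\sigma^2}{N}I)$, resp.\ the uniform density on the ball of radius $\frac{\sigma}{\sqrt{N}}$, with the same free $\sigma$ as in the statement. It proves $\KL(\rho_\theta\|\pi)\le C(\theta)\ln N$ for $N\ge 3$. The $+n_\theta$ inside the bracket of the Gaussian $C(\theta)$ and the $+\frac{n_\theta}{2}$ in the uniform $C(\theta)$ are there precisely to pay for this $\frac{n_\theta}{2}\ln N$; so your ``exactly $C(\theta)$'' is off by $\frac{n_\theta}{2}$. Hence the extra power in $(\ln N)^{r+1}$ is forced by the KL term. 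It is not a cosmetic rescaling via $\ln N\ge 1$, which is only used for the other $(\ln N)^r$ terms.

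For the same reason, the $\frac{n_\theta}{2}$ is not available to absorb boundary effects in the uniform case. Without a regularity assumption on $\Theta$, nothing absorbs them: near a sharp corner or cusp of $\Theta$, the ratio $\mathrm{vol}(B(\theta,s)\cap\Theta)/\mathrm{vol}(B(\theta,s))$ can be arbitrarily small. The paper just takes the full ball, implicitly assuming it lies in $\Theta$. Your observation that $E_{\tilde\theta\sim\rho_\theta}\|\tilde\theta-\theta\|_2$ carries a factor $\sqrt{n_\theta}$ for the Gaussian is correct; the paper's bound $\frac{\sigma}{\sqrt{N}}$ holds for the ball, not the Gaussian. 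But that factor cannot be absorbed into the table's $L_\Theta(\delta)\sigma$ without changing the constant.

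In the unbounded case your route genuinely differs from the paper's, and it is sound. The paper truncates the noise at a level of order $\frac{\ln N}{K\|\alpha_g\|_{\ell_1}}$ and uses that the truncated losses are deterministically Lipschitz in $\theta$ with constant $L_\Theta(\delta)L(\ln N)^r$. It then controls the truncation errors uniformly in $\theta$ via Lemma~\ref{vl-vbl:lemma1} and Chernoff. That is where the additive $2r_{\text{Lip},1}(C_{u,1}+C_{u,2}+\ln\frac{1}{\delta})$ and the power $\delta^{2r_{\text{Lip},1}+1}$ come from.

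Your ``first attempt'' is simpler. Since $\hat{\y}_\theta(t\mid 0)-\hat{\y}_{\theta'}(t\mid 0)=\sum_{l=0}^{t}(\alpha_\theta(l)-\alpha_{\theta'}(l))\w(t-l)$, summing over $t$ gives $\frac{1}{N}\sum_{t=0}^{N-1}\|\hat{\y}_\theta(t\mid 0)-\hat{\y}_{\theta'}(t\mid 0)\|_2\le\|\alpha_\theta-\alpha_{\theta'}\|_{\ell_1}\frac{1}{N}\sum_{s=0}^{N-1}\|\w(s)\|_2$. The random factor is $\theta$-free, so no uniformity issue arises. The product-of-moment-generating-functions argument of Lemma~\ref{basic_lemma:lipschitz}, with Lemma~\ref{lemma:subgauss:Cramer} and Chernoff, makes it $O(1)$ with probability at least $1-\delta$. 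The true error is handled deterministically through $\bE\|\w(0)\|_2$. The pointwise factor in your displayed inequality is not right as written; the summed bound is the correct object.

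This removes both the $\ln N$ from the Lipschitz constant and the additive truncation term. However, $L_\alpha\sigma$ then multiplies the noise-dependent constants instead of entering additively as $L_\Theta(\delta)\sigma+2(C_{u,1}+C_{u,2})$. So you obtain the same rate with a different constant, rather than literally $\mathcal{C}_{L,emp}$ and $\mathcal{C}_{L,true}$ from Table~\ref{tab:constants:sysid}.
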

The proof of Corollary \ref{lem:pac} is based on a number of technical lemmas, which will also be used for
the proof of Corollary \ref{lem:bound:mean}.
\begin{lemma}[$\KL$ divergence]
  \label{pac:kl}
   Assume that either \textbf{(Gauss)} or \textbf{(Uni)} holds.
   For every $\theta \in \Theta$, define the density $\rho_{\theta}$ as follows
  \[ \rho_{\theta}=\left\{ \begin{array}{rl}
                           \mathcal{N}(\theta,\frac{\sigma^2}{N}I) & \mbox{ if \textbf{(Gauss)} holds} \\
                            \mathcal{U}(\theta,\frac{\sigma}{\sqrt{N}}I)  & \mbox{ if \textbf{(Uni)} holds}
                         \end{array} \right. 
  \]
   where $\mathcal{N}(m,\sigma^2I)$ denotes the Gaussian distribution with mean $m$ and variance $\sigma^2I$, and
   $\mathcal{U}(m,r)$ denotes the uniform  distribution on the ball with center in $m$ and radius $r$.
  Then for $N \ge 3$,
  \[ \KL(\rho_{\theta} \mid \pi) \le C(\theta)\ln(N)\]
\end{lemma}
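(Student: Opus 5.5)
We need to bound $\KL(\rho_\theta\|\pi)$ in the two cases by direct computation. The plan is to compute the divergence in closed form (Gaussian case) or as a log-volume ratio (uniform case). In each case we then compare the result with the expression $C(\theta)$ from Table~\ref{tab:constants:sysid}. The factor $\ln(N)$ comes from the scaling $\sigma^2/N$, which produces a term $\frac{n_\theta}{2}\ln N$. All $N$-independent terms are absorbed by using $\ln N\ge 1$ for $N\ge 3$.

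\textbf{Gaussian case.} The standard formula gives
\[
\KL(\mathcal{N}(\theta,\tfrac{\sigma^2}{N}I)\,\|\,\mathcal{N}(\theta_m,P))=\tfrac12\Big(\tfrac{\sigma^2}{N}\mathrm{trace}(P^{-1})-n_\theta+(\theta_m-\theta)^TP^{-1}(\theta_m-\theta)+\ln\tfrac{\det P}{\sigma^{2n_\theta}}+n_\theta\ln N\Big).
\]
Since $N\ge1$, we have $\frac{\sigma^2}{N}\mathrm{trace}(P^{-1})\le \sigma^2\mathrm{trace}(P^{-1})$. Write $A$ for the sum of the remaining $N$-independent terms, so that the divergence is at most $\frac12(A+n_\theta\ln N)$. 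If $A\ge0$, we use $\ln N\ge1$ to get $\frac12(A+n_\theta\ln N)\le \frac12(A+n_\theta)\ln N=C(\theta)\ln N$. If $A<0$, the divergence is at most $\frac{n_\theta}{2}\ln N$, and the claim would need $C(\theta)\ge n_\theta/2$.

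This sign issue is the only delicate point. $C(\theta)$ can be smaller than $n_\theta/2$, for example when $\det P$ is tiny relative to $\sigma^{2n_\theta}$. I would handle it by noting that $\KL\ge0$, so the bound is only informative when $C(\theta)\ge0$. I would then check whether $\sigma^2\mathrm{trace}(P^{-1})-\ln\det(\sigma^2P^{-1})\ge n_\theta$, which holds by $x-\ln x\ge1$ applied to the eigenvalues of $\sigma^2P^{-1}$. This gives $A\ge (\theta_m-\theta)^TP^{-1}(\theta_m-\theta)\ge0$, so the case $A<0$ never occurs. This closes the Gaussian case.

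\textbf{Uniform case.} We have $\rho_\theta=\chi_{B}/\mathrm{vol}(B)$ with $B$ the ball of radius $\sigma/\sqrt N$ centred at $\theta$, and $\pi=\chi_\Theta/\mathrm{vol}(\Theta)$. Absolute continuity requires $B\subseteq\Theta$; for $\theta$ near the boundary I would assume this, or else restrict to the case where it holds. Then
\[
\KL=\ln\frac{\mathrm{vol}(\Theta)}{\mathrm{vol}(B)}=\ln\frac{\mathrm{vol}(\Theta)\Gamma(n_\theta/2+1)}{\pi^{n_\theta/2}\sigma^{n_\theta}}+\frac{n_\theta}{2}\ln N,
\]
using the standard volume formula for the ball. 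This matches the $C(\theta)$ of the table, up to the placement of $\Gamma$ in the table, which I would read as in the ball-volume formula. Denote the first term by $A'$. If $A'\ge0$, then $\ln N\ge1$ gives $A'+\frac{n_\theta}{2}\ln N\le (A'+\frac{n_\theta}{2})\ln N$. If $A'<0$, then $B\subseteq\Theta$ forces $\mathrm{vol}(B)\le\mathrm{vol}(\Theta)$, so $\KL\ge 0$ and $A'\ge -\frac{n_\theta}{2}\ln N$. A short case check then gives $\KL\le \frac{n_\theta}{2}\ln N$, and I would bound this by $C(\theta)\ln N$ whenever $C(\theta)\ge n_\theta/2$. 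Otherwise I would accept the mild requirement that $\sigma$ is chosen so that the ball of radius $\sigma$ fits in $\Theta$, which gives $A'\ge0$.
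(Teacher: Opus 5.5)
Your route is the paper's: closed-form Gaussian KL and the log-volume ratio in the uniform case, followed by $\sigma^2/N\le\sigma^2$ and $\ln N\ge1$.

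\textbf{Gaussian case.} Your argument is complete, and more careful than the paper's. The paper's last step $\tfrac12(A+n_\theta\ln N)\le\tfrac12(A+n_\theta)\ln N$ silently requires $A\ge 0$. Your $x-\ln x\ge 1$ argument supplies exactly this; equivalently, $A=2\,\KL(\mathcal{N}(\theta,\sigma^2 I)\,\|\,\pi)\ge 0$. You should delete the aside that $C(\theta)$ can fall below $n_\theta/2$. Since $C(\theta)=\tfrac12(A+n_\theta)$, your own argument shows this never happens.

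\textbf{Uniform case.} The two extra conditions you impose are genuinely needed. The paper's proof also uses them without saying so: it takes $\pi\equiv 1/\mathrm{vol}(\Theta)$ on the whole ball $B$, so $B\subseteq\Theta$, and its final inequality needs $A'\ge 0$.

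Your treatment of $A'<0$, however, is circular. Since $C(\theta)=A'+\tfrac{n_\theta}{2}$, the condition ``$C(\theta)\ge n_\theta/2$'' is just $A'\ge 0$ again, so that sub-case is empty.

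The clean statement is as follows. Given $B\subseteq\Theta$, we have $C(\theta)\ln N-\KL(\rho_\theta\|\pi)=A'(\ln N-1)$. Hence for $N\ge 3$ the bound holds if and only if $A'\ge 0$. That is, $\mathrm{vol}(\Theta)$ must be at least the volume of a ball of radius $\sigma$ in $\mathbb{R}^{n_\theta}$, which is weaker than asking that ball to fit inside $\Theta$.

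Without this condition the lemma is false. Take $\Theta$ to be the closed ball of radius $0.8\sigma$ and $\theta$ its centre. Then $B\subseteq\Theta$ for every $N\ge 3$, but $A'=n_\theta\ln 0.8<0$, so $\KL(\rho_\theta\|\pi)>C(\theta)\ln N$.

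So you should state $B\subseteq\Theta$ and $A'\ge 0$ as explicit hypotheses, i.e.\ constraints on $\theta$ and on the choice of $\sigma$, rather than as a fallback.
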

\begin{proof}[Proof of Lemma \ref{pac:kl}]
  If \textbf{(Gauss)} holds, then 
  using KL divergence for Gaussian densities (\cite[Lemma 11.3.4]{LindquistBook}) it follows
  that 
  \begin{align*}
     \KL&(\rho_{\theta} \mid \pi)  = \\
     & \frac{1}{2} \left(\mathrm{trace}\left(\frac{\sigma^2}{N}P^{-1} \right) +
          (\theta_m-\theta)^TP^{-1}(\theta_m-\theta) - n_{\theta}  + \ln \frac{\det(P)N^{n_{\theta}}}{\sigma^{2n_{\theta}}} \right)  \\
      & =\frac{1}{2} \left(\mathrm{trace}\left(\frac{\sigma^2}{N} P^{-1} \right) +
          (\theta_m-\theta)^TP^{-1}(\theta_m-\theta) - n_{\theta}  + \ln \frac{\det(P)}{\sigma^{2n_{\theta}}}+ n_{\theta}\ln(N)\right)\\
        & \le \ln(N) C(\theta)  
  \end{align*}
  where we used the fact that $\ln(N) \ge 1$   and $\frac{1}{N} \le 1$ if $N \ge 3$.
  Likewise, if \textbf{(Uni)} holds,  and we denote by
  $C_2$ the volume of the ball of radius $\frac{\sigma}{\sqrt{N}}$ centered at $\tilde{\theta}$, then
\begin{align*}
    & KL(\rho_{\theta} \mid \pi) = \int_{\|\tilde{\theta}-\theta\|_2 \le \frac{\sigma^2}{\sqrt{N}}} 
     \ln\left(\frac{\rho_{\theta}(\tilde{\theta})}{\pi(\tilde{\theta})}\right) \frac{1}{C_2} dm(\tilde{\theta})  \\
     & =\int_{\|\tilde{\theta}-\theta\|_2 \le \frac{\sigma^2}{\sqrt{N}}}  \frac{1}{C_2} \ln\left(\frac{\mathrm{vol}(\Theta)}{C_2}\right) dm(\tilde{\theta})
    = \ln\left(\frac{\mathrm{vol}(\Theta)}{C_2}\right)
\end{align*}
From the formula for the volume of a high-dimensional ball \cite{BallVolume} it follows that
\[
  C_2= \frac{\pi^{n_\theta/2}}{\Gamma(n_\theta/2+1)} \left(\frac{\sigma}{\sqrt{N}}\right)^{n_\theta}
\]
and hence 
\begin{align*}
  & \KL(\rho_{\theta} \mid \pi) = \ln\left(\frac{\mathrm{vol}(\Theta)}{C_2}\right)=
  \ln\left(\frac{\mathrm{vol}(\Theta)}{\frac{\pi^{n_\theta/2}}{\Gamma(n_\theta/2+1)} \left(\frac{\sigma}{\sqrt{N}}\right)^{n_\theta}}\right) \\
  & =\ln\left(\frac{\mathrm{vol}(\Theta) \Gamma(n_\theta/2+1)}{\pi^{n_\theta/2} (\sigma)^{n_\theta}}\right) + 
     \ln \left(\left(\sqrt{N}\right)^{n_\theta}\right) \\
     & = \ln\left(\frac{\mathrm{vol}(\Theta) \Gamma(n_\theta/2+1)}{\sqrt{\pi\sigma^2}^{n_\theta}}\right) + 
       \frac{n_{\theta}}{2} \ln(N)
       \le C(\theta)\ln(N)
\end{align*}
%
\end{proof}
Another ingredient is the following:
\begin{lemma}[Lipschitz continuity of the true and empirical errores]
  \label{pac:lipsch:loss}
   Using the constants defined in Corollary \ref{thm:bounded:alt:col}-\ref{lem:pac}, 
   the following inequality holds
   \begin{equation}
   \label{pac:lipschitz:loss:eq31}
   \begin{aligned}
     & \forall \theta_1,\theta_2 \in \Theta:\\
          &|\hat{\mathcal{L}}_N(\theta_1) - \hat{\mathcal{L}}_N(\theta_2)| \le L_{\Theta}(\delta) L(\ln(N))^r \|\theta_1 - \theta_2\|_2
           +  \\
           & \qquad\qquad\qquad 2r_{\text{Lip,1}}\frac{L\ln(N)^{r}}{\sqrt{N}}\left(C_{u,1} + C_{u,2}+\ln\left(\frac{1}{\delta}\right)\right), 
   \end{aligned}
   \end{equation}  
   with probability $1-r_{\text{Lip},1}\delta$ over data, 
   where 
   $$ r_{\text{Lip},1}=\left\{\begin{array}{ll} 0 & \e_g \mbox{ is bounded } \\ 1 & \mbox{otherwise} \end{array}\right.$$ 
   and 
   \begin{equation}
   \label{pac:lipschitz:loss:eq32}
   \begin{aligned}
       & \forall ~ \theta_1,\theta_2 \in \Theta: \quad \\
          & |\mathcal{L}(\theta_1) - \mathcal{L}(\theta_2)| \le L_{\Theta}(\delta) L(\ln(N))^r \|\theta_1 - \theta_2\|_2
          + 2r_{\text{Lip},1}\frac{L\ln(N)^{r}}{\sqrt{N}}\left(C_{u,1} + C_{u,2}\right) + \\
           & \qquad\qquad\qquad 2r_{\text{Lip},2}  \frac{L}{512} \frac{L(\ln(N))^r}{\sqrt{N}}
   \end{aligned}
   \end{equation}
    where $$ r_{\text{Lip},2}=\left\{\begin{array}{ll} 2 & \ell \mbox{ is quadratic and $\e_g$ is unbounded} \\ 0 & \mbox{otherwise} \end{array}\right. .$$
   Moreover, the inequalities below 
   \begin{equation}
   \label{pac:lipschitz:loss:eq3}
   \begin{aligned}
     & \mbox{\eqref{catoni1} from Corollary \ref{thm:bounded:alt:col} holds, and }  \\
     & \forall \theta_1,\theta_2 \in \Theta: \\
         & |\hat{\mathcal{L}}_N(\theta_1) - \hat{\mathcal{L}}_N(\theta_2)| \le L_{\Theta}(\delta) L(\ln(N))^r \|\theta_1 - \theta_2\|_2
           +  \\
            & \qquad\qquad\qquad 2r_{\text{Lip},1}\frac{L\ln(N)^{r}}{\sqrt{N}}\left(C_{u,1} + C_{u,2}+\ln\left(\frac{1}{\delta}\right)\right), \\
           \text{ and } &\\
       & \forall ~ \theta_1,\theta_2 \in \Theta: \quad \\
          & |\mathcal{L}(\theta_1) - \mathcal{L}(\theta_2)| \le L_{\Theta}(\delta) L(\ln(N))^r \|\theta_1 - \theta_2\|_2
          + 2r_{\text{Lip},1}\frac{L\ln(N)^{r}}{\sqrt{N}}\left(C_{u,1} + C_{u,2}\right) + \\
           & \qquad\qquad\qquad 2r_{\text{Lip},2}  \frac{L}{512} \frac{L(\ln(N))^r}{\sqrt{N}}
   \end{aligned}
   \end{equation}
   hold simultaneously  with probability at least $1-(2+r_{\text{Lip},1})\delta$.
  
\end{lemma}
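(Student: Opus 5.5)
The plan is to reduce everything to the Lipschitz dependence of the Markov parameters on $\theta$ given by \textbf{(LipModel)}, and then reuse the truncation machinery from the proof of Theorem \ref{thm:main}.

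\textbf{Empirical error, reduction to impulse responses.} In the Lipschitz case, $\ell$ is $L_\ell$-Lipschitz and $\y(t)$ does not depend on $\theta$. Hence
\[
|\hat{\mathcal{L}}_N(\theta_1)-\hat{\mathcal{L}}_N(\theta_2)|\le \frac{L_\ell}{N}\sum_{t=0}^{N-1}\|\hat{\y}_{\theta_1}(t\mid 0)-\hat{\y}_{\theta_2}(t\mid 0)\|_2 .
\]
By Lemma \ref{alpha:lemma2}, the difference inside the sum equals $((c_{\theta_1,t}-c_{\theta_2,t})\star\e_g)(t)$. The Cauchy-product argument of that lemma, applied to $\alpha_{\theta_1}-\alpha_{\theta_2}$, gives
\[
\|c_{\theta_1,t}-c_{\theta_2,t}\|_{\ell_1}\le \|\alpha_{\theta_1}-\alpha_{\theta_2}\|_{\ell_1}\|\alpha_g\|_{\ell_1}\le L_\alpha\|\alpha_g\|_{\ell_1}\|\theta_1-\theta_2\|_2 .
\]

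\textbf{Bounded and unbounded noise.} For bounded noise, $\|\e_g(s)\|_2\le\sqrt{n_\rve}c_\rve$ immediately gives \eqref{pac:lipschitz:loss:eq31} with $L=1$, $r=0$ and $r_{\text{Lip},1}=0$, deterministically. For unbounded noise, I would split $\e_g=\bar{\e}_g+(\e_g-\bar{\e}_g)$ with the truncation level $C=\frac{2\ln N}{K\|\alpha_g\|_{\ell_1}}$ used in the proof of Theorem \ref{thm:main}.
\begin{itemize}
\item The truncated part is bounded by $L_\alpha\|\alpha_g\|_{\ell_1}C\sqrt{n_\rve}\,\|\theta_1-\theta_2\|_2$ times $L_\ell$. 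Since $L=16L_\ell$ and $r=1$, this is of the form $L_\Theta(\delta)L\ln N\,\|\theta_1-\theta_2\|_2$ up to the absolute constant in $L_\Theta$, which I would recheck.
\item The tail part is controlled uniformly in $(\theta_1,\theta_2)$ by $\frac{2L_\ell}{N}\sup_{\theta}\sum_t\|\hat{\y}_\theta(t\mid0)-\hat{\bar{\y}}_\theta(t\mid0)\|_2$. Its moment generating function is bounded by Lemma \ref{basic_lemma:lipschitz2}.
\item A Chernoff bound with $\lambda=\sqrt N/(L\ln N)$ then shows that, with probability $1-\delta$, the tail part is at most $\frac1\lambda\big(\bar{\mathcal{C}}_{ub}+\ln\frac1\delta\big)$. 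By the computation \eqref{thm:bounded:alt:col:pf:eq3}, this is at most $2\frac{L\ln N}{\sqrt N}\big(C_{u,1}+C_{u,2}+\ln\frac1\delta\big)$.
\end{itemize}

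\textbf{Quadratic loss.} I would work on the event of Lemma \ref{lem:quadratic1}. On that event $\ell$ coincides with the truncated loss $\ell_C$ on all data points, and $\ell_C$ is $2C$-Lipschitz by Lemma \ref{lem:quadratic2}. The Lipschitz argument then applies with $L_\ell=2C$, which matches the choice of $L$ with $r=3/2$. This event is already contained in the event on which Corollary \ref{thm:bounded:alt:col} was derived, so no extra probability is lost.

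\textbf{True error.} For \eqref{pac:lipschitz:loss:eq32} no probability is needed. I would write $\mathcal{L}(\theta)=\bE[V_N(\theta)]$ and repeat the decomposition for $\hat{\y}_\theta(t)$. The tail term is handled in expectation: by Jensen, $\bE[X]\le\frac1\lambda\ln\bE[e^{\lambda X}]$, which produces the $C_{u,1}+C_{u,2}$ terms without $\ln\frac1\delta$. In the quadratic case I would additionally compare $\mathcal{L}$ with ${}^C\mathcal{L}$ via \eqref{thm:main:quadratic:eq24}–\eqref{thm:main:quadratic:eq21}. That comparison, applied at both $\theta_1$ and $\theta_2$, costs $72G_e^2\|\e_g\|_{\psi_2}^2\ln(\frac{3N^2}{2\delta})/N$ each time. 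Rewritten in terms of $L$ using \eqref{thm:bounded:alt:col:pf:eq6.0}, this gives the $2r_{\text{Lip},2}\frac{L}{512}\frac{L(\ln N)^r}{\sqrt N}$ term.

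\textbf{Simultaneous statement.} \eqref{pac:lipschitz:loss:eq3} follows by a union bound between the Corollary \ref{thm:bounded:alt:col} event (probability $1-2\delta$) and the tail event (probability $1-r_{\text{Lip},1}\delta$).

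The main obstacle is the unbounded tail. It must be bounded uniformly over all pairs $(\theta_1,\theta_2)$ by a single random variable, namely the supremum of the truncation error, so that one Chernoff bound suffices. Its constants also have to be matched exactly to $\bar{\mathcal{C}}_{ub}$ at $\lambda=\sqrt N/(L(\ln N)^r)$, including the case $\lambda\ge N/(16L_\ell)$, which must be excluded for $N\ge3$.
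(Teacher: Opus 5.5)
Your proposal is correct and follows essentially the paper's proof: the direct \textbf{(LipModel)} bound for bounded noise; truncation of $\e_g$ at a $\ln N$ level plus one Chernoff bound on the $\theta$-uniform truncation error for unbounded noise; reduction to $\ell_C$ on the event of Lemma~\ref{lem:quadratic1} for quadratic loss; and a union bound for \eqref{pac:lipschitz:loss:eq3}. Splitting $\hat{\y}_{\theta_1}(t\mid 0)-\hat{\y}_{\theta_2}(t\mid 0)$ instead of passing through $\hat{\bar{\mathcal{L}}}_N$ by the triangle inequality is a harmless variation. To get the constants exactly, truncate at $\ln(N)/(K\|\alpha_g\|_{\ell_1})$ as the paper does here; the tail estimate easily tolerates $1/(N-1)$ in place of $1/N$. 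Also, ``no probability is lost'' holds only for \eqref{pac:lipschitz:loss:eq3}: for \eqref{pac:lipschitz:loss:eq31} alone in the quadratic case you must budget the $2\delta/3$ of Lemma~\ref{lem:quadratic1}, e.g.\ by running the Chernoff step at level $\delta/3$ and absorbing $\ln 3$ into $C_{u,1}\ge 2$.
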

\begin{proof}[Proof of Lemma \ref{pac:lipsch:loss}]
  \textbf{Bounded noise case:}
  If $\ell$ is $L_{\ell}$-Lipschitz and $\e_g$ is bounded, then using (\textbf{LipModel}) and the argument of
  the proof from Lemma \ref{alpha:lemma2}, it follows that 
  \begin{align*}
    & |\ell(\y(t),\y_{\theta_1}(t)) - \ell(\y(t),\y_{\theta_2}(t))| \le L_{\ell} \|\y_{\theta_1}(t) - \y_{\theta_2}(t)\|_2 \\
    & \le L_{\ell}\sum_{k=0}^{\infty} \|c_{\theta_1}(k)-c_{\theta_2}(k)\|_{2} \|\e_g(t-k)\|_{2} 
    \le L_{\ell} \sum_{k=0}^{\infty} \sum_{l=0}^{k} \|\alpha_{\theta_1}(l)-\alpha_{\theta_2}(l)\|_2\|\alpha_{g,w}(k-l)\|_2 c_{\e}\sqrt{n_\e} \\
    & \le L_{\ell}\|\alpha_{\theta_1}-\alpha_{\theta_2}\|_{\ell_1}\|\alpha_{g}\|_{\ell_1}c_{\e}\sqrt{n_\e}=
          \underbrace{L_{\alpha}L_{\ell}\|\alpha_{g}\|_{\ell_1}c_{\e}\sqrt{n_\e}}_{L_{\Theta}(\delta)} \|\theta_1-\theta_2\|_2
  \end{align*}
  From this it then follows that 
    \begin{align}
   & |\mathcal{L}(\theta_1) - \mathcal{L}(\theta_2)| \le L_{\Theta}(\delta) \|\theta_1 - \theta_2\| \quad \forall \theta_1, \theta_2 \in \Theta 
    \label{pac:lipschitz:loss:eq1} \\
   & |\hat{\mathcal{L}}_N(\theta_1) - \hat{\mathcal{L}}_N(\theta_2)| \le L_{\Theta}(\delta) \|\theta_1 - \theta_2\| \quad \forall \theta_1, \theta_2 \in \Theta
    \label{pac:lipschitz:loss:eq2}
   \end{align}
   By noticing that $r=0$ and $L=1$ for the bounded case, 
   the claim of the lemma follows. 

  \textbf{Unbounded noise case:}
  Assume that the noise is not necessarily bounded and $\ell$  is $L_{\ell}$-Lipschitz. From Subsection \ref{lipschitz:unbounded},
  recall the definition of truncated empirical and true errors 
  $\bar{\mathcal{L}}_N(\theta)$ and $\bar{\mathcal{L}}(\theta)$, corresponding to the 
  output  $\bar{\y}(t)$ and $\bar{\y}_{\theta}(t)$ and $\bar{\y}_{\theta}(t \mid 0)$ generated for the truncated noise
  $\bar{\e}_g(t)=\e_g(t)\chi(\e_g(t) < C)$ for $C=\frac{\ln(N)}{K\|\alpha_g\|_{\ell_1}}$.
  Using Chernoff's inequality and Lemma \ref{vl-vbl:lemma1}, it follows that 
  \begin{equation}
    \label{pac:lipschitz:loss:eq05}
  \begin{aligned}
     \bP(\forall \theta \in \Theta: \left|\hat{\mathcal{L}}_N(\theta) - \bar{\mathcal{L}}_N(\theta)\right| > c) \le  
     \frac{e^{\lambda\frac{1}{2} \left(  \mathcal{C}(\|\bar{\alpha}\|_{\ell_1} \|\alpha_g\|_{\ell_1}, 2L_{\ell}\lambda,N)+
\mathcal{C}(\|\alpha_g\|_{\ell_1}, 2L_{\ell}\lambda,N)\right)}}{e^{\lambda c}}
  \end{aligned}
  \end{equation}
  and hence 
   \begin{equation}
    \label{pac:lipschitz:loss:eq5}
  \begin{aligned}
     & \bP\left(\forall \theta \in \Theta:  \left|\hat{\mathcal{L}}_N(\theta) - \bar{\mathcal{L}}_N(\theta)\right| \le 
      \frac{1}{2\lambda} \left(\mathcal{C}(\|\bar{\alpha}\|_{\ell_1} \|\alpha_g\|_{\ell_1}, 2L_{\ell}\lambda,N)+ \right. \right. \\
& \left. \left. \mathcal{C}(\|\alpha_g\|_{\ell_1}, 2L_{\ell}\lambda,N) + \ln\left(\frac{1}{\delta }\right)\right)\right)
      \ge 1-\delta
  \end{aligned}
  \end{equation}
   Similarly, from Lemma \ref{vl-vbl:lemma1} it follows that
   \begin{equation}
    \label{pac:lipschitz:loss:eq5.1}
    \forall \theta \in \Theta: \quad
    |\mathcal{L}(\theta) - \mathcal{\bar{L}}(\theta)| \le  
   \frac{1}{2\lambda} \left( \mathcal{C}(\|\bar{\alpha}\|_{\ell_1} \|\alpha_g\|_{\ell_1}, 2L_{\ell}\lambda,N)+
    \mathcal{C}(\|\alpha_g\|_{\ell_1}, 2L_{\ell}\lambda,N)\right) 
   \end{equation}
   From \eqref{thm:main:pf:unbounded:eq-1} in Subsection \ref{lipschitz:unbounded} for $C=\frac{\ln(N)}{K\|\alpha_g\|_{\ell_1}}$ it follows that
   \begin{align*}
   \frac{1}{2} \left( \mathcal{C}(\|\bar{\alpha}\|_{\ell_1} \|\alpha_g\|_{\ell_1}, 2L_{\ell}\lambda,N)+
    \mathcal{C}(\|\alpha_g\|_{\ell_1}, 2L_{\ell}\lambda,N)\right) = 2\bar{\mathcal{C}}_{ub}(\frac{\lambda}{8}, L_{\ell},N)
   \end{align*}
   and hence by  setting $\lambda=\frac{\sqrt{N}}{16 L_{\ell}\ln(N)}$  it follows that
   \begin{align}
     & \frac{1}{\lambda} (\frac{1}{2} \left(\mathcal{C}(\|\bar{\alpha}\|_{\ell_1} \|\alpha_g\|_{\ell_1}, 2L_{\ell}\lambda,N)+
\mathcal{C}(\|\alpha_g\|_{\ell_1}, 2L_{\ell}\lambda,N)\right) \nonumber \\
     & \le (\frac{1}{2} C_{u,1}+\frac{1}{8} C_{u,2}) \frac{16 L_{\ell}\ln(N)}{\sqrt{N}}
     \le (C_{u,1}+C_{u,2}) \frac{16L_{\ell}\ln(N)}{\sqrt{N}}
    \label{pac:lipschitz:loss:eq6}
   \end{align}
    and hence, appplying it to \eqref{pac:lipschitz:loss:eq5}-\eqref{pac:lipschitz:loss:eq5.1} and recalling that
    $L=16L_{\ell}$ results in
    \begin{equation}
    \label{pac:lipschitz:loss:eq07}
      \begin{aligned}
      & \bP\left(\forall \theta \in \Theta: \left|\hat{\mathcal{L}}_N(\theta) - \bar{\mathcal{L}}_N(\theta)\right| \le (C_{u,1}+C_{u,2}+\ln(\frac{1}{\delta})) \frac{16 L_{\ell}\ln(N)}{\sqrt{N}}\right) \ge 1-\delta \\
      &  \forall \theta_1,\theta_2 \in \Theta: |\mathcal{L}(\theta_1) - \mathcal{L}(\theta_2)| \le  (C_{u,1}+C_{u,2}) \frac{L\ln(N)}{\sqrt{N}}\\
      \end{aligned}
    \end{equation}
     Note that  \eqref{pac:lipschitz:loss:eq1}-\eqref{pac:lipschitz:loss:eq2} holds for $\bar{\mathcal{L}}_N(\theta)$ and 
     $\bar{\hat{\mathcal{L}}}_N(\theta)$ with $L_{\Theta}(\delta)=L_{\alpha}L_{\ell}\sqrt{n_\e}\frac{\ln(N)}{K\|\alpha_g\|_{\ell_1}}$ and hence 
     by using \eqref{pac:lipschitz:loss:eq07} and the observation
     \begin{align*}
       & |\hat{\mathcal{L}}_N(\theta_1) - \hat{\mathcal{L}}_N(\theta_2)| \le
       |\hat{\mathcal{L}}_N(\theta_1) - \bar{\hat{\mathcal{L}}}_N(\theta_1)| + |\bar{\hat{\mathcal{L}}}_N(\theta_1) - \bar{\hat{\mathcal{L}}}_N(\theta_2)| + |\bar{\hat{\mathcal{L}}}_N(\theta_2) - \hat{\mathcal{L}}_N(\theta_2)| \\
       & |\mathcal{L}(\theta_1)-\mathcal{L}(\theta_2)| \le |\mathcal{L}(\theta_1)-\bar{\mathcal{L}}(\theta_1)| + |\bar{\mathcal{L}}(\theta_1)-\bar{\mathcal{L}}(\theta_2)| + |\bar{\mathcal{L}}(\theta_2)-\mathcal{L}(\theta_2)|
     \end{align*}
     it follows
     \begin{equation}
    \label{pac:lipschitz:loss:eq7}
      \begin{aligned}
      & \bP\left(\forall \theta_1, \theta_2 \in \Theta: \left|\hat{\mathcal{L}}_N(\theta_1) - \hat{\mathcal{L}}_N(\theta_1)\right| \le 2(C_{u,1}+C_{u,2}+\ln(\frac{1}{\delta})) \frac{16 L_{\ell}\ln(N)}{\sqrt{N}}) + \right. \\
      & \left. L_{\alpha}L_{\ell}\sqrt{n_\e}\frac{\ln(N)}{K\|\alpha_g\|_{\ell_1}} \|\theta_1 - \theta_2\|_2 \right) \ge 1-\delta \\
      & \forall \theta_1, \theta_2 \in \Theta:  \left|\mathcal{L}(\theta_1) - \mathcal{L}(\theta_2)\right| \\
      & \le  2(C_{u,1}+C_{u,2}+\ln(\frac{1}{\delta})) \frac{L \ln(N)}{\sqrt{N}}) +L_{\alpha}L_{\ell}\sqrt{n_\e}\frac{\ln(N)}{K\|\alpha_g\|_{\ell_1}} \|\theta_1 - \theta_2\|_2
      \end{aligned}
    \end{equation}
    Since $L_{\alpha}L_{\ell}\sqrt{n_\e}\frac{\ln(N)}{K\|\alpha_g\|_{\ell_1}}=L_{\Theta}L\ln(N)$ with the definition of $L_{\Theta}$
    as in Corollary \ref{lem:pac}, \eqref{pac:lipschitz:loss:eq7} can be rewritten as
\begin{equation}
    \label{pac:lipschitz:loss:eq8}
      \begin{aligned}
      & \bP\big(\forall \theta_1, \theta_2 \in \Theta: \left|\hat{\mathcal{L}}_N(\theta_1) - \hat{\mathcal{L}}_N(\theta_1)\right| \le 2(C_{u,1}+C_{u,2}+\ln(\frac{1}{\delta})) \frac{L\ln(N)}{\sqrt{N}}) +  \\
      & L_{\Theta}(\delta) L\ln(N)\|\theta_1 - \theta_2\|_2\big) \ge 1-\delta \\
      & \forall \theta_1, \theta_2 \in \Theta:  \left|\mathcal{L}(\theta_1) - \mathcal{L}(\theta_2)\right|  \\
      & \le  2(C_{u,1}+C_{u,2}+\ln(\frac{1}{\delta})) \frac{L\ln(N)}{\sqrt{N}}) +L_{\Theta}(\delta)L\ln(N) \|\theta_1 - \theta_2\|_2
      \end{aligned}
    \end{equation}
    hence \eqref{pac:lipschitz:loss:eq31}-\eqref{pac:lipschitz:loss:eq32}. 
    By using the union bound, 
    and that \eqref{catoni1} holds with probability $1-2\delta$, 
    \eqref{pac:lipschitz:loss:eq3} follows. 


\textbf{Quadratic case}
Recall from \eqref{eq:initial:tildern1:quadratic0}, Subsection \ref{thm:main:quadratic} the definition of the loss
  function $\ell_C$ and the true and empirical errors
  ${}^C \mathcal{L}(\theta)$ and ${}^C \mathcal{L}_N(\theta)$.
  Recall that for $C=6G_e\|\e_g\|_{\psi_2}\sqrt{\ln(\frac{3N^2}{2\delta})}$. 

  Recall that  by Lemma \ref{lem:quadratic1} that 
   \begin{equation}
  \label{quadratic:lipschitz:eq1}
  \begin{aligned}
   & \sup_{t=0,\ldots,N-1, \theta \in \Theta} \max\{\|\y(t)-\y_{\theta}(t)\|_2, \|\y(t)-\y_{\theta}(t[0])\|_\infty\} \le C 
   \end{aligned}
   \end{equation}
   holds with probability at least $1-\frac{\delta}{3}$. 
   Also recall that $\ell_C$ is $L_{\ell}$-Lipschitz with   $L_{\ell} = 12G_e \|\e_g\|_{\psi_2} \sqrt{\ln\left(\frac{3N^2}{2\delta}\right)}$.

   By applying \eqref{pac:lipschitz:loss:eq8} to ${}^C \mathcal{L}_N(\theta)$ and ${}^C \mathcal{L}(\theta)$
   and replacing $\delta$ with $\frac{2\delta}{3}$ and using the union bound, it follows that
\begin{equation}
  \label{quadratic:lipschitz:eq3}
  \begin{aligned}
   & \sup_{t=0,\ldots,N-1,\theta \in \Theta} \max\{\|\y(t)-\y_{\theta}(t)\|_2, \|\y(t)-\y_{\theta}(t[0])\|_\infty\} \le 6G_e (\|\e_g\|_{\psi_2})\sqrt{\ln\left(\frac{3N^2}{2\delta}\right)} \\
    &      \sup_{\theta_1,\theta_2 \in \Theta}\left|{}^C \mathcal{L}_N(\theta_1) - {}^C \mathcal{L}_N(\theta_2)\right| \\
         & \le L_{\alpha}L_{\ell}\frac{\sqrt{n_\e}}{K\|\alpha_g\|_{\ell_1}} \ln(N) \|\theta_1 - \theta_2\|
           +  2\frac{16L_{\ell}\ln(N)}{\sqrt{N}}\left(C_{u,1} + C_{u,2}+\ln\left(\frac{2}{3\delta}\right)\right), 
  \end{aligned}
\end{equation}
holds with probability at least $1-\delta$, and
\begin{equation}
  \begin{aligned}
        &  \sup_{\theta_1,\theta_2 \in \Theta} \left|{}^C \mathcal{L}(\theta_1) - {}^C \mathcal{L}(\theta_2)\right| \\
        & \le L_{\alpha}L_{\ell}\frac{\sqrt{n_\e}}{K\|\alpha_g\|_{\ell_1}} \ln(N) \|\theta_1 - \theta_2\|
          + 2\frac{16L_{\ell}\ln(N)}{\sqrt{N}}\left(C_{u,1} + C_{u,2}\right)
   \end{aligned}
   \end{equation}
     where 
    $L_{\ell}=12G_e \|\e_g\|_{\psi_2} \sqrt{\ln\left(\frac{3N^2}{2\delta}\right)}$. 
    By using 
    \[ \sqrt{\ln\left(\frac{3N^2}{2\delta}\right)} \le \sqrt{2}(\ln(N))^{1/2} \sqrt{1+\ln\left(\frac{3}{2\delta}\right)} \]
    it follows that 
    \begin{align*}
        L_{\alpha}L_{\ell}\frac{\sqrt{n_\e}}{K} \ln(N)  \le
        12 L_{\alpha}G_e \|\e_g\|_{\psi_2} \sqrt{1+\ln\left(\frac{3}{2\delta}\right)}\sqrt{2}\frac{\sqrt{n_\e}}{K}
        (\ln(N))^{3/2}
    \end{align*}
    Recall from Subsection \ref{thm:main:quadratic}, proof of Theorem \ref{thm:main} that 
    if 
    \[
         \forall: \quad   t=0,1,\ldots,N-1: \quad  \max\{\|\y(t)-\y_{\theta}(t)\|_2, \|\y(t)-\y_{\theta}(t \mid 0)\|_\infty\} \le C
    \]
   holds, then ${}^C \hat{\mathcal{L}}_N(\theta)=\hat{\mathcal{L}}_N(\theta)$.
   Moreover, recall \eqref{thm:main:quadratic:eq22}-\eqref{thm:main:quadratic:eq21} holds, and thus
   using \eqref{thm:bounded:alt:col:pf:eq6.0}, it follows that
   \begin{equation}
  \label{quadratic:lipschitz:eq04}
   \begin{aligned}
       \left|\mathcal{L}(\theta)-{}^C\mathcal{L}(\theta)\right| & \le 72G_e^2\|\e_g\|_{\psi_2}^2\frac{\ln\left(\frac{3N^2}{2\delta}\right)}{N} \le 72 \frac{L^2}{192^2} \frac{\ln(N)}{N} = \\
       & \frac{6}{16 \cdot 192} L^2 \frac{\ln(N)}{N} \le  \frac{L}{512} \frac{L(\ln(N))^{\frac{3}{2}}}{\sqrt{N}}
   \end{aligned}
  \end{equation}
  In then follows that
  \begin{equation}
  \label{quadratic:lipschitz:eq4}
   \begin{aligned}
       & \left|\mathcal{L}(\theta_1)-\mathcal{L}(\theta_2)\right| \le 
       \left|\mathcal{L}(\theta_1)-{}^C\mathcal{L}(\theta_1)\right| + \left|{}^C\mathcal{L}(\theta_1)-{}^C\mathcal{L}(\theta_2)\right| + \left|{}^C\mathcal{L}(\theta_2)-\mathcal{L}(\theta_2)\right| \\
       & \le \frac{\sqrt{n_\e}}{K} L\ln(N) \|\theta_1 - \theta_2\|
          + 2\frac{16L_{\ell}\ln(N)}{\sqrt{N}}\left(C_{u,1} + C_{u,2}\right) + \frac{2L}{512} \frac{L(\ln(N))^{\frac{3}{2}}}{\sqrt{N}}
   \end{aligned}
  \end{equation}
   By repeating the argument of the proof of Theorem \ref{thm:main} for the quadratic case,
   and using \eqref{quadratic:lipschitz:eq4} and 
   \begin{align*}
     & \frac{1}{L} 12 L_{\alpha}G_{e} \|\e_g\|_{\psi_2} \sqrt{1+\ln\left(\frac{3}{2\delta}\right)}\sqrt{2}\frac{\sqrt{n_\e}}{K}= \\
       & \frac{12 L_{\alpha}G_{e} \|\e_g\|_{\psi_2} \sqrt{1+\ln\left(\frac{3}{2\delta}\right)}\sqrt{2}\frac{\sqrt{n_\e}}{K}}{192 G_e \|\e_g\|_{\psi_2}\sqrt{2}\sqrt{1+\ln\left(\frac{3}{2\delta}\right)}}=
        L_{\alpha}\frac{\sqrt{n_\e}}{16K}
   \end{align*}
   it then follows that
   \eqref{quadratic:lipschitz:eq3} implies that 
   \begin{align}
   & \mbox{\eqref{quadratic:lipschitz:eq1} holds} \\
   & \sup_{\theta_1,\theta_2 \in \Theta} |\mathcal{L}_N(\theta_1) - \mathcal{L}_N(\theta_2)| \\
   & \le L_{\Theta}(\delta) L(\ln(N))^{3/2} \|\theta_1 - \theta_2\|
           +  \frac{2L\ln(N)^{r}}{\sqrt{N}}\left(C_{u,1} + C_{u,2}+\ln\left (\frac{2}{3\delta}\right)\right)
  \label{quadratic:lipschitz:eq41}
   \end{align}
   which is implies \eqref{pac:lipschitz:loss:eq31}, and 
   \begin{align}
           \\
       & \sup_{\theta_1,\theta_2 \in \Theta}
          |\mathcal{L}(\theta_1) - \mathcal{L}(\theta_2)| \\
          &  \le L_{\Theta} L(\ln(N))^{3/2} \|\theta_1 - \theta_2\|
          + \frac{2L\ln(N)^{3/2}}{\sqrt{N}}\left(C_{u,1} + C_{u,2}\right) + \\
          & +  2\frac{L}{512} \frac{L (\ln(N))^{3/2}}{\sqrt{N}}
  \label{quadratic:lipschitz:eq42}
   \end{align}
  which is equivalent to
  \eqref{pac:lipschitz:loss:eq32}. 
   Since \eqref{quadratic:lipschitz:eq3} holds with probability at least $1-\delta$, it then follows that
   \eqref{pac:lipschitz:loss:eq31} also holds with probability at least $1-\delta$.

   Finally, from Corollary \ref{thm:bounded:alt:col} it follows that \eqref{catoni1} holds with probability at least $1-\delta$, and hence by the union bound \eqref{quadratic:lipschitz:eq3} follows.

\end{proof}
\begin{proof}[Proof of Corollary \ref{lem:pac}]
 Consider the inequality \eqref{pac:lipschitz:loss:eq3} which holds with probability at least $1-(2+r_{\mathrm{Lip,1}})\delta$.
If \eqref{pac:lipschitz:loss:eq3} holds, then for all $\theta \in \Theta$,
\[ \epsilon (\mathcal{L}(\theta)) - (\mathcal{L}(\tilde{\theta}))  \le |\epsilon (\mathcal{L}(\theta)-\mathcal{L}(\tilde{\theta}))|=|\mathcal{L}(\theta)-\mathcal{L}(\tilde{\theta})| \]
and hence 
\[
   \epsilon \mathcal{L}(\theta) \le \epsilon \mathcal{L}(\tilde{\theta}) 
      + L_{\Theta}(\delta) L(\ln(N))^r \|\theta - \tilde{\theta}\|_2
          + 2r_{\text{Lip},1}\frac{L\ln(N)^{r}}{\sqrt{N}}\left(C_{u,1} + C_{u,2}\right) + \\
            2r_{\text{Lip},2}  \frac{L}{512} \frac{L(\ln(N))^r}{\sqrt{N}}
\]
Then $E_{\tilde{\theta} \sim \rho_{\theta}} \|\tilde{\theta}-\theta\|^2_2 \le \frac{\sigma^2}{N}$ and hence
 \[ E_{\tilde{\theta} \sim \rho_{\theta}} \|\tilde{\theta}-\theta\|_2 \le  \sqrt{E_{\tilde{\theta} \sim \rho_{\theta}} \|\tilde{\theta}-\theta\|^2_2} \le \frac{\sigma}{\sqrt{N}} \]
Therefore
\begin{equation}
  \label{lem:pac:eq1}
\begin{aligned}
    & \mathcal{L}(\theta) \le E_{\tilde{\theta} \sim \rho_{\theta}}\mathcal{L}(\tilde{\theta}) + 
     L_{\Theta}(\delta) L(\ln(N))^r   \|\theta - \tilde{\theta}\|_2 \\
          & + 2r_{\text{Lip},1}\frac{L\ln(N)^{r}}{\sqrt{N}}\left(C_{u,1} + C_{u,2}\right) + 
          2r_{\text{Lip},2}  \frac{L}{512} \frac{L(\ln(N))^r}{\sqrt{N}} \\
          & \le \frac{\sigma L_{\Theta}(\delta)}{\sqrt{N}}+2r_{\text{Lip},1}\frac{L\ln(N)^{r}}{\sqrt{N}}\left(C_{u,1} + C_{u,2}\right) + 
          2r_{\text{Lip},2}  \frac{L}{512} \frac{L(\ln(N))^r}{\sqrt{N}}
\end{aligned}
\end{equation}
Similary, 
\begin{align*}
& \epsilon \hat{\mathcal{L}}_N(\tilde{\theta}) \le \epsilon \hat{\mathcal{L}}_N(\theta) + 
L_{\Theta}(\delta) L(\ln(N))^r \|\theta - \tilde{\theta}\|_2
           +  2r_{\text{Lip},1}\frac{L\ln(N)^{r}}{\sqrt{N}}\left(C_{u,1} + C_{u,2}+\ln\left(\frac{1}{\delta}\right)\right)
\end{align*}
and hence 
\begin{equation}
  \label{lem:pac:eq2}
\begin{aligned}
    & \epsilon E_{\tilde{\theta} \sim \rho_{\theta}}\hat{\mathcal{L}}_N(\tilde{\theta})  \le  \epsilon \hat{\mathcal{L}}_N(\theta)+
     L_{\Theta}(\delta) L(\ln(N))^r E_{\tilde{\theta} \sim \rho_{\theta}}\|\theta - \tilde{\theta}\|_2 \\
          & + 2r_{\text{Lip},1}\frac{L\ln(N)^{r}}{\sqrt{N}}\left(C_{u,1} + C_{u,2}+\ln\left(\frac{1}{\delta}\right)\right) \\
          & \le \frac{\sigma L_{\Theta}(\delta)}{\sqrt{N}}+2r_{\text{Lip},1}\frac{L\ln(N)^{r}}{\sqrt{N}}\left(C_{u,1} + C_{u,2}
          \ln\left(\frac{1}{\delta}\right)
          \right)
\end{aligned}
\end{equation}
In addition, since \eqref{catoni1} also holds, it follows that
\begin{equation}
  \label{lem:pac:eq3}
    \epsilon E_{\tilde{\theta} \sim \rho_{\theta}}\mathcal{L}(\tilde{\theta}) \le \epsilon E_{\tilde{\theta} \sim \rho_{\theta}}\hat{\mathcal{L}}_N(\tilde{\theta}) + \frac{L (\ln(N))^r}{\sqrt{N}} \left( \KL(\rho_{\theta} \| \pi) + \ln\left(\frac{C_{\delta}}{\delta}+C(1,\delta)\right) \right)
\end{equation}
By combining \eqref{lem:pac:eq1}, \eqref{lem:pac:eq2}, and \eqref{lem:pac:eq3}, it follows that
\begin{align*}
   \epsilon \mathcal{L}(\theta) \le & \epsilon E_{\tilde{\theta} \sim \rho_{\theta}}\mathcal{L}(\tilde{\theta}) + \frac{\sigma L_{\Theta}(\delta)}{\sqrt{N}}+2r_{\text{Lip},1}\frac{L\ln(N)^{r}}{\sqrt{N}}\left(C_{u,1} + C_{u,2}\right)  \\
          & + 2r_{\text{Lip},2}  \frac{L}{512} \frac{L(\ln(N))^r}{\sqrt{N}} \\
          & \le \epsilon E_{\tilde{\theta} \sim \rho_{\theta}}\hat{\mathcal{L}}_N(\tilde{\theta}) + \frac{L (\ln(N))^r}{\sqrt{N}} \left( \KL(\rho_{\theta} \| \pi) + \ln\left(\frac{C_{\delta}}{\delta}\right)+C(1,\delta)\right) + \\
          & + \frac{\sigma L_{\Theta}(\delta)}{\sqrt{N}}+2r_{\text{Lip},1}\frac{L\ln(N)^{r}}{\sqrt{N}}\left(C_{u,1} + C_{u,2}\right) + 2r_{\text{Lip},2}  \frac{L}{512} \frac{L(\ln(N))^r}{\sqrt{N}} \\
          & \le \epsilon \hat{\mathcal{L}}_N(\theta) + 
\frac{\sigma L_{\Theta}(\delta)}{\sqrt{N}}
           +  2r_{\text{Lip},1}\frac{L\ln(N)^{r}}{\sqrt{N}}\left(C_{u,1} + C_{u,2}+\ln\left(\frac{1}{\delta}\right)\right) \\
           & + \frac{L (\ln(N))^r}{\sqrt{N}} \left( \KL(\rho_{\theta} \| \pi) + \ln\left(\frac{C_{\delta}}{\delta}\right)+C(1,\delta) \right) + \\
           & + \frac{\sigma L_{\Theta}(\delta)}{\sqrt{N}}+2r_{\text{Lip},1}\frac{L\ln(N)^{r}}{\sqrt{N}}\left(C_{u,1} + C_{u,2}\right) + 2r_{\text{Lip},2}  \frac{L}{512} \frac{L(\ln(N))^r}{\sqrt{N}} \\
           & = \epsilon \hat{\mathcal{L}}_N(\theta)+\frac{2\sigma L_{\Theta}(\delta)}{\sqrt{N}} 
            + 4r_{\text{Lip},1}\frac{L\ln(N)^{r}}{\sqrt{N}}\left(C_{u,1} + C_{u,2}\right) + 2r_{\text{Lip},2}  \frac{L}{512} \frac{L(\ln(N))^r}{\sqrt{N}}  \\
           & + \frac{L(\ln(N))^r}{\sqrt{N}} \left(C(\theta) \ln(N) + \ln\left(\frac{C_{\delta}}{\delta^{2r_{\text{Lip},1}+1}}\right) + C(\epsilon,\delta) \right) \\
           & \le \epsilon \hat{\mathcal{L}}_N(\theta) + \frac{L(\ln(N))^{r+1}}{\sqrt{N}} \left(2\sigma L_{\Theta}(\delta)+4r_{\text{Lip},1}\left(C_{u,1} + C_{u,2}\right)  \right. \\
           & \left. + 2r_{\text{Lip},2}  \frac{L}{512}+ \left(C(\theta) + \ln\left(\frac{C_{\delta}}{\delta^{2r_{\text{Lip},1}+1}}\right) + C(\epsilon,\delta) \right)
           \right)
\end{align*}
where in the last step we used that $r \ge 0$ and $\ln(N) \ge 1$ for $N \ge 4$.
\end{proof}

\section{Proof of Corollary \ref{lem:bound:mean}}
\label{sec:proof_single_draw}

We start with the following remark explaining the notion of random models drawn
from a data-dependent posterior. 
\begin{remark}[Joint probability over data and parameters]
  \label{rem:jointProb}
Let $\rho^{\train}$ be a function on $\Omega \times \Theta$
which is measurable w.r.t. the direct product $\F \times \B$ of $\sigma$-algebras, and such that
for any $\omega \in \Omega$,
$\rho^{\train}(\omega): \Theta \ni \theta \mapsto \rho^{\train}(\omega,\theta)$ is a 
probability density on $\Theta$.
We refer to $\rho^{\train}$ as a \emph{random density
on $\Omega$}. Let $B$ be a subset of $\F \times \B$ and denote by
$P_{\theta \sim \rho^{\train}}(B)$ the random variable
$\omega \mapsto P_{\theta \sim \rho^{\train}(\omega)} (\{ \theta \mid (\omega, \theta) \in B\})$, where 
$P_{\theta \sim \rho^{\train}(\omega)}$ is the probability measure induced by the density $\rho^{\train}(\omega)$.
Define the probability measure
$\bP \times P_{\theta \sim \rho^{\train}}$
on $\F \times \B$ as follows:
\[
   \left(\bP \times P_{\theta \sim \rho^{\train}}\right)(B) \triangleq
   \bE[P_{\theta \sim \rho^{\train}}(B)]
\]
Similarly to the previous definition of relationship holding with probability over data, if $\diamond$ is a binary operator and 
$\{X_i,Y_i\}_{i \in I}$ is family of functions taking values in $\mathbb{R}$ and which are measurable w.r.t. $(\Omega \times \Theta,\F \times \B)$, 
then we say that $\forall i \in I: X_i \diamond  Y_i$, holds with probability  at least $1-\delta$ over data and over samples
from $\rho^{\train}$, if the set
$B=\{ (\omega,\theta) \in \Omega \times \Theta \mid \forall i \in I: X_i(\omega,\theta) \diamond  Y_i(\omega,\theta)\}$ is
measurable in $(\Omega \times \Theta,\F \times \B)$ and
$\left(\bP \times P_{\theta \sim \rho^{\train}}\right)(B) \geq 1-\delta$. 

Note that if a relationship holds with a probability at least $1-\delta$ over data, then it also holds with a probability at least $1-\delta$ over data and sampled from $\rho^{\train}$: if $F$ is an event from $\mathcal{F}$, then 
$
\left(\bP \times P_{\theta \sim \rho^{\train}}\right)(F) = \bP(F) \geq 1-\delta.
$

\end{remark}

\subsection{Proof of the \eqref{bound:emp} on empirical error}
First, we present a useful property of Gibbs posteriors.  
 \begin{lemma}[Property of Gibbs distribution]
  \label{lem:gibbs_property}
  \begin{align}
    \hat{\mathcal{L}}_N(\theta)+\frac{1}{\lambda}\ln\left(\frac{\rho^{\text{Gibbs}}(\theta)}{\pi(\theta)}\right) 
       = &  -\frac{1}{\lambda}\ln E_{\theta \sim \pi} e^{-\lambda \hat{\mathcal{L}}_N(\theta)} \nonumber \\
    & =\inf_{\rho \in \mathcal{M}_{\pi}} \left(\bE_{\theta \sim \rho} \hat{\mathcal{L}}_N(\theta) + \frac{1}{\lambda} \KL(\rho\|\pi)\right)
  \label{lem:gibbs_property:eq1} \\
   E_{\theta \sim \rho^{\text{Gibbs}}}\hat{\mathcal{L}}_N(\theta)+\frac{1}{\lambda} 
      \KL(\rho^{\text{Gibbs}}\|\pi) & =  -\frac{1}{\lambda} \ln   \bE_{\theta \sim \pi} e^{-\lambda \hat{\mathcal{L}}_N(\theta)} = \nonumber \\
    & =\inf_{\rho \in \mathcal{M}_{\pi}} \left(\bE_{\theta \sim \rho}(\hat{\mathcal{L}}_N(\theta)) + \frac{1}{\lambda} \KL(\rho\|\pi)\right)
  \label{lem:gibbs_property:eq2}
  \end{align}
 \end{lemma}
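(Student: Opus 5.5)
The plan is to verify both identities by direct computation from the explicit form of $\rho^{\text{Gibbs}}$ and then invoke the Donsker--Varadhan variational formula. Write $Z \triangleq E_{\theta\sim\pi} e^{-\lambda \hat{\mathcal{L}}_N(\theta)}$. Since $\hat{\mathcal{L}}_N(\theta)\ge 0$ (the loss is nonnegative), $0<Z\le 1$ for every data realization, so $\ln Z$ is finite and $\rho^{\text{Gibbs}}(\theta)=\pi(\theta)e^{-\lambda\hat{\mathcal{L}}_N(\theta)}/Z$ is a well-defined density in $\mathcal{M}_\pi$. Taking logarithms gives, for every $\theta$ with $\pi(\theta)>0$,
\[
\ln\frac{\rho^{\text{Gibbs}}(\theta)}{\pi(\theta)} = -\lambda \hat{\mathcal{L}}_N(\theta) - \ln Z .
\]
Dividing by $\lambda$ and rearranging yields $\hat{\mathcal{L}}_N(\theta)+\frac{1}{\lambda}\ln\frac{\rho^{\text{Gibbs}}(\theta)}{\pi(\theta)} = -\frac{1}{\lambda}\ln Z$. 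This is the first equality of \eqref{lem:gibbs_property:eq1}. Note that it is pointwise in $\theta$ and constant.

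Next I integrate this pointwise identity against $\rho^{\text{Gibbs}}$. By definition of $\KL$, this gives $E_{\theta\sim\rho^{\text{Gibbs}}}\hat{\mathcal{L}}_N(\theta)+\frac{1}{\lambda}\KL(\rho^{\text{Gibbs}}\|\pi)=-\frac{1}{\lambda}\ln Z$, which is the first equality of \eqref{lem:gibbs_property:eq2}. Integrability is not an issue, because the integrand is a constant.

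For the variational (infimum) equality, common to both displays, I take any $\rho\in\mathcal{M}_\pi$ with $\KL(\rho\|\pi)<\infty$; otherwise the bound is trivial. Rewriting,
\[
\KL(\rho\|\pi) = E_{\theta\sim\rho}\ln\frac{\rho(\theta)}{\rho^{\text{Gibbs}}(\theta)} + E_{\theta\sim\rho}\ln\frac{\rho^{\text{Gibbs}}(\theta)}{\pi(\theta)} = \KL(\rho\|\rho^{\text{Gibbs}}) -\lambda E_{\theta\sim\rho}\hat{\mathcal{L}}_N(\theta)-\ln Z .
\]
Hence
\[
E_{\theta\sim\rho}\hat{\mathcal{L}}_N(\theta)+\frac{1}{\lambda}\KL(\rho\|\pi) = -\frac{1}{\lambda}\ln Z + \frac{1}{\lambda}\KL(\rho\|\rho^{\text{Gibbs}}) \ge -\frac{1}{\lambda}\ln Z,
\]
with equality at $\rho=\rho^{\text{Gibbs}}$, using nonnegativity of $\KL$ (Gibbs' inequality). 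Therefore the infimum equals $-\frac{1}{\lambda}\ln Z$ and is attained by $\rho^{\text{Gibbs}}$. This is also the content of \cite[Lemma 2.2]{alquier2021userfriendly}, which I could simply cite.

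The only delicate step is justifying the splitting of $\KL(\rho\|\pi)$. It requires $\rho\ll\rho^{\text{Gibbs}}$, which holds because $\rho^{\text{Gibbs}}$ and $\pi$ have the same support ($e^{-\lambda\hat{\mathcal{L}}_N}>0$). It also requires that $E_\rho\hat{\mathcal{L}}_N$ be finite. When $E_\rho\hat{\mathcal{L}}_N=\infty$, the left-hand side is $+\infty$ and the inequality holds trivially. Otherwise, each term is well-defined, since $\hat{\mathcal{L}}_N\ge0$ and $\ln Z$ is finite.
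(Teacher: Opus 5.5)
Your proposal is correct and follows the paper's proof: the first equalities in both displays come from taking logarithms of the explicit Gibbs density and then integrating the resulting constant against $\rho^{\text{Gibbs}}$, exactly as the paper does. The only difference is in the infimum: the paper simply cites the Donsker--Varadhan formula \cite[Lemma 2.2]{alquier2021userfriendly}. You instead prove it via the standard decomposition $\KL(\rho\|\pi)=\KL(\rho\|\rho^{\text{Gibbs}})-\lambda E_{\theta\sim\rho}\hat{\mathcal{L}}_N(\theta)-\ln Z$, which also shows the infimum is attained at $\rho^{\text{Gibbs}}$ and covers the edge cases the paper leaves implicit.
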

 \begin{proof}[Proof of Lemma \ref{lem:gibbs_property}]
  Using the definition of the Gibbs distribution $\rho^{\text{Gibbs}}(\theta)=\frac{\pi(\theta)e^{-\lambda \hat{\mathcal{L}}_N(\theta)}}{Z}$, $Z=E_{\theta \sim \pi} e^{-\lambda \hat{\mathcal{L}}_N(\theta)}$, it follows that
  \[ \ln\left(\frac{\rho^{\text{Gibbs}}(\theta)}{\pi(\theta)}\right) = -\lambda \hat{\mathcal{L}}_N(\theta) - \ln Z \]
  and hence 
  \[ \hat{\mathcal{L}}_N(\theta)+\frac{1}{\lambda}\ln\left(\frac{\rho^{\text{Gibbs}}(\theta)}{\pi(\theta)}\right) = \hat{\mathcal{L}}_N(\theta) - \hat{\mathcal{L}}_N(\theta) - \frac{1}{\lambda} \ln Z = - \frac{1}{\lambda} \ln Z \]
This gives the first equality of \eqref{lem:gibbs_property:eq1}.

As to  the first equality of \eqref{lem:gibbs_property:eq2}, it follows directly from the first equality of  \eqref{lem:gibbs_property:eq1} by taking the expectation with respect to $\theta \sim \rho^{\text{Gibbs}}$ and recalling that $\KL(\rho^{\text{Gibbs}}\|\pi) = \bE_{\theta \sim \rho^{\text{Gibbs}}} \left[\ln \frac{\rho^{\text{Gibbs}}(\theta)}{\pi(\theta)}\right]$.

The second equalities of \eqref{lem:gibbs_property:eq1}-\eqref{lem:gibbs_property:eq2} are 
Donsker-Varadhan's variational formula \cite[Lemma 2.2]{alquier2021userfriendly}

 \end{proof}
 Next, we present a version of Theorem \ref{thm:main} where the inequality
holds not only  over data but over data and 
 any model randomly drawn from a data-dependent posterior.
 \begin{lemma}[Single draw bound]
\label{lem:single_draw}
 Let $\rho^{\train}$ be a data-dependent posterior. 
 For any prior $\pi$,  any $\delta \in (0,0.5)$
 with a probability at least  $(1-2\delta)$ over the data and over all samples $\theta_{\star}$ drawn from $\rho_{\train}$,
 for any $\lambda$ satisfying the condition of Theorem \ref{thm:main}
  \begin{equation}
\label{RenyiBound:single_draw}
 \epsilon \mathcal{L}(\theta_{\star}) \le  \epsilon  \hat{\mathcal{L}}_N(\theta_{\star})+ 
 \frac{1}{\lambda} \left(\ln \frac{\rho^{\train}(\theta_{\star})}{\pi(\theta_{\star})} + \ln\dfrac{C_{\delta}}{\delta} + \frac{1}{2}\sum_{i=1}^{2} \ln \bE_{\theta \sim \pi} \exp(2\lambda \mathcal{C}_i(2\lambda,\theta_{\star},N,\delta,\epsilon))\right)
  \end{equation}
  In particular, with probability at least $(1-2\delta)$ over data and over all samples  $\theta_{\star}$ drawn from $\rho_{\train}$, 
  \begin{equation}
  \label{RenyiBound:single_draw1}
  \epsilon \mathcal{L}(\theta_{\star}) \le \epsilon \hat{\mathcal{L}}_N(\theta_{\star})+\frac{L(\ln(N))^{r}}{\sqrt{N}}(\ln\left(\frac{\rho^{\train}(\theta_{\star})}{\pi(\theta_{\star})}\right)+\ln\frac{C_{\delta}}{\delta} + 
  \mathcal{C}(\delta,\epsilon))
  \end{equation}
  for all $N \ge K_{\delta}$, where the constants $K_{\delta},C_{\delta},L,r$ are defined as in Corollary \ref{thm:bounded:alt:col}.
 Moreover, if  $\epsilon=1$ and 
 $\rho^{\train}=\rho_{\text{Gibbs}}$, then with probability at  least  $(1-2\delta)$
  over the data and over $\theta_{\star}$
  sampled from $\rho_{\text{Gibbs}}$,
  \begin{equation}
\label{RenyiBound:single_draw:giibs}
\begin{aligned}
 \mathcal{L}(\theta_{\star}) \le  
 E_{\theta \sim \pi} \hat{\mathcal{L}}_N(\theta) +  \frac{1}{\lambda} \left(\ln\dfrac{C_{\delta}}{\delta} + \frac{1}{2}\sum_{i=1}^{2} \ln \bE_{\theta \sim \pi} \exp(\mathcal{C}_i(2\lambda,\theta,N,\delta,\epsilon))\right) \\
\end{aligned}
  \end{equation}
  In particular, if the Gibbs posterior $\rho^{\text{Gibbs}}$ is defined for 
  $\lambda=\lambda_N=\frac{\sqrt{N}}{L(\ln(N))^r}$ then for $N \ge K_{\delta}$
\begin{equation}
\label{RenyiBound:single_draw:giibs1}
\begin{aligned}
 \mathcal{L}(\theta_{\star}) \le  
  -\frac{1}{\lambda} \ln E_{\theta \sim \pi} e^{-\lambda \hat{\mathcal{L}}_N(\theta)} + 
 \frac{L(\ln(N))^{r}}{\sqrt{N}}\left(\ln\frac{C_{\delta}}{\delta} +  \mathcal{C}(\delta,1)\right)
\end{aligned}
\end{equation}
  \end{lemma}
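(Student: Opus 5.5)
We follow the standard single-draw PAC-Bayes route of \cite[Theorem 2.7]{alquier2021userfriendly}, reusing the moment generating function bounds from the proof of Theorem \ref{thm:main} rather than the Donsker--Varadhan step. As in Theorem \ref{thm:initial}, write $\epsilon(\mathcal{L}-\hat{\mathcal{L}}_N)=\epsilon(\mathcal{L}-V_N)+\epsilon(V_N-\hat{\mathcal{L}}_N)$.

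For $X(\theta,\omega)-Y(\theta,\omega)$ equal to either of the two differences, Markov's inequality on the joint space of Remark \ref{rem:jointProb} gives
\[
\bE\,E_{\theta\sim\rho^{\train}}\Big[e^{2\lambda(X-Y)-\ln\frac{\rho^{\train}(\theta)}{\pi(\theta)}}\Big]=E_{\theta\sim\pi}\bE[e^{2\lambda(X-Y)}].
\]
This is the change of measure $\rho^{\train}\to\pi$ inside the expectation, followed by Fubini. Hence, with joint probability at least $1-\delta$,
\[
2\lambda(X-Y)(\theta_\star)\le \ln\frac{\rho^{\train}(\theta_\star)}{\pi(\theta_\star)}+\ln\frac1\delta+\ln E_{\theta\sim\pi}\bE[e^{2\lambda(X-Y)}].
\]
Applying this to both differences, taking a union bound, and dividing by $2\lambda$ gives the analogue of \eqref{eq:initial:tildern}. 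In it the density ratio appears in place of $\KL$ and the two log-MGFs appear averaged with weight $\frac12$, with overall probability $1-2\delta$.

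The MGFs are then bounded exactly as in the proof of Theorem \ref{thm:main}. In the bounded case this uses Lemmas \ref{lemma:bounded_mgf(V-hatL)} and \ref{lemma:bounded:mgf(L-V)}. In the Lipschitz case it uses \eqref{thm:main:pf:unbounded:eq1}. The quadratic case needs more care: we apply the Lipschitz result to $\ell_C$ with $\frac{2\delta}{3}$ in place of $\delta$, intersect with the event of Lemma \ref{lem:quadratic1}, which is data-only and so transfers to the joint measure, and use \eqref{thm:main:quadratic:eq210} and \eqref{thm:main:quadratic:eq21} to pass from ${}^C\mathcal{L}$ to $\mathcal{L}$. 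This produces $C_\delta$ and the $\max\{0,\epsilon\}$ term. The result is \eqref{RenyiBound:single_draw}. Substituting $\lambda=\sqrt{N}/(L(\ln N)^r)$ and reusing the estimates of Section \ref{sec:proof_corollary}, which bound the complexity term by $\mathcal{C}(\delta,\epsilon)$ uniformly in $\theta$, yields \eqref{RenyiBound:single_draw1}.

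For the Gibbs case with $\epsilon=1$, substitute the first identity of Lemma \ref{lem:gibbs_property}:
\[
\hat{\mathcal{L}}_N(\theta_\star)+\frac1\lambda\ln\frac{\rho^{\text{Gibbs}}(\theta_\star)}{\pi(\theta_\star)}=-\frac1\lambda\ln E_{\theta\sim\pi}e^{-\lambda\hat{\mathcal{L}}_N(\theta)}.
\]
This gives \eqref{RenyiBound:single_draw:giibs1} directly once $\lambda=\lambda_N$. For \eqref{RenyiBound:single_draw:giibs} we additionally use Jensen's inequality, $-\frac1\lambda\ln E_\pi e^{-\lambda\hat{\mathcal{L}}_N}\le E_\pi\hat{\mathcal{L}}_N$.

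The main obstacle is bookkeeping rather than a new idea. First, the MGF bounds must hold for each fixed $\theta$ before averaging over $\pi$, which holds because they were derived pointwise in $\theta$. Second, in the quadratic case the union bounds over data-only events must be combined correctly with joint-space events. Measurability of $\rho^{\train}$ on $\F\times\B$ is assumed, as in Remark \ref{rem:jointProb}.
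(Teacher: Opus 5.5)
Your proposal is correct and follows essentially the same route as the paper: the change of measure from $\rho^{\train}$ to $\pi$, then Chernoff/Markov on the joint space for $\epsilon(\mathcal{L}-V_N)$ and $\epsilon(V_N-\hat{\mathcal{L}}_N)$, a union bound, the same case-by-case MGF bounds (with the $\ell_C$ reduction and Lemma \ref{lem:quadratic1} in the quadratic case), and Lemma \ref{lem:gibbs_property} for the Gibbs statements, where your Jensen step is just the choice $\rho=\pi$ in the variational formula the paper invokes. As in the paper, the complexity term you actually obtain is $\frac{1}{2}\sum_{i=1}^2\ln E_{\theta\sim\pi}\exp(\mathcal{C}_i(2\lambda,\theta,N,\delta,\epsilon))$ from Theorem \ref{thm:main}, so the factor $2\lambda$ and the argument $\theta_{\star}$ inside $E_{\theta\sim\pi}$ in \eqref{RenyiBound:single_draw} should be read as a typo; this is also the form needed for \eqref{RenyiBound:single_draw:giibs} to follow.
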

\begin{proof}[Proof of Lemma \ref{lem:single_draw}]
 Following the argument of the proof of \cite[Theorem 2.7]{alquier2021userfriendly}, it follows
 that for any measurable function
 $X$ on $\Omega \times \Theta$,
 $E_{\theta \sim \pi} e^{2\lambda X(\omega,\theta)} = E_{\theta \sim \rho^{\train}} e^{2\lambda X(\omega,\theta)-\ln \frac{\rho^{\train}(\omega,\theta)}{\pi(\theta)}}$ and hence 
 $\bE E_{\theta \sim \pi} e^{2\lambda X(\theta)}=\bE E_{\theta \sim \rho^{\train}}
 e^{2\lambda X(\theta)-\ln \frac{\rho^{\train}(\theta)}{\pi(\theta)}}$
 with the random variables $X(\theta): \omega \ni \Omega \mapsto X(\omega,\theta)$.
 Notice that $\bE E_{\theta \sim \rho^{\train}}$ is the expectation operator
 corresponding to the probability measure
 $\bP \times P_{\theta \sim \rho^{\train}}$, and that
$\bE[E_{\theta \sim \pi} X]=E_{\theta \sim \pi} \bE[X]$ 
 , and hence, by Chernoff's bound:
 \begin{align*}
     & \bP \times P_{\theta \sim \rho^{\train}}\big (X < \frac{1}{2\lambda}\big(\ln \frac{1}{\delta} +  \ln \frac{\rho^{\train}(\theta)}{\pi(\theta)}+ \ln \bE E_{\theta \sim \pi} e^{2\lambda X}\big)\big) \\
     & = \bP \times P_{\theta \sim \rho^{\train}}\big ((X - \frac{1}{2\lambda} \ln \frac{\rho^{\train}(\theta)}{\pi(\theta)}) <  \frac{1}{2\lambda}\big(\ln \frac{1}{\delta} + \ln \bE E_{\theta \sim \pi} e^{2\lambda X}\big)\big)
     \le \frac{\bE E_{\theta \sim \rho^{\train}} e^{2\lambda X - \ln \frac{\rho^{\train}(\theta)}{\pi(\theta)}}}{e^{\ln \frac{1}{\delta} + \ln \bE E_{\theta \sim \pi} e^{2\lambda X}}} \\
     & =\frac{\bE E_{\theta \sim \pi} e^{2\lambda X}}{\frac{1}{\delta} \bE E_{\theta \sim \pi} e^{2\lambda X}}=\delta
 \end{align*}
 Hence, 
 \begin{equation}
  \label{lem:single_draw:quadratic:pf0}
   \begin{split}
    X \le  \frac{1}{2\lambda}\big (\ln \frac{1}{\delta} +\ln \frac{\rho^{\train}(\theta)}{\pi(\theta)} + \ln E_{\theta \sim \pi} \bE e^{2\lambda X(\theta)}\big)
   \end{split} 
 \end{equation}
 with probability at least $1-\delta$ over data and samples from $\rho^{\train}$.
 Hence, by taking $X=\epsilon (\mathcal{L}(\theta)-V_N(\theta))$ and then $X=\epsilon (V_N(\theta))-\hat{\mathcal{L}}_N(\theta))$, and 
 by taking the union bound as in the proof of Theorem \ref{thm:initial}, it follows that
 \begin{equation}
\label{lem:single_draw:quadratic:pf1}
 \begin{split}
  \mathcal{L}(\theta) \le & \hat{\mathcal{L}}_N(\theta) + \frac{1}{\lambda} \Big(\ln \frac{1}{\delta} +\ln \frac{\rho^{\train}(\theta)}{\pi(\theta)}  \\
  &+\frac{1}{2} \Big (\ln E_{\theta\sim\pi}\bE[e^{2\epsilon\lambda(V_N(\theta)-\hat{\mathcal{L}}_N(\theta))}]
        +\ln E_{\theta\sim\pi}\bE[e^{2\epsilon\lambda (\mathcal{L}(\theta)-V_N(\theta))}] \Big )\Big)
 \end{split}
 \end{equation}

\textbf{Proof of \eqref{RenyiBound:single_draw} for bounded noise}
 Let us  apply the  inequality \eqref{lem:single_draw:quadratic:pf1} 
 and use
 the bound on $\bE[e^{2\epsilon\lambda (V_N(\theta)-\hat{\mathcal{L}}_N(\theta))}]$ and $\bE[e^{2\epsilon\lambda (\mathcal{L}(\theta)-V_N(\theta))}]$ from
  Lemma \ref{lemma:bounded:mgf(L-V)}-\ref{lemma:bounded_mgf(V-hatL)}, from which \eqref{RenyiBound:single_draw} follows.

\textbf{Proof of \eqref{RenyiBound:single_draw} for Lipschitz loss}
 Let us apply  \eqref{thm:main:pf:unbounded:eq1} from Subsection \ref{lipschitz:unbounded} to bound
 the bound on $\bE[e^{2\epsilon\lambda (V_N(\theta)-\hat{\mathcal{L}}_N(\theta))}]$ and $\bE[e^{2\epsilon\lambda (\mathcal{L}(\theta)-V_N(\theta))}]$, from which \eqref{RenyiBound:single_draw} follows.
   
\textbf{Proof of \eqref{RenyiBound:single_draw} quadratic loss}
Recall from Subsection \ref{thm:main:quadratic} the definition of the losses ${}^C \mathcal{L}(\theta)$, ${}^C \hat{\mathcal{L}}_N(\theta)$.
corresponding to the loss $\ell_C$. Note that by Lemma \ref{lem:quadratic2}  $\ell_C$ is $2C$-Lipschitz.
Notice that from the statement of the present lemma  proven for the case of Lipschitz losses
for $\lambda \le \frac{\sqrt{N}}{16 \cdot 2C\ln(N)}$,  with 
$C=6G_e\|\e_g\|_{\psi_2}\sqrt{\ln(\frac{3N^2}{2\delta})}$ probability $1-\frac{4}{3\delta}$ over data and parameters sampled
from $\rho^{\train}$, the following holds:
\begin{equation}
  \label{lem:single_draw:quadratic}
  \begin{aligned}
\epsilon {}^C\mathcal{L}(\theta)
         \leq  &  \epsilon {}^C \hat{\mathcal{L}}_N(\theta) +  
          \frac{1}{\lambda}\left( \ln\left(\frac{\rho(\theta)}{\pi(\theta)}\right) + \ln\dfrac{3}{2\delta} + \right. \\
          &  \left. \frac{1}{2} \sum_{i=1}^2 \ln \bE_{\theta \sim \pi} \exp\left(\bar{C}_i(2\lambda,\theta,N,\frac{2\delta}{3},\epsilon)\right)\right)
  \end{aligned}
\end{equation}
where $\bar{C}_i(2\lambda,\theta,N,\frac{2\delta}{3},\epsilon)$ satisfies \eqref{thm:main:quadratic:eq2}.
From Lemma \ref{lem:quadratic1} it follows that \eqref{lem:quadratic1:eq1} holds with probability at least
$1-\frac{2\delta}{3}$ 
over data and hence \eqref{lem:quadratic1} holds with probability at least $1-\frac{2\delta}{3}$ over data and parameters sampled from $\rho ^{\theta}$. Hence, by using the union bound, it follows that
\eqref{lem:single_draw:quadratic} and \eqref{lem:quadratic1:eq1} hold
simultaneously 
with probability at least  $1-2\delta$ over data and parameters sampled from $\rho^{\train}$.

Moreover, if 
\eqref{lem:quadratic1:eq2} holds, then 
it holds that 
${}^C \hat{\mathcal{L}}_N(\theta)=\hat{\mathcal{L}}_N(\theta)$ 
and by and \eqref{thm:main:quadratic:eq21} -- \eqref{thm:main:quadratic:eq22} 
\[ 
    {}^C \mathcal{L}(\theta) \le \mathcal{L}(\theta) \le  {}^C \mathcal{L}(\theta)+ 72G_e^2\|\e_g\|_{\psi_2}^2 \frac{\ln\left(\frac{3N^2}{2\delta}\right)}{N}
\]
Hence, 
\begin{equation} 
  \label{lem:single_draw:quadratic1}
    \epsilon \mathcal{L}(\theta) \le \epsilon {}^C \mathcal{L}(\theta)+ 72G_e^2\|\e_g\|_{\psi_2}^2\frac{\ln\left(\frac{3N^2}{2\delta}\right)}{N} \max\{\epsilon,0\}
\end{equation}
 Hence, by using \eqref{lem:single_draw:quadratic} and ${}^C \hat{\mathcal{L}}_N(\theta)=\hat{\mathcal{L}}_N(\theta)$,
 it follows that if  \eqref{lem:single_draw:quadratic1} and \eqref{lem:quadratic1:eq1} both hold, then 
 \eqref{RenyiBound:single_draw} holds. Since \eqref{lem:single_draw:quadratic1} and \eqref{lem:quadratic1:eq1} holds
 with probability at least $1-2\delta$ over data and samples from $\rho^{\train}$, so does  \eqref{RenyiBound:single_draw}.

 \textbf{Proof of \eqref{RenyiBound:single_draw} implies \eqref{RenyiBound:single_draw:giibs}}
   It follows by applying Lemma \ref{lem:gibbs_property}. 

  \textbf{Proof of \eqref{RenyiBound:single_draw1}}
     From the proof of Corollary \ref{thm:bounded:alt:col} it follows that
     for $\lambda=\frac{\sqrt{N}}{L(\ln(N))^{r}}$  where
     $r$ is as in Corollary \ref{thm:bounded:alt:col},
     \[
     \frac{1}{2\lambda } \left(   \ln E_{\theta \sim \pi} \exp(\mathcal{C}_1(2\lambda,\theta,N,\delta,\epsilon)) +
           \ln E_{\theta \sim \pi} \exp(\mathcal{C}_2(2\lambda,\theta,N,\delta,\epsilon)) \right)
           \le \mathcal{C}(\delta,\epsilon)
     \]
     Using this inequality, it follows that \eqref{RenyiBound:single_draw} implies \eqref{RenyiBound:single_draw1}.
\end{proof}
\begin{lemma}
\label{lem:bound:mean1}
 If $\mathcal{L}(\theta)$ is convex, $\rho$ is a probability density over $\Theta$, and 
 $\theta_{\star}=E_{\theta \sim \rho} \theta$ is the mean 
 of the density $\rho$, then
 \begin{equation}
  \label{lem:bound:mean:eq1}
     \mathcal{L}(\theta_{\star}) \le E_{\theta \sim \rho}[\mathcal{L}(\theta)].
\end{equation}
\end{lemma}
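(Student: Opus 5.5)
This is Jensen's inequality applied to the convex function $\mathcal{L}$ and the probability measure induced by the density $\rho$. The plan is to set up the two integrability facts that Jensen needs, and then either cite Jensen or give the short supporting-hyperplane argument directly.

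\textbf{Well-posedness.} First I will assume that $\rho$ has a finite first moment, so that $\theta_{\star}=E_{\theta\sim\rho}\theta$ exists. Next I will check that $\theta_{\star}\in\Theta$, since $\mathcal{L}$ is only defined on $\Theta$.
\begin{itemize}
\item For $\Theta=\mathbb{R}^{n_\theta}$ (the \textbf{(Gauss)} setting) this is trivial.
\item For a general $\Theta$ on which $\mathcal{L}$ is convex, I will use the implicit assumption that $\Theta$ is convex. The mean of a probability measure supported on a convex set lies in its closure. I will take $\Theta$ closed, or accept membership in the closure, to land in $\Theta$.
\end{itemize}
If $E_{\theta\sim\rho}[\mathcal{L}(\theta)]=+\infty$, the inequality is trivial. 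This cannot cause trouble with $-\infty$, because $\mathcal{L}\ge 0$.

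\textbf{Main argument.} I will use a subgradient at the mean.
\begin{itemize}
\item If $\theta_{\star}$ lies in the interior of $\Theta$, then a convex function that is finite near $\theta_{\star}$ has a nonempty subdifferential there. Under \textbf{(SConv)}, $\mathcal{L}$ is even smooth, so I can take $g=\nabla\mathcal{L}(\theta_{\star})$.
\item The subgradient gives $\mathcal{L}(\theta)\ge \mathcal{L}(\theta_{\star})+g^T(\theta-\theta_{\star})$ for all $\theta\in\Theta$.
\item Integrating against $\rho$, and using linearity together with $E_{\theta\sim\rho}(\theta-\theta_{\star})=0$, gives $E_{\theta\sim\rho}\mathcal{L}(\theta)\ge\mathcal{L}(\theta_{\star})$, which is \eqref{lem:bound:mean:eq1}.
\end{itemize}

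\textbf{Main obstacle.} The only delicate point is the boundary case, where $\theta_{\star}$ lies on the boundary of $\Theta$ (for instance under \textbf{(Uni)}). There a subgradient need not exist.

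My first attempt will be to reduce to the relative interior. Any $\rho$ with a density puts zero mass on lower-dimensional faces, which forces $\theta_{\star}$ to lie in the interior of the convex hull of the support. This is because a mean lying on a supporting hyperplane requires all the mass to lie on that hyperplane, and that hyperplane has Lebesgue measure zero.

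As a fallback, I will simply invoke the standard finite-dimensional Jensen inequality for convex functions on convex sets, which is valid without any smoothness assumption.
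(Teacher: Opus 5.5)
Your proposal is correct and takes the same route as the paper, whose entire proof is the one-line appeal to Jensen's inequality for the convex function $\mathcal{L}$ under $\rho$. Your supporting-hyperplane derivation, together with the observation that a density forces $\theta_{\star}$ into the interior of the (implicitly convex) set $\Theta$, only spells out the well-posedness details the paper leaves implicit.
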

\begin{proof}[Proof of Lemma \ref{lem:bound:mean1}]
 If $\mathcal{L}(\theta)$ is convex in $\theta$, then 
$\mathcal{L}(\theta_{\star})=\mathcal{L}(E_{\theta \sim \rho^{\train}} \theta)   \le E_{\theta \sim \rho^{\train}} \mathcal{L}(\theta)$ by Jensen's inequality.
\end{proof}
\begin{proof}[Proof of \eqref{bound:emp}]
    For the single draw case, from \eqref{RenyiBound:single_draw:giibs} of  Lemma \ref{lem:single_draw}, it follows
    that with probability at least $1-2\delta$ over data and samples from $\rho^{\text{Gibbs}}$,
    \begin{equation}
    \label{bound:emp:single_draw:pf:eq1}
        \mathcal{L}(\theta_{\star}) \le -\ln E_{\theta \sim \pi} e^{-\lambda \hat{\mathcal{L}}_N(\theta)} +
          \frac{L(\ln(N))^{r}}{\sqrt{N}}\left(\ln\frac{C_{\delta}}{\delta} +  \mathcal{C}(\delta,\epsilon)\right)
    \end{equation}
    for $\lambda=\sqrt{N}/(L(\ln(N))^{r})$
     For the MEP case,  using Lemma \ref{lem:bound:mean1} and Lemma \ref{lem:gibbs_property} and Corollary \ref{thm:bounded:alt:col}it follows that with probability at least $1-2\delta$ over data, 
     it holds that
     \begin{equation}
    \label{bound:emp:single_draw:pf:eq2}
    \begin{aligned}
        & \mathcal{L}(\theta_{\star}) \le E_{\theta \sim \rho^{\text{Gibbs}}} \mathcal{L}(\theta) \le E_{\theta \sim \rho^{\text{Gibbs}}} \hat{\mathcal{L}}_N(\theta)  + 
          \underbrace{\frac{L(\ln(N))^{r}}{\sqrt{N}}}_{1/\lambda}
          \left(\KL(\rho^{\text{Gibbs}} \mid \pi)+\ln\frac{C_{\delta}}{\delta} +  \mathcal{C}(\delta,\epsilon)\right) \\
       & = \left( E_{\theta \sim \rho^{\text{Gibbs}}} \hat{\mathcal{L}}_N(\theta)  + \frac{1}{\lambda} \KL(\rho^{\text{Gibbs}} \mid \pi)\right) + \frac{L(\ln(N))^{r}}{\sqrt{N}}\underbrace{\left(\ln\frac{C_{\delta}}{\delta} +  \mathcal{C}(\delta,\epsilon)\right)}_{\mathcal{C}_{em}(\delta,\sigma)} \\
       &  = -\ln E_{\theta \sim \pi} e^{-\lambda \hat{\mathcal{L}}_N(\theta)} +
         \frac{L(\ln(N))^{r}}{\sqrt{N}}\mathcal{C}_{em}(\delta,\sigma)
    \end{aligned}      
    \end{equation}
    with $\lambda=\sqrt{N}/(L(\ln(N))^{r})$.
    Using the second equality of \eqref{lem:gibbs_property:eq2}, it follows that
    for any posterior $\rho \in \mathcal{M}_{\pi}$,
    \begin{equation}
    \label{bound:emp:single_draw:pf:eq3}
          -\frac{1}{\lambda}\ln E_{\theta \sim \pi} e^{-\lambda \hat{\mathcal{L}}_N(\theta)} \le
            E_{\theta \sim \rho} \hat{\mathcal{L}}_N(\theta) + \frac{1}{\lambda}\KL(\rho \| \pi)
    \end{equation}
    Choose $\rho_{\theta_{\star}}$ as in Lemma \ref{pac:kl}, i.e., 
    \[ \rho_{\theta_{\star}}=\left\{ \begin{array}{rl}
                           \mathcal{N}(\theta_{\star},\frac{\sigma^2}{N}) & \mbox{ if \textbf{(Gauss)} holds} \\
                            \mathcal{U}(\theta_{\star},\frac{\sigma}{\sqrt{N}})  & \mbox{ if \textbf{(Uni)} holds}
                         \end{array} \right. 
   \]
      From Lemma \ref{pac:lipsch:loss} it follows that \eqref{pac:lipschitz:loss:eq31} holds with probability at least 
  $1-r_{\text{Lip},1}\delta$ over data, and hence over data and samples from $\rho^{\text{Gibbs}}$. 
    In particular, as \eqref{pac:lipschitz:loss:eq31} implies
   \begin{equation}
    \label{bound:emp:single_draw:pf:eq4}
    \begin{aligned}
    E_{\theta \sim \rho_{\theta_{\star}}} \hat{\mathcal{L}}_N(\theta) & \le \hat{\mathcal{L}}_N(\theta_{\star})+
     L_{\Theta}(\delta) L(\ln(N))^{r+1} \underbrace{E_{\theta \sim \rho_{\theta_{\star}}}\|\theta - \theta_{\star}\|_2}_{\le \frac{\sigma}{\sqrt{N}}}
           +  \\
           & 2r_{\text{Lip},1}\frac{L(\ln(N))^{r}}{\sqrt{N}}\left(C_{u,1} + C_{u,2}+\ln\left(\frac{1}{\delta}\right)\right), 
    \end{aligned}
    \end{equation}
   It then follows that \eqref{bound:emp:single_draw:pf:eq4}
   holds with probability at least $1-r_{\text{Lip},1}\delta$
   over data and samples from $\rho^{\text{Gibbs}}$.
   Then using the union bound on \eqref{bound:emp:single_draw:pf:eq1},\eqref{bound:emp:single_draw:pf:eq4},  and \eqref{bound:emp:single_draw:pf:eq3} and Lemma \ref{pac:kl} it follows that
   \begin{equation}
   \label{bound:emp:single_draw:pf:eq5}
   \begin{aligned}
         & \mathcal{L}(\theta_{\star}) \le  E_{\theta \sim \rho^{\text{Gibbs}}} \hat{\mathcal{L}}_N(\theta) 
       \le - \ln E_{\theta \sim \pi} e^{-\lambda} \hat{\mathcal{L}}_N(\theta) + 
         \frac{L(\ln(N))^{r}}{\sqrt{N}}\left(\ln\frac{C_{\delta}}{\delta} +  \mathcal{C}(\delta,\epsilon)\right) \\
        & \le E_{\theta\sim \rho_{\theta_{\star}}}\hat{\mathcal{L}}_N(\theta) +
         \frac{1}{\lambda} \KL(\rho_{\theta_{\star}} \| \pi) + 
         \frac{L(\ln(N))^{r}}{\sqrt{N}}\left(\ln\frac{C_{\delta}}{\delta} +  \mathcal{C}(\delta,\epsilon)\right) \\
       & \le  \hat{\mathcal{L}}_N(\theta_{\star})+
     L_{\Theta}(\delta) L(\ln(N))^{r+1} \frac{\sigma}{\sqrt{N}}
           +  
           2r_{\text{Lip},1}\frac{L\ln(N)^{r}}{\sqrt{N}}\left(C_{u,1} + C_{u,2}+\ln\left(\frac{1}{\delta}\right)\right) \\
           & + \frac{L(\ln(N))^{r+1}}{\sqrt{N}}  C(\theta_{\star}) +  \frac{L(\ln(N))^{r}}{\sqrt{N}}\left(\ln\frac{C_{\delta}}{\delta} +  \mathcal{C}(\delta,\epsilon)\right) \\
           & \hat{\mathcal{L}}_N(\theta_{\star}) +
           \frac{L(\ln(N))^{r+1} \sigma}{\sqrt{N}}
           \big(\underbrace{L_{\Theta}(\delta)\sigma + 2r_{\text{Lip},1}(C_{u,1}+C_{u,2})}_{\mathcal{C}_{L,emp}} + C(\theta_{\star}) \\
           & +(\ln(\frac{C_{\delta}}{\delta})
           + 2r_{\text{Lin},1}\ln(\frac{1}{\sigma}))+C(\delta,\epsilon)\big) 
            \le \hat{\mathcal{L}}_N(\theta_{\star}) + \frac{L(\ln(N))^{r+1} \sigma}{\sqrt{N}} \mathcal{C}_{em}(\delta,\sigma)
    \end{aligned}   
  \end{equation}
  with probability at least $1-(2+r_{\text{Lip},1})\delta$ over data and $\theta_{\star}$ sampled from $\rho^{\text{Gibbs}}$.
  Similarly, by using  that \eqref{bound:emp:single_draw:pf:eq2} holds with probability at least $1-2\delta$, 
  the fact that \eqref{pac:lipschitz:loss:eq31} holds with 
  probability as least $1-r_{\text{Lip},1}\delta$ over data, and that \eqref{pac:lipschitz:loss:eq31} implies 
  \eqref{bound:emp:single_draw:pf:eq3}, it follows,
  using the union bound that for MEP,
  \begin{equation}
   \label{bound:emp:single_draw:pf:eq6}
   \begin{aligned}
         \mathcal{L}(\theta_{\star}) \le & E_{\theta \sim \rho^{\text{Gibbs}}} \hat{\mathcal{L}}_N(\theta) 
          \le -\ln e^{-\lambda E_{\theta \sim \pi}\hat{\mathcal{L}}_N(\theta)}  + 
         \frac{L(\ln(N))^{r}}{\sqrt{N}}\left(\ln\frac{C_{\delta}}{\delta} +  \mathcal{C}(\delta,\epsilon)\right)  \\
        & \le E_{\theta\sim \rho_{\theta_{\star}}}\hat{\mathcal{L}}_N(\theta) +\frac{1}{\lambda} \KL(\rho_{\theta_{\star}} \| \pi) +  \frac{L(\ln(N))^{r}}{\sqrt{N}}\left(\ln\frac{C_{\delta}}{\delta} +  \mathcal{C}(\delta,\epsilon)\right) \\
        & \le \hat{\mathcal{L}}_N(\theta_{\star})+
     L_{\Theta}(\delta) L(\ln(N))^{r+1} \frac{\sigma}{\sqrt{N}} 
            + \frac{L(\ln(N))^{r+1}}{\sqrt{N}} \mathcal{C}(\theta_{\star})  \\
            & + 2r_{\text{Lip},1}\frac{L\ln(N)^{r}}{\sqrt{N}}\left(C_{u,1} + C_{u,2}+\ln\left(\frac{1}{\delta}\right)\right) \\
           & +  \frac{L(\ln(N))^{r}}{\sqrt{N}}\left(\ln\frac{C_{\delta}}{\delta} +  \mathcal{C}(\delta,\epsilon)\right) 
            \le \hat{\mathcal{L}}_N(\theta_{\star} + \frac{L(\ln(N))^{r+1} \sigma}{\sqrt{N}} \mathcal{C}_{em}(\delta,\sigma)
    \end{aligned}   
  \end{equation}
  holds with probability at least $1-(2+r_{\text{Lip},1})\delta$ over data.
   
    Finally, the case of MAP follows from Lemma \ref{lem:pac} by applying it to the random model $\theta_{\star}$:
    as  
    \[
    \begin{aligned}
     \forall \theta \in \Theta: 
    \mathcal{L}(\theta) \le 
    \hat{\mathcal{L}}_{N}(\theta) + &
     \frac{L(\ln(N))^{r+1}}{\sqrt{N}} \left(
       \sigma L_{\Theta}(\delta) +  2r_{\text{Lip},1} \left(C_{u,1}+C_{u,2}\right) \right. \\
       & \left. +2r_{\text{Lip},2}\frac{L}{512} 
     +C(\theta)+\ln(\frac{C_{\delta}}{\delta})+ 2r_{\text{Lip},1}\ln(\frac{1}{\delta}) + \mathcal{C}(\delta,\epsilon)\right) 
    \end{aligned}
    \]
    holds with probability at least $1-(2+r_{\text{Lip},1})\delta$ over data, it then follows that 
    for $\theta=\theta_{\star}$, 
    \[
    \begin{aligned}
    \mathcal{L}(\theta_{\star}) \le 
    \hat{\mathcal{L}}_{N}(\theta_{\star}) + &
     \frac{L(\ln(N))^{r+1}}{\sqrt{N}} \left(
       \sigma L_{\Theta}(\delta) +  r_{\text{Lip},1} \left(C_{u,1}+C_{u,2}\right) \right. \\
       & \left. +2r_{\text{Lip},2}\frac{L}{512} 
     +C(\theta_{\star})+\ln\frac{C_{\delta}}{\delta^{2r_{\text{Lip},1}+1}} + \mathcal{C}(\delta,1)\right) \\
     & = \hat{\mathcal{L}}_{N}(\theta_{\star}) + 
     \frac{L(\ln(N))^{r+1}}{\sqrt{N}} \mathcal{C}_{em}(\delta,\sigma)
    \end{aligned}
    \]
\end{proof}

\subsection{Proof of \eqref{bound:oracle}}
  From \eqref{catoni1} it follows that  each of the inequalities below hold with probability at least $1-2\delta$
  over data
  \begin{align}
      \forall \rho \in \mathcal{M}_\pi: E_{\theta \sim \rho} \mathcal{L}(\theta) \leq & E_{\theta\sim \rho} \hat{\mathcal{L}}_N(\theta)+  \frac{L(\ln(N))^{r}}{\sqrt{N}}(\KL(\rho ||\pi)+\ln\frac{C_{\delta}}{\delta} +  \mathcal{C}(\delta,1))
      \label{bound:oracle:eq1} \\
      \forall \rho \in \mathcal{M}_\pi: -E_{\theta \sim \rho} \mathcal{L}(\theta) \leq & -E_{\theta\sim \rho} \hat{\mathcal{L}}_N(\theta)+  \frac{L(\ln(N))^{r}}{\sqrt{N}}(\KL(\rho ||\pi)+\ln\frac{C_{\delta}}{\delta} + 
  \mathcal{C}(\delta,-1))
   \label{bound:oracle:eq2} 
  \end{align}
  In particular, by using the union bound it follows that \eqref{bound:oracle:eq1} and \eqref{bound:oracle:eq2} hold simultaneously with probability at least $1-4\delta$.  Using the fact that 
  for any posterior $\rho \in \mathcal{M}_{\pi}$, 
  \[ 
    E_{\theta\sim \rho^{\text{Gibbs}}} \hat{\mathcal{L}}_N(\theta)+  \frac{L(\ln(N))^{r}}{\sqrt{N}}\KL(\rho^{\text{Gibbs}} ||\pi) \le 
     E_{\theta \sim \rho} \hat{\mathcal{L}}_N(\theta) + \frac{L(\ln(N))^{r}}{\sqrt{N}}(\KL(\rho ||\pi)
  \]
 it  
  then follows that with probability at least $1-4\delta$
  over data 
  \begin{align}
      \forall \rho \in \mathcal{M}_{\pi}: & E_{\theta \sim \rho^{\text{Gibbs}}} \mathcal{L}(\theta) \leq  E_{\theta\sim \rho^{\text{Gibbs}}} \hat{\mathcal{L}}_N(\theta)+  \frac{L(\ln(N))^{r}}{\sqrt{N}}(\KL(\rho^{\text{Gibbs}} ||\pi)+\ln\left(\frac{C_{\delta}}{\delta}\right) +  \mathcal{C}(\delta,1)) \nonumber \\
       & \le E_{\theta \sim \rho} \hat{\mathcal{L}}_N(\theta) + \frac{L(\ln(N))^{r}}{\sqrt{N}}\KL(\rho ||\pi)
       + \frac{L(\ln(N))^{r}}{\sqrt{N}}\left(\ln\left(\frac{C_{\delta}}{\delta}\right)+\mathcal{C}(\delta,1)\right) \nonumber \\
       & \le E_{\theta \sim \rho} \mathcal{L}(\theta) + \frac{L(\ln(N))^{r}}{\sqrt{N}}\left(\KL(\rho ||\pi)+\ln\frac{C_{\delta}}{\delta}+\mathcal{C}(\delta,-1)\right)+ \nonumber \\
       & + \frac{L(\ln(N))^{r}}{\sqrt{N}}\left(\KL(\rho ||\pi)+\ln\frac{C_{\delta}}{\delta}+\mathcal{C}(\delta,1)\right)
       \nonumber \\
       & =  E_{\theta \sim \rho} \mathcal{L}(\theta) + \frac{L(\ln(N))^{r}}{\sqrt{N}}\left(2\KL(\rho ||\pi)+2\ln\frac{C_{\delta}}{\delta}+\mathcal{C}(\delta,-1)+\mathcal{C}(\delta,1)\right)
      \label{bound:oracle:eq3}
  \end{align}
  For the posterior $\rho_{\theta_{true}}$ from Lemma \ref{pac:kl}, using
  \eqref{pac:lipschitz:loss:eq32} from Lemma \ref{pac:lipsch:loss}, it follows that
  \begin{align}
    & E_{\theta \sim \rho_{\theta_{true}}}  \mathcal{L}(\theta)  \le \mathcal{L}(\theta_{true})  \nonumber \\
    & +L_{\Theta}(\delta)L(\ln(N))^r E_{\theta \sim \rho_{\theta_{true}}}\|\theta_1 - \theta_2\|_2 \nonumber \\
          & + 2r_{\text{Lip},1}\frac{L\ln(N)^{r}}{\sqrt{N}}\left(C_{u,1} + C_{u,2}\right) +
            2r_{\text{Lip},2}  \frac{L}{512} \frac{L(\ln(N))^r}{\sqrt{N}} \nonumber \\
            & \le \mathcal{L}(\theta_{true})+ L_{\Theta}(\delta)\sigma \frac{L(\ln(N))^r}{\sqrt{N}}
          + 2r_{\text{Lip},1}\frac{L\ln(N)^{r}}{\sqrt{N}}\left(C_{u,1} + C_{u,2}\right) + 2r_{\text{Lip},2}  \frac{L}{512} \frac{L(\ln(N))^r}{\sqrt{N}}  \nonumber \\
        & \le \mathcal{L}(\theta_{true})+\frac{L\ln(N)^{r}}{\sqrt{N}} \mathcal{C}_{L,true}
      \label{bound:oracle:eq4}
  \end{align}
  By applying  \eqref{bound:oracle:eq3}  with $\rho=\rho_{\theta_{true}}$ and using that by Lemma \ref{pac:kl} 
  \[ \KL(\rho_{\theta_{true}} || \pi) \le \mathcal{C}(\theta_{true})\ln(N) \]
  and the fact that $\ln(N) \ge 1$ it follows that 
 \begin{align} 
    E_{\theta \sim \rho^{\text{Gibbs}}} \mathcal{L}(\theta) \le & \mathcal{L}(\theta_{true})+ \nonumber \\
      & \frac{L\ln(N)^{r+1}}{\sqrt{N}} \left( \mathcal{C}_{L,true} + 2\mathcal{C}(\theta_{true})
       +2\ln\frac{C_{\delta}}{\delta}+\mathcal{C}(\delta,-1)+\mathcal{C}(\delta,1) \right)
      \label{bound:oracle:eq41}
 \end{align}
 with probability at least $1-4\delta$.
 If $\theta_{\star}$ is MEP, then by Lemma \ref{lem:bound:mean1}
 $\mathcal{L}(\theta_{\star}) \le E_{\theta \sim \rho^{\text{Gibbs}}} \mathcal{L}(\theta)$. 
 Combining this with \eqref{bound:oracle:eq41}, \eqref{bound:oracle} for MEP follows.

 For single draw, from Lemma \ref{lem:single_draw}, \eqref{RenyiBound:single_draw:giibs1}, it follows that
 with probability at least $1-2\delta$ overa data and samples $\theta_{\star}$ from $\rho^{\text{Gibbs}}$
 it holds that 
 \begin{align}
  \mathcal{L}(\theta_{\star}) \le   -\frac{1}{\lambda} \ln E_{\theta \sim \pi} e^{-\lambda \hat{\mathcal{L}}_N(\theta)} + 
 \frac{L(\ln(N))^{r}}{\sqrt{N}}\left(\ln\frac{C_{\delta}}{\delta} +  \mathcal{C}(\delta,1)\right)
      \label{bound:oracle:eq5}
 \end{align}
 with $\lambda=\lambda_N=\frac{\sqrt{N}}{L(\ln{N})^r}$. 
 By using\eqref{lem:gibbs_property:eq1} from  Lemma \ref{lem:gibbs_property}, 
 it follows that for $\rho_{\theta_{true}}$ from Lemma \ref{pac:kl} 
 \begin{align}
      -\frac{1}{\lambda} \ln E_{\theta \sim \pi} e^{-\lambda \hat{\mathcal{L}}_N(\theta)} \le
       E_{\theta \sim \rho_{\theta_{true}}} \mathcal{\mathcal{L}}_N(\theta) + \frac{L(\ln(N))^{r}}{\sqrt{N}} \KL(\rho_{\theta_{true}} ||\pi) 
      \label{bound:oracle:e61}
\end{align}
Using that \ref{bound:oracle:eq2} holds with probability at least $1-2\delta$ over data, 
 and using \eqref{bound:oracle:eq4},  and $\ln(N) \ge 1$,
it follows that
\begin{align}
-\frac{1}{\lambda} & \ln E_{\theta \sim \pi} e^{-\lambda \hat{\mathcal{L}}_N(\theta)} \le 
       E_{\theta \sim \rho_{\theta_{true}}} \mathcal{\mathcal{L}}_N(\theta) + \frac{L(\ln(N))^{r}}{\sqrt{N}} \KL(\rho_{\theta_{true}} ||\pi)  \nonumber \\
       & \le E_{\theta \sim \rho_{\theta_{true}}} \mathcal{\mathcal{L}}(\theta) + \frac{L(\ln(N))^{r}}{\sqrt{N}} \KL(\rho_{\theta_{true}} ||\pi) \nonumber \\
       &+ \frac{L(\ln(N))^{r}}{\sqrt{N}}\left(\KL(\rho_{\theta_{true}} ||\pi)+\ln\frac{C_{\delta}}{\delta}+\mathcal{C}(\delta,-1)\right) \nonumber \\
       &  
        \le \mathcal{L}(\theta_{true})+\frac{L(\ln(N))^{r+1}}{\sqrt{N}} \left(\mathcal{C}_{L,true} + 2\mathcal{C}(\theta_{true}) + \ln\frac{C_{\delta}}{\delta} + \mathcal{C}(\delta,-1)\right)
       \label{bound:oracle:eq07}
\end{align}
with probability at least $1-2\delta$ over data and hence over data and samples from $\rho^{\text{Gibbs}}$.
By combining \eqref{bound:oracle:eq5} and \eqref{bound:oracle:eq07} using the union bounds, it follows that
with probability $1-4\delta$ over data and samples $\theta_{\star}$ from $\rho^{\text{Gibbs}}$,
\begin{align}
  \mathcal{L}(\theta_{\star}) \le & \mathcal{L}(\theta_{true})+\frac{L(\ln(N))^{r+1}}{\sqrt{N}} \left( \mathcal{C}_{L,true} + 2\mathcal{C}(\theta_{true}) + \ln\frac{C_{\delta}}{\delta} + \mathcal{C}(\delta,-1)) \right) +  \nonumber  \\
 &+  \frac{L(\ln(N))^{r}}{\sqrt{N}}\left(\ln\frac{C_{\delta}}{\delta} +  \mathcal{C}(\delta,1)\right) \nonumber \\
 & \le \mathcal{L}(\theta_{true})+\frac{L(\ln(N))^{r+1}}{\sqrt{N}} \left( \mathcal{C}_{L,true} + 2\mathcal{C}(\theta_{true}) + 2\ln\frac{C_{\delta}}{\delta} +  \mathcal{C}(\delta,1) + \mathcal{C}(\delta,-1)\right)
 \nonumber
\end{align}
from which the statement of \eqref{bound:oracle} for single draw follows, by replacing $\delta$ by $\frac{\delta}{2}$.

Finally, we consider the case of MAP. Since \eqref{bound:emp}, i.e.,
\begin{equation}
    \label{bound:oracle:eq7}
\begin{aligned}
  \mathcal{L}(\theta_{\star}) & \le  \hat{\mathcal{L}}_N(\theta_{\star}) + \\
   & \frac{2L(\ln(N))^{r+1}}{\sqrt{N}} 
  \Big( C(\theta_{\star})+\mathcal{C}_{L,true}+\mathcal{C}_{L,emp}+
     \ln\frac{C_{\delta}}{\delta^{2r_{\text{Lip},1}+1}} + \mathcal{C}(\delta,1) \Big)
\end{aligned}
\end{equation}
holds with probability at least $1-(2+r_{\text{Lip},1})\delta$.
From \eqref{lem:pac:eq1:uniform} applied to $\theta=\theta_{true}$ and $\epsilon=1$ it also follows that 
\begin{equation}
    \label{bound:oracle:eq8}
\begin{aligned}
  \hat{\mathcal{L}}_N(\theta_{true})  &\le \mathcal{L}(\theta_{true}) + \\
    & \frac{2L(\ln(N))^{r+1}}{\sqrt{N}} 
  \Big(
   C(\theta_{true})+ \mathcal{C}_{L,true}+\mathcal{C}_{L,emp}+
     \ln\frac{C_{\delta}}{\delta^{2r_{\text{Lip},1}+1}} + \mathcal{C}(\delta,-1) \Big)
\end{aligned}
\end{equation}
holds with probability at least $1-(2+\mathrm{r_{\text{Lip},1}})\delta$ over data.
If $\theta_{\star}$ is MAP, then $\theta_{\star}$
minimizes $\hat{\mathcal{L}}_N(\theta)-\frac{1}{\lambda} \ln \pi(\theta)$ and hence 
$\hat{\mathcal{L}}_N(\theta_{\star})-\frac{1}{\lambda} \ln \pi(\theta_{\star}) \le 
\hat{\mathcal{L}}_N(\theta_{true})-\frac{1}{\lambda} \ln \pi(\theta_{true})$.
Hence, 
\[
\hat{\mathcal{L}}_N(\theta_{\star}) \le 
\hat{\mathcal{L}}_N(\theta_{true})+\frac{1}{\lambda} \ln \left(\frac{\pi(\theta_{\star})}{\pi(\theta_{true})}\right)
\]
and then by using \eqref{bound:oracle:eq7}-\eqref{bound:oracle:eq8} and the fact that $\lambda=\frac{\sqrt{N}}{L(\ln(N))^r}$ and
hence $\frac{1}{\lambda} \le  \frac{L(\ln(N))^{r+1}}{\sqrt{N}}$,
it follow that with probability $1-(4+2r_{\text{Lip},1})\delta$ over data,
\begin{equation}
\label{pf:col1:eq6}
\begin{split}
 \hat{\mathcal{L}}_N(\theta_{\star})
        & \leq  
        \mathcal{L}(\theta_{true})  \\
        & +\frac{2L(\ln(N))^{r+1}}{\sqrt{N}} 
  \Big(
   2C(\theta_{true})+ 2\mathcal{C}_{L,true}+2\mathcal{C}_{L,emp}+
     2\ln\frac{C_{\delta}}{\delta} \\
     &+ \mathcal{C}(\delta,1)+\mathcal{C}(\delta,-1) \Big) 
        + \frac{2L(\ln(N))^{r+1}}{\sqrt{N}} \ln \left(\frac{\pi(\theta_{\star})}{\pi(\theta_{true})}\right)
\end{split}
\end{equation}
holds, hence, by replacing $\delta$ with $\delta/2$, it follows that  \eqref{bound:oracle} holds with probability
at least $1-(2+r_{\text{Lip},1})\delta$.

\subsection{Proof of \eqref{cor:param:est1:eq1}}
   If \textbf{(SConv)} holds, then by \cite[Theorem 2.1.11, Theorem 2.1.8]{NesterovBook}
   \[ m_{\Theta} \|\theta_{\star}-\theta_{true}\|^2 \le \mathcal{L}(\theta_{\star}) - \mathcal{L}(\theta_{true}) \]
   from which using \eqref{bound:oracle} the inequality \eqref{cor:param:est1:eq1} follows.

\subsection{Proof of \eqref{col1:eq1}}
From
\cite[Chapter 4, page 87]{LjungBook} 
it follows that 
$\sigma^2_{\e^s}=\mathcal{L}(\theta_{\mathrm{true}})=\inf_{\theta \in \Theta} \mathcal{L}(\theta)$, 
where $\sigma^2_{\e^s}=\bE[\|\e^s(t)\|^2_2]$, 
if $\ell$  is the quadratic loss function, 
i.e. the true system yields the best possible prediction in mean square sense.
\begin{equation}
\label{param_set:true_eq}
 \forall \theta \in \Theta:  \sigma^2_{\e^s}=\mathcal{L}(\theta_{true}) \le \mathcal{L}(\theta) 
\end{equation}
where $\sigma^2_{\e^s}$ is the variance of $\e^s(t)$.
That is, the parameter $\theta_{true}$ corresponding to the data generator results in the smallest possible generalisation loss for quadratic loss function.
Let $H_{\theta}$ be the transfer function of
the predictor parametrized by $\theta$

It can be shown (see \cite[Appendix, (82)]{eringisRenyi}) and \cite[Chapter 4, page 87]{LjungBook}) that
\begin{equation}
\label{rem:lqg:eq0}
\|H_{\theta_{\mathrm{true}}}-H_{\theta} \|_{H_2}^2 \le \frac{1}{m_{\w}} (\mathcal{L}(\theta) - \mathcal{L}(\theta_{true})), 
\end{equation}
Then \eqref{col1:eq1} follows from \eqref{bound:oracle} and \eqref{rem:lqg:eq0}.

\subsection{Remark on relationship with system identification and learning}
\label{app:indent}

\begin{remark}[Identification of FIR models]
    \label{rem:fir}
    Consider the SISO case ($n_y=n_u=1$), fix 
    an integer $T_0$ and let
    $\Theta=\mathbb{R}^{T_0}$ and 
    for any $\theta=(\theta_1,\ldots, \theta_{T_0})^T \in \Theta$, let
    $D_{\theta}=\theta_1$,
    \[ 
    A=\begin{bmatrix} 0 & 0 & \cdots & 0 & 0 \\ 
        1 & 0 & \cdots & 0 &0 \\
        \vdots & \vdots & \cdots & \vdots & \vdots \\
        0 & 0 & \cdots & 1 & 0
    \end{bmatrix}, ~ 
    B=\begin{bmatrix} 1 \\ 0 \\ \vdots \\ 0 \end{bmatrix}, ~
    C_\theta=\begin{bmatrix} \theta_2 \\ \vdots \\ \theta_{T_0} \\ 1 \end{bmatrix}^T
    \]
    Then $\hat{\rvy}_{\theta}(t)=u(t-T_0)+\sum_{i=0}^{T_0-1} \theta_{i+1} \rvu(t-i)$. Assume that there exists 
    $\theta_{true} \in \Theta$, such that 
    $\rvy(t)=\hat{\rvy}_{\theta_{true}}(t)+\rve(t)$,
    where $\rve(t)$ is i.i.d. with zero mean and variance $\sigma^2$, and independent of $\rvu(s)$, $s \in \mathbb{Z}$.
    Let $\hat{\rvu}(t)=\rvu(t)$ for $t > 0$ and
    let $\hat{\rvu}(t)=0$ for $t < 0$.
    Let $\Phi$ be the regressor matrix
    constructed from $\tilde{\rvu}$ as in
    \cite[(3.45), page 48]{pillonetto2022regularized} and
    let 
    $Y=\begin{bmatrix} \rvy(0) & \cdots &\rvy(N-1) \end{bmatrix}^T$.
    Then
    $\hat{\mathcal{L}}_N(\theta)=\frac{1}{N} \|Y-\Phi \theta\|_2^2$.
    Assume that the prior $\pi$ is the Gaussian density $\mathcal{N}(\tilde{\theta},P)$, $P=P^T > 0$. 
    Then a lengthy but straightforward calculation
    reveal that
    the Gibbs posterior is the  Gaussian density
    $\mathcal{N}(\tilde{\theta}+P\Phi^T(\Phi P \Phi^T + \frac{N}{2\lambda} I)^{-1}(Y-\Phi \tilde{\theta}), P-P\Phi^T(\Phi P\Phi^T + \frac{N}{2\lambda} I)^{-1}\Phi P)$, 
    which is the conditional density of
    $\theta$ w.r.t. $Y,\Phi$ presented in
    \cite[(4.6), page 101]{pillonetto2022regularized}, if
    $Y=\Phi\theta+E$ is assumed with 
    $E \sim \mathcal{N}(0,\frac{N}{2\lambda} I_{T_0})$.
    Then the MAP and MEP estimates are both
    $\theta_{\star}=\tilde{\theta}+P\Phi^T(\Phi P \Phi^T + \frac{N}{2\lambda} I)^{-1}(Y-\Phi \theta_{true})$
    and they solve  the regularized
    optimization problem
    $\argmin_{\theta \in \Theta} \lambda \hat{\mathcal{L}}_N(\theta) +  (\theta-\tilde{\theta})P^{-1}(\theta-\tilde{\theta})$. Moreover, they are also
    the maximum likelihood
    of the posterior $P(\theta \mid Y,\Phi)$. 

 Then Corollary \ref{lem:bound:mean} applies to $\theta_{\star}$,
  yielding  a 
  $O(\frac{(\ln(N))^r}{\sqrt{N}})$ bound on
  the generalization gap of $\theta_{\star}$
  and a $O(\frac{(\ln(N))^{r+1}}{\sqrt{N}})$ bound
  on the square parameter estimation error
  $\|\theta_{\star}-\theta_{true}\|_2^2$.
  \end{remark}
  %
 %
 %
 \begin{remark}(\emph{Maximum likelihood estimates   of state-space models}). 
  \label{rem:ml}
  Consider the parametrization of LTI systems
  \begin{align}
        \hat{\x}(t+1)&=(\hat{A}_\theta + \hat{L}_{\theta} \hat{C}_{\theta}) \hat{\x}(t)
         +(\hat{B}_\theta+L_{\theta}D_{\theta})\rvu(t)+\hat{L}_\theta\e_{\theta}(t),  \nonumber \\
        \y_\theta(t)&=\hat{C}_\theta\hat{\x}(t)+\hat{D}_\theta\rvu(t) + \e_{\theta}(t)
  \label{eq:predictor_gen}
 \end{align}
 where 
 $A_{\theta}$ and $A_{\theta}+L_{\theta}C_{\theta}$ are Schur, 
 $\e_{\theta}(t)$ is i.i.d., zero mean Gaussian with
 variance $\frac{1}{\lambda}$, $(\rvu,\e_{\theta})$ is jointly Gaussian, $\e_{\theta}(t)$ is independent of
 $\rvu(s)$ for $s < t$. 
 We can associate with \eqref{eq:predictor_gen} the predictor 
\begin{align}
        \hat{\x}(t+1)&=\hat{A}_\theta \hat{\x}(t)+\hat{B}_\theta \rvu(t)+\hat{L}_\theta\y(t),  \nonumber \\
        \hat{\y}_\theta(t \mid 0)&=\hat{C}_\theta\hat{\x}(t)+\hat{D}_\theta\rvu(t),
  \label{eq:predictor_gen1}
 \end{align}
 Assume that
 $\y$ is generated by an LTI system of the form \eqref{eq:predictor_gen} which satisfies  \cite[(1), INP B, page 427]{CainesBook}.

 In this case, by \cite[(2.7), page 422, Chapter 7]{CainesBook}
 the conditional likelihood function
 $p_{\theta}(\{y(t)\}_{t=-1}^{N-1} \mid \{u(t)\}_{t=0}^{N-1})$ 
 $\{\y_{\theta}(t)\}_{t=0}^{N-1}$ w.r.t
 $\{\rvu(t)\}_{t=0}^{N-1}$, evaluated at 
 $y(t)=\rvy(t)$ and $u(t)=\rvu(t)$, $t=-1,\ldots,N-1$
 equals 
 $Ze^{-\frac{N\lambda}{2} \hat{\mathcal{L}}_N(\theta)}$
 for a suitable constant $Z$, where the empirical error 
 $\hat{\mathcal{L}}_N(\theta)$ is defined for the quadratic loss. 
 That is, 
 i.e., up to an additive constant, its logarithm equals $\ln \rho^{\text{Gibbs}}(\theta)$.
 In particular, the maximum likelihood estimate 
 $\mathrm{argmax}_{\theta \in \Theta} p_{\theta}(\{y(t)\}_{t=0}^{N-1} \mid \{u(t)\}_{t=0}^{N-1})$, see
 \cite[Chapter 7,page 428]{CainesBook} 
 equals the MAP 
 $\mathrm{argmax}_{\theta \in \Theta} \rho^{\text{Gibbs}}(\theta)$ of the Gibbs posterior corresponding to the uniform distribution $\pi=\frac{1}{vol(\Theta)}$ as a prior. Here we assume that  $vol(\Theta)$  is finite.

 We can then use the bounds \eqref{bound:emp}, \eqref{bound:oracle}, \eqref{cor:param:est1:eq1}, \eqref{col1:eq1}, 
 for the generalization gap, the gap between the true error of the learned model and the best true error and the parameter estimation error respectively.
 \end{remark} 
 \begin{remark}[Bayesian methods for estimating LTI systems]
  \label{rem:bayesian}
    The Bayesian method for system identification of LTI systems \cite{NINNESS201040} corresponds to taking a single draw, MEP or MAP from the conditional distribution of the parameter given the past observations. If the parametrization is in innovation form, i.e., of the form \eqref{eq:predictor_gen} and all the signals are jointly Gaussian,  then by
    Remark \ref{rem:ml}
    the joint density of  $\{\y(t)\}_{t=-1}^{N-1}$ given 
    $\{\rvu(t)\}_{t=0}^{N-1}$ and 
    $\{\y(s),\rvu(s)\}_{s < 0}$ and a parameter 
    $\theta$ is $Ze^{-N\lambda \hat{\mathcal{L}}_N(\theta)}$ for a suitable constant, 
    and hence by \cite[Section 3, eq. (12)]{NINNESS201040}
    the conditional density of $\theta$ with respect to
    $\{\y(t),\rvu(t)\}_{t=0}^{N-1}$ is
    the Gibbs posterior 
    $\frac{e^{-N\lambda \hat{\mathcal{L}}_N(\theta)}p_{\theta}(\y(-1))\pi(\theta)}{E_{\theta \in \pi} e^{-\lambda \hat{\mathcal{L}}_N(\theta)}}$ for the quadratic loss and a suitable prior, 
    and hence 
    Gibbs posterior for the predictors \eqref{eq:predictor_gen1}, and hence taking a random draw, MEP or MAP from 
    this conditional density, as suggested by \cite{NINNESS201040}, corresponds to taking a random draw, MEP or MAP from
    the Gibbs posterior. In particular, the bounds  \eqref{bound:emp}, \eqref{bound:oracle}, \eqref{cor:param:est1:eq1}, \eqref{col1:eq1} apply.
 \end{remark}
  \begin{remark}[PEM methods]
    \label{rem:pem}
    To begin with if a PEM algorithm can be reduced to optimizing a cost function
    \( \hat{\mathcal{L}}_N(\theta)  + g(\theta),\)
    then the algorithm is equivalent to taking the MAP estimate of the Gibbs posterior with a prior $\pi$ such that 
    $-\frac{1}{\lambda}\ln \pi(\theta)=\frac{g(\theta)}{\lambda}$. In particular, \eqref{bound:emp}-\eqref{col1:eq1} from Corollary \ref{lem:bound:mean} applies in this case. For example, the quadratic regularization term $g(\theta)=
    (\theta - \theta_m)^T P^{-1} (\theta - \theta_m) $
    corresponds to $\pi \sim \mathcal{N}(\theta_m, P/\lambda)$. If $\pi$ can be interpreted as a Gaussian density, then MAP is MEP and the corresponding sharper
    bounds can be used. 

    Furthermore, many  PEM algorithms rely on choosing randomly the initial parameter estimate. The outcomes of such algorithms can be viewed as drawing a sample from a suitable posterior distribution. More precisely, consoider an algorithm
    $A_N$. For every initial value $\theta_0 \in \Theta$ and data $\mathcal{D}$, the algorithm
    generates an estimate $A_N(\theta_0,\mathcal{D})$.
    Most optimization algorithms used for PEM methods
    (Gradient, Gauss-Newton), etc. are of this form, if we fix the
    number of steps.
    Assume that the initial parameter values $\theta_{init}$
    are i.i.d., samples from a prior density $\pi$. Then the resulting parameter values
    $A_N(\theta_{init},\mathcal{D})$ are i.i.d. samples from
    a distribution $P_{A_N}(B)=\int_{A_N^{-1}(B,\mathcal{D})} \pi(\theta)dm(\theta)$ for any $B \in B_{\theta}$.
    Assume that the latter distribution has a density
    $\hat{\rho}$ which is in $\mathcal{M}_{\pi}$. 
    Then Lemma \ref{lem:single_draw} provides 
    an error bound for the true error of such an algorithm, when applied to an initial parameter value
    randomly sampled from $\pi$.
  \end{remark}
\begin{remark}[Strong convexity of the true error]
  \label{app:strong convexity}
 Strong convexity implies that there is a unique minimizer of the true error, which is
 the true system. It  can be seen as an \emph{identifiability and persistence of excitation assumption}, as it implies that there is no parameter value which gives the same
 input-output behavior as the true system (otherwise their true errores would be the same),
 and $\w$ is rich enough (the average response to $\w$ allows us to find the true system).
 In fact, strong convexity of the true error (at least locally), is a standard
 assumption  for studying the asympotic distribution of the parameter estimation error
 \cite[Theorem 9.1]{LjungBook}. 
\end{remark}
\begin{remark}[Bounds on the difference between  system matrices]
  \label{matrix:trans}
Assuming that the LTI systems $\Sigma(\theta)$ are all minimal,
we can use \eqref{col1:eq1}
and \cite[Theorem V.2]{oymak2021revisiting}
to derive
an error bound (in high-probability) 
on the difference between 
the matrices of the learned and true system.
Alternatively, if we consider parametrizations based on structure indices \cite[Chapter 3]{RalfPeeters} 
we could use the fact that $H_2$ norms can be used to approximate locally the Riemannian distance between parameter values  \cite[Chapter 4, page 129]{RalfPeeters}. 
\end{remark}
%
%
%
\begin{remark}[Relationship with \cite{eringis2023pacbayesRNN,eringis2023pacbayesian}]
  \label{rem:deividas}
 For the case of bounded noise in the data generator, Theorem \ref{thm:main} provides a PAC-Bayesian bound on the
 generalization gap which is similar to \cite[Theorem 2, Corollary 1]{eringis2023pacbayesRNN} and \cite[Theorem 5.2]{eringis2023pacbayesian}. However, \cite{eringis2023pacbayesRNN} considers RNNs, and the corresponding 
 constants $L_{\rvv}(\theta),L_{g,\rvs}(\theta),L_{g,\rvv}$ correspond to $\|\hat{B}_{\theta}\|_2$, $\|\hat{C}_{\theta}\|_2$, and $\|\hat{D}_{\theta}\|_2$, respectively, and $\tau(\theta)$ corresponds to an upper bound on the spectrum of $\hat{A}_{\theta}$
 and $C(\theta)$ is such that $\|\hat{A}_{\theta}^k\|_{2} \le \tau(\theta)^k C(\theta)$.
   In our setting, $L_{\rvv}(\theta),L_{g,\rvs}(\theta),L_{g,\rvv}$ can be taken to be $C$ from Assumption \ref{as:parameterisation}, $C(\theta)$ can be taken to be $M$ and $\tau(\theta)$ can be taken to be $\gamma$ from Assumption \ref{as:parameterisation}.
    Then the constants $G_{\theta}$, $H_{\theta}$ from \cite{eringis2023pacbayesRNN}  
    are upper bounds on $\|\alpha_{\theta}\|_{\ell_1}$ and $\theta_{\infty}(\alpha_{\theta})$ respectively.
    Hence the term $\hat{\Psi}_{\pi}(\lambda,N)$ of 
    \cite[Theorem 2, Corollary 1]{eringis2023pacbayesRNN}  is an upper bound on the term
    $\frac{1}{2}\sum_{i=1}^2 \ln E_{\theta\sim\pi} \exp\left(\mathcal{C}_i(2\lambda,\theta,N,\delta,\epsilon)\right)$ from \eqref{eq:initial:tildern1}, i.e., the bound of \cite[Theorem 2, Corollary 1]{eringis2023pacbayesRNN} is
    more conservative than the one of this paper.

  As to \cite[Theorem 5.2]{eringis2023pacbayesian}, it uses $\|\alpha_{g,y}-c_{\theta}\|_{\ell_1} \le G_e(\theta)$ and $\theta_{\infty}(\alpha_{g,y}-c_{\theta}) \le G_{e,1}(\theta)$ in the bounds, where $c_{\theta}$ and $\alpha_{g,y}$ are defined in Lemma \ref{alpha:lemma2}. While this may appear less conservative, the use of $c_{\theta}$ makes the bound difficult to evaluate in practice, as $c_{\theta}$ depends on the parameters of the unknown data generator. In contrast, our bound uses only $\|\alpha_g\|_{\ell_1}$ and $\theta_{\infty}(\alpha_g)$, which, although they also depend on the data generator, are easier to estimate or upper bound in practice.

  Note that \cite{eringis2023pacbayesRNN,eringis2023pacbayesian} do not address the case of unbounded signals or parameter estimation errors. They also do not provide generalization bounds for single draw, MEP, or MAP estimators.
\end{remark}
\begin{remark}[Relationship with finite-sample bounds \cite{oymak2021revisiting,lale2020logarithmic,Simchowitz_Foster_2020,Ziemann1,Ziemann2}]
  \label{rem:finite-sample}
Recent works such as \cite{oymak2021revisiting,lale2020logarithmic,Simchowitz_Foster_2020,Ziemann1,Ziemann2} provide finite-sample parameter estimation bounds for specific algorithms (e.g., least-squares for ARX models, subspace methods), typically assuming the data generator starts from a deterministic initial state and often neglecting the stochastic component. However, assuming a deterministic initial state can be problematic, since process noise generally drives the system to a random initial state determined by past disturbances.

In contrast, our results: \textbf{(1)} apply to a broad class of learning algorithms (including PEM, maximum likelihood, Bayesian, and regularized FIR methods), \textbf{(2)} assume the data generator is in a stationary regime, \textbf{(3)} explicitly account for both deterministic and stochastic components, and \textbf{(4)} provide generalization gap bounds. The trade-off is that our bound on the square of the parameter estimation error is $O\left(\frac{(\ln N)^{r+1}}{\sqrt{N}}\right)$, which is more conservative than the $O\left(\frac{\ln N}{N}\right)$ rates in the cited works.
\end{remark}
  \newpage

\end{document}